 \documentclass[10pt,onecolumn]{article}
 \oddsidemargin=-0.0in \evensidemargin=0.in \topmargin=.0in
 \headsep=0.in \textwidth=6.5in \textheight=8.75in
 
 \usepackage{nccfoots}

 \author{
 Meltem Tatl{\i}\thanks{Meltem Tatl{\i} and Ali Tajer are with the Department of Electrical, Computer, and Systems Engineering, Rensselaer Polytechnic Institute.}  \quad Arpan Mukherjee\thanks{Arpan Mukherjee completed this work while he was with the Department of Electrical, Computer, and Systems Engineering, Rensselaer Polytechnic Institute. He is currently with the Department of Electrical and Electronic Engineering, Imperial College London.} \quad Prashanth L.A.\thanks{Prashanth L.A. is with the Department of Computer Science and Engineering, Indian Institute of Technology Madras.}\quad Karthikeyan Shanmugam\thanks{Karthikeyan Shanmugam is with Google Research India.}  \quad  Ali Tajer\footnotemark[1] }

\usepackage{ISG_style}
\usepackage{times}
\usepackage[T1]{fontenc}
\usepackage{url}
\usepackage{ifthen}
\usepackage{cite}
\usepackage[cmex10]{amsmath}
\usepackage{amssymb, amsfonts,dsfont,mathrsfs,bigints}
\usepackage{dsfont}
\usepackage{upgreek}
\usepackage{subcaption}
\usepackage{comment}
\usepackage{bigints}
\usepackage{times}
\usepackage{cite}
\usepackage{booktabs} 
\usepackage{tabularx}
\usepackage{nicefrac}       
\usepackage{microtype}
\usepackage{minitoc}
\usepackage{footmisc} 
\usepackage{graphicx}
\usepackage{algorithm,algorithmic}
\usepackage{appendix}
\usepackage[table]{xcolor} 
\usepackage[colorlinks=true,allcolors=blue]{hyperref}
\usepackage{caption}
\usepackage{subcaption}
\captionsetup{justification=centering}
\usepackage{enumitem}

\newcommand{\holder}{H\"{o}lder}

\newcommand{\der}{{\rm d}}
\newcommand{\norm}[1]{\left\lVert#1\right\rVert}
\renewcommand{\F}{\mathbb{F}}
\newcommand{\G}{\mathbb{G}}
\renewcommand{\Q}{\mathbb{Q}}

\makeatletter
\makeatother

\DeclareMathOperator*{\argmax}{arg\,max}
\DeclareMathOperator*{\argmin}{arg\,min}
\DeclareMathAlphabet\mathbfcal{OMS}{cmsy}{b}{n}

\title{\textbf{\huge Preference-centric Bandits:}\\ {\bf \Large Optimality of Mixtures and Regret-efficient Algorithms}}
\date{}

\usepackage{xspace}
\usepackage{todonotes}

\begin{document}
\allowdisplaybreaks
\maketitle

\begingroup
  \renewcommand\thefootnote{}%
  \footnotetext{{
   The results have been partially presented at the International Conference on Artificial Intelligence and Statistics (AISTATS) 2025.}}
\endgroup

\begin{abstract}

The objective of canonical multi-armed bandits is to identify and repeatedly select an arm with the largest reward, often in the form of the expected value of the arm's probability distribution. Such a utilitarian perspective and focus on the probability models' first moments, however, is agnostic to the distributions' tail behavior and their implications for variability and risks in decision-making.
For instance, in high-stakes applications like clinical trials or autonomous control, an arm with a slightly lower mean but significantly lower probability of catastrophic outcomes may be far preferable.
This paper introduces a principled framework for shifting from expectation-based evaluation to an alternative reward formulation, termed a {\em preference metric} (PM). The PMs can place the desired emphasis on different reward realization and can encode a richer modeling of preferences that incorporate risk aversion, robustness, or other desired attitudes toward uncertainty. A fundamentally distinct observation in such a PM-centric perspective is that designing bandit algorithms will have a significantly different principle: as opposed to the reward-based models in which the optimal sampling policy converges to repeatedly sampling from the single best arm, in the PM-centric framework the optimal policy converges to selecting a mix of arms based on a specific and well-defined mixing weights. Designing such \emph{mixture policies} departs from the principles for designing bandit algorithms in significant ways, primarily because of uncountable mixture possibilities. The paper formalizes the PM-centric bandit framework and offers two broad classes of algorithms for the \emph{horizon-dependent} and \emph{infinite-horizon} (anytime) settings. These algorithms are capable of learning and tracking mixtures in a regret-efficient fashion. These algorithms have two distinctions from their canonical counterparts: (i) they involve an estimation routine {\em} to form reliable estimates of optimal mixtures, and (ii) they are equipped with {\em tracking mechanisms} to navigate arm selection fractions to track the optimal mixtures. These algorithms' regret guarantees are investigated under various algebraic forms of the PMs. Finally, empirical experiments are conducted to demonstrate the regret performance of the proposed algorithms.

\end{abstract}


\section{Introduction} 
\label{sec:introduction}

The canonical objective of multi-armed bandits (MABs) is to design a sequence of experiments to identify or repeatedly select the arm with the highest expected reward. While this \emph{utilitarian} view provides a simple and meaningful metric to compare and rank the arms, relying only on the expected value abstracts away critical information about the underlying reward distributions. Specifically, summarizing each arm by its expected value --- computed as the integral of the complement of its cumulative distribution function (CDF) --- ignores aspects such as risk, variability, and tail behavior, which can be crucial in real-world decision-making. For instance, in high-stakes applications like clinical trials, finance, or autonomous control, an arm with a slightly lower mean but significantly lower probability of catastrophic outcomes may be far preferable. Motivated by this observation, we propose a principled shift from expectation-based evaluation to distorted expectation, where we apply a transformation (or distortion) to the complement CDF before integrating. Such distortion places the desired emphasis on different reward realization, and has the versatility to encode a richer modeling of preferences that incorporate risk aversion, robustness, or other desired attitudes toward uncertainty. By embedding distortion functions directly into the reward criterion, we open new avenues for adaptive learning algorithms that align more closely with complex real-world objectives. 

This approach naturally balances the tradeoff between utility and other metrics of interest --- such as risk sensitivity, robustness to outliers, and tail behavior --- by shaping the decision criterion through the distortion function. Unlike averaging, which treats all realizations uniformly, distorting the complement CDF allows us to amplify or attenuate the influence of specific parts of the distribution (e.g., penalizing low outcomes or overweighting high-confidence gains). By carefully selecting the distortion function, one can interpolate between purely utilitarian strategies and highly conservative or risk-seeking policies, making this framework highly adaptable to diverse application needs.

A natural and general way to formalize these distortions is through the \emph{Choquet} integral, which provides a framework for aggregating outcomes with respect to non-additive measures. Unlike the standard expectation, which is a linear operator over probabilities, the Choquet integral allows for weighting different regions of the outcome space non-linearly, making it particularly well-suited for modeling preferences, encoding pessimism, optimism, or ambiguity aversion directly into the evaluation of an arm's performance. This approach enables nuanced tradeoffs between utility and robustness, allowing the decision-maker to account for rare but impactful events or to safeguard against worst-case scenarios, all within a unified and interpretable mathematical framework.

\paragraph{Preference Metric.} To formalize these, we adopt a {\em distortion function} function  $h:[0,1]\to [0,1]$ that satisfies $h(0)=0$ to apply desired distortions to any probability distribution. Based on this, corresponding to any given CDF $\Q$, we define  
\emph{preference metrics} (PMs) of $\Q$, denoted by $U_h$ as the \emph{signed Choquet integral}, which was first introduced in the capacity-theory setting~\cite{Choquet1954}, later given a decision-theoretic integral representation~\cite{Schmeidler1986}, and more recently studied in risk-management concept~\cite{Wang2020}. The PM $U_h$ associated with $\Q$ is specified by
\begin{align}
\label{eq:wang_def}
    U_h(\Q) \triangleq  \int_{-\infty}^{0} & \Big(h\big(1- \Q(x)\big)  - h(1)\Big) \, \der x  + \int_{0}^{\infty} h\big(1-\Q(x)\big) \, \der x \ .
\end{align}
The distortion function $h$ enables high versatility in specifying and unifying various notions of preference. Given this notion of PM, it can replace the conventional expected value as the desired metric. Subsequently, the objective of a bandit algorithm becomes to identify the arm with the largest PM. We note that setting $h(u)=u$, the PM associated with distribution $\Q$ reduces to its expected value. Hence, the PM-centric bandit framework subsumes the conventional mean-based bandit framework. The choice of PM depends on the nature of the problem, and it can be designed to emphasize certain distributional characteristics, e.g., tail behavior, over others, e.g., the mean.  To illustrate the effect of distortion, we provide two examples.
\begin{example}[Inverted-$S$]
A procurement officer repeatedly selects among suppliers whose delivery times have heavy‑tailed delays due to strikes, weather, or geopolitical shocks. Adopting the inverted-$S$ distortion function
\begin{align}
h(u)=\exp \Big[-\big (- \ln u\big)^{\beta}\Big]\ ,
\qquad \mbox{where} \quad u\in(0,1]\  ,\;\beta>0 \ ,
\end{align}
balances a tradeoff between utility and delay and yields a smooth, prospect‑theoretic metric that accentuates precisely those parts of the loss–gain distribution the decision‑maker values. Specifically, \vspace{-.05 in}
 \begin{itemize}
  \item With $\beta<1$, the distortion function over‑weights the small probabilities of extreme delays, steering the algorithm toward suppliers with shallower tail risk, even at slightly higher mean cost
  \item If later contracts guarantee compensation for catastrophic delays, setting $\beta>1$ under‑weights those rare events, allowing the agent to prioritize lower expected cost instead.
\end{itemize} 
Figure~\ref{fig:inverted_S} showcases applying this distortion function on light-tailed (Gaussian and exponential) and heavy-tailed (lognormal) distributions.
\end{example}

\begin{figure}[t]
    \centering
\includegraphics[width=1\linewidth]{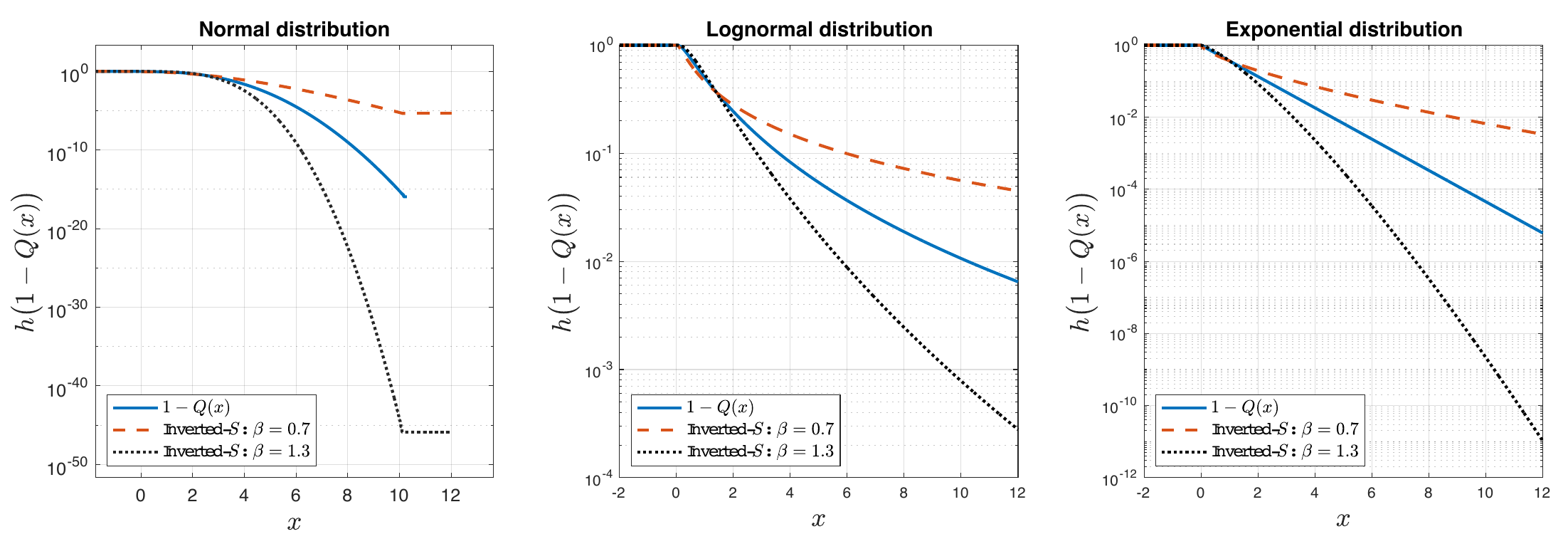}
    \caption{Distorted complement CDFs by Inverted-$S$ distortion.}
    \label{fig:inverted_S}
\end{figure}

There exists extensive literature on PMs.  
These include $L$-functionals in robust location estimation~\cite{huber2009robust}, Yaari’s dual utilities in decision theory~\cite{yaari1987dual}, distorted premium principles in insurance \cite{denneberg1994nonadditive}, and distortion risk measures in finance \cite{kusuoka2001law}. Some widely used examples of PMs are risk measures such value-at-risk (VaR), expected shortfall (ES) \cite{artzner99}, and conditional VaR (CVaR) \cite{rockafellar2000optimization}, Gini shortfall~\cite{furman2017gini}; measures of variability, such as deviation measures \cite{rockafellar2006generalized},  mean-median deviation, inter-quantile range, Wang's right-tail deviation, inter-expected shortfall, Gini deviation \cite{Gini1912}, cumulative Tsallis past entropy \cite{Cali2017CTE}.

\paragraph{PM-centric Optimal Policy.} The utilitarian MAB settings often adopt the tacit assumption that there exists a {\em single optimal} arm that maximizes the desired reward function. This premise, subsequently, guides the design of regret-efficient algorithms focused on identifying and repeatedly selecting the single best arm.  In a sharp contrast, in the PM-centric framework, we show that the assumption that a single arm achieves the optimal PM is not necessarily valid. Specifically, we show that the optimal policy can consist of carefully randomizing among all arms. Such a mixture policy yields a PM that strictly dominates those yielded by each of the arms individually. The following lemma  

\begin{lemma}[Gini Deviation]
\label{example utility}
Consider a two-arm Bernoulli bandit model. For a given $p\in[0,1]$, the arms'  distributions are ${\rm Bern}(p)$ and ${\rm Bern}(1-p)$, with CDFs $\F_1$ and $F_2$, respectively. For distortion function $h(u)=u(1-u)$, we have  
\begin{align}
    U_h \left(\frac{1}{2}\F_1 +\frac{1}{2} \F_2\right) > \max\big\{ U_h(\F_1)\; , \;  U_h(\F_2)\big\}\ .
\end{align} 
\end{lemma}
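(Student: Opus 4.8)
The plan is to evaluate all three preference metrics directly, exploiting the fact that Bernoulli CDFs are simple step functions and that the chosen distortion satisfies $h(1)=1\cdot(1-1)=0$. This last observation is the first step: since $h(1)=0$, the correction term in the first integral of~\eqref{eq:wang_def} vanishes, and the PM collapses to $U_h(\Q)=\int_{-\infty}^{\infty} h\big(1-\Q(x)\big)\,\der x$. For each Bernoulli law the survival function $1-\Q(x)$ equals $1$ on $(-\infty,0)$ and $0$ on $[1,\infty)$, where $h$ vanishes, so only the unit interval $[0,1)$ contributes to the integral.

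Next I would read off the survival functions on $[0,1)$. There $\F_1(x)=1-p$ and $\F_2(x)=p$, so $1-\F_1(x)=p$ and $1-\F_2(x)=1-p$; writing $\G\triangleq\frac12\F_1+\frac12\F_2$ for the mixture CDF and using that a survival function is affine in the CDF, $1-\G(x)=\frac12(1-\F_1(x))+\frac12(1-\F_2(x))=\frac12$. Integrating $h$ over the length-one interval then gives the three closed forms $U_h(\F_1)=h(p)=p(1-p)$, $U_h(\F_2)=h(1-p)=p(1-p)$, and $U_h(\G)=h(\tfrac12)=\tfrac14$. The two single-arm metrics coincide because $h(u)=u(1-u)$ is symmetric under $u\mapsto 1-u$.

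The conclusion then reduces to the elementary inequality $\tfrac14>p(1-p)$, which holds for every $p\neq\tfrac12$ because $p(1-p)$ attains its global maximum $\tfrac14$ precisely at $p=\tfrac12$. Conceptually this is an instance of the strict concavity of $h$ (note $h''\equiv-2$): since $1-\G$ is the arithmetic mean of $1-\F_1$ and $1-\F_2$, Jensen's inequality applied pointwise yields $U_h(\G)=h(\tfrac12)>\frac12 h(p)+\frac12 h(1-p)=\frac12 U_h(\F_1)+\frac12 U_h(\F_2)$, and because the two arm-PMs are equal this average equals their maximum, giving the strict domination $U_h(\G)>\max\{U_h(\F_1),U_h(\F_2)\}$.

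Since the computation is exact, there is no analytic obstacle; the one point that requires care is the boundary case $p=\tfrac12$, where the two arms and the mixture all collapse to the same ${\rm Bern}(\tfrac12)$ law and the strict inequality degenerates to equality. The statement is therefore understood to hold for $p\neq\tfrac12$, and the only genuinely substantive observation in the proof is that the \emph{equality} $U_h(\F_1)=U_h(\F_2)$ is what promotes the concavity gain above the maximum rather than merely above the average.
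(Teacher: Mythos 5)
Your proof is correct and takes essentially the same route as the paper's: both reduce each PM to $h$ evaluated at the (mixed) Bernoulli mean, i.e.\ $U_h(\F_i)=h(p_i)$ and $U_h(\tfrac12\F_1+\tfrac12\F_2)=h(\tfrac12)$, and then invoke the strict concavity of $h(u)=u(1-u)$ with its unique maximizer at $u=\tfrac12$. Your explicit flagging of the degenerate case $p=\tfrac12$ is a fair catch — the lemma as stated allows all $p\in[0,1]$, while the paper's own proof implicitly excludes it by choosing $p_1<\tfrac12<p_2$.
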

\begin{proof}
    See Appendix~\ref{proof:lemma_example}. 
\end{proof}
This result emphasizes the important observation that the optimal value of the desired PM is achieved when the samples are generated by the mixture distribution $\frac{1}{2}(\F_1 + \F_2)$. Operationally, this means that to achieve the optimal performance, an MAB algorithm should randomize sampling from the two arms. This work is the first to establish the need for randomized sampling to achieve a desired metric.

Throughout the rest of the paper, when the optimal performance is achieved by selecting a mixture of arms, we refer to that as a {\em mixture} policy. Otherwise, when it follows the convention of one arm dominating others, we refer to that as a {\em solitary} arm policy. Whether the optimal policy is a mixture or solitary, critically hinges on the choice of the distortion function $h$, and whether a PM might involve mixtures depends on the {\bf distortion function}. Categorically, we establish that under \emph{convex} distortion functions, the optimal solution is always a solitary arm. On the other hand, for any \emph{strictly concave} distortion function, there is at least one bandit model for which the optimal solution is a mixture policy.

\newpage
\paragraph{Technical Challenge of Learning Optimal Mixtures.}  Designing an MAB algorithm that can randomize sampling from different arms according to the optimal randomization probabilities is fundamentally from the algorithms with solitary arm policies. Specifically, searching for a solitary arm is guided by sequentially identifying and selecting an arm that optimizes a desired metric (e.g., upper confidence bound). 
In contrast, a mixture policy introduces two new challenges to the arm-selection policy. 
\begin{itemize}
    \item {\em Estimating mixing coefficients:} In a $K$-arm MAB, the optimal mixture will be specified by $K$ continuous-valued mixing coefficients. Determining these coefficients inevitably necessitates incorporating routines to reliably estimate them to the desired levels of accuracy. Such estimation routines generally have no precedence in designing bandit algorithms, which generally either lack estimation routines or, even when they do, have auxiliary natures in support of the terminal decision (isolating one arm). Since the optimal set of mixing coefficients lies in a continuous space with infinite possibilities, the nature of mixture policies becomes fundamentally distinct from the solitary arm policies with a decision space with finite possibilities (arms).

    \item {\em Tracking the optimal mixture:} The second dimension is the need for an efficient algorithm to track the optimal mixture distribution. This necessitates balancing arm selections over time so that their mixture, in aggregate, conforms to the optimal one.  
      
\end{itemize}

\subsection{Literature Review}

There are two main lines of research that have considered bandit objectives beyond optimizing the first moment. 
\paragraph{Empirical Distribution Performance Measures Framework. } In the MAB literature, the work most closely related to ours is the empirical distribution performance measure (EDPM) framework~\cite{cassel2018general}. EDPM shifts focus from mean‑based bandits to a richer family of path‑dependent objectives which depend on the entire sequence of observed rewards rather than just their average. This framework is primarily motivated by risk criteria such as the Sharpe ratio, value‑at‑risk (VaR), and conditional value‑at‑risk (CVaR). These metrics highlight distributional features like variance and tail losses. Optimizing such measures results in having oracles that may change arms over time, unlike the fixed‑arm oracle in the conventional setting. This important observation made by~\cite{cassel2018general} precisely aligns with our key observation that PM-centric optimality policies do not necessarily commit to a single arm and might have mixture structures. Recognizing such a shift, the EDPM framework designs an algorithm and specifies systematic conditions under which a solitary arm policy can \emph{approximate} a mixture policy. When the path-dependent objective is (quasi-)convex in the space of arms' distributions, expectedly, the approximate policy becomes exact. 

Our PM-centric framework has the following major distinctions from the EDPM framework. {\em Objective:} We shift away from having \emph{approximate} solitary policies to characterizing the exact policies. To this end, the emphasis will be on \emph{concave} settings, for which the optimal policies are unknown --- and we show that they do not necessarily admit solitary policies. {\em Assumptions \& Scope:} There are subtle differences in assumptions on the continuity model of the performance/preference metrics that have important implications on the scope of the bandit settings. Specifically, the EDPM assumes Lipschitz continuity for its performance metric, which excludes important classes of objectives, such as distortion risk measures (DRMs). In contrast, the PM-centric framework assumes only \holder~continuity, accommodating a broader class of objectives, including DRMs.
{\em Mixture algorithm:} We offer algorithms that implement mixture policies, shifting away from the conventional solitary policies. The PM-centric algorithms are designed to determine and track the optimal mixture of arm selections.
{\em Regret guarantees:} For the measures such as mean and CVaR, we match their regret bounds using the ETC-based algorithm under the condition that the sub-optimality gap used in the ETC-based algorithm can be expressed as a function of the discretization level.   While they have the assumption of boundedness, we consider sub-Gaussian bandit models. However, the ETC-based algorithm assumes the gap knowledge, while their algorithm is UCB-based.

\paragraph{Risk-sensitive Bandits.} There is a rich line of research on risk sensitivity in MABs. Unlike the canonical objective of maximizing expected reward, risk-sensitive approaches consider finer distributional properties that capture deviation and tail behavior. Widely used risk measures include variance, VaR, CVaR, and quantile-based criteria. Some recent developments on various notations of risk-sensitivity include~\cite{ chang2022,baudry2021optimal, tamkin2019cvar,tan2022cvar,kolla2023bandit,liang2023distribution }. In Section~\ref{sec: risk_application}, we discuss the connection of our PM-centric framework to the existing literature on risk-sensitive bandits. Furthermore, there are general frameworks for viewing and analyzing risk-sensitivity in decision-making, such as the cumulative prospect theory (CPT)~\cite{tversky1992advances} and the framework based on DRMs~\cite{WirchHardy2003Distortion}.  The study in~\cite{prashanth2022wasserstein} introduces a framework that unifies various MAB risk measures, CPT-inspired bandits, and DRM-based bandits. It specified a Wasserstein distance-based approach to delineate concentration bounds for estimating empirical risk and incorporating them in decision-making. 

Our PM-centric approach has the following major differences from the existing studies on risk-sensitive bandits. {\em Generality:} Our approach recovers the existing literature on risk-sensitive bandits as special cases. In particular, when the distortion function is chosen appropriately, the PM-based framework subsumes the DRMs. {\em Objective:} The risk-sensitive bandits literature focuses primarily on \emph{minimizing} different notions of decision variation. Although such an approach penalizes dispersion to avoid tail risk, in some domains we explicitly require emphasizing and valuing unpredictability or convex payoffs in volatility. Designing bandit objectives that reward high variance, enabling favoring variance without forcing an unnatural mean‑maximization surrogate. Some common examples include volatility trading, in which a trader buys an at‑the‑money straddle on an index and earns profit proportional to the absolute price movement. In this case, large swings in either direction are desirable. Another example is wildlife protection or airport security patrols, for which predictability is a weakness. In these examples, an adversary exploits regular patterns. As a counter-measure, the patrols want to monitor routes whose occurrence probabilities are spread out, i.e., have high entropy or variance over actions, to keep the adversary uncertain. {\em Mixture algorithm:} The existing literature including the works ~\cite{ chang2022,baudry2021optimal, tamkin2019cvar,tan2022cvar,kolla2023bandit} 
 assumes the existence of an optimal solitary arm. We show that for PMs with convex distortion functions, the optimal solution is a solitary arm. In contrast, when the PMs have strictly concave functions, the optimal solution might be a mixture. Our analysis does not rely on any assumptions about the nature of the optimal solution. As a result, our framework not only recovers settings in which the optimal solution is a solitary arm, but also generalizes settings where there is mixture optimality. {\em Regret guarantees:} Our general regret guarantees either recover some of the existing regret guarantees (e.g., CVaR) or improve the known guarantees (e.g., proportional hazard transform).

\vspace{-.1 in}
\subsection{Contributions} 
\vspace{-.05 in}
We propose a PM-centric framework the objective of which is designing a sequence of arm selections such that a notion of \emph{preference} --- beyond the convention of the expected values of arms' distributions --- is maximized. Notably, in this PM-centric framework, an arm's utility may no longer be characterized by its mean value. Rather, the quality of an arm is measured by functionals that depend on the entire arm distribution. This framework is highly versatile where by properly selecting the PM and the associated distortion functions, as its special cases, it subsumes the conventional mean-based bandits, the EDPM framework, and the existing approaches to risk-sensitive bandits. In this paper, we address the following overarching questions: What are the key challenges introduced by the PM-centric framework for stochastic bandits? Can we design regret-efficient algorithms for stochastic bandits for optimizing PMs?

In pursuing the above question, we uncover a novel and important property of PMs: for many PMs, the optimal utility is achieved not by one arm's distribution, but a distribution that is the mixture of multiple arms' distributions. The important implication of this observation is that an algorithm for achieving the optimal regret will not converge to identifying one arm as the best one, and instead converges to a policy that selects a group of arms and selects each overtime according to a very specific frequency.  This is a significant departure from the canonical bandit framework, which inherently assumes the existence of a solitary optimal arm. This shift is consequential --- most existing algorithms for regret minimization in stochastic bandits are fundamentally designed around the tacit premise that the optimal performance achieved by an oracle is a solitary arm policy. As a result, these algorithms fail to adapt to the PM-centric setting and incur {\bf linear regret} when applied to settings with optimal mixture policies. 

In what follows, we discuss the fundamental challenges posed by the PM-centric framework, the limitations of adapting state-of-the-art approaches, and the key technical obstacles in designing and analyzing regret-efficient algorithms under this generalized framework.

\vspace{.1 in}
\noindent\emph{PM-centric Bandit Algorithms.}
We begin by highlighting the new design mechanisms that need to be incorporated to transcend from selecting solitary arms and accommodate for learning mixtures over arms. Subsequently, we provide brief overviews of the proposed algorithms for the finite-horizon and infinite-horizon (anytime) settings. Finally, we discuss the technical challenges associated with regret analysis that do not have precedence in the bandit literature. 

\vspace{.1 in}
\noindent\emph{PM-centric Horizon-dependent Algorithms.} We first consider the class of finite-horizon algorithms, in which the horizon is known to the learner. This equips the learner with a fine-grained control over constructing confidence sequences to measure the estimation fidelity, such that undesirable error events can be controlled as a decreasing function of the horizon. We provide adaptations of the explore-then-commit (ETC) and upper-confidence bound (UCB) algorithms in the PM-centric framework. These algorithms have major departures from their canonical counterparts --- the most significant distinction pertaining to estimating and tracking mixture distributions instead of individual arm distributions.

\vspace{.1 in}
\noindent\emph{PM-centric Anytime Algorithm.} In a wide range of applications, decisions are not bound by fixed horizons (e.g., recommendation systems).  
Based on UCB principles, we propose an anytime algorithm in Section~\ref{sec: anytime}, in which removing the dependence on budget 
creates the following technical challenge.
 A critical step in PM-centric algorithms is a routine for estimating the coefficients needed for mixing the arms' distributions to generate the optimal mixture. In the horizon-dependent algorithms, the learner can choose the accuracy level for mixing coefficients estimates as a function of the horizon, ensuring tight regret bounds.  Intuitively, such accuracy levels are inversely related to the horizon. In the anytime algorithm, specifying the desired estimation accuracy faces the challenge that the horizon is unknown.  
    In our anytime algorithm design in Section~\ref{sec: anytime}, we let the learner set the needed accuracy for mixing coefficient estimates is refined successively as more data is collected. 

The remainder of this paper is organized as follows. Section \ref{sec:setting} introduces the PM-centric bandit framework, along with the necessary definitions, notation, and assumptions. In Section \ref{sec:algorithm}, we present horizon-dependent algorithms and discuss how they differ from their canonical counterparts. Section \ref{sec:analysis} characterizes regret bounds for these horizon-aware methods. Section \ref{sec: anytime} then introduces an anytime variant and establishes its regret guarantees. We present empirical evaluations of all proposed algorithms in Section \ref{sec: experiments}. Finally, Section \ref{sec: risk_application} discusses the application of our framework to risk-sensitive bandits and the relevance of our results to the existing literature.

\section{PM-centric Bandit Framework}
\label{sec:setting}

\paragraph{Bandit Model.} Consider a $K$-armed unstructured stochastic bandit.  Each arm $i\in [K]\triangleq\{1,\cdots,K\}$ is endowed with a probability space $(\Omega,\mcF,\F_i)$, where $\mcF$ is the $\sigma$-algebra on $\Omega\subseteq\R_+$ and $\F_i$ is an {\em unknown} probability measure.
Accordingly, define 
$\F \triangleq  \{\F_i:i\in[K]\}$ as the set of all probability measures.
At time $t\in\N$, a policy $\pi$ selects an arm $A_t\in[K]$ and the arm generates a stochastic sample $X_t$ distributed according to $\F_{A_t}$. Denote the sequence of actions, observations, and the $\sigma$-algebra that policy $\pi$ generates up to time $t\in\N$ by 
\begin{align}
    \mcX_t\triangleq\left ( X_1,\cdots, X_t\right )\ , \qquad \mcA^\pi_t \triangleq\left( A_1,\cdots,A_t\right )\ , \qquad \mbox{and} \quad \mcH^\pi_t\triangleq\sigma\left (A_1,X_1,\cdots,A_{t-1},X_{t-1}\right )\ , 
\end{align}
respectively. Corresponding to any bandit instance $\bnu$ and policy $\pi$, $\P_{\bnu}^{\pi}$ denotes the push-forward measure on $\mcH^\pi_t$, and $\E_{\bnu}^{\pi}$ denotes the associated expectation. The sequence of independent and identically distributed (i.i.d.) rewards generated by arm $i\in[K]$ up to time $t\in\N$ is denoted by  $\mcX_t(i)\triangleq \{X_t : A_t= i\}$. We define $\tau_t^\pi(i)\triangleq |\mcX_t(i)|$ as the number of times that policy $\pi$ selects arm $i\in[K]$ up to time $t$.

\paragraph{Preference Metric.} We focus on evaluating a measure of preference associated with the probability measure associated with each arm. 
The performance metrics are tailored to the specific application domains of interest. Specifically, based on the definitions of distortion function $h$ and PM $U_h$ defined in~\eqref{eq:wang_def}, corresponding to any given policy $\pi$ at time $t$, the overall preference associated with the sequence of arm selections $\mcA_t^\pi$ by a policy $\pi$ is specified by 
\begin{align}    
\label{eq:U} U_h\left(\sum_{s=1}^t\sum_{i\in[K]}\frac{1}{t} \mathds{1}\{A_s=i\} \; \F_i\right) = U_h\left(\sum_{i\in[K]}\frac{1}{t}\tau_t^\pi(i) \; \F_i\right).
\end{align}
This is a strict generalization of the notion of rewards in the canonical MAB framework, which can be recovered from the PM induced by setting the distortion function to $h(p) = p$.

\paragraph{Oracle Policy.} Specifying an oracle that serves as a benchmark for policy performance. Such an oracle accurately identifies the optimal sequence of arm selections $\{A_t:t\in\N\}$. Given the structure in~\eqref{eq:U}, designing an oracle policy is equivalent to determining the optimal mixing of the CDFs, where the mixing coefficients up to time $t$ are $\big\{\frac{1}{t}\tau_t^\pi(i):i\in[K]\big\}$ . For a bandit instance $\bnu\triangleq(\F_1,\cdots,\F_K)$, the vector of optimal mixture coefficients is denoted by
\begin{align}
\label{eq:optimal alpha}
\balpha^{\star}_{\bnu}\;\in\;\argmax\limits_{\balpha\in\Delta^{K-1}}\;U_h\Big ( \sum\limits_{i\in[K]}\alpha(i) \; \F_i\Big )\ ,
\end{align}
where  $\Delta^{K-1}$ denotes a $K$-dimensional simplex. When clear from the context, we use the shorthand  $\balpha^{\star}$ for $\balpha^{\star}_{\bnu}$.
We note that the oracle considered in~\eqref{eq:optimal alpha} is omniscient, i.e., it is aware of all the distributions $\F =  \{\F_i:i\in[K]\}$ and the mixing policy $\balpha^{\star}_{\bnu}$ that optimizes the PM. When the mixture coefficients vector $\balpha^{\star}$ is 1-sparse, the optimal policy becomes a solitary (single-arm) policy, and otherwise it is a mixture policy.

Throughout the rest of the paper, for any given vector $\balpha$ and set of CDFs $\F =  \{\F_i:i\in[K]\}$, we use the definition
\begin{align}
    \label{eq:V}
    V(\balpha, \F) \triangleq  U_h\Big ( \sum\limits_{i\in[K]}\alpha(i) \; \F_i\Big )\ .
\end{align}\vspace{-.1 in}
\paragraph{Mixture-centric Regret.} 
Motivated by the observation that for a given distortion function $h$, the optimal sampling rule may be achieved by a \emph{mixture} of arm distributions, we provide a general mixture-centric framework in which bandit policies can take mixture forms, subsuming the solitary policies as a special case when the mixing mass is placed on one arm. By the oracle policy's definition $\balpha^{\star}_{\bnu}$ in~\eqref{eq:optimal alpha}, for a bandit instance $\bnu$, 
our objective is to design a bandit algorithm with minimal regret with respect to the PM achieved by the oracle policy. Hence, for a given policy~$\pi$ and horizon $T$, we define the  regret as the gap between the PM achieved by the oracle policy and the \emph{average} PM achieved by $\pi$, i.e., 
\begin{align}
\label{eq:regret}
\mathfrak{R}_{\bnu}^\pi(T)\;\triangleq\;  V(\balpha^{\star}, \F) 
    -      \E_{\bnu}^\pi\Big[V\Big(\frac{1}{T}\btau_T^\pi, \F \Big)\Big]\ .
\end{align}

\paragraph{Technical Assumptions.} 
For each arm $i\in[K]$, let $\mcP(\Omega)$ denote the set of all probability measures with finite first moment on $\Omega$. 
We focus on distributions $\{\F_i:i\in[K]\}$ that are $1$-sub-Gaussian and belong to the metric space $(\mcP(\Omega),\norm{\cdot}_{\rm W})$, where $\norm{\G-\mathbb{S}}_{\rm W}$ denotes the $1$-Wasserstein distance between distributions $\G$ and $\mathbb{S}$. We define $W$ as the maximum ratio between Wasserstein and total variation distances between any two mixture of arms, i.e.,
\begin{align}\label{eq:W}
    W \triangleq \max \limits_{\balpha \neq \bbeta \in \Delta^K} \frac{1}{\lVert \balpha - \bbeta \rVert_1}\Big\|\sum_{i=1}^K \alpha_i \F_i-\sum_{i=1}^K \beta_i \F_i\Big\|_{\rm W}\ .
\end{align}  

\begin{theorem}[Upper bound on $W$] \label{theorem:W}
Let $\{\mathbb{F}_i : i\in[K]\}$ be probability measures on $\Omega\subseteq \R$ that are $1$-sub-Gaussian and define $W$ as in~\eqref{eq:W}.  We have $W\;\leq\;\sqrt{2\pi}$.
\end{theorem}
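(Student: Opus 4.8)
The plan is to convert the Wasserstein distance between the two mixtures into a single one‑dimensional integral, use the simplex constraint $\sum_i \alpha_i = \sum_i \beta_i = 1$ to induce a cancellation, and then reduce everything to per‑arm first absolute moments that sub‑Gaussianity controls. Fix $\balpha\neq\bbeta\in\Delta^{K-1}$, write $g_i \triangleq \alpha_i-\beta_i$ (so that $\sum_{i\in[K]} g_i=0$ and $\|\balpha-\bbeta\|_1=\sum_{i\in[K]}|g_i|$), and let $F_i$ denote the CDF of $\F_i$. Since the mixture $\sum_i \alpha_i\F_i$ has CDF $\sum_i \alpha_i F_i$, the one‑dimensional (quantile‑coupling) representation of the $1$‑Wasserstein distance gives
\[
\Big\|\sum_{i\in[K]} \alpha_i\F_i-\sum_{i\in[K]} \beta_i\F_i\Big\|_{\rm W}=\int_{\R}\Big|\sum_{i\in[K]} g_i\,F_i(x)\Big|\,\der x\ .
\]
Because $\sum_i g_i=0$, I may subtract the common step $\mathds{1}\{x\ge 0\}$ inside the integrand without changing its value, and then move the absolute value inside the sum.

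This yields $\int_{\R}\big|\sum_i g_i(F_i(x)-\mathds{1}\{x\ge0\})\big|\,\der x\le \sum_{i\in[K]}|g_i|\int_{\R}|F_i(x)-\mathds{1}\{x\ge0\}|\,\der x$. The inner integral is exactly the first absolute moment: splitting at the origin gives $\int_{\R}|F_i(x)-\mathds{1}\{x\ge0\}|\,\der x=\int_{-\infty}^0 F_i(x)\,\der x+\int_0^\infty(1-F_i(x))\,\der x=\E_{X\sim\F_i}[\,|X|\,]$. Since each $\F_i$ is $1$‑sub‑Gaussian with mean zero, the tail bound $\P(|X|\ge t)\le 2e^{-t^2/2}$ together with the layer‑cake identity gives $\E_{X\sim\F_i}[\,|X|\,]=\int_0^\infty \P(|X|\ge t)\,\der t\le \int_0^\infty 2e^{-t^2/2}\,\der t=\sqrt{2\pi}$. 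Combining, the numerator is at most $\sqrt{2\pi}\sum_i|g_i|=\sqrt{2\pi}\,\|\balpha-\bbeta\|_1$, so the ratio defining $W$ is at most $\sqrt{2\pi}$ for every admissible pair, and hence $W\le\sqrt{2\pi}$.

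The step I expect to require the most care is the interplay between the cancellation and the moment bound. The cancellation forces a single, arm‑independent centering threshold (here $0$) inside the integral, so the resulting per‑arm quantity is $\E_{X\sim\F_i}[\,|X|\,]$ rather than a deviation around each arm's own mean; this is controlled by $\sqrt{2\pi}$ only because $1$‑sub‑Gaussianity pins every arm's mean at the common value $0$. This zero‑mean content of the assumption is essential and not cosmetic: for distributions with arbitrary means the ratio is genuinely unbounded (two point masses placed far apart are $0$‑sub‑Gaussian yet have arbitrarily large $W_1$), so the argument must lean on the centered sub‑Gaussian property precisely at the moment bound. An alternative route reaches the same conclusion by noting that $W_1$ is jointly convex, so the ratio reduces to $\tfrac12 W_1$ between two mixtures and its maximum is attained at vertex mixtures, after which each pairwise $W_1(\F_j,\F_k)\le \E|X_j|+\E|X_k|\le 2\sqrt{2\pi}$ closes the bound.
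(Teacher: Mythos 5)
Your proof is correct and follows essentially the same route as the paper: you exploit $\sum_i(\alpha_i-\beta_i)=0$ to center the integrand at the origin, reduce to the per-arm first absolute moments $\E_{\F_i}[|X|]$, and bound each by $\int_0^\infty 2e^{-t^2/2}\,\der t=\sqrt{2\pi}$ via the centered sub-Gaussian tail. The only cosmetic difference is that you use the CDF-difference integral representation of $W_1$ where the paper uses Kantorovich--Rubinstein duality (subtracting $f(0)$ from a $1$-Lipschitz test function plays exactly the role of your subtracting $\mathds{1}\{x\ge 0\}$); these are equivalent one-dimensional identities, which the paper itself records in Lemma~\ref{lemma:lipschitz-wasserstein}.
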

\begin{proof}
    See Appendix \ref{Appendix:W_finitess}.
\end{proof}
Next, define $\Xi$ as the convex hull of the set $\{\F_i: i\in[K] \}$ and assume $U_h$ satisfies the following notions of continuity. 

\begin{definition}[H\"{o}lder continuity]
\label{assumption:Holder} The PM $U_h$ is said to be H\"{o}lder continuous with exponent $q\in(0,1]$, if for all distributions $\G_1,\G_2\in(\Xi,W_1)$, there exists a finite $\mcL \in\R_+$ such that
    \begin{align}
    \label{eq:Holder}
        U_h(\G_1) - U_h(\G_2) \; \leq \; \mcL\norm{\G_1-\G_2}_{\rm W}^q.
    \end{align}
\end{definition}
Many of the widely-used PMs are \holder-continuous. 
In Table~\ref{table:beta_values}, we present a list of PMs, and specify their associated \holder~exponents $q$.

\begin{lemma}[Finiteness of $U_h$]
\label{lemma:U}
Suppose $h$ is \holder-continuous. Then, $U_h(\G)$ is  finite for any $\G\in(\Xi,W_1)$.
\end{lemma}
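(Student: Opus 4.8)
The plan is to show that $U_h(\G)$ is finite by establishing both a finite upper bound and a finite lower bound. The key idea is to exploit \holder-continuity together with the finiteness of $W$ (Theorem~\ref{theorem:W}) by comparing the arbitrary mixture $\G\in\Xi$ to a fixed reference distribution in $\Xi$ whose PM is known to be finite. Since every $\G\in\Xi$ is a convex combination $\sum_i \alpha(i)\F_i$ of the $1$-sub-Gaussian arm distributions, and $\Xi$ is a compact (in particular bounded-diameter) subset of $(\mcP(\Omega),\norm{\cdot}_{\rm W})$, the \holder~bound will control $U_h(\G)$ in terms of $U_h$ of the reference point plus a uniformly bounded increment.

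Concretely, first I would fix a reference distribution, say $\G_0=\F_1\in\Xi$, and argue that $U_h(\F_1)$ is finite. This follows because $\F_1$ is $1$-sub-Gaussian, so its complementary CDF $1-\F_1(x)$ decays fast enough (Gaussian tails) that, after applying the distortion $h$ which maps into $[0,1]$, both integrals in the definition~\eqref{eq:wang_def} converge; \holder-continuity of $h$ at the endpoints $0$ and $1$ (so that $h(1-\F_1(x))$ and $h(1-\F_1(x))-h(1)$ vanish at a controlled rate as $x\to\pm\infty$) guarantees the integrands are absolutely integrable. Second, for a general $\G\in\Xi$, I would apply the \holder~bound~\eqref{eq:Holder} in both directions:
\begin{align}
    U_h(\G) - U_h(\F_1) \;\leq\; \mcL\,\norm{\G-\F_1}_{\rm W}^q\ , \qquad U_h(\F_1) - U_h(\G) \;\leq\; \mcL\,\norm{\F_1-\G}_{\rm W}^q\ .
\end{align}

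The third step is to bound the Wasserstein distance $\norm{\G-\F_1}_{\rm W}$ uniformly over $\Xi$. Writing $\G=\sum_i\alpha(i)\F_i$ and $\F_1=\sum_i e_1(i)\F_i$ as mixtures with coefficient vectors in $\Delta^{K-1}$, the definition of $W$ in~\eqref{eq:W} yields $\norm{\G-\F_1}_{\rm W}\leq W\,\norm{\balpha-\be_1}_1\leq 2W\leq 2\sqrt{2\pi}$, where the final inequality invokes Theorem~\ref{theorem:W} and the fact that the $\ell_1$-diameter of the simplex is at most $2$. Combining this with the two-sided \holder~bound gives $|U_h(\G)-U_h(\F_1)|\leq \mcL\,(2\sqrt{2\pi})^q$, so that $U_h(\G)$ is sandwiched between two finite quantities and is therefore finite.

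The main obstacle I anticipate is the first step --- rigorously verifying that the reference PM $U_h(\F_1)$ is finite. This requires translating \holder-continuity of the distortion $h$ into sufficiently fast decay of the integrands $h(1-\F_1(x))-h(1)$ and $h(1-\F_1(x))$ as $|x|\to\infty$, and then confronting these decay rates with the sub-Gaussian tail behavior of $\F_1$ to confirm integrability on $(-\infty,0]$ and $[0,\infty)$. Care is needed because a raw \holder~exponent $q<1$ slows the decay of $h$ near its endpoints, but the exponentially fast sub-Gaussian decay of $1-\F_1(x)$ comfortably dominates any polynomial-in-tail loss, so the integrals still converge. Once this baseline finiteness is secured, the remainder of the argument is a routine application of the \holder~inequality and the uniform Wasserstein bound.
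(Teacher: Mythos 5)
Your argument is correct in outline, but it is organized around a detour that the paper does not need, and the detour itself rests on a hypothesis mismatch. The paper's proof is exactly your ``step 1,'' applied directly to an arbitrary $\G\in\Xi$: writing $\G=\sum_i\alpha(i)\F_i$, the mixture inherits the sub-Gaussian tail bounds $1-\G(t)\le e^{-t^2/2}$ and $\G(-t)\le e^{-t^2/2}$ by linearity, so \holder~continuity of the distortion $h$ (with constant $L_h$ and exponent $r$) gives $|h(1-\G(x))-h(1)|\le L_h\,\G(x)^r$ on $(-\infty,0]$ and $|h(1-\G(x))-h(0)|\le L_h\,(1-\G(x))^r$ on $[0,\infty)$, whence $|U_h(\G)|\le L_h\int_{-\infty}^{\infty}e^{-rx^2/2}\,\der x=L_h\sqrt{2\pi/r}<\infty$. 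This is precisely the computation you defer as ``the main obstacle,'' and your description of how to carry it out is right; but once it is done for general $\G\in\Xi$ there is nothing left to prove, so your steps 2--3 (fixing a reference $\F_1$, the two-sided \holder~bound on $U_h$, and the uniform Wasserstein diameter via Theorem~\ref{theorem:W}) are redundant. More importantly, those steps invoke \holder~continuity of the \emph{functional} $U_h$ with respect to the Wasserstein metric (Definition~\ref{assumption:Holder} and~\eqref{eq:Holder}), whereas the lemma's hypothesis is \holder~continuity of the \emph{distortion function} $h:[0,1]\to[0,1]$; the former is a separate assumption that, read literally, already presupposes $U_h$ takes finite values so that the difference $U_h(\G_1)-U_h(\G_2)$ is meaningful. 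Either state explicitly that you are additionally assuming Definition~\ref{assumption:Holder} (in which case finiteness follows almost immediately from one finite reference value), or, better, drop steps 2--3 and run your step-1 computation for arbitrary $\G\in\Xi$, which is what the paper does.
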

\begin{proof}
    See Appendix \ref{Appendix:lemma_U_proof}. 
\end{proof}
For a given {\holder}-continuous function $h$, we denote the maximum value of the associated PM by $B_h\in \R_{+}$.

\section{PM-Centric horizon-dependent Algorithm Design}
\label{sec:algorithm}

We begin by analyzing the horizon-dependent setting in which the learner knows the fixed horizon and leverages that for designing algorithms. Two fundamental bottlenecks make canonical algorithms inapplicable in the PM-centric setting. %

\begin{itemize}
    \item {\bf Estimating mixing coefficients versus arm means:} Canonical algorithms, such as ETC- and UCB-based algorithms, form estimates of the arm means. Based on the confidence sets constructed around these estimates, these algorithms have various arm selection routines for balancing exploration and exploitation. Since the PM depends on the entire description of the arms' CDFs (not just their moments), the learner needs to estimate these CDFs up to a desired accuracy to be able to balance exploration and exploitation.     Subsequently, we face a functional estimation to determine the mixing coefficients as a function of the estimated arms' CDFs. Both of these steps have clear distinctions that make the canonical algorithms inapplicable in the mixture-centric setting.

    \item {\bf Tracking mixtures:} Once the optimal mixtures are estimated, there needs to be a mechanism that translates the mixing coefficient estimates into arm selection decisions. These decisions have to balance two concurrent objectives. First, there is a need to improve the estimates of the mixing coefficients. This may be achieved by exploring the arms to ensure accurate estimates of the CDFs, albeit at the cost of increased regret. Secondly, the arm selection rules need to ensure that, in aggregate, their selections track the mixing estimates. This necessitates an explicit mixture tracking rule that is regret-efficient. 
    \end{itemize}
In this section, we design algorithms in the PM-centric framework that are regret-efficient. We observe that adaptations of the canonical ETC and UCB algorithms can lead to sublinear regret in the PM-centric setting, each having its own performance or viability (model information) advantages. 
To start, we provide a few definitions and notations that will be used to describe and analyze the algorithms. We use the shorthand $\pi\in\{{\rm E,U,C}\}$ to refer to the ETC-type, UCB-type, and a computationally efficient UCB-type policies, respectively. An important routine in these algorithms is estimating the arms' CDFs accurately.
 We denote the empirical estimate of $\F_i$ at time $t$ generated by policy $\pi$ by
\begin{align}
\label{eq:empirical_CDF}
    \F_{i,t}^\pi(x)\triangleq \frac{1}{\tau_t^\pi(i)}\sum\limits_{s\in[t] : A_s = i} \mathds{1}\left \{X_s \leq x \right\}\ . 
\end{align}
Knowing the horizon equips the learner with the following key advantage. 
Analyzing the accuracy of these estimates and their impacts on the arm-selection decisions involves constructing confidence sets that measure the uncertainty of the CDF (and PM) estimates. In designing horizon-dependent confidence sets, the learner has a fine-grained control over undesirable error events in CDF estimation measured in terms of the horizon $T$. This leads to stronger performance guarantees compared to the settings in which the horizon information is unavailable. To analyze and control the undesirable error events in CDF estimates, we leverage the following lemma, which provides a concentration bound on these empirical CDFs in the $1$-Wasserstein metric. 
\begin{lemma}
\label{lemma: concentration}
    For any policy $\pi$ and any $y\in\R_+$ we have
    \begin{align}
\label{eq:meta_concentration}
    \P_{\bnu}^\pi & \Big( \norm{\F_{i,t}^\pi  -\F_i}_{\rm W}  >   y \Big) \leq 
    2\exp\Bigg( -\; \frac{\tau^{\pi}_t(i)}{256 {\rm e}} \Big(y- \frac{512}{\sqrt{\tau^{\pi}_t(i)}}\Big)^2\Bigg)\ .
\end{align}
\end{lemma}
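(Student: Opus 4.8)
The plan is to reduce the claim to a concentration inequality for the empirical CDF built from a \emph{deterministic} number of i.i.d.\ samples, and then transfer it to the random pull count $\tau_t^\pi(i)$. First I would exploit the one-dimensional structure: for measures on $\R$ the $1$-Wasserstein distance admits the CDF representation $\norm{\F_{i,t}^\pi-\F_i}_{\rm W}=\int_{\R}\big|\F_{i,t}^\pi(x)-\F_i(x)\big|\,\der x$. In an unstructured bandit the rewards returned by arm $i$ form a single i.i.d.\ sequence drawn from $\F_i$ (the policy only decides \emph{when} to query this ``reward tape,'' and each query returns a fresh draw independent of the $\mathcal H$-measurable decision to pull), so conditionally on $\tau_t^\pi(i)=n$ the estimator $\F_{i,t}^\pi$ is exactly the empirical CDF $\hat\F_n$ of $n$ i.i.d.\ draws from $\F_i$. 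It therefore suffices to prove, for each fixed $n$, that $\P\big(\int_{\R}|\hat\F_n-\F_i|\,\der x>y\big)\le 2\exp\!\big(-\tfrac{n}{256\mathrm{e}}(y-512/\sqrt n)^2\big)$; substituting the realized value $\tau_t^\pi(i)$ then recovers the stated bound, with the random-$\tau$ transfer kept lightweight since the fixed-$n$ estimate is uniform in $n$.

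For the fixed-$n$ bound I would split the deviation of $D_n:=\norm{\hat\F_n-\F_i}_{\rm W}$ into a mean term and a fluctuation term, writing $\P(D_n>y)\le\P\big(D_n-\E[D_n]>y-\E[D_n]\big)$, the nontrivial regime being $y>\E[D_n]$. The mean is controlled by the pointwise variance estimate $\E\big|\hat\F_n(x)-\F_i(x)\big|\le\sqrt{\F_i(x)(1-\F_i(x))/n}$ (Cauchy--Schwarz applied to the binomial fluctuations of $\hat\F_n(x)$), so that $\E[D_n]\le\tfrac{1}{\sqrt n}\int_\R\sqrt{\F_i(x)(1-\F_i(x))}\,\der x$; $1$-sub-Gaussianity of $\F_i$ renders the last integral a finite absolute constant, and this is the origin of the shift $512/\sqrt n$. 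The fluctuation term is the crux: $D_n=W_1(\hat\mu_n,\F_i)$, viewed as a function of the sample vector, has \emph{unbounded} bounded-difference increments, since moving one sample $Y_j\mapsto Y_j'$ changes $D_n$ by up to $|Y_j-Y_j'|/n$, so a plain McDiarmid argument fails. I would instead invoke Kantorovich--Rubinstein duality, $W_1(\hat\mu_n,\F_i)=\sup_{\norm{f}_{\rm Lip}\le 1}\big(\tfrac1n\sum_{j=1}^n f(Y_j)-\E_{\F_i}[f]\big)$, which exhibits $D_n$ as a $1$-Lipschitz functional of the empirical measure and hence as a function of $(Y_1,\dots,Y_n)$ with per-coordinate sub-Gaussian increments; a transportation-cost ($T_1$) inequality for $1$-sub-Gaussian measures, tensorized over $\F_i^{\otimes n}$, then yields a Gaussian tail $\P(D_n-\E[D_n]>t)\le\exp(-c\,n t^2)$.

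The main obstacle I anticipate is pinning down the explicit constants $256\mathrm{e}$ and $512$ rather than generic $c$ and $b$. The factor $\mathrm{e}$ strongly suggests that the cleanest route to the fluctuation bound is not the abstract $T_1$ inequality but a sub-Gaussian refinement of the bounded-differences inequality (a McDiarmid-type bound tailored to increments with sub-Gaussian tails), whose moment-generating-function step naturally produces an $\mathrm{e}$; matching the effective variance proxy to $256\mathrm{e}$ is then a matter of carefully accounting the per-coordinate Lipschitz constant $1/n$ against the $1$-sub-Gaussian parameter. Once both ingredients are in hand, combining the mean bound with the Gaussian tail by setting $t=y-512/\sqrt n$ and retaining the prefactor $2$ (slack inherited from the two-sided tail bound) completes the fixed-$n$ estimate, and the reduction in the first step finishes the lemma.
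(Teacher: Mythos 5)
First, a point of reference: the paper does not actually prove this lemma --- its ``proof'' is a citation to~\cite[Lemma 8]{prashanth2022wasserstein}, so there is no internal argument to compare yours against. Judged on its own merits, your outline is the standard and essentially correct route to such a bound: the mean/fluctuation split, the bound $\E\norm{\widehat{\F}_n-\F_i}_{\rm W}\le n^{-1/2}\int\sqrt{\F_i(1-\F_i)}\,\der x$ via Fubini and the pointwise binomial variance, the finiteness of that integral from $1$-sub-Gaussianity (it is at most $2\sqrt{\pi}$, comfortably inside the $512/\sqrt{n}$ shift), and the Gaussian fluctuation tail via Kantorovich--Rubinstein duality together with the $(1/n)$-Lipschitz dependence of $W_1(\widehat{\mu}_n,\F_i)$ on the sample in the $\ell_1$ metric, combined with a tensorized $T_1$ transportation inequality (which $1$-sub-Gaussian laws do satisfy, by Djellout--Guillin--Wu/Bobkov--G\"otze). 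All of these steps are sound and would produce a bound of the stated shape $\exp\big(-c\,n\,(y-b/\sqrt{n})^2\big)$; recovering the specific constants $256\mathrm{e}$ and $512$ is bookkeeping that you explicitly defer, which is acceptable for a sketch.

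The one genuine gap is the transfer from fixed $n$ to the random pull count, which you describe as ``lightweight.'' It is not: the event $\{\tau_t^\pi(i)=n\}$ is measurable with respect to the history, which includes the first $n$ rewards on arm $i$'s tape, so conditioning on it biases the joint law of those $n$ draws (e.g.\ a policy that abandons an arm after one large observation skews the conditional sample). Consequently $\P_{\bnu}^\pi(\norm{\F_{i,t}^\pi-\F_i}_{\rm W}>y)=\sum_n\P(\cdot\mid\tau_t^\pi(i)=n)\P(\tau_t^\pi(i)=n)$ cannot be bounded by substituting the unconditional fixed-$n$ estimate into each summand. The standard repairs are either a union bound over all $n\le t$ (costing a factor of $t$, which downstream applications here absorb via the $1/T^2$ calibration) or reading the lemma, as the cited source does, as a statement about a deterministic sample size with $\tau_t^\pi(i)$ merely standing in for it. Either fix should be stated explicitly; as written, the reduction step would fail for adaptive policies.
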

\begin{proof}
    See~\cite[Lemma 8]{prashanth2022wasserstein}.
\end{proof}
Using~\eqref{eq:meta_concentration}, we can construct distribution confidence sets to have a horizon-dependent control over error events. Specifically, choosing $y$ such that the right-hand side of~\eqref{eq:meta_concentration} falls below $\frac{1}{T^2}$ ensures that aggregate error events accumulate in a regret which is sublinear in $T$. 

\paragraph{Discretizing Mixing Coefficients.} Given a finite number of samples, the fidelity of the estimates of $\balpha^{\star}$ cannot be made arbitrarily high (information-theoretic limits). Furthermore, even if the exact $\balpha^{\star}$ is known, they can generally take arbitrary irrational values while sampling frequencies are always rational values. Hence, estimating $\balpha^{\star}$ beyond a certain accuracy level is not only infeasible, but also is not beneficial. We define $\varepsilon$ to specify the desired level of fidelity for each mixing coefficient. Accordingly, we specify $\Delta^{K-1}_{\varepsilon}$ by uniformly discretizing each coordinate of $\Delta^{K-1}$ into intervals of length $\varepsilon$ (when $\frac{1}{\varepsilon}$ is not an integer, we make up for the deficit/excess of the weights in the last coordinate). The precise choices of discretization will be slightly different in the ETC and UBC algorithms and will be specified in Sections~\ref{Section:RS_ETC_M}~and~\ref{RS_UCB_M}, respectively. For a given discretized simplex $\Delta^{K-1}_{\varepsilon}$, we define the top three discrete mixing coefficients chosen from $\Delta^{K-1}_{\varepsilon}$ as
\begin{align}
\label{eq:discrete optimal mixture}
    \ba^{(1)} \triangleq \argmax\limits_{\ba\in\Delta^{K-1}_{\varepsilon}}\; V(\ba, \F)\ ,\quad \ba^{(2)} \triangleq \argmax\limits_{\ba\in\Delta^{K-1}_{\varepsilon}: \ba^{(2)} \neq \ba^{(1)}}\; V(\ba, \F) \ . 
\end{align}
We also adopt the convention $\ba^{(0)}\triangleq \balpha^\star$.  Furthermore, we introduce three  {\em suboptimality gap} terms within the space of the discrete simplex $\Delta_{\varepsilon}^{K-1}$. The suboptimality gaps play important roles in designing the ETC-based algorithm and in analyzing both the ETC-based and UCB-based algorithms. Let us define
\begin{align}
\label{eq:suboptimality_gaps}
    \delta_{ij} (\varepsilon) &\triangleq  V\Big(\ba^{(i)}, \mathbb{F}\Big) - V\Big(\ba^{(j)}, \mathbb{F}\Big) \ ,  \qquad \forall i,j\in\{0,1,2\}\ .
\end{align}
These gap terms appear in different forms in the performance (regret) guarantees.

\subsection{PM-centric ETC for Mixtures}
\label{Section:RS_ETC_M}
We propose the {\bf PM}-centric  {\bf ETC} for {\bf M}ixtures (PM-ETC-M) algorithm, following the ETC principles, albeit with important modifications needed to accommodate implementing mixture policies. 

\paragraph{Conventional ETC.} The conventional ETC algorithm implements two phases within a known horizon $T$. In the first phase (exploration phase), ETC samples arms uniformly at random to estimate the arms' means. At the end of the exploration phase, ETC identifies the arm that is expected to have the largest mean and commits to this arm for the remainder of the horizon. An adaptation of ETC to the PM-centric framework leads to two challenges. First, at the end of the exploration phase, we face an infinite-dimensional problem (owing to the continuous space $\Delta^{K-1}$), as opposed to a finite-dimensional problem of identifying the arm with the largest mean. Secondly, at the end of the exploration phase, there is no single arm that we can commit to. Rather, we need to balance arm selection proportions to ensure a regret-efficient allotment of resources based on the estimated optimal mixing coefficient from the exploration phase.

\paragraph{Adapting ETC for PMs.} The PM-ETC-M algorithm consists of an initial \emph{exploration} phase during which the arms are sampled uniformly for a fixed interval to form high-fidelity estimates of the arms' CDFs (in contrast to canonical ETC, which estimates arms' mean values). Similarly to ETC, the duration of this phase depends on the minimum sub-optimality gap. Subsequently, in the next phase, the ETC algorithm \emph{commits} to a fixed policy, which is a mixture of arms with pre-fixed mixing coefficients. To alleviate the issue of infinite dimensionality of the optimization space, we adopt the discrete simplex $\Delta_{\varepsilon}^{K-1}$ instead of $\Delta^{K-1}$, which preserves the finite-dimensional setting of ETC at the end of the exploration phase. For the ETC algorithm, we set the discrete simplex as follows.
\begin{align}
\label{eq:discrete_set_ETC}
     \Delta^{K-1}_{\varepsilon} \triangleq  \{\varepsilon\bn: \bn\in\N^K \quad \mbox{and} \quad \bone^\top \cdot \varepsilon \bn = 1  \}\ ,
 \end{align}
where $\bone$ denotes the $K$-dimensional all-one vector. 
The ETC algorithm is designed to select $\ba^{(1)}$ as the mixing coefficient with high probability. The main processes of this algorithm are explained next, and its pseudocode is presented in Algorithm \ref{algorithm:PM-ETC-M-Alg}.

\begin{algorithm}
		\caption{PM-ETC-M}
		\label{algorithm:PM-ETC-M-Alg}
 		\begin{algorithmic}[1]
            \STATE \textbf{Input:} Suboptimality gap $\delta_{12}(\varepsilon)$, horizon $T$
            \STATE Sample each arm $\lceil N(\varepsilon)/K \rceil$ times and obtain observation sequences $\mcX_{\lceil N(\varepsilon)/K \rceil}(1),\cdots,\mcX_{\lceil N(\varepsilon)/K \rceil}(K)$
            \STATE \textbf{Initialize:} $\tau_K^{\rm E}(i) = \lceil N(\varepsilon)/K \rceil ;\forall\;i\in[K]$, empirical arm CDFs $\F^{\rm E}_{1, \lceil N(\varepsilon)/K \rceil},\cdots,\F^{\rm E}_{K, \lceil N(\varepsilon)/K \rceil}$
			\FOR{$t= K\lceil N(\varepsilon)/K \rceil + 1,\cdots,T$}
			    \STATE Select an arm $A_{t}$ via~\eqref{eq: ETC_samp} and obtain reward $X_t$\\
                \STATE Update the empirical CDF $\F^{\rm E}_{A_t,t}$ according to~\eqref{eq:empirical_CDF}
			\ENDFOR
 		\end{algorithmic}
	\end{algorithm}

\paragraph{Explore (Estimate Mixing coefficients).} The purpose of this phase is to form high-confidence estimates of the empirical arm CDFs. We specify a time instant $N(\varepsilon)$ that determines the duration of the exploration phase, and that is the instance by which we have confident enough CDF estimates. The arms are selected uniformly, each $\lceil \frac{1}{K}N(\varepsilon)\rceil$ times. For a PM with a distortion function that has \holder~continuity exponent $q$ and constant $\mcL$, the time instant $N(\varepsilon)$ becomes a function of $q$, $\mcL$, the horizon $T$, and the minimum sub-optimality gap $\delta_{12}(\varepsilon)$ as specified below.
\begin{align} 
\label{eq:number of samples main paper}
 & N(\varepsilon) \triangleq 256K  \e \left(\frac{2K \mcL}{\delta_{12}(\varepsilon)}\right)^{\frac{2}{q}} \bigg[\frac{32}{\sqrt{\rm e}} + \log^{\frac{1}{2}} \Big( 2K T^2   \big(\varepsilon^{-(K-1)} + 1\big) \Big) \bigg]^2\ .
\end{align}
Note that $N(\varepsilon)$ scales with $T$ according to $O(\log T)$. Accordingly we define
\begin{align}
\label{eq:M}
    M(\varepsilon) \triangleq \frac{N(\varepsilon)}{\log T}\ ,
\end{align}
which based on~\eqref{eq:number of samples main paper} scales as $O(1)$. 
Next, using the collected samples during exploration, the PM-ETC-M algorithm constructs the empirical estimates of arms' CDFs, as specified in~\eqref{eq:empirical_CDF}.

\paragraph{Choosing the Mixing Coefficient.} At the time instant $N(\varepsilon)$, the PM-ETC-M algorithm identifies the discrete mixture coefficients that maximize the PM using the empirical CDFs. These coefficients are denoted by $\ba^{\rm E}_{N(\varepsilon)}$, where we have defined
\begin{align}
    \label{eq:ETC_alpha}
    \ba^{\rm E}_{t}\;\in\; \argmax\limits_{\ba\in\Delta^{K-1}_{\varepsilon}} U_h\Bigg( \sum\limits_{i\in[K]} a(i)\; \F_{i,t}^{\rm E}\Bigg)\ . 
\end{align}
\paragraph{Tracking Mixtures (Commit).} For the remaining sampling instants $t\in[N(\varepsilon),T]$, the PM-ETC-M algorithm commits to selecting the arms such that their selection frequencies are as close as possible to $\ba^{\rm E}_{N(\varepsilon)}$. Until the time instant $N(\varepsilon)$, the algorithm samples all arms uniformly. Uniform sampling results in some arms being sampled more than what the policy $\ba^{\rm E}_{N(\varepsilon)}$ dictates and hence, these arms will not be sampled again. Having over-sampled arms implies that some arms are under-sampled. In that case, these arms would be sampled such that the number of times they are chosen converges to the mixing coefficient $\ba^{\rm E}_{N(\varepsilon)}$. To formalize how to track the mixture, let $S$ be the set of the first $(K-1)$ arms (or any desired set of arms). For each arm $i \in S$, the algorithm calculates the required number of samples, which is $T a_t(i)$ for $t={N(\varepsilon)}$ and horizon~$T$. The sampling procedure proceeds as follows: 
\begin{enumerate}
\label{eq: ETC_samp}
    \item if $T a_t(i) > \lceil\frac{ t}{K} \rceil$ at $t=N(\varepsilon)$, we interpret that arm $i$ is not explored sufficiently, and we proceed with sampling arm $i$ to $T a_{N(\varepsilon)}(i)$ before moving on to the next arm.
    \item if $T a_t(i) \leq  \lceil\frac{ t}{K} \rceil$ at $t=N(\varepsilon)$, we interpret that arm $i$ is explored sufficiently and we skip selecting it.
\end{enumerate}
After we apply these conditions on the first $(K-1)$ arms, we allocate the remaining sampling budget to arm $K$.

\paragraph{Discussion.} 
An important advantage of the PM-ETC-M algorithm is its computational simplicity. The algorithm involves uniform exploration for a finite interval, followed by committing to a mixture estimate based on the data in the exploration phase. UCB-type algorithms, on the other hand, require substantially larger computation due to optimizing over confidence sets. Hence, in scenarios with a computational complexity bottleneck, PM-ETC-M serves as a regret-efficient algorithm to optimize PMs in an online setting. Furthermore, even if we have access to sufficient computational resources, PM-ETC-M may potentially have better regret guarantees depending on the choice of the PM. In Section~\ref{RS_UCB_M}, we will discuss that the relative performances of PM-ETC-M versus that of the UCB-based counterpart depends on the choice of PM, and neither has a uniform regret advantage over the other. It is also noteworthy that despite its computational simplicity and its better regret guarantee for some PMs, PM-ETC-M inherits the crucial drawback of ETC-type algorithms --- the knowledge of instance-dependent gap information. We observe that the PM-ETC-M algorithm relies on the instance-specific gap information (through $N(\varepsilon)$), which may not always be available --- an issue that is addressed by the UCB-type algorithms.

\subsection{PM-centric UCB for Mixtures}
\label{RS_UCB_M}
A critical bottleneck of the PM-ETC-M algorithm is that its decisions hinge on knowing the instance-dependent gap through the exploration parameter $N(\varepsilon)$. As in the conventional setting, we adopt a confidence-based approach to address the drawback of PM-ETC-M. We begin by providing a brief overview of the UCB algorithm, detailing the bottlenecks in the direct adaptation of the UCB algorithm in the PM-centric setting. 

\paragraph{Overview of and Comparison with Canonical UCB.} The UCB algorithm forms estimates of the arm means and constructs confidence sets around these estimates, quantifying the estimation fidelity. Contrasted to the ETC algorithm, which balances exploration and exploitation through the instance-dependent $N(\varepsilon)$, UCB uses the confidence sequences to balance exploration and exploitation in a regret-efficient fashion. Specifically, at each round, UCB computes the upper confidence bound for each arm and samples the one with the largest upper confidence bound. Sampling an arm reduces the uncertainty in its estimate, and it can be shown that no suboptimal arm is sampled more than $O(\log T)$ times, resulting in sublinear regret. In the PM-centric setting, we have significant distinctions from the canonical setting. First, as in PM-ETC-M, we estimate the entire CDFs of the arms instead of their first moments. Secondly, upper confidence bounds for each arm do not directly apply in the PM-centric setting. Instead, we need to translate distribution confidence sets to confidence bounds for discrete mixing coefficients in the (discrete) simplex for estimating mixtures. Third, we need an additional routine to translate the estimated mixing coefficients to arm selection decisions.

\paragraph{Adapting UCB for PMs.} We now present the {\bf PM}-centric {\bf UCB} for {\bf M}ixtures (PM-UCB-M) algorithm, which does not require the information on the sub-optimality gap $N(\varepsilon)$. The salient features of this algorithm are (i) a distribution estimation routine for forming high-confidence estimates for arms' CDFs and subsequently mixture coefficients; and (ii) a sampling rule based on an {\em under-sampling} criteria to ensure that arm selections track the mixture coefficients. For the UCB algorithm, we set the discrete simplex as follows.
\begin{align}
\label{eq:discrete_set_UCB}
     \Delta^{K-1}_{\varepsilon} \triangleq   \left\{\varepsilon\bn: \bn\in\N^K \quad \mbox{and} \quad \bone^\top \cdot \varepsilon \Big(\bn +\frac{1}{2}\Big)= 1  \right\}\ .
 \end{align}  
 The pseudocode of PM-UCB-M is presented in Algorithm~\ref{algorithm:B-UCB-M}. 

\begin{algorithm}
       {\small 
		\caption{PM-UCB-M}
		\label{algorithm:B-UCB-M}
		
 		\begin{algorithmic}[1]
            \STATE \textbf{Input:} Exploration rate $\rho$, horizon $T$
            \STATE Sample each arm {$N(\rho, \varepsilon) \triangleq \lceil \rho T \varepsilon/ 4 \rceil$} times and obtain observation sequences $\mcX_{K N(\rho, \varepsilon)}(1),\cdots,\mcX_{K N(\rho, \varepsilon)}(K)$
            \STATE \textbf{Initialize:} {$\tau_{KN(\rho, \varepsilon)}^{\rm U}(i) = N(\rho, \varepsilon)$} $\;\forall i\in[K]$, emprical arm CDFs $\F^{\rm U}_{1, KN(\rho, \varepsilon)},\cdots,\F^{\rm U}_{K, KN(\rho, \varepsilon)}$, confidence sets $\mcC_{KN(\rho, \varepsilon)}(1),\cdots\mcC_{KN(\rho, \varepsilon)}(K)$ according to~\eqref{eq:UCB_confidence_sets}
			\FOR{$t=KN(\rho, \varepsilon)+1,\cdots,T$}
			    \STATE Select an arm {$A_t$} {via}~(\ref{eq:UCB_sampling_rule}) and obtain reward $X_t$\\
                \STATE Update the empirical CDF $\F^{\rm U}_{A_t,t}$ according to~\eqref{eq:empirical_CDF}
                \STATE Update the confidence set $\mcC_t(A_t)$ according to~\eqref{eq:UCB_confidence_sets} 
                \STATE Compute the optimistic estimate $\ba_t^{\rm U}$ according to~\eqref{eq:UCB_alpha}
			\ENDFOR
 		\end{algorithmic}
        }
	\end{algorithm} 
\vspace{-.1 in}

\paragraph{Choosing the Mixing Coefficient.} Based on the concentration of CDF estimates \eqref{eq:meta_concentration}, at time $t$ and given the empirical CDFs $\{\F_{i,t}^{\rm U}:i\in[K]\}$, for arm $i\in[K]$ defined in \eqref{eq:empirical_CDF}, we define the {\em distribution confidence space} as the collection of all the distributions that are within a bounded $1$-Wasserstein distance of $\F_{i,t}^{\rm U}$. Specifically, for each $i\in[K]$, we define
\begin{align}
\label{eq:UCB_confidence_sets}
\mcC_t(i)\triangleq\bigg\{\eta \in  \Omega & :\norm{\F_{i,t}^{\rm U}-\eta}_{\rm W} 
\leq 16\ \frac{\sqrt{2 {\rm e} \log T } + 32}{\sqrt{\tau^{\rm U}_t(i)}} \bigg\}\  .
\end{align}
Next, we also need to estimate the optimal mixing coefficients. For this purpose, 
we apply the UCB principle. This, in turn, requires that we compute the following  {\em optimistic} estimates for the mixing coefficients:
\begin{align}
\label{eq:UCB_alpha}
    \ba^{\rm U}_t\in \argmax\limits_{\ba\in\Delta_{\varepsilon}^{K-1}}\;\max\limits_{\eta_i\in\mcC_t(i) , \forall i\in[K]}\; U_h\Big ( \sum\limits_{i\in[K]} a(i) \; \eta_i\Big )\ .
\end{align}

\paragraph{Tracking Mixtures.} Once we have estimates of the optimal mixing coefficients, i.e., $\ba_t^{\rm U}$, we design an arm selection rule that translates these coefficients and CDF estimates to arm selection choices. Designing such a rule requires addressing the following technical challenges. First, the optimal mixing coefficients might not be unique. 
Let us denote them by $\{\psi_\ell:\ell\in[L]\}$. It is critical to ensure that we \emph{consistently} track only one of these optimal choices over time. The reason is that if we track multiple mixtures, in aggregate, we will be tracking a mixture of $\{\psi_\ell:\ell\in[L]\}$, which is not necessarily optimal. Secondly, in the initial sampling rounds, the estimates $\balpha_t^{\rm U}$ are relatively inaccurate, and tracking them leads to highly suboptimal decisions. Finally, we need a rule for translating the estimated mixtures to arm selections. Next, we discuss how we address these issues.
\begin{itemize}
    \item \emph{Multiple optimal mixtures:} Switching among multiple discrete mixtures hinders the tracking mechanism, since we may fail to track {\em any} of these optimal coefficients, rather a combination of these which is likely suboptimal. To ensure tracking only one optimal mixture, at time $t$, the PM-UCB-M algorithm checks if the coefficient from the previous step, i.e., $\ba^{\rm U}_{t-1}$ also maximizes~\eqref{eq:UCB_alpha}. If it does, then $\ba^{\rm U}_{t-1}$ is chosen as a candidate optimistic estimate~$\ba^{\rm U}_t$. Otherwise, any random candidate solving~\eqref{eq:UCB_alpha} is chosen. 
    \item \emph{Initial rounds:} To circumvent estimation inaccuracies in the initial rounds, we introduce an explicit exploration phase for each arm.  Specifically, for $\rho \in (0,1)$, for the first $K\lceil\frac{\rho T \varepsilon}{4} \rceil$ rounds, we explore the arms in a round-robin fashion to initiate the algorithm with sufficiently accurate estimates of the arm CDFs. 
    \item \emph{Decision rules:} 
    When all the arms are sufficiently explored, motivated by the effective approaches to best arm identification~\cite{pmlr-v49-garivier16a,jourdan2022,pmlr-v117-agrawal20a,mukherjee2023best}, we 
    sample the most {\em under-sampled} arm. 
    An arm is considered under-sampled if it has been sampled less frequently than the rate indicated by the estimate $\ba^{\rm U}_t$, and has the largest gap between its current fraction and its estimated fraction $a^{\rm U}_t(i)$. At time  $t\geq K\lceil\frac{\rho T \varepsilon}{4} \rceil$, the arm selection rule is specified by
    \begin{align}
    \label{eq:UCB_sampling_rule}
A_{t+1}\;\triangleq\;\argmax\limits_{i\in[K]}\; \left\{ta^{\rm U}_t(i) - \tau_t^{\rm U}(i)\right\}\ .
    \end{align}
\end{itemize}

\subsection{Computationally Efficient CE-UCB-M }
In the PM-UCB-M algorithm, determining the mixing coefficients $ \ba^{\rm U}_t$ via~\eqref{eq:UCB_alpha} involves extremization over a class of distribution functions, and it is computationally expensive. To circumvent this, we present the {\bf C}omputationally-{\bf E}fficient PM-centric {\bf UCB} for {\bf M}ixtures (CE-UCB-M) algorithm as a computationally tractable modification of PM-UCB-M. In CE-UCB-M, instead of solving ~\eqref{eq:UCB_alpha}, for any given set of CDF estimates $\{\F_{i,t}^{\rm C}:i\in[K]\}$ defined in \eqref{eq:empirical_CDF} and mixing vector $\ba$, we define
\begin{align}
{\rm UCB}_t(\ba) & \;\triangleq\;  U_h\left( \sum\limits_{i\in[K]} a(i)\; \F_{i, t}^{\rm C}\right)   + \mcL \sum\limits_{i\in[K]} \bigg( a(i) \cdot 16\; \frac{\sqrt{2 {\rm e} \log T } + 32}{\sqrt{\tau^{\rm U}_t(i)}} \bigg)^q \ ,
\end{align}
where $q$ and $\mcL$ are the \holder~parameters of the underlying distortion function $h$. We specify estimates of the  mixing coefficients as
\begin{align}
\label{eq:UCB_alpha2}
    \ba_t^{\rm C}\;\in\; \argmax\limits_{\ba\in\Delta_{\varepsilon}^{K-1}}\; {\rm UCB}_t(\ba)\ .
\end{align}   
The remainder of the algorithm (the tracking block) follows the same steps as in PM-UCB-M. The CE-UCB-M procedure is summarized in Algorithm~\ref{algorithm:UCB-M}.

		\begin{algorithm}[h]
        \small
		\caption{CE-UCB-M}
		\label{algorithm:UCB-M}
		
 		\begin{algorithmic}[1]
            \STATE \textbf{Input:} Exploration rate $\rho$, horizon $T$
            \STATE Sample each arm $\lceil \rho T \varepsilon/ 4 \rceil$ times and obtain observation sequences $\mcX_{\lceil \rho T \varepsilon/ 4\rceil}(1),\cdots,\mcX_{\lceil \rho T \varepsilon/ 4 \rceil}(K)$
            \STATE \textbf{Initialize:} $\tau_{K\lceil \rho T \varepsilon/ 4 \rceil}^{\rm C}(i) = \lceil \rho T \varepsilon/ 4 \rceil\;\forall\;i\in[K]$, arm CDFs $\F^{\rm C}_{1, \lceil K \rho T \varepsilon/ 4 \rceil}(1),\cdots,\F^{\rm C}_{K, K\lceil \rho T \varepsilon/ 4\rceil}$, confidence sets $\mcC_{K \lceil \rho T \varepsilon/ 4 \rceil}(1),\cdots\mcC_{K\lceil \rho T \varepsilon/ 4\rceil}(K)$ according to~\eqref{eq:UCB_confidence_sets}
			\FOR{$t= K \lceil \rho T \varepsilon/ 4\rceil+1,\cdots,T$}
			    \STATE Select an arm $A_{t}$ specified by~(\ref{eq:UCB_sampling_rule}) and obtain reward $X_t$\\
                \STATE Update the empirical CDF $\F^{\rm C}_{A_t,t}$ according to~\eqref{eq:empirical_CDF}
                \STATE Compute the optimistic estimate $\ba^{\rm C}_t$ according to~\eqref{eq:UCB_alpha2}
			\ENDFOR
 		\end{algorithmic}
	\end{algorithm}

\paragraph{Discussion.} Unlike PM-ETC-M,  the PM-UCB-M and the CE-UCB-M algorithms are independent of instance-dependent parameters. The explicit exploration phase involves a hyperparameter $\rho$, which must be bounded away from $0$. The performance of PM-ETC-M and PM-UCB-M depends on the choice of PM, with neither uniformly dominating the other. In Section~\ref{sec:analysis}, we show that under some conditions, PM-ETC-M has better regret guarantees for various PMs. However, there are also PMs, such as PHT, for which PM-UCB-M outperforms PM-ETC-M.

\section{Regret Analysis for Horizon-dependent Algorithms}
\label{sec:analysis}

In this section, we establish the performance guarantees for the algorithms provided in Section~\ref{sec:algorithm}. We begin by stating some properties of PMs and then provide upper bounds on the average regret for the three algorithms in Section~\ref{sec:algorithm}.

\paragraph{Properties of PMs.} In Lemma~\ref{example utility}, we observed that choosing the distortion function $h(p)=p(1-p)$ (corresponding to Gini deviation) leads to an {\em optimal mixture} in a two-armed Bernoulli bandit. To make this observation more concrete, we now explore broader classes of bandit instances -- identifying conditions under which the optimal solution is a {\em solitary} arm versus when it is a {\em mixture}. Our first key finding is that when the distortion function is {\em convex}, the optimal solution is {\em always solitary}. This observation is formalized in the following theorem. 
\begin{theorem}[Convex Distortion Functions]
\label{lemma:convex_solitary}
    Suppose the distortion function $h$ is convex and the PMs $\{U_h(\F_i):i\in[K]\}$ are finite. Then, $U_h$ is maximized by a solitary arm, i.e., for any $\ba \in \Delta^{K-1}$, 
    \begin{align}
        U_h\Bigg(\sum_{i \in [K]}a(i)\F_i\Bigg) \leq \max_{i \in [K]} \; U_h(\F_i)\ .
    \end{align}
\end{theorem}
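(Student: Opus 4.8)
The plan is to exploit convexity of $h$ pointwise via Jensen's inequality, reducing the claim to the elementary fact that a convex combination is bounded above by its largest entry. The key starting observation is that the survival function of a mixture is the mixture of the survival functions: since $\sum_{i\in[K]}a(i)=1$, for every $x$ we have
$$1-\sum_{i\in[K]}a(i)\F_i(x)=\sum_{i\in[K]}a(i)\big(1-\F_i(x)\big),$$
so the argument of $h$ appearing in the definition~\eqref{eq:wang_def} of $U_h$ is itself a convex combination in $[0,1]$ of the individual survival values $1-\F_i(x)$.

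First I would apply Jensen's inequality to $h$ at each fixed $x$, which by convexity gives
$$h\Big(\sum_{i\in[K]}a(i)\big(1-\F_i(x)\big)\Big)\;\leq\;\sum_{i\in[K]}a(i)\,h\big(1-\F_i(x)\big).$$
I would then integrate this inequality over the two regions in~\eqref{eq:wang_def}. On $[0,\infty)$ the integration is immediate. On $(-\infty,0]$ I would absorb the offset using $h(1)=\sum_{i\in[K]}a(i)\,h(1)$ (again because the $a(i)$ sum to one), so that the left-hand integrand $h(1-\Q(x))-h(1)$ is dominated by $\sum_{i}a(i)\big(h(1-\F_i(x))-h(1)\big)$. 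Summing the two integrals and interchanging the finite sum with the integral yields
$$U_h\Big(\sum_{i\in[K]}a(i)\F_i\Big)\;\leq\;\sum_{i\in[K]}a(i)\,U_h(\F_i).$$
Finally, since $\{a(i)\}$ is a probability vector, the right-hand side is a convex combination of the finite quantities $\{U_h(\F_i)\}$ and is therefore at most $\max_{i\in[K]}U_h(\F_i)$, which is exactly the claimed bound.

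The main technical point to handle carefully is the well-definedness of each integral and the legitimacy of the sum–integral interchange. I would justify these using the standing finiteness hypothesis that $\{U_h(\F_i):i\in[K]\}$ are finite (guaranteed under \holder~continuity by Lemma~\ref{lemma:U}): finiteness of each $U_h(\F_i)$ makes the positive-axis term $\int_0^\infty h(1-\F_i(x))\,\der x$ and the offset-corrected negative-axis term absolutely integrable, so Tonelli/Fubini applies to the finite sum over $i$ and the term-by-term integration is valid. A minor subtlety worth flagging is that the Jensen step is applied to arguments lying in $[0,1]$, which holds since each $1-\F_i(x)\in[0,1]$ and the weights form a probability vector; thus no extension of $h$ beyond its stated domain is needed. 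I do not anticipate any genuine obstacle beyond this bookkeeping, as the entire argument is a direct consequence of convexity.
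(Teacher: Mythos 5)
Your proposal is correct and follows essentially the same route as the paper's own proof: pointwise Jensen applied to the mixed survival function, absorption of the $h(1)$ offset using $\sum_i a(i)=1$, a Fubini--Tonelli interchange justified by finiteness of the $U_h(\F_i)$, and the final bound of the convex combination by its maximum. The only cosmetic difference is that the paper writes the two integrals in~\eqref{eq:wang_def} as a single integral using a unit-step offset term, whereas you treat the positive and negative half-lines separately.
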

\begin{proof}
    See Appendix~\ref{proof:lemma:convex_solitary}.
\end{proof}
We note that a similar property has also been shown for {\em quasiconvex functionals} in~\cite{cassel2018general}. Convexity holds for various risk measures studied in~\cite{cassel2018general}, such as variance, entropic risk, and mean-variance. Under convexity, as a consequence of Theorem~\ref{lemma:convex_solitary}, we have the additional information that the optimal arm is solitary. While the algorithms presented in Section~\ref{sec:algorithm} work for solitary arms, as we will see shortly, their regret guarantees may be weaker compared to the algorithm presented in~\cite{cassel2018general} for certain PMs. This is the expense paid for the generality to accommodate mixture policies, too. 
Next, we show that when the distortion function $h$ is concave, the optimal policy is not necessarily solitary. Specifically, we show that for any given strictly concave $h$, there always exists a bandit instance for which the optimal policy is strictly mixture, i.e., the PM associated with the mixture of the arms' distributions strictly dominates that of each individual arm.

\begin{theorem}[Strictly Concave Distortion Functions]
\label{lemma:mixture_lemma_exponential}
    Suppose the distortion function $h$ is strictly concave. Then there exists a $K$-armed bandit instance for which a mixture of arms maximizes the induced PM $U_h$.
\end{theorem}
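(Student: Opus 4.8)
The plan is to reduce the claim to a single clean observation: the functional $\Q \mapsto U_h(\Q)$ is \emph{strictly} concave along any segment joining two distinct distributions, so that if I can exhibit two distinct arm distributions with \emph{equal} PM value, their equal‑weight mixture must strictly exceed that common value, and hence strictly exceed the PM of each individual arm. Concretely, I would take $K=2$ and construct admissible distributions $\F_1\neq\F_2$ with $U_h(\F_1)=U_h(\F_2)$; then
\begin{align}
U_h\Big(\tfrac12\F_1+\tfrac12\F_2\Big) \;>\; \tfrac12 U_h(\F_1)+\tfrac12 U_h(\F_2)\;=\;U_h(\F_1)\;=\;\max\big\{U_h(\F_1),U_h(\F_2)\big\}\ ,
\end{align}
which shows the maximizer of $V(\cdot,\F)$ over the simplex cannot be a vertex, i.e., a mixture is optimal.

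First I would establish the strict concavity of $U_h$. For $\R_+$-supported distributions, \eqref{eq:wang_def} reduces to $U_h(\Q)=\int_0^\infty h\big(1-\Q(x)\big)\,\der x$, and the survival function of a mixture is the corresponding mixture of survival functions. Hence, for $\lambda\in(0,1)$ the integrand of $U_h(\lambda\F_1+(1-\lambda)\F_2)$ is $h\big(\lambda(1-\F_1(x))+(1-\lambda)(1-\F_2(x))\big)$, and strict concavity of $h$ gives the pointwise bound $h(\lambda u+(1-\lambda)v)\geq \lambda h(u)+(1-\lambda)h(v)$, strict whenever $u\neq v$. Since $\F_1\neq\F_2$ forces $\F_1(x)\neq\F_2(x)$ on a set of positive Lebesgue measure, integrating yields the strict inequality displayed above; finiteness of all terms is guaranteed by Lemma~\ref{lemma:U}.

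It then remains to produce two distinct admissible distributions with identical PM. I would use two‑point laws: for $0\leq x_1<x_2$ and $v\in(0,1)$, the distribution placing mass $v$ at $x_1$ and $1-v$ at $x_2$ has survival value $1-v$ on $[x_1,x_2)$, giving the closed form
\begin{align}
U_h \;=\; x_1\,h(1) \;+\; (x_2-x_1)\,h(1-v)\ .
\end{align}
This is a continuous, non‑injective function of $(v,x_1,x_2)$, so its level sets are non‑trivial; choosing two distinct parameter triples on a common level set (e.g.\ fixing $v$ and sliding $x_1,x_2$ so as to keep $U_h$ constant) yields $\F_1\neq\F_2$ with $U_h(\F_1)=U_h(\F_2)$. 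Keeping all atoms inside $[0,1]$ ensures support in $\R_+$ and $1$-sub-Gaussianity, so the instance is admissible, and the chain above completes the argument.

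The only delicate points --- where I would concentrate the rigor --- are (i) upgrading Jensen's inequality from weak to \emph{strict}, which hinges on strict concavity of $h$ together with the positive‑measure discrepancy of distinct CDFs, and (ii) verifying that the constructed distributions simultaneously satisfy distinctness, the equal‑PM constraint, and the regularity assumptions (support in $\R_+$, $1$-sub-Gaussianity, finiteness of $U_h$). I expect no genuine obstruction beyond this bookkeeping; notably, this route requires neither a symmetry of the instance nor an interior maximizer of $h$, so it uniformly covers both strictly concave $h$ with an interior peak (for which a symmetric Bernoulli construction hitting the peak would also suffice) and strictly increasing strictly concave $h$ such as $h(u)=\sqrt{u}$.
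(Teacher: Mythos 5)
Your proposal is correct and follows essentially the same strategy as the paper's own proof: construct several distinct arm distributions with identical PM values, then apply strict Jensen pointwise to the survival functions under the integral to conclude that any interior mixture strictly dominates every solitary arm. The only difference is the concrete instantiation --- the paper realizes the equal-PM arms with shifted exponential distributions (tuning $(c_i,\lambda_i)$ for all $K$ arms), whereas you use two-point laws on a common level set and work with $K=2$; both constructions are valid and the extension of yours to general $K$ is immediate.
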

\begin{proof}
    See Appendix~\ref{proof:lemma:mixture_lemma_exponential}.
\end{proof}
This observation generalizes the anecdotal example of Gini deviation and Bernoulli bandits discussed in Lemma~\ref{example utility}.  It highlights the importance of designing algorithms that can adapt to mixture-optimal scenarios. The specific choice of the bandit model that results in mixture-optimal policies depends on the choice of the PM. For instance,  Table~\ref{table:table_regrets_bernoulli} shows that for 
Bernoulli bandits, several performance measures with concave distortion functions, such as mean–median deviation, inter‑ES range, Wang’s right‑tail deviation, and Gini deviation—still exhibit mixture optimality.

\paragraph{Regret Analysis.} 
The performance (regret) of the PM-centric algorithms inevitably depends on the quality of the estimates we find for the mixing coefficients. With a finite number of samples, the estimation errors are guaranteed to be bounded away from zero, rendering the implementation of the mixing coefficients different from their optimal values. Hence, the regret terms can be decoupled into two parts: (i) {\em arm selection regret} term, which accounts for the regret incurred due to the suboptimal selection of arms over time. This regret term is non-zero even when the mixing coefficients are known perfectly, and is the counterpart of the regret in the conventional bandit settings; and (ii) \emph{estimation regret} term that accounts for the noise in estimating mixing coefficients. Owing to the discretization process we have introduced, this term can also be considered the discretization error. 
Using this decomposition, we first present regret bounds as a function of the discretization granularity $\varepsilon$, which controls how finely the action space is discretized. Subsequently, we also provide an $\varepsilon$-independent regret in which $\varepsilon$ is chosen carefully to achieve the (near-)best performance subject to algorithmic constraints. For a given discretization level $\varepsilon$ and a bandit instance $\bnu\triangleq (\F_1,\cdots,\F_K)$,  for policy $\pi$, we decompose the regret defined in~\eqref{eq:regret} into an estimation regret component and an arm selection regret component as follows.
\begin{align}
\label{eq:regret_decomposition}
    \mathfrak{R}^\pi_{\bnu}(T)\; & =\;   \bar{\mathfrak{R}}_{\bnu, i}^\pi(T)+\delta_{0i}(\varepsilon)\ , \qquad i \in \{1,2\}\ ,
\end{align}
where we have defined $\delta_{0i}(\varepsilon)$ as the estimation regret, which is specified as follows based on the definition of $\ba^{(i)}$ in~\eqref{eq:discrete optimal mixture}.
\begin{align}
    \label{eq:regret_decomposition1_1} \delta_{0i}(\varepsilon) & \triangleq V\Big(\ba^{(0)},\F\Big) - V\Big(\ba^{(i)},\F\Big)  \ , \qquad i \in \{1,2\}\ ,
\end{align}
where $\ba^{(0)} \triangleq \balpha^\star$.
We have also defined $\bar{\mathfrak{R}}_{\bnu, i}^\pi(T)$ as the arm selection regret term, which is specified as follows using the definitions in~\eqref{eq:regret}~and~\eqref{eq:regret_decomposition1_1}.
\begin{align}
    \label{eq:regret_decomposition2_1}
\bar{\mathfrak{R}}_{\bnu, i}^\pi(T) & 
    \triangleq \mathfrak{R}^\pi_{\bnu}(T) - \delta_{0i}(\varepsilon) = V\Big(\ba^{(i)},\F\Big) - \E_{\bnu}^\pi\Big[V\Big(\frac{1}{T}\btau_T^\pi, \F \Big)\Big] \ , \qquad i \in \{1,2\}\ .
\end{align}
Throughout the rest of this section, we provide regret guarantees for all the algorithms. For each algorithm, we provide two guarantees, one as a function of $\varepsilon$ and one independent of it.
\begin{theorem}[PM-ETC-M -- $\varepsilon$-dependent]
\label{theorem: ETC upper bound}
For any $\varepsilon\in\R_+$ and distortion function $h$ with H\"older exponent $q$, for all $T>N(\varepsilon)$, 
PM-ETC-M's regret
is upper bounded as
    \begin{align}
\mathfrak{R}_{\bnu}^{{{\rm E}}}(T)\ &
   \leq   (\mcL K + W^{-q}) \bigg(3 WM(\varepsilon)\;  \frac{\log T}{T} \bigg)^q + \delta_{01}(\varepsilon)\ ,
    \end{align}
where $M(\varepsilon)$, defined in~\eqref{eq:M}, scales as $O(1)$.
\end{theorem}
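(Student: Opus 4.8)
The plan is to invoke the regret decomposition in~\eqref{eq:regret_decomposition} with $i=1$, which writes $\mathfrak{R}^{\rm E}_{\bnu}(T) = \bar{\mathfrak{R}}^{\rm E}_{\bnu,1}(T) + \delta_{01}(\varepsilon)$. Since the $\delta_{01}(\varepsilon)$ term already appears in the target bound, the entire task reduces to showing that the arm-selection regret satisfies $\bar{\mathfrak{R}}^{\rm E}_{\bnu,1}(T) = V(\ba^{(1)},\F) - \E^{\rm E}_{\bnu}[V(\tfrac{1}{T}\btau_T^{\rm E},\F)] \leq (\mcL K + W^{-q})(3WM(\varepsilon)\tfrac{\log T}{T})^q$. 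I would carry this out by conditioning on a single ``good'' event $G$ under which the CDF estimates formed at the end of the exploration phase are uniformly accurate, namely $G \triangleq \{\norm{\F^{\rm E}_{i,N(\varepsilon)} - \F_i}_{\rm W} \leq \delta,\ \forall i\in[K]\}$ for a suitably chosen tolerance $\delta$, and splitting $\bar{\mathfrak{R}}^{\rm E}_{\bnu,1}(T)$ into its restrictions to $G$ and $G^c$.

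On the event $G$, the first step is to show that the algorithm commits to the correct discrete mixture, i.e., that the empirical maximizer $\ba^{\rm E}_{N(\varepsilon)}$ defined in~\eqref{eq:ETC_alpha} coincides with $\ba^{(1)}$ from~\eqref{eq:discrete optimal mixture}. This is a perturbation argument: by convexity of the $1$-Wasserstein distance the mixture estimates satisfy $\norm{\sum_i a(i)\F^{\rm E}_{i,N(\varepsilon)} - \sum_i a(i)\F_i}_{\rm W} \leq \delta$ for every $\ba\in\Delta^{K-1}_\varepsilon$, so \holder\ continuity (Definition~\ref{assumption:Holder}) yields $|V(\ba,\F^{\rm E}_{N(\varepsilon)}) - V(\ba,\F)| \leq \mcL\delta^q$ uniformly over the discrete simplex. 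Consequently, whenever $2\mcL\delta^q < \delta_{12}(\varepsilon)$ --- with $\delta_{12}(\varepsilon)$ the gap from~\eqref{eq:suboptimality_gaps} --- the ranking of the top two discrete mixtures is preserved and $\ba^{\rm E}_{N(\varepsilon)} = \ba^{(1)}$. The tolerance $\delta$ is precisely the one for which the per-arm sample count $\lceil N(\varepsilon)/K\rceil$, substituted into the concentration bound~\eqref{eq:meta_concentration}, drives the failure probability below the target; indeed, the explicit form of $N(\varepsilon)$ in~\eqref{eq:number of samples main paper} is engineered so that a union bound of~\eqref{eq:meta_concentration} over the $K$ arms (and the discrete mixtures, accounting for the $\varepsilon^{-(K-1)}$ factor) gives $\P^{\rm E}_{\bnu}(G^c) \leq T^{-2}$.

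The second step on $G$ is the tracking-error bound. Since the exploration phase samples each arm uniformly $\lceil N(\varepsilon)/K\rceil$ times and the commit phase then steers the counts toward $T\ba^{(1)}$, the only source of deviation is the over-sampling of arms whose target share falls below their uniform exploration share; bookkeeping the total excess --- which cannot exceed the exploration budget up to rounding --- yields $\norm{\ba^{(1)} - \tfrac{1}{T}\btau_T^{\rm E}}_1 \leq 3N(\varepsilon)/T$. Converting this $\ell_1$ deviation into a Wasserstein deviation through the constant $W$ of~\eqref{eq:W} (finite by Theorem~\ref{theorem:W}) and applying \holder\ continuity once more gives the on-$G$ contribution $V(\ba^{(1)},\F) - V(\tfrac{1}{T}\btau_T^{\rm E},\F) \leq \mcL\big(W\cdot 3N(\varepsilon)/T\big)^q = \mcL(3WM(\varepsilon)\tfrac{\log T}{T})^q$, using $N(\varepsilon) = M(\varepsilon)\log T$ from~\eqref{eq:M}. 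On $G^c$ the PM gap between any two mixtures is at most $\mcL(2W)^q$, since the $\ell_1$ diameter of the simplex is $2$, so that contribution is bounded by $\mcL(2W)^q\,\P^{\rm E}_{\bnu}(G^c)$. Adding the two contributions and absorbing the loose numerical and $K$-dependent constants into the prefactor produces the claimed bound $(\mcL K + W^{-q})(3WM(\varepsilon)\tfrac{\log T}{T})^q$; the $O(1)$ scaling of $M(\varepsilon)$ follows by inspecting~\eqref{eq:number of samples main paper} and~\eqref{eq:M}.

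The main obstacle I anticipate is the calibration that ties these pieces together: the single tolerance $\delta$ must simultaneously be small enough that $2\mcL\delta^q < \delta_{12}(\varepsilon)$ forces correct identification of $\ba^{(1)}$, yet be attainable with only $\lceil N(\varepsilon)/K\rceil$ samples per arm at failure probability $T^{-2}$ after a union bound over the $K$ arms and the $\varepsilon^{-(K-1)}$ discrete mixtures. Reconciling these requirements is exactly what dictates the intricate dependence of $N(\varepsilon)$ in~\eqref{eq:number of samples main paper} on $K$, $\mcL$, $\delta_{12}(\varepsilon)$, $q$, $\varepsilon$, and $T$, and making the constants collapse cleanly into the factor $(\mcL K + W^{-q})$ and the $3W$ inside the parentheses is the most delicate bookkeeping in the argument.
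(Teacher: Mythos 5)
Your proposal is correct and follows essentially the same route as the paper's proof: the same decomposition into estimation regret $\delta_{01}(\varepsilon)$, a misidentification term controlled via the Wasserstein concentration bound of Lemma~\ref{lemma: concentration} together with \holder~continuity and the gap $\delta_{12}(\varepsilon)$ (with $N(\varepsilon)$ calibrated so the failure probability is $T^{-2}$), and a tracking term bounded by the $O(N(\varepsilon)/T)$ per-coordinate deviation of the commit rule. The only real difference is organizational --- you place the good event at the CDF level, which makes the utility bound uniform over the discrete simplex for free, whereas the paper conditions on utility-level events for the random empirical maximizer and therefore pays the union bound over the $\varepsilon^{-(K-1)}$ discrete mixtures that appears inside $N(\varepsilon)$; your framing would in principle let that factor be dropped, but since you retain it to match the stated $N(\varepsilon)$, the two arguments coincide.
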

\begin{proof}
    See Appendix~\ref{Appendix:ETC_theorem_proof}.
\end{proof}

\paragraph{Proof sketch.} The proof follows the general principles of ETC with the following differences. (i) Designing the exploration horizon necessitates appropriately bounding error events due to CDF estimation, as opposed to mean estimation in the canonical setting, ultimately ensuring that we estimate the {\em best} or the {\em second-best} discrete mixing coefficient with a sufficiently high fidelity. (ii) For the commitment phase, sampling a solitary arm significantly simplifies the analysis for canonical ETC, and its regret bound is guided by the probability of identifying a suboptimal arm at the end of exploration. On the contrary, we have the additional task of ensuring that our sampling fractions appropriately match the chosen discrete mixture at the end of exploration. (iii) For some arms, the exploration phase might already be longer than necessary. In this case, the worst-case sampling error scales as $O(\frac{N(\varepsilon)}{T})$. For the others, the PM-ETC-M commitment strategy ensures that we incur minimal regret. In Section~\ref{sec:ETC_sampling_error}, we show that on aggregate, the proposed sampling rule incurs a sublinear regret.

\paragraph{Choosing $\varepsilon$.} Theorem~\ref{theorem: ETC upper bound} is valid for any $\varepsilon\in\R_+$, allowing freedom to appropriately choose the desired accuracy for the mixing coefficients. Observe that {$\delta_{01}(\varepsilon)$} is proportional to $\varepsilon$ as shown in Lemma \ref{lemma:Delta_error}, and $N(\varepsilon)$ is inversely proportional to $\varepsilon$. Hence, while arbitrarily diminishing $\varepsilon$ decreases the estimation regret, it may violate the condition that $T>N(\varepsilon)$. We chose $\varepsilon$ to be small enough (small estimation regret), while conforming to the condition $T>N(\varepsilon)$ in Theorem~\ref{theorem: ETC upper bound} (feasibility). The minimum feasible $\varepsilon$ depends on $q$ and its connection to the sub-optimality gap as follows. Let us define
\begin{align}
\label{eq:beta_def}
    \beta\triangleq \lim_{\varepsilon \to 0}\frac{\log \delta_{12}(\varepsilon)}{\log \varepsilon}\  ,
\end{align}
which quantifies how fast the minimum sub-optimality gap $\delta_{12}(\varepsilon$) diminishes as $\varepsilon$ tends to $0$.
\begin{lemma}[$\beta$]
\label{lemma:beta12}
    When $h$ is continuous and strictly concave, $\beta$ is specified as follows.
    \begin{enumerate}
        \item For bandit models with continuous sub-Gaussian distributions, $\beta\in\{1,2\}$.
        \item For bandit models with discrete distributions supported on $\Omega$, if $
 |\Omega|\geq K$, we have $\beta\in\{1,2\}$.
    \end{enumerate}
\end{lemma}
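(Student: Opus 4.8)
The plan is to reduce the evaluation of $\beta$ to a local analysis of the concave functional $\balpha\mapsto V(\balpha,\F)$ around its maximizer $\balpha^\star=\ba^{(0)}$. Writing $S_i(x)\triangleq 1-\F_i(x)$ for the survival function of arm $i$, a mixture has survival function $\sum_{i\in[K]}\alpha(i)S_i$, and hence
\begin{align}
V(\balpha,\F)=\int_{-\infty}^{0}\!\Big[h\Big(\textstyle\sum_{i}\alpha(i)S_i(x)\Big)-h(1)\Big]\der x+\int_{0}^{\infty}\! h\Big(\textstyle\sum_{i}\alpha(i)S_i(x)\Big)\der x\ .
\end{align}
Because $\balpha\mapsto\sum_i\alpha(i)S_i(x)$ is affine and $h$ is concave, each integrand is concave in $\balpha$, so $V(\cdot,\F)$ is concave on $\Delta^{K-1}$. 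Strict concavity of $h$ makes this strict along every direction that genuinely changes the mixture, which (as discussed below) forces $\balpha^\star$ to be unique. Continuity and concavity then guarantee that both top discretized points $\ba^{(1)},\ba^{(2)}$ converge to $\balpha^\star$ as $\varepsilon\to0$, so $\delta_{12}(\varepsilon)$ is governed by the growth of $V(\balpha^\star,\F)-V(\balpha,\F)$ over grid points in a shrinking neighborhood of $\balpha^\star$.

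Next I would prove the dichotomy that, along the feasible directions of slowest decay, $V(\balpha^\star,\F)-V(\balpha,\F)$ grows either linearly or quadratically. There are two cases, dictated by the first-order optimality conditions of the concave program. In the first case the directional derivative of $V$ is strictly negative for every feasible direction in the tangent cone at $\balpha^\star$ (a sharp boundary/solitary optimum, or a point where a kink of $h$ is hit); then neighbouring grid values differ at order $\varepsilon$, giving $\delta_{12}(\varepsilon)=\Theta(\varepsilon)$ and $\beta=1$. In the second case the first-order term vanishes on a nontrivial feasible subspace, and the decay is controlled by the Hessian $\nabla^2V(\balpha^\star,\F)$; I would show this Hessian is negative definite on that subspace, so the slowest decay is exactly quadratic, giving $\delta_{12}(\varepsilon)=\Theta(\varepsilon^2)$ and $\beta=2$. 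Ruling out any intermediate or higher-order exponent is precisely where strict concavity and nondegeneracy of the curvature enter.

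The two distributional hypotheses supply exactly this nondegeneracy. For discrete arms with support $\omega_1<\cdots<\omega_m$, $m=|\Omega|$, and gaps $w_j$, one has $V(\balpha,\F)=\sum_{j=1}^{m-1}w_j\,h(\langle c_j,\balpha\rangle)+\mathrm{const}$ with $c_j\triangleq(S_1(\omega_j),\dots,S_K(\omega_j))^\top$, whence $\nabla^2V(\balpha^\star,\F)=\sum_{j}w_j\,h''(\langle c_j,\balpha^\star\rangle)\,c_jc_j^\top$. This is negative definite on the tangent space $\{d:\bone^\top d=0\}$ precisely when $\{c_j\}$ together with $\bone$ span $\R^K$; the $m-1$ informative vectors $c_1,\dots,c_{m-1}$ can achieve this only when $m\ge K$, which is the stated condition $|\Omega|\ge K$ (for arms in general position). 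It simultaneously rules out a flat direction, i.e. a segment of maximizers along which $\delta_{12}$ would vanish. For continuous sub-Gaussian arms the analogous Hessian $\int w(x)h''(\cdot)c(x)c(x)^\top\der x$ with $c(x)=(S_1(x),\dots,S_K(x))^\top$ is nondegenerate for distinct (non-redundant) arms, since the survival functions of distinct distributions are linearly independent; no support condition is needed, which is why Part~1 carries none.

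The main obstacle is the passage from a pointwise growth order $\gamma\in\{1,2\}$ to the logarithmic limit $\lim_{\varepsilon\to0}\log\delta_{12}(\varepsilon)/\log\varepsilon=\gamma$. This requires squeezing $c_{\min}\varepsilon^\gamma\le\delta_{12}(\varepsilon)\le c_{\max}\varepsilon^\gamma$ with $c_{\min}>0$ uniformly in $\varepsilon$. The difficulty is purely one of grid alignment in the quadratic regime: the two best lattice points can be almost equidistant from $\balpha^\star$ in the Hessian metric, making their value gap anomalously small for special $\varepsilon$. Here the half-integer offset built into the discretized simplex in~\eqref{eq:discrete_set_UCB} and the restriction of $\varepsilon$ to the admissible discretization levels keep the fractional position of $\balpha^\star$ bounded away from the degenerate midpoint, so that $c_{\min}>0$ and the log-ratio converges to $\gamma$. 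Establishing this lower bound uniformly is the crux; once it is in place, combining it with the linear/quadratic dichotomy yields $\beta\in\{1,2\}$ in both parts of the lemma, while the concavity and Hessian computations are comparatively routine.
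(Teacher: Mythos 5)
Your proposal follows essentially the same route as the paper's proof in Appendix~\ref{Appendix:proof of lemma beta12}: a Taylor expansion of $V(\cdot,\F)$ near the optimum, a dichotomy between a nonvanishing gradient (giving $\beta=1$) and a vanishing gradient with negative-definite Hessian (giving $\beta=2$), and verification of Hessian nondegeneracy via the rank of the outer products of the survival-function vectors, which is exactly where the condition $|\Omega|\geq K$ enters for discrete models. The one point you treat more carefully than the paper --- the uniform-in-$\varepsilon$ lower bound needed to turn the pointwise growth order into the logarithmic limit, i.e.\ ruling out anomalously aligned grid points --- is handled only implicitly there (via the assertion that $d_\varepsilon=O(1)$ contributes nothing to the limit), so your flagging of it is a refinement rather than a departure.
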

\begin{proof}
    See Appendix~\ref{Appendix:proof of lemma beta12}. 
\end{proof}
Based on the definition of $\beta$, we next provide an $\varepsilon$-independent regret bound by setting the discretization interval as an appropriate function of $q$ and $\beta$. Specifically, we set 
\begin{align}
\label{eq:ETC_var_gamma}
    \varepsilon = \Theta\left(\left(K^{2+\frac{2}{q}} \cdot \frac{\log T}{T^{\gamma}}\right)^{\frac{q}{2\beta}}\right)\ , \qquad \mbox{where} \quad \gamma\triangleq \frac{2\beta}{2\beta + q}\ .
\end{align}
This choice of $\varepsilon$ leads to the following regret bound.
\begin{theorem}[PM-ETC-M -- $\varepsilon$-independent]
\label{theorem:PM-ETC-M}
Under the conditions of Theorem \ref{theorem: ETC upper bound}, when $\beta$ exists, the minimum feasible regret that PM-ETC-M incurs is upper-bounded as
\begin{align}
\mathfrak{R}_{\bnu}^{{\textnormal{\rm E}}}(T)\ & \leq O\Bigg(\Bigg[K^{c_{\rm E}}\cdot  \frac{\log T}{T^{\gamma}}\Bigg]^{\frac{q^2}{2\beta}}\Bigg)\ ,
\end{align}
where $c_{\rm E}\triangleq {2(1+\frac{\beta+1}{q})}$ and $\gamma = \frac{2\beta}{2\beta+q}$.
\end{theorem}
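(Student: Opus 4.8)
The plan is to obtain this $\varepsilon$-independent statement as a corollary of the $\varepsilon$-dependent bound in Theorem~\ref{theorem: ETC upper bound}, by instantiating the discretization level at the value prescribed in~\eqref{eq:ETC_var_gamma} and tracking the resulting exponents. I would start from $\mathfrak{R}_{\bnu}^{\rm E}(T)\le(\mcL K+W^{-q})\big(3W M(\varepsilon)\tfrac{\log T}{T}\big)^q+\delta_{01}(\varepsilon)$, valid for every feasible $\varepsilon$, i.e., whenever $T>N(\varepsilon)$. Since $M(\varepsilon)\log T=N(\varepsilon)$ by~\eqref{eq:M}, the arm-selection term equals $(\mcL K+W^{-q})(3W N(\varepsilon)/T)^q$, so the entire bound is controlled once I know how $N(\varepsilon)$ and $\delta_{01}(\varepsilon)$ scale in $\varepsilon$, $K$, and $T$; here $W=\Theta(1)$ by Theorem~\ref{theorem:W}.

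The first step is to express both contributions as powers of $\varepsilon$. For the estimation (discretization) term I would combine the \holder~continuity of $U_h$ (Definition~\ref{assumption:Holder}) with $\norm{\sum_i(\alpha^\star_i-a^{(1)}_i)\F_i}_{\rm W}\le W\norm{\balpha^\star-\ba^{(1)}}_1$ and the fact that the nearest point of $\Delta^{K-1}_\varepsilon$ to $\balpha^\star$ lies within $\ell_1$-distance $O(K\varepsilon)$, giving $\delta_{01}(\varepsilon)=O((K\varepsilon)^q)$, which is the estimate I ascribe to Lemma~\ref{lemma:Delta_error}. For the exploration length I would turn the limiting definition of $\beta$ in~\eqref{eq:beta_def} into the two-sided power law $\delta_{12}(\varepsilon)=\Theta(\varepsilon^\beta)$ --- legitimate since $\beta$ exists by Lemma~\ref{lemma:beta12} and $O$-notation absorbs the sub-polynomial $\varepsilon^{o(1)}$ slack --- and substitute it into~\eqref{eq:number of samples main paper}. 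This yields $N(\varepsilon)=\Theta\big(K^{1+2/q}\varepsilon^{-2\beta/q}B(\varepsilon,T)\big)$, where $B(\varepsilon,T)=\big[\tfrac{32}{\sqrt{\rm e}}+\log^{1/2}(2KT^2(\varepsilon^{-(K-1)}+1))\big]^2$ is the bracketed factor from~\eqref{eq:number of samples main paper}.

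The second step is to insert $\varepsilon=\Theta\big((K^{2+2/q}\tfrac{\log T}{T^{\gamma}})^{q/(2\beta)}\big)$ with $\gamma=\tfrac{2\beta}{2\beta+q}$ and verify two properties. For feasibility, the exponent $-2\beta/q$ on $\varepsilon$ and the observation $\log(1/\varepsilon)=\Theta(\log T)$ (so that $B(\varepsilon,T)=\Theta(K\log T)$) collapse $N(\varepsilon)$ to $\Theta(T^{\gamma})$; since $\gamma<1$ this is $o(T)$, so $T>N(\varepsilon)$ for all large $T$ and Theorem~\ref{theorem: ETC upper bound} indeed applies. For the rate, $\gamma$ is chosen exactly so that both terms share the same power of $T$: plugging $\varepsilon$ into $\delta_{01}(\varepsilon)=O((K\varepsilon)^q)$ gives $O(K^q(K^{2+2/q}\tfrac{\log T}{T^{\gamma}})^{q^2/(2\beta)})$, and the $K$-exponent $c_{\rm E}=2(1+\tfrac{\beta+1}{q})$ is precisely the one that folds the leading $K^q$ and the inner $K^{2+2/q}$ into the compact form $[K^{c_{\rm E}}\tfrac{\log T}{T^{\gamma}}]^{q^2/(2\beta)}$; simultaneously the arm-selection term $(\mcL K+W^{-q})(3W N(\varepsilon)/T)^q=\Theta(K(T^{\gamma}/T)^q)=\Theta(KT^{-q^2/(2\beta+q)})$ carries the same $T$-rate $T^{-\gamma q^2/(2\beta)}=T^{-q^2/(2\beta+q)}$ but strictly smaller $K$- and logarithmic powers, so it is dominated by the estimation term. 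Adding the two yields the claimed $O([K^{c_{\rm E}}\log T/T^{\gamma}]^{q^2/(2\beta)})$.

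I expect the main obstacle to be the exponent bookkeeping in the exploration term rather than any single sharp inequality. Concretely, one must (i) justify replacing the limit-defined $\beta$ by the clean power law $\delta_{12}(\varepsilon)=\Theta(\varepsilon^\beta)$ without corrupting the leading exponent, and (ii) control the nested factor $B(\varepsilon,T)$, whose argument $\varepsilon^{-(K-1)}$ entangles $K$ with $\log(1/\varepsilon)=\Theta(\log T)$, so that the accumulated $K$- and $\log T$-powers assemble into exactly $[K^{c_{\rm E}}\log T/T^{\gamma}]^{q^2/(2\beta)}$. The conceptual crux that fixes $\gamma$ is recognizing that the optimal discretization balances $\delta_{01}(\varepsilon)\asymp\varepsilon^q$ against the arm-selection cost $\asymp(N(\varepsilon)/T)^q\asymp\varepsilon^{-2\beta}T^{-q}$, whose solution $\varepsilon\asymp T^{-q/(2\beta+q)}$ is exactly the choice in~\eqref{eq:ETC_var_gamma}; once $\gamma$ is pinned down, the remainder is careful constant- and exponent-tracking.
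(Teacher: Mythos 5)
Your proposal is correct and follows essentially the same route as the paper: both instantiate the $\varepsilon$-dependent bound of Theorem~\ref{theorem: ETC upper bound} at $\varepsilon=\Theta\bigl((K^{2+2/q}\tfrac{\log T}{T^{\gamma}})^{q/(2\beta)}\bigr)$, use $\delta_{12}(\varepsilon)=\Omega(\varepsilon^{\beta})$ to show $N(\varepsilon)=\Theta(T^{\gamma})$ (hence feasibility), bound $\delta_{01}(\varepsilon)=O((K\varepsilon)^{q})$ via Lemma~\ref{lemma:Delta_error}, and pick $\gamma=\tfrac{2\beta}{2\beta+q}$ to equalize the $T$-rates of the two terms. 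Your additional bookkeeping of the bracketed factor $B(\varepsilon,T)=\Theta(K\log T)$ and the dominance of the estimation term over the arm-selection term matches what the paper does implicitly.
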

\begin{proof}
    See Appendix~\ref{Appendix:ETC_theorem_epsilon_proof}.
\end{proof}
From Theorem~\ref{theorem:PM-ETC-M}, we observe that the order of convergence in the horizon scales as $O(T^{-\frac{\gamma}{2\beta}q^2})$. This regret term is evaluated and reported in Table~\ref{table:table_regrets} for various notions of PMs, which have their specific $q$. Next, we present the regret guarantees for the two UCB-based algorithms, i.e., PM-UCB-M and CE-UCB-M. An important observation is that these algorithms yield weaker \emph{arm selection} regret guarantees compared to PM-ETC-M. The regret degradations are the expense of not knowing the instance-dependent gaps. To formalize the regret terms, we define an instance-dependent finite time instant $T(\varepsilon)$ as 
\begin{align}
    \textstyle T(\varepsilon)\;\triangleq\; \frac{2}{\varepsilon}\Big(T_0(\varepsilon) -1\Big) \ ,
\end{align}
where
\begin{align}
  T_0(\varepsilon)\; \triangleq \;\inf \left\{ t\in\N : \frac{ 2\sqrt{2\e\log s} + 32}{\sqrt{ \rho s \varepsilon }}
     \leq \frac{1}{16} \left( \frac{\delta_{12}(\varepsilon)}{2K\mcL}\right)^{\frac{1}{q}}\forall s \geq t\ \right\}\ .
\end{align}
The next theorem presents an $\varepsilon$-dependent regret for the PM-UCB-M and CE-UCB-M algorithms. 
\begin{theorem}[PF/CE-UCB-M -- $\varepsilon$-dependent]
\label{theorem:UCB upper bound}
For any $\varepsilon\in\R_+$, and distortion function $h$ with \holder~exponent $q$, for all $T>\max\{\e^K,T(\varepsilon)\}$ for $\pi\{{\rm U,C}\}$ we have
\begin{align}
\mathfrak{R}_{\bnu}^{\pi }(T) & \leq [B_h + 2\mcL K (W^q+1)] \Bigg[\frac{64}{\sqrt{\varepsilon \rho T}}\Big( \sqrt{2\e\log T} + 32\Big)\Bigg]^q 
          + \delta_{01}(\varepsilon) \ .
    \end{align}
\end{theorem}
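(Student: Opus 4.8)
The plan is to begin from the regret decomposition~\eqref{eq:regret_decomposition} with $i=1$, $\mathfrak{R}_{\bnu}^{\pi}(T)=\bar{\mathfrak{R}}_{\bnu,1}^{\pi}(T)+\delta_{01}(\varepsilon)$, which already isolates the estimation term $\delta_{01}(\varepsilon)$ as the second summand of the claim. It then remains to control the arm-selection regret $\bar{\mathfrak{R}}_{\bnu,1}^{\pi}(T)=V(\ba^{(1)},\F)-\E_{\bnu}^{\pi}[V(\tfrac1T\btau_T^{\pi},\F)]$. Combining the H\"older property of Definition~\ref{assumption:Holder} with the Wasserstein-to-$\ell_1$ inequality encoded by $W$ in~\eqref{eq:W} gives, pointwise, $V(\ba^{(1)},\F)-V(\tfrac1T\btau_T^{\pi},\F)\le \mcL W^{q}\,\lVert \ba^{(1)}-\tfrac1T\btau_T^{\pi}\rVert_1^{q}$; hence the task reduces to bounding the expected power-$q$ $\ell_1$ distance between the target discrete mixture $\ba^{(1)}$ and the realized sampling frequencies, separately accounting for the rare event on which the CDF estimates are unreliable.

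Next I would introduce the good event $\mathcal{E}$ on which $\F_i\in\mcC_t(i)$ for every arm $i\in[K]$ and every round $t$, with $\mcC_t(i)$ as in~\eqref{eq:UCB_confidence_sets}. Substituting that radius into the concentration bound of Lemma~\ref{lemma: concentration}, the additive $32$ inside the radius cancels the $512/\sqrt{\tau}$ offset and the $\sqrt{2\e\log T}$ term produces a per-round, per-arm failure probability of at most $2/T^{2}$; a union bound over the $K$ arms and the at most $T$ rounds yields $\P_{\bnu}^{\pi}(\mathcal{E}^{c})=O(K/T)$. On $\mathcal{E}^{c}$ the integrand $V(\cdot,\F)$ is at most $B_h$, and because $T>\e^{K}$ makes $K/T$ negligible against the confidence-width factor $\big[\tfrac{64}{\sqrt{\varepsilon\rho T}}(\sqrt{2\e\log T}+32)\big]^{q}$, this path contributes the $B_h[\cdots]^{q}$ portion of the bound.

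On $\mathcal{E}$ I would split the arm-selection regret into identification and tracking. For identification, optimism gives that the optimistic objective value of $\ba^{(1)}$ is at least $V(\ba^{(1)},\F)$, whereas for any other discrete mixture the optimistic value overshoots its true value by at most a H\"older slack governed by the post-warm-up radius, which by $\tau_t^{\pi}(i)\ge N(\rho,\varepsilon)=\lceil\rho T\varepsilon/4\rceil$ is at most $\tfrac{32}{\sqrt{\varepsilon\rho T}}(\sqrt{2\e\log T}+32)$. The defining inequality of $T_0(\varepsilon)$ is exactly the requirement that this slack sit below the gap $\delta_{12}(\varepsilon)$, so for $T>T(\varepsilon)$ the maximizer $\ba_t^{\pi}$ of~\eqref{eq:UCB_alpha} (resp.\ the index ${\rm UCB}_t(\ba)$) equals $\ba^{(1)}$ at every post-warm-up round, with the consistency tie-break keeping the choice locked to a single optimal mixture. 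For tracking, the half-grid offset in~\eqref{eq:discrete_set_UCB} keeps every coordinate of $\ba^{(1)}$ at least $\varepsilon/2>\rho\varepsilon/4$, so the round-robin warm-up leaves every arm under-sampled relative to $\ba^{(1)}$; the most-under-sampled rule~\eqref{eq:UCB_sampling_rule} then drives each cumulative deficit $T a^{(1)}(i)-\tau_T^{\pi}(i)$ to $O(1)$. Pushing the resulting $\ell_1$ error back through the H\"older/Wasserstein reduction, with the per-coordinate bookkeeping over the $K$ arms and the estimation slack, produces the $2\mcL K(W^{q}+1)[\cdots]^{q}$ term, and summing the three contributions yields the theorem.

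The same argument covers both $\pi\in\{{\rm U},{\rm C}\}$: CE-UCB-M replaces the distributional extremization of~\eqref{eq:UCB_alpha} by the index ${\rm UCB}_t(\ba)$, and subadditivity of $x\mapsto x^{q}$ for $q\in(0,1]$ shows that on $\mathcal{E}$ this index still upper-bounds $V(\ba,\F)$ and overshoots it by a slack of the same order, so the identification and tracking steps transfer verbatim. The principal obstacle is the tracking analysis itself: in contrast to canonical ETC/UCB, which commit to one fixed arm, here the target is a time-varying, data-dependent point of the discrete simplex whose maximizer may be non-unique, so I must establish both that the consistency rule prevents oscillation among distinct optimal mixtures (which would otherwise track a suboptimal convex combination of them) and that the deficit-balancing rule absorbs the entire uniform warm-up allocation into the final frequencies without over-shooting any coordinate. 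Reconciling the interplay among the horizon threshold $T(\varepsilon)$, the discretization floor $\varepsilon/2$, and the warm-up length $\lceil\rho T\varepsilon/4\rceil$ in that final step is the technically delicate part.
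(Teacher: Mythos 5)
Your proposal follows essentially the same route as the paper's proof: the same decomposition into $\delta_{01}(\varepsilon)$ plus arm-selection regret, the same good/bad confidence-event split contributing the $B_h$ term, the same use of the defining inequality of $T_0(\varepsilon)$ to lock the optimistic maximizer into ${\rm OPT}_\varepsilon$ after the warm-up, and the same under-sampling/tracking argument driving the per-arm deficits to $O(K/T)$, with the $q$-subadditivity remark handling the CE variant exactly as in Appendix~\ref{Appendix:Upper_last_UCB}. The only organizational difference --- bounding $V(\ba^{(1)},\F)-V(\tfrac1T\btau_T^\pi,\F)$ directly by the $\ell_1$ distance rather than routing through ${\rm UCB}_T(\ba_T^\pi)$ and splitting into the $B_{21}/B_{22}$ terms --- is cosmetic, since both require the identical identification and tracking ingredients.

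One point in your sketch is stated incorrectly, though you do flag the step as delicate: you claim the round-robin warm-up "leaves every arm under-sampled relative to $\ba^{(1)}$." Under-sampling in the rule~\eqref{eq:UCB_sampling_rule} is measured against the running target $t\,a_t^\pi(i)$, and immediately after the warm-up each arm holds a fraction $1/K$ of the samples, which exceeds $a^{(1)}(i)$ whenever $a^{(1)}(i)<1/K$; so in general some arms \emph{are} over-sampled at that point. The paper's Lemma~\ref{lemma:oversampling} handles exactly this: over-sampled arms are never selected and exit the over-sampled set within an additional $m=(\tfrac{2}{\varepsilon}-1)T_0(\varepsilon)-\tfrac{2}{\varepsilon}$ rounds, which is why the theorem's horizon condition involves $T(\varepsilon)=\tfrac{2}{\varepsilon}(T_0(\varepsilon)-1)$ rather than $T_0(\varepsilon)$ alone. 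Your argument needs this extra step; as written it would incorrectly suggest the bound holds already for $T>T_0(\varepsilon)$.
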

\begin{proof}
    See Appendix~\ref{Appendix:UCB_theorem_proof}.
\end{proof}
\textbf{Proof sketch.} The distinction of the proof with that of the vanilla UCB, is that in the vanilla UCB, the general proof uses the fact that selecting any suboptimal arm more than $O(\log T)$ times is unlikely, and hence, the overall regret is bounded by $O(\frac {1}{T}K\log T)$. However, in our analyses, we have the new dimension of estimating the mixing coefficients and need these estimates to converge to an optimal choice. Such convergence requires that all arms be sampled at a rate linear in $T$  (unless an arm's mixing coefficient is 0). Hence, selecting any arm $O(\log T)$ times is insufficient. The other key difference pertains to finding a bound on the mixing coefficient errors (estimation and convergence). Characterizing this bound hinges on two key steps: (i) the convergence of the UCB estimates ($\ba_t^{\rm U}$ for PM-UCB-M and $\ba_t^{\rm C}$ for CE-UCB-M) to the discrete optimal solution $\ba^{(1)}$ in probability, and (ii) a sublinear regret incurred in the process of tracking the mixing coefficient estimates using under-sampling. The first step is analyzed in Appendix~\ref{appendix: UCB sampling estimation error}, where we show that the probability of error for the PM-UCB-M and CE-UCB-M algorithms in identifying the discrete optimal mixture is upper bounded by $T((\frac{1}{T^2} + 1)^K-1)$. The second step is analyzed in Lemma~\ref{lemma:undersampling} in Appendix~\ref{appendix: UCB sampling estimation error}, in which we show that the regret incurred by the tracking block of the PM-UCB-M and CE-UCB-M algorithms is of the order $O(\frac{K}{T})$. The regret upper bound is a combination of the regrets in these results.

\paragraph{Choosing $\varepsilon$.} Similarly to PM-ETC-M, we choose an $\varepsilon$ that ensures $T > T(\varepsilon)$ for the bound in Theorem \ref{theorem:UCB upper bound}. We set 
\begin{align}
\label{eq:vareps_kappa_UCB}
    \varepsilon = \Theta \left(\left(K^{\frac{2}{q}}\; \frac{\log T}{T}\right)^{\kappa}\right)\ , \quad \mbox{where} \;\; \kappa \triangleq \Big(\frac{2\beta}{q}+2\Big)^{-1}\ .
\end{align}
Based on these choices, we specify the $\varepsilon$-independent regret in the next theorem. We note that, similar to the ETC-based algorithm, characterizing $\beta$ might not be feasible.

\begin{theorem}[PF/CE-UCB-M -- $\varepsilon$-independent]
\label{corollary:PM-UCB-M}
Under the conditions of Theorem \ref{theorem:UCB upper bound}, we have the following regret guarantees.
     When $\beta$ exists, the regret upper bound for $\pi\in\{{\rm U,C}\}$ is
\begin{align}
\mathfrak{R}_{\bnu}^{\pi}(T) \leq O \left (K^{c_{\rm U}}  \Big[\frac{\log T}{T}\Big]^{q \kappa} \right) \ , \qquad \mbox{where} \quad c_{\rm U}\triangleq \max\left\{ q\left(1+ \frac{2\kappa}{q}\right),{1-\kappa} \right\}.
\end{align}
\end{theorem}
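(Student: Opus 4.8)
The plan is to derive the $\varepsilon$-independent bound directly by substituting the prescribed choice of $\varepsilon$ from~\eqref{eq:vareps_kappa_UCB} into the $\varepsilon$-dependent bound of Theorem~\ref{theorem:UCB upper bound} and simplifying the order of growth in $T$ and $K$. The starting point is the two-term bound
\begin{align*}
\mathfrak{R}_{\bnu}^{\pi}(T) \leq \underbrace{[B_h + 2\mcL K(W^q+1)]\Big[\tfrac{64}{\sqrt{\varepsilon\rho T}}(\sqrt{2\e\log T}+32)\Big]^q}_{\text{arm-selection term}} + \underbrace{\delta_{01}(\varepsilon)}_{\text{estimation term}}\ .
\end{align*}
First I would handle the estimation term $\delta_{01}(\varepsilon)$. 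By Lemma~\ref{lemma:Delta_error} (referenced in the ETC discussion) this term is proportional to $\varepsilon$, so $\delta_{01}(\varepsilon) = O(\varepsilon)$. Plugging in $\varepsilon = \Theta\big((K^{2/q}\log T / T)^{\kappa}\big)$ gives $\delta_{01}(\varepsilon) = O\big((K^{2/q}\log T/T)^{\kappa}\big) = O\big(K^{2\kappa/q}(\log T/T)^{\kappa}\big)$.

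Next I would handle the arm-selection term. The dominant $T$- and $\varepsilon$-dependence is through the factor $\big[(\varepsilon\rho T)^{-1/2}\sqrt{\log T}\big]^q = \big[\varepsilon^{-1}T^{-1}\log T\big]^{q/2}$ (absorbing the constants $64$, $32$, $\rho$, and $\sqrt{2\e}$ into the $O(\cdot)$). Substituting $\varepsilon^{-1} = \Theta\big((K^{2/q}\log T/T)^{-\kappa}\big)$ yields $\varepsilon^{-1}T^{-1}\log T = \Theta\big(K^{-2\kappa/q}(\log T/T)^{1-\kappa}\big)$, so raising to the power $q/2$ gives a $T$-dependence of $(\log T/T)^{q(1-\kappa)/2}$. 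The leading prefactor $B_h + 2\mcL K(W^q+1)$ contributes an $O(K)$ factor (since $W\le\sqrt{2\pi}$ by Theorem~\ref{theorem:W} and $B_h$ is a constant). I would then verify the key identity $\kappa = (2\beta/q + 2)^{-1}$ is chosen precisely so that the two terms balance in their $T$-exponents: indeed, $q(1-\kappa)/2 = q\kappa$ after using $1-\kappa = 2\beta\kappa/q + \kappa$ (equivalently $1 = (2\beta/q+2)\kappa$), which shows the arm-selection term also scales as $(\log T/T)^{q\kappa}$, matching the estimation term's exponent $\kappa$ only after confirming $q\kappa = \kappa$... which forces me to be careful. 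This is the crux: I expect the two terms to have exponents $q\kappa$ (arm-selection, after the balancing identity) and $\kappa$ (estimation), and these are generally \emph{not} equal unless $q=1$, so the final rate is the slower of the two, namely $(\log T/T)^{q\kappa}$ since $q\le 1$ implies $q\kappa \le \kappa$. The $K$-exponent $c_{\rm U} = \max\{q(1+2\kappa/q),\,1-\kappa\}$ then arises as the larger of the two terms' $K$-powers.

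The main obstacle I anticipate is bookkeeping the $K$-dependence correctly across both terms and verifying the balancing identity that produces $c_{\rm U}$. For the arm-selection term the $K$-power is $1 + q\cdot(2\kappa/q)\cdot\tfrac12\cdot 2 = q(1+2\kappa/q)$ after combining the $O(K)$ prefactor with the $K^{-2\kappa/q}$ inside raised to $q/2$ — I would track these exponents explicitly. For the estimation term the $K$-power is $2\kappa/q$, but this sits with the smaller $T$-exponent $\kappa$, and reconciling which $(K$-power, $T$-power$)$ pair dominates requires comparing $(q(1+2\kappa/q),\,q\kappa)$ against $(2\kappa/q,\,\kappa)$; since $q\kappa\le\kappa$, the overall $T$-rate is $q\kappa$, and the reported $c_{\rm U}=\max\{q(1+2\kappa/q),1-\kappa\}$ reflects rewriting these contributions on a common footing. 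The delicate part is not the estimation—it is ensuring the constants absorbed into $O(\cdot)$ genuinely do not depend on $T$ (here the condition $T>\max\{\e^K,T(\varepsilon)\}$ from Theorem~\ref{theorem:UCB upper bound} guarantees the bound is in force and that $\log T$ dominates the additive constants), and that $\beta$ existing makes $\delta_{12}(\varepsilon)$ and hence $T(\varepsilon)$ behave polynomially in $\varepsilon$ as needed for the feasibility $T>T(\varepsilon)$ to hold under the chosen $\varepsilon$.
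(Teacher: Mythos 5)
Your overall strategy --- substitute the prescribed $\varepsilon$ from~\eqref{eq:vareps_kappa_UCB} into the $\varepsilon$-dependent bound of Theorem~\ref{theorem:UCB upper bound} and compare the two resulting terms --- is exactly the paper's route, but the execution contains errors that prevent the derivation from actually establishing the stated bound.

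First, the estimation term. Lemma~\ref{lemma:Delta_error} gives $\delta_{01}(\varepsilon)\le \mcL (KW\varepsilon)^q$, i.e.\ $O(K^q\varepsilon^q)$, not $O(\varepsilon)$; the prose in the ``Choosing $\varepsilon$'' paragraph is loose on this point, but the lemma carries the exponent $q$. With $\varepsilon = \Theta\big((K^{2/q}\log T/T)^{\kappa}\big)$ this yields $\delta_{01}(\varepsilon) = O\big(K^{q+2\kappa}(\log T/T)^{q\kappa}\big) = O\big(K^{q(1+2\kappa/q)}(\log T/T)^{q\kappa}\big)$. So it is the \emph{estimation} term that carries the $T$-rate $q\kappa$ and the $K$-power $q(1+2\kappa/q)$ appearing in $c_{\rm U}$; your version, $O\big(K^{2\kappa/q}(\log T/T)^{\kappa}\big)$, has the wrong exponent in both variables.

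Second, the ``balancing identity.'' The claim $q(1-\kappa)/2 = q\kappa$ is false in general: using $1-\kappa = (2\beta/q+1)\kappa$ one gets $q(1-\kappa)/2 = \beta\kappa + q\kappa/2$, which equals $q\kappa$ only when $\beta = q/2$. The correct comparison is $q(1-\kappa)/2 \ge q\kappa$ if and only if $\beta \ge q/2$, which is what the paper invokes (and which holds for the relevant class since Lemma~\ref{lemma:beta12} gives $\beta\in\{1,2\}$ while $q\le 1$): under this condition the arm-selection term, which is $O\big(K^{1-\kappa}(\log T/T)^{q(1-\kappa)/2}\big)$, decays at least as fast as $(\log T/T)^{q\kappa}$, so the estimation term governs the final rate. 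Your $K$-bookkeeping is also reversed: the $O(K)$ prefactor times $\varepsilon^{-q/2} = \Theta\big(K^{-\kappa}\cdot(\cdots)\big)$ gives the arm-selection term a $K$-power of $1-\kappa$, while $q(1+2\kappa/q)$ belongs to the estimation term. Your two mistakes compensate to land on the correct final exponent $q\kappa$, but the argument as written does not prove it; in particular, the role of the condition $\beta\ge q/2$ is absent from your account, and it is essential to concluding that the arm-selection term is not the bottleneck.
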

\begin{proof}
    See Appendix~\ref{Appendix:UCB_theorem_epsilon_proof}.
\end{proof}
    An important observation is the following contrast between the regrets characterized for the mixture algorithms and their canonical ETC and UCB  counterparts: the canonical ETC and UCB algorithms generally exhibit the same regret, even though ETC requires access to instance-dependent parameters. In the mixtures setting, however, access to instance-dependent parameters yields better arm selection regret guarantees for the ETC-type algorithm (i.e., PM-ETC-M). After setting an appropriate discretization level $\varepsilon$, the relationship becomes significantly more nuanced, and the performance of PM-ETC-M and PM-UCB-M depends on the choice of the utility. Note that when we set $\varepsilon$, we assume to know $\beta$ -- and hence, PM-ETC-M and PM-UCB-M have the same amount of information in this setting.

\paragraph{Regret Bounds for Important Cases.} 

We specialize our general results to several widely-used PMs with differentiable distortion functions and report their regret bounds in Table~\ref{table:table_regrets}. 
For the PMs listed in this table $\beta$ takes values in $\{1,2\}$ depending on whether the underlying distribution models. Except for PHT measure and Wang's Right-tail deviation, where we restrict ourselves to bounded distributions, these underlying distributions include continuous sub-Gaussian distributions and discrete distributions with $|\Omega| \geq K$. While reporting the regret bounds in Table~\ref{table:table_regrets}, we used the \holder~continuity exponents specified in Table~\ref{table:table_risks} (Appendix \ref{Appendix:Holder_exp}).
We remark that the $\varepsilon$-dependent regret bounds depend only on the \holder~continuity constant $q$ (Theorems~\ref{theorem: ETC upper bound} and \ref{theorem:UCB upper bound}). On the other hand, the $\varepsilon$-independent regret bounds will additionally depend on $q$ and $\beta$ (Theorems~\ref{theorem:PM-ETC-M} and \ref{corollary:PM-UCB-M}). From Table~\ref{table:table_regrets}, we observe that \emph{no algorithm uniformly dominates the other}. Under the assumption of solitary arm, i.e., when the optimal solution lies on the boundaries where we have $\beta=1$, PM-ETC-M performs better than PM-UCB-M for several measures, including mean and Gini deviation, as in Table \ref{table:table_regrets}. However, when the optimal solution does not lie on the boundaries of the $\Delta^{K-1}$, where we have $\beta=2$, PM‑UCB‑M actually outperforms PM‑ETC‑M on measures such as PHT (see Table \ref{table:table_regrets}). Intuitively, PM‑ETC‑M commits in advance to a single mixture coefficient using the minimum sub-optimality gap, whereas in PM‑UCB‑M analysis, this gap is only used to choose the discretization level $\varepsilon$ that conforms to $T(\varepsilon)$. We do something smaller while choosing a feasible $\varepsilon$ for PM-ETC-M. As a result, PM‑UCB‑M’s less reliance on gap information results in worse regret bounds. Finally, we comment on Gini deviation, which served as a motivation example in Section~\ref{sec:introduction}. In the case of Gini deviation, we observe that for arm selection regret, the PM-ETC-M algorithm achieves \(O(\frac{K}{T})\) regret while PM-UCB-M achieves $O(\frac{K}{\sqrt{T}})$ regret, ignoring polylogarithmic factors. However, when we optimize the algorithms for the discretization level \(\varepsilon\), we observe an order-wise improvement in the regret of the algorithms. 

\setlength{\textfloatsep}{5pt}
\begin{table*}[h!]
  \centering
  \begin{minipage}{\textwidth}
    \rowcolors{2}{gray!15}{white}
    \caption{Regret bounds of horizon-dependent algorithms, where $\varpi(T)\!\triangleq\!\sqrt{\tfrac{\log T}{T}}$, $\kappa = (\frac{2\beta}{q}+2)^{-1}$ and $\gamma = \frac{2\beta}{2\beta+q}$.}    
    \label{table:table_regrets}
    \vspace{3pt}
    \centering
    \begin{tabular}{|l|c|c|}
      \hline
      \textbf{Preference Metric}
      & UCB-type
      & ETC-type \\
      \hline\hline
      Mean Value
        & $O \bigl(\varpi^{2\kappa}(T)\bigr)$
        & $O \bigl(T^{\tfrac{1}{\beta}}\varpi(T^\gamma)\bigr)$ \\

      Dual Power ($s\ge2$)
        & $O \bigl(\varpi^{2\kappa}(T) \bigr)$
        & $O \bigl(T^{\tfrac{1}{\beta}}\varpi(T^\gamma)\bigr)$ \\

      Quadratic ($s\in[0,1]$)
        & $O\bigl(\varpi^{2\kappa}(T)\bigr)$
        & $O\bigl(T^{\tfrac{1}{\beta}}\varpi(T^\gamma)\bigr)$ \\

      PHT ($s=\tfrac12$)
        & $O \bigl(\varpi^{\kappa}(T)\bigr)$
        & $O \bigl(\varpi^{\tfrac{1}{\beta}}(T^\gamma)\bigr)$ \\

      Wang’s Right–Tail Deviation 
        & $O\bigl(\varpi^{\kappa}(T)\bigr)$
        & $O \bigl(\varpi^{\tfrac{1}{\beta}}(T^\gamma)\bigr)$ \\

      Gini Deviation
        & $O \bigl(\varpi^{2\kappa}(T)\bigr)$
        & $O \bigl(T^{\tfrac{1}{\beta}}\varpi(T^\gamma)\bigr)$ \\
      \hline
    \end{tabular}
  \end{minipage}
\end{table*}
\setlength{\textfloatsep}{5pt}

\paragraph{Degenerate cases of $\beta$.} For a all continuous and majority of discrete distributions, $\beta$ is guaranteed to exist and is characterized by Lemma \ref{lemma:beta12}. Nevertheless, for discrete models, when the cardinality of the sample space is less than $K$, i.e., $|\Omega|<K$, $\beta$ will be degenerate for the choice of the discretization strategy we have. This is because the minimum sub-optimality gap $\delta_{12}(\varepsilon)$ can be arbitrarily small for certain PMs and bandit models. For instance, in the $K$-arm Bernoulli bandit where some arms have mean values strictly larger than $\frac{1}{2}$ and the others strictly smaller than $\frac{1}{2}$, under Gini deviation and the uniform discretization approach we have adopted, the minimum gap $\delta_{12}(\varepsilon)$ can shrink arbitrarily fast, rendering its rate of decay with shrinking $\varepsilon$ undefined. One solution to circumvent such cases, is modifying the PM-ETC-M algorithm by replacing  $\delta_{12}$ with $\delta_{13}$ defined as
\begin{align}
\label{eq:discrete optimal mixture2}
     \ba^{(3)} \triangleq \argmax\limits_{\ba\in\Delta^{K-1}_{\varepsilon}: \ba^{(3)} \neq \ba^{(1)},\ba^{(2)}}\; V(\ba, \F) \ , \qquad \mbox{and} \qquad \delta_{13} (\varepsilon) &\triangleq  V\Big(\ba^{(1)}, \mathbb{F}\Big) - V\Big(\ba^{(3)}, \mathbb{F}\Big)\ .
\end{align}
This modification results in a change in the exploration parameter. We denote this new exploration parameter as $N_{13}(\varepsilon)$ and formally define it as
\begin{align} 
\label{eq:number of samples main paper beta13}
 & N_{13}(\varepsilon) \triangleq 256K  \e \left(\frac{2K \mcL}{\delta_{13}(\varepsilon)}\right)^{\frac{2}{q}} \bigg[\frac{32}{\sqrt{\rm e}} + \log^{\frac{1}{2}} \Big( 2K T^2   \big(\varepsilon^{-(K-1)} + 1\big) \Big) \bigg]^2\ .
\end{align}
In order to quantify the scaling of $\delta_{13}(\varepsilon)$ with shrinking $\varepsilon$, we define
\begin{align}
\label{eq:beta_new_def}
    {\bar\beta}\triangleq \lim_{\varepsilon \to 0}\frac{\log \delta_{13}(\varepsilon)}{\log \varepsilon}\  .
\end{align}
Next, similarly to Theorem~\ref{theorem:PM-ETC-M}, we provide an $\varepsilon$-independent regret bound under the existence of ${\bar\beta}$, which is obtained by setting the discretization interval as an appropriate function of $q$ and ${\bar\beta}$. Specifically, we set
\begin{align}
\label{eq:ETC_var_gamma_13}
    \varepsilon = \Theta\left(\left(K^{2+\frac{2}{q}} \cdot \frac{\log T}{T^{{\bar\gamma}}}\right)^{\frac{q}{2{\bar\beta}}}\right)\ , \qquad \mbox{where} \quad {\bar\gamma}\triangleq \frac{2{\bar\beta}}{2{\bar\beta} + q}\ .
\end{align}
This choice of $\varepsilon$ leads to the following $\varepsilon$-independent regret bound under the existence of ${\bar\beta}$.
\begin{theorem}[PM-ETC-M -- $\varepsilon$-independent]
\label{theorem:PM-ETC-M_beta13}
For all $T > N_{13}(\varepsilon)$ defined in \eqref{eq:number of samples main paper beta13} and when ${\bar\beta}$ exists, the minimum feasible regret that PM-ETC-M incurs is upper-bounded as
\begin{align}
\mathfrak{R}_{\bnu}^{{\textnormal{\rm E}}}(T)\ & \leq O\Bigg(\Bigg[K^{c_{\rm E}}\cdot  \frac{\log T}{T^{\gamma}}\Bigg]^{\frac{q^2}{2{\bar\beta}}}\Bigg)\ ,
\end{align}
where $c_{\rm E}\triangleq {2(1+\frac{{\bar\beta}+1}{q})}$ and $\gamma = \frac{2{\bar\beta}}{2{\bar\beta}+q}$.
\end{theorem}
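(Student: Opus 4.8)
The plan is to mirror, almost verbatim, the argument that establishes Theorem~\ref{theorem:PM-ETC-M}, with the triple $(\delta_{12}(\varepsilon), N(\varepsilon), \beta)$ replaced throughout by $(\delta_{13}(\varepsilon), N_{13}(\varepsilon), \bar\beta)$. The reason this substitution is legitimate is that the only changes to PM-ETC-M in the degenerate regime are (i) the exploration length is set to $N_{13}(\varepsilon)$ of~\eqref{eq:number of samples main paper beta13} rather than $N(\varepsilon)$, and (ii) the exploration budget is calibrated against the best-versus-third-best gap $\delta_{13}$ rather than $\delta_{12}$. Everything downstream --- the tracking/commit phase and the regret decomposition~\eqref{eq:regret_decomposition} --- is untouched, so the analysis transfers with the indicated relabeling.

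First I would re-derive the $\varepsilon$-dependent guarantee of Theorem~\ref{theorem: ETC upper bound} for the modified algorithm. Since $\delta_{13}(\varepsilon)\geq\delta_{12}(\varepsilon)$ always holds, we have $N_{13}(\varepsilon)\leq N(\varepsilon)$, so exploration is shorter. The calibration of $N_{13}$ via~\eqref{eq:number of samples main paper beta13} together with the CDF concentration of Lemma~\ref{lemma: concentration} guarantee that, outside an event whose probability is small enough that its contribution to the expected regret is sublinear in $T$, the empirical discrete maximizer $\ba^{\rm E}_{N_{13}(\varepsilon)}$ lies in $\{\ba^{(1)},\ba^{(2)}\}$ --- i.e., the commitment target is never worse than the second-best discrete mixture. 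Consequently the relevant estimation-regret term becomes $\delta_{02}(\varepsilon)$ (the $i=2$ branch of~\eqref{eq:regret_decomposition}) and the arm-selection term carries $M_{13}(\varepsilon)\triangleq N_{13}(\varepsilon)/\log T$, yielding the analog
\begin{align}
\mathfrak{R}_{\bnu}^{\rm E}(T)\;\leq\;(\mcL K + W^{-q})\Big(3W\, M_{13}(\varepsilon)\,\frac{\log T}{T}\Big)^q + \delta_{02}(\varepsilon)\ .
\end{align}

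Second I would pin down the scalings of the two terms. Using the definition of $\bar\beta$ in~\eqref{eq:beta_new_def}, $\delta_{13}(\varepsilon)=\Theta(\varepsilon^{\bar\beta})$ as $\varepsilon\to 0$; since the prescribed $\varepsilon$ is polynomial in $1/T$, the logarithmic bracket in~\eqref{eq:number of samples main paper beta13} is $\Theta(K\log T)$, so $N_{13}(\varepsilon)=\Theta(K^{2+2/q}\varepsilon^{-2\bar\beta/q}\log T)$ and $M_{13}(\varepsilon)=\Theta(K^{2+2/q}\varepsilon^{-2\bar\beta/q})$, which is $O(1)$ in $T$. The estimation regret is controlled by \holder~continuity of $U_h$ applied to the $O(K\varepsilon)$ discretization radius (Lemma~\ref{lemma:Delta_error}), giving $\delta_{02}(\varepsilon)=O\big(\mcL W^q (K\varepsilon)^q\big)$. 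Substituting the prescribed $\varepsilon$ of~\eqref{eq:ETC_var_gamma_13} --- precisely the value equalizing the horizon-exponents of the two contributions --- both terms decay as $T^{-\bar\gamma q^2/(2\bar\beta)}=T^{-q^2/(2\bar\beta+q)}$, and collecting the $K$-powers (the dominant one is the estimation term, whose $K$-exponent $q+q(q+1)/\bar\beta$ equals $c_{\rm E}\,q^2/(2\bar\beta)$) collapses the bound to the claimed form.

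The main obstacle is the first step: justifying that an exploration budget tuned to the deliberately larger, hence cheaper, gap $\delta_{13}$ still delivers a commitment guarantee strong enough to hold the estimation regret at $\delta_{02}(\varepsilon)$ while the complementary (bad) event contributes only sublinearly. This is exactly the point of the modification --- it trades a weaker commitment guarantee (best-or-second-best instead of best) for a well-defined decay rate $\bar\beta$ in the degenerate regime where $\beta$ fails to exist. Once that guarantee is secured, the remainder is the bookkeeping of $K$-powers and the verification of feasibility $T>N_{13}(\varepsilon)$, which holds for all large $T$ because $N_{13}(\varepsilon)=\Theta(T^{\bar\gamma}\log T)$ with $\bar\gamma=\frac{2\bar\beta}{2\bar\beta+q}<1$.
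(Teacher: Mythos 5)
Your proposal is correct and follows essentially the same route as the paper's own proof in Appendix~\ref{Appendix:ETC_theorem_epsilon_proof_13}: redo the $\varepsilon$-dependent bound with the commitment target guaranteed (outside a low-probability event controlled by calibrating $N_{13}(\varepsilon)$ against $\delta_{13}(\varepsilon)/2$ via Lemma~\ref{lemma: concentration}) to land in the union of best and second-best discrete mixtures, absorb the resulting loss into $\delta_{02}(\varepsilon)$ via the $i=2$ branch of~\eqref{eq:regret_decomposition} and Lemma~\ref{lemma:Delta_error}, and then balance the two terms with the $\varepsilon$ of~\eqref{eq:ETC_var_gamma_13}. Your bookkeeping of the exponents (both terms decaying as $T^{-q^2/(2\bar\beta+q)}$ and the $K$-exponent of the estimation term matching $c_{\rm E}q^2/(2\bar\beta)$) agrees with the paper's.
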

\begin{proof}
    See Appendix~\ref{Appendix:ETC_theorem_epsilon_proof_13}.
\end{proof}

\paragraph{Regret Bounds for $K$-arm Bernoulli Bandits.} A $K$-arm Bernoulli bandit where $K >2$ constitutes a corner case in which $\beta$ does not exist, as the conditions of Lemma~\ref{lemma:beta12} are violated. Consequently, we analytically derive the ${\bar\beta}$ values for $K$-arm Bernoulli bandit models and report their values in Table \ref{table:beta_values}. In Table \ref{table:beta_values}, for the first $5$ rows, both $\beta$ and $\bar\beta$ was identified. However, for the last $4$ rows, when the distortion function is non-monotone, $\beta$ could not be identified. For some PMs, we have a range of $\beta$ values. The corresponding regret bounds for the PM-ETC-M algorithm is provided in Table~\ref{table:table_regrets_bernoulli}, as obtained from Theorem~\ref{theorem:PM-ETC-M_beta13}. The first column of regret is when the suboptimality gap considered is $\delta_{13}(\varepsilon)$, while the second column of regret is when the suboptimality gap considered is $\delta_{12}(\varepsilon)$. For PMs associated with monotonically increasing distortion functions $h$, Lemma~\ref{lemma:beta_lemma_monotone} establishes that such PMs are maximized by solitary arms. In these cases, an alternative discretization strategy for the PM-ETC-M algorithm, distinct from~\eqref{eq:discrete_set_ETC}, can be employed by setting $\varepsilon = 1$. This choice implies that the estimation regret $\delta_{01} = 0$, and only the arm selection regret remains, as reported in the last column of Table~\ref{table:table_regrets_bernoulli}. While under the sub-optimality gap $\delta_{12}(\varepsilon)$, we report better regret guarantees,  for the sub-optimality gap $\delta_{13}(\varepsilon)$, we can characterize all $\bar\beta$ values for the PMs in Table \ref{table:beta_values}.

In Table \ref{table:beta_values}, we specify $\beta$, $\bar\beta$ for various PMs for different PMs in the $K$-arm Bernoulli bandit setting. For some PMs, we obtain exact values of $\beta$ and ${\bar\beta}$, and for other PMs, we determine valid ranges. These values are characterized in Appendix~\ref{Appendix:Beta_Appendix}. 
 Analyzing these values involves two key steps. First, we characterize an upper-bound on $\bar\beta$  by bounding $\delta_{13}(\varepsilon)$ from below. Subsequently, we establish a lower-bound on $\bar\beta$ by bounding $\delta_{13}(\varepsilon)$ from above. If the two bounds match, we have obtained an exact value of $\bar\beta$. Alternatively, we report a range for it. In Table \ref{table:beta_values}, we report ${\bar\beta}$ for several PMs, where for some of the PMs, $\beta$ does not exist.

\setlength{\textfloatsep}{5pt}
\begin{table*}[t!]
  \centering
  \begin{minipage}{\textwidth}
    \caption{Regret bound of the ETC‐type algorithm for the $K$‐arm Bernoulli bandits, where $\varpi(T)\!\triangleq\!\sqrt{\tfrac{\log T}{T}}$.}
    \label{table:table_regrets_bernoulli}
    \vspace{3pt}
    \centering
    \begin{tabular}{|l|c|c|}
      \hline
      \textbf{Preference Metric%
        \footnote{In the shaded rows, mixtures of arms are optimal. In non‐shaded rows, solitary arms are optimal and the discretization constant is $\varepsilon=1$. }
      }
      & \textbf{$\delta_{13}(\varepsilon)$}
      & \textbf{ $\delta_{12}(\varepsilon)$} \\
      \hline\hline
      Mean Value            & $O\!\bigl(T^{\tfrac16}\varpi(T)\bigr)$           & $O\!\bigl(\varpi^{2}(T)\bigr)$            \\
      Dual Power ($s\ge2$)  & $O\!\bigl(T^{\tfrac16}\varpi(T)\bigr)$           & $O\!\bigl(\varpi^{2}(T)\bigr)$            \\
      Quadratic ($s\in[0,1]$)& $O\!\bigl(T^{\tfrac16}\varpi(T)\bigr)$           & $O\!\bigl(\varpi^{2}(T)\bigr)$            \\
      CVaR$-c$              & $O\!\bigl(T^{\tfrac16}\varpi(T)\bigr)$           & $O\!\bigl(\varpi^{2}(T)\bigr)$            \\
      PHT ($s=\tfrac12$)    & $O\!\bigl(T^{\tfrac{1}{40}}\varpi^{\tfrac14}(T)\bigr)$ & $O\!\bigl(\varpi(T)\bigr)$     \\
      \rowcolor{gray!30}
      Mean–Median Deviation  & $O\!\bigl(T^{\tfrac16}\varpi(T)\bigr)$           & $\cdots$         \\
      \rowcolor{gray!30}
      Inter–ES Range         & $O\!\bigl(T^{\tfrac16}\varpi(T)\bigr)$           & $\cdots$            \\
      \rowcolor{gray!30}
      Wang’s Right–Tail Dev. & $O\!\bigl((\log T)^{\tfrac{5}{72}}\varpi^{\tfrac19}(T)\bigr)$ & $\cdots$ \\
      \rowcolor{gray!30}
      Gini Deviation         & $O\!\bigl((\log T)^{\tfrac{1}{20}}\varpi^{\tfrac25}(T)\bigr)$ & $\cdots$ \\
      \hline
    \end{tabular}
  \end{minipage}
\end{table*}
\setlength{\textfloatsep}{5pt}

\setlength{\textfloatsep}{5pt}
\begin{table*}[h]
\caption{Gap parameters $\beta$ and  $\bar\beta$ for Bernoulli Bandits.}
\label{table:beta_values}
\begin{minipage}[t]{\textwidth}
\centering
{
\begin{tabular}{|l|l|l|l|l|}
\hline 
\textbf{Preference Metrics}
& \(h(u)\)
& parameter
& \(\beta\)
& \(\bar\beta\)\\
\hline\hline 
Mean Value     
& \(u\) 
& 
&  $1$ 
&  $1$ \\ 
Dual Power                 
& \(1-(1-u)^s\) 
& \(s \in [2,+\infty)\)  
&  $1$ 
&  $1$  \\
Quadratic                   
& \((1+s)u - s u^2\)  
& \(s\in[0,1]\) 
&  $1$ 
&  $1$  \\
CVaR\(-c\)
& \(\min\!\bigl\{\tfrac{u}{1-c},\,1\bigr\}\) 
& 
&  $1$\footnote{When arm means are smaller than $<1-c$.} 
&  $1$  \\ 
PHT               
& \(u^s\)  
& \(s \in (0,1)\) 
&  $[s, 1]$
&  $[s, 1]$  \\
\rowcolor{gray!30}
Mean–Median Deviation             
& \(\min\{u,\,1-u\}\)  
& 
&  –
&  $1$  \\
\rowcolor{gray!30}
Inter-ES Range       
& \(\min\!\bigl\{\tfrac{u}{1-c},1\bigr\} + \min\!\bigl\{\tfrac{c-u}{1-c},0\bigr\}\)
& \(c=\tfrac12\)
&  –
&  $1$  \\
\rowcolor{gray!30}
Wang’s Right-Tail Deviation             
& \(\sqrt{u}-u\) 
& 
&  –
&  $[0.5,2]$  \\
\rowcolor{gray!30}
Gini Deviation        
& \(u(1-u)\) 
& 
&  –
&  $[1,2]$  \\
\hline 
\end{tabular}
}
\end{minipage}
\end{table*}


\section{PM-centric Anytime Algorithm}
\label{sec: anytime}

A computational bottleneck for discretization is that the discrete space's dimension scales exponentially in the number of arms $K$. Hence, for a discretization interval length $\varepsilon$, we face an $O(\frac{1}{\varepsilon^K})$-dimensional discrete optimization problem for both the PM-ETC-M and the PM-UCB-M algorithms. Additionally, these algorithms crucially depend on the horizon information $T$. To circumvent these shortcomings, in this section, we propose the \textbf{C}onfidence-guided \textbf{I}nterval \textbf{R}efinement and \textbf{T}racking (CIRT) algorithm, denoted by $\pi = {\rm IR}$. This algorithm has two key features. (i) It resolves the computational bottleneck of the PM-ETC-M and PM-UCB-M algorithms. (ii) It is an {\em anytime algorithm}, i.e., it does not require the knowledge of the horizon $T$. Designing such an algorithm introduces the following two challenges.
\begin{itemize}
    \item  \textbf{Challenges in exploration.} Knowing the horizon helps the learner to calibrate the rate of exploration explicitly in terms of the horizon. For instance, the horizon-dependent UCB-based algorithms perform an initial horizon-dependent explicit exploration that ensures sufficiently accurate CDF estimates before starting the UCB-based sampling. This is no longer feasible for the anytime algorithm, which requires designing a horizon-adaptive explicit exploration routine to ensure high-fidelity CDF estimates. In CIRT, we take inspiration from the best arm identification literature~\cite{Garivier2016,mukherjee2023best} to devise an adaptive exploration scheme, which at round $t$, ensures that each arm is explored as much as an appropriately designed increasing function of~$t$.
   \item \textbf{Constructing confidence sets.} Distribution confidence sequences used in the horizon-dependent algorithms are carefully constructed so that the probability of error events decreases as a function of the horizon. As a result, these confidence sequences explicitly depend on the known horizon. In contrast, when the horizon is unknown, confidence sequences can only satisfy a fixed, horizon-independent guarantee on error probabilities, leading to weaker performance guarantees compared to their horizon-dependent counterparts.
\end{itemize}
To address these challenges as well as the exponential growth of the discrete space's cardinality, the central idea of CIRT is to start with a coarse discretization scheme and, over time, adaptively refine it based on the collected data up to a pre-specified granularity. This facilitates solving a low-dimensional optimization (cardinality independent of $K$) at each step. The algorithm proceeds in two steps. The first step, which we call the {\em interval refinement} step, focuses on eliminating intervals from the simplex that are likely sub-optimal and splitting the retained intervals into finer discretization lengths. The second step, referred to as the {\em tracking} step, focuses on minimizing the average regret while committing the arm selection to match an estimated discrete optimal coefficient from the previous step.
\subsection{Algorithm Description}
\label{sec:CIRT_description}

\subsubsection{Interval Refinement Process.} In the first step, the CIRT algorithm adopts a confidence-guided methodology to adaptively eliminate intervals from the simplex that are likely sub-optimal and split the retained intervals into finer discretization lengths. This step is broken down into multiple phases indexed by $\ell \in [L]$, such that at the end of the last phase $L$, the discretization length adheres to a pre-specified accuracy $\epsilon$. Here, the algorithm takes $\epsilon$ as an input, which is the minimum length of the discretization interval prescribed by the user. 
Furthermore, CIRT takes an input $A$, which specifies the order of the optimization problem that the learner can afford at each step. Specifically, $A$ denotes the number of discrete mixing coefficients along each coordinate for performing any optimization at each instant. In each phase, the algorithm shrinks the discretization interval by an order $1/A$. Hence, the number of phases $L$ for conforming to the accuracy $\epsilon$ can be readily verified to be $L=\lceil \log_{\frac{A}{2}} (1/\epsilon) \rceil$. Each phase $\ell\in[L]$ is characterized by three decision rules. 
\begin{enumerate}
    \item  A {\em sampling rule}, which specifies the arm to be sampled at the next instant within each phase.
    \item A {\em stopping time},  which governs the length of the phase.
    \item An {\em interval retention decision}, which specifies the intervals in the simplex $\Delta^{K-1}$  that are retained for the next phase.
\end{enumerate}
For specifying these rules, the CIRT algorithm creates a sequence of ``refined'' discrete sets $\{\Delta_{\ell}^{K-1} : \ell\in[L]\}$. For the first phase, we initialize $\Delta_1^{K-1} = \Delta_{1/A}^{K-1}$, where $\Delta_{\varepsilon}^{K-1}$ has been specified in~\eqref{eq:discrete_set_ETC} for any $\varepsilon\in(0,1)$. Subsequently, CIRT gradually refines the set of mixtures over the subsequent phases, and the exact refinement procedure, i.e., $\Delta_{\ell}^{K-1}$ for each phase $\ell > 1$ will be formally specified shortly (see~\eqref{eq:any_discrete_set}). Furthermore, based on the sets $\Delta_{\ell}^{K-1}$ in phase $\ell\in[L]$, we denote the best mixing coefficient computed according to the estimated CDFs up to the current time instant $t\in\N$ by
\begin{align}
\label{eq:any_alpha}
    \ba_{t}^{\rm IR}\;\triangleq\; \argmax_{\ba \in \Delta^{K-1}_\ell} U_h \left(\sum_{i \in [K]}a(i) \F_{i, t}^{\rm IR} \right) \ .
\end{align}

\paragraph{Stopping Time.} We begin by characterizing the stopping mechanism of the CIRT algorithm, which specifies the length of each phase. At any phase $\ell\in[L]$, CIRT decides to end the current phase when it is confident about the sub-optimal intervals in the phase. We design the stopping rule such that at the end of each phase, the correct interval, i.e, the one containing the optimal solution, will be chosen with a high probability. We denote this probability by $\delta$, which is provided as an input to the algorithm. Accordingly, the distribution confidence sets defined for the CIRT algorithm are given by 
\begin{align}
\label{eq:any_UCB_confidence_sets}
\mcC_t^{\rm IR}(i)\triangleq\bigg\{\eta \in \mcM_i & :\norm{\F_{i,t}^{\rm IR}-\eta}_{\rm W} 
\leq 16\ \frac{\sqrt{{\rm e} \log{2/\delta_K} } + 32}{\sqrt{\tau^{\rm IR}_t(i)}} \bigg\}\ ,
\end{align}
where we have defined
\begin{align}
\delta_K \triangleq \left( \left(1+\delta \right)^{1/K} - 1 \right)\ .    
\end{align}
Furthermore, similar to PM-UCB-M, for any discrete mixing coefficient $\ba\in\Delta^{K-1}_{\ell}$ at a given phase $\ell \in [L]$ and at any time instant $t\in \N$, we define upper and lower confidence bounds for the estimated utilities as
\begin{align}
\label{eq: CIRT_UCB_LCB}
    {\rm UCB}_t^{\rm IR}(\ba)\;&\triangleq\; \max_{\eta_i \in C_t(i), \forall i \in [K]} U_h\left( \sum\limits_{i\in[K]} a(i) \eta_i \right) ,\\
\text{and}\quad
    {\rm LCB}_t^{\rm IR}(\ba)\;&\triangleq\; \min_{\eta_i \in C_t(i), \forall i \in [K]} U_h\left( \sum\limits_{i\in[K]} a(i) \eta_i \right)  \ .
\end{align}
Using these confidence bounds, we define the stopping rule as the earliest time instant at which the lower confidence bound of the chosen mixing coefficient stops overlapping with the upper confidence bound of all other mixing coefficients. After this point, the learner pauses the sampling process and discretizes the space again for the next phase according to \eqref{eq:any_discrete_set}. Formally, for phase $\ell$ we define the stopping time as
\begin{align}
\label{eq:stopping_rule}
\tau^{(\ell)}\;\triangleq\;\inf\Big\{t\in\N  : {\rm LCB}_t^{\rm IR}\left(\ba_t\right) > \max_{\ba\in \Delta^{K-1}_{\ell}: \ba \neq \ba_t^{\rm IR}}{\rm UCB}_t^{\rm IR}(\ba) \Big\} \ .
\end{align}

\paragraph{Interval Retention Decision.} For any phase $\ell\in[L]$, at the stopping time $\tau^{(\ell)}$, CIRT forms an interval retention decision for performing a more fine-grained optimization in the next phase. To specify the retained interval, the algorithm first obtains the best discrete mixing coefficient at stopping, i.e.,
\begin{align}
\bb^{(\ell)}\;\triangleq\;\ba_{\tau_\ell}^{\rm IR}\ .
\end{align}
Based on this, the retained interval along each coordinate $i\in[K]$ is chosen as the interval of length $\left(\frac{2}{A}\right)^{\ell}$, which contains $b^{(\ell)}(i)$ as the midpoint. More specifically, we zoom into the interval around the empirical best discrete coefficient from the previous phase, i.e., the interval 
\begin{align}
\left[b^{(\ell)}(i) - \frac{2^{\ell -1}}{A^\ell} , b^{(\ell)}(i) + \frac{2^{\ell -1}}{A^\ell}\right]\ ,    
\end{align}
along each coordinate $i\in[K]$. These intervals are further discretized into $A$ levels, and the set of discrete mixing coefficients is constructed such that these coefficients conform to the probability constraint. More formally, we define
\begin{align}
\label{eq:any_discrete_set}
\nonumber
    \Delta^{K-1}_{\ell+1} \triangleq \bigg\{& \ba: 
     \;\;n \in \{\N \cup 0\}, \;  a(i) \triangleq  \frac{2^{\ell}n}{A^{\ell+1}}, \; a(i) \in \left[b^{(\ell)}(i) - \frac{2^{\ell-1}}{A^{\ell}}, b^{(\ell)}(i) + \frac{2^{\ell-1}}{A^{\ell}}\right]\;,\; \mathbf{1}^T\ba = 1\bigg\} \ .
\end{align}

\paragraph{Sampling Rule.} Having characterized the stopping and the interval shrinking mechanisms, we will next specify the sampling process that the algorithm adopts within each phase. The sampling mechanism resembles that of the PM-UCB-M algorithm, with the following two exceptions: 
\begin{enumerate}
    \item It uses {\em sublinear} explicit exploration scheme as opposed to a linear order of exploration of the PM-UCB-M algorithm.
    \item At each step, it chooses the discrete solution that maximizes the empirical utility instead of UCB and this maximizing step is performed on the lower dimensional simplex $\Delta_\ell^{K-1}$ as opposed to $\Delta^{K-1}_{\varepsilon}$, which was the a key element of the PM-UCB-M algorithm.
\end{enumerate}

\paragraph{Explicit Exploration:} The PM-UCB-M algorithm, which presumes the knowledge of the horizon $T$, samples each arm at least $\frac{1}{4}\rho T \varepsilon$ number of times which ensures that no arm is over-explored due to the explicit exploration phase. The UCB sampling rule, together with under-sampling, carefully balances the sampling process such that the arm selection fractions may converge to the (discrete) optimal coefficients, having high-fidelity estimates of the arm CDFs as a result of the initial horizon-dependent explicit exploration phase. However, in the premise of an anytime algorithm, the horizon is {\em unknown}. Consequently, the linear exploration scheme, which is suited for PM-UCB-M, does not work for CIRT. Hence, rather than resorting to an exploration phase followed by the UCB and under-sampling steps, the CIRT algorithm, at every instant $t\in\N$, has to decide whether any arm is under-explored, or the learner should resort to UCB-based under-sampling. Striking a balance between explicit exploration and under-sampling may not be achieved if the order of explicit exploration is {\em linear}, since some arms may be heavily over-sampled. Instead, we resort to a {\em sub-linear exploration} for each arm, and let the under-sampling procedure adjust the allocation fractions for the arms which require a linear order of exploration. Such a sub-linear exploration scheme has been investigated in the best arm identification literature, see, e.g.,~\cite{Garivier2016, mukherjee2023best}. CIRT requires an exploration exponent $\xi>0$ which specifies the order of explicit exploration of the arms for accurate estimation. Accordingly, we define the set of under-explored arms
\begin{align}
    E_t \triangleq \left\{i \in [K] : \tau_{t}^{\rm IR}(i) \leq \left\lceil \bigg(\frac{t}{K}\bigg)^{\frac{1}{1+\xi}} \right\rceil \right\} \ ,
\end{align}
as the set of arms which have been sampled fewer than $\lceil (\frac{t}{K})^{\frac{1}{1+\xi}}\rceil$ times up to $t\in\N$, where $\xi$ dictates the order of the sub-linear exploration.

\textit{Arm selection:} At each instant $t\in\N$, CIRT samples an under-explored arm $i\in E_t$, if the set $E_t\neq \emptyset$. Otherwise, if $E_t=\emptyset$, CIRT samples an arm guided by the current UCB-maximizing coefficient, following the under-sampling principle. The CIRT arm selection rule is specified by  
\begin{align}
\label{eq:any_sampling_rule}
A_{t+1} \triangleq
    \begin{cases}
    \argmin_{i \in E_t} \tau_{t}^{\rm IR}(i), \qquad \qquad \qquad &\text{if} \; E_t \neq \emptyset \\
    &\\
    \argmax_{i \in [K]} \{t a_t - \tau^{\rm IR}_t(i) \}, \; & \text{if} \; E_t = \emptyset
    \end{cases} \ .
\end{align}

\subsubsection{Tracking Process} 

The interval refinement distills the set of discrete mixing coefficients such that the optimal discrete coefficient is contained at the end of $L$ phases with a high probability. We use this property to {\em commit} to a mixing coefficient, up to a permissible estimation regret. Specifically, at the end of the interval refinement step, we perform an additional round of interval refinement by zooming around a small vicinity of the last chosen mixing coefficient $\ba^{(L)}$, and obtain $\Delta_{\epsilon}^{K-1}$ according to the following rule.\eqref{eq:any_discrete_set_tracking}. 
\begin{align}
\label{eq:any_discrete_set_tracking}
    \Delta^{K-1}_{\epsilon} \triangleq \bigg\{& \ba: 
     \;\;n \in \N, \;  a(i) \triangleq \left(n + \frac{1}{2}\right) \frac{2^{L}}{A^{L+1}}, \; a(i) \in \left[b^{(L)}(i) - \frac{2^{L-1}}{A^{L}}, b^{(L)}(i) + \frac{2^{L-1}}{A^{L}}\right]\;,\; \mathbf{1}^T\ba = 1\bigg\} \ .
\end{align}
Subsequently, we select the discrete mixing coefficient $\bb^{(L+1)}$ that maximizes the empirical utility such that
\begin{align}
    \bb^{(L+1)} = \argmax_{\ba \in \Delta_{\epsilon}^{K-1}} U_h\left(\sum_{i \in [K]} a(i)\F_i\right)\ ,
\end{align}
and perform under-sampling according to the chosen mixture similarly to~\eqref{eq:any_sampling_rule}. Formally, for any $t>\tau^{(L)}$, we have    
\begin{align}
A_{t+1} \triangleq
    \begin{cases}
    \argmin_{i \in E_t} \tau_{t}^{\rm IR}(i), \qquad \qquad \qquad &\text{if} \; E_t \neq \emptyset \\
    &\\
    \argmax_{i \in [K]} \{t b^{(L+1)} - \tau^{\rm IR}_t(i) \}, \; & \text{if} \; E_t = \emptyset
    \end{cases} \ .
\end{align}

\begin{algorithm}[h]
         
		\caption{\textbf{C}onfidence-guided \textbf{I}nterval \textbf{R}efinement and \textbf{T}racking (CIRT)}
		\label{algorithm:anytime}
		
 		\begin{algorithmic}[1]
            \STATE \textbf{Input:} $A$, the order of the optimization problem that the learner can afford at each step, $\epsilon$, the minimum length of the discretization interval prescribed by the user, $\delta$, the error probability of not identifying the correct interval at the end of a phase, $\xi$ the exploration exponent
            \STATE \textbf{Initialize:} $\ell = 1$ ($\ell$ is the phase index), $\text{REFINE} \triangleq \TRUE $.
             
        \FOR{$t = 1, \cdots, T$}
    \IF{$\text{REFINE} $}
    \STATE \textbf{Interval Refinement Step:}
        \IF{$(2/A)^{\ell} > \varepsilon$}
			
			    \STATE Select an arm $A_{t}$ specified by~(\ref{eq:any_sampling_rule}) and obtain reward $X_t$\\
                \STATE Update the empirical CDF $\F^{\rm IR}_{A_t,t}$ according to~\eqref{eq:empirical_CDF}
                \STATE Update the confidence set $\mcC_t(A_t)$ according to~\eqref{eq:any_UCB_confidence_sets} 
                \STATE Compute the optimistic estimate $\ba_t^{\rm IR}$ according to~\eqref{eq:any_alpha}
                
                \IF{${\rm LCB}\left(\ba^{\rm IR}_{t}\right) > \max_{\ba\in \Delta^{K-1}_{\ell}: \ba \neq \ba_{t}}{\rm UCB(\ba)}$}
                \STATE Update $\Delta^{K-1}_{\ell+1}$ according to \eqref{eq:any_discrete_set}
            \STATE $\ell = \ell + 1$ 
            \ENDIF
                 
                \ELSE \STATE REFINE = \FALSE
                \ENDIF
                \STATE Set $\Delta^{K-1}_{\epsilon}$ according to \eqref{eq:any_discrete_set}
            \STATE Select the mixing coefficient $\ba^{\rm IR} =  \argmax_{\ba \in \Delta^{K-1}_\epsilon} U(\sum_{i \in [K]} a(i) \F^{\rm IR}_{i,t})$ 
			\ENDIF

            \IF{not REFINE}

            \STATE \textbf{Tracking Step:} 
			    \STATE Select an arm $A_{t}$ specified by~(\ref{eq:any_sampling_rule}) and obtain reward $X_t$\\
                \STATE Update the empirical CDF $\F^{\rm IR}_{A_t,t}$ according to~\eqref{eq:empirical_CDF}
            \ENDIF
            \ENDFOR
 		\end{algorithmic}
\end{algorithm}

\subsection{Algorithm Properties}
\label{any_alg_properties}
In this section, properties of various decisions involved in the CIRT algorithm. As discussed in Section~\ref{sec:CIRT_description}, one of the key features of the CIRT algorithm is its computational efficiency. Specifically, CIRT is adaptive to the learner's computational budget $A$. In order for CIRT to exhibit sublinear regret guarantees, it is imperative that an optimal mixing coefficient is retained at the end of every phase, or else, the tracking procedure fails to perform arm selections that mimic an optimal mixing coefficient. Furthermore, since the phases in the interval refinement step are adaptive to the data, we must certify that the stopping time in each phase is finite on average. These properties would collectively ensure a sublinear regret for the CIRT algorithm. We begin by establishing an upper bound on the misidentification error, i.e., the probability that an optimal solution is not contained in the retained interval at the end of the interval retention phase. Subsequently, we will analyze the average interval length of any phase $\ell\in[L]$ of the interval refinement step and show that it is guaranteed to be finite on average. Finally, leveraging the error guarantee and finiteness of the interval lengths, we will provide an upper bound on the regret due to the CIRT sampling rule.

\vspace{-.1 in}
\paragraph{Probability of Error.} 
We begin by providing an upper bound on the probability that at the end of the interval retention phase, CIRT zooms into an interval that does not contain an optimal mixing coefficient. We first provide a few notations. Let us define $\Sigma^{K-1}_{\ell+1}$ as the continuous counterpart of the discrete simplex $\Delta^{K-1}_{\ell+1}$ in any phase $\ell\in[L]$,~as
\begin{align}
\label{eq:any_continous_set}
\nonumber
    \Sigma^{K-1}_{\ell+1} \triangleq \bigg\{ \balpha \in [0,1]^K :  \alpha(i) \in \left[b^{(\ell)}(i) - \frac{2^{\ell-1}}{A^\ell}, b^{(\ell)}(i) + \frac{2^{\ell-1}}{A^\ell} \right] \;,\; \mathbf{1}^T\balpha = 1 \bigg\}\ . 
\end{align}
Furthermore, since $L$ depends on the prescribed accuracy $\epsilon$, let us denote $\Sigma^{K-1}_{\epsilon}\triangleq \Sigma^{K-1}_{L+1}$.

\begin{lemma}
\label{lemma:alpha_star_in}
Consider a $K$-arm Bernoulli bandit instance. For any PM, using any sampling rule paired with the stopping rule defined in \eqref{eq:stopping_rule} and the interval retention decision defined in~\eqref{eq:any_discrete_set}, the probability of retaining an optimal mixing coefficient at the end of the interval retention phase satisfies
\begin{align}
    \P_{\bnu}^{\pi} \Big( \balpha^{\star} \in \Sigma^{K-1}_\epsilon \Big) \geq 1-L \delta \ ,
\end{align}
\end{lemma}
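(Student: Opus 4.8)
The plan is to couple, for each refinement phase, a high-probability event on which the confidence sets capture the true arm distributions, with a deterministic ``retention'' argument valid on that event, and then to union-bound over the $L$ phases.

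For each phase $\ell\in[L]$ I would define the good event $G_\ell\triangleq\{\F_i\in\mcC_{\tau^{(\ell)}}^{\rm IR}(i)\ \forall i\in[K]\}$, that all arm CDFs are captured at the stopping time $\tau^{(\ell)}$, and first show $\P(G_\ell^c)\leq\delta$. Substituting the radius of~\eqref{eq:any_UCB_confidence_sets} into Lemma~\ref{lemma: concentration}, the quantity $y-512/\sqrt{\tau^{\rm IR}_t(i)}$ reduces to $16\sqrt{\e\log(2/\delta_K)}/\sqrt{\tau^{\rm IR}_t(i)}$ and the exponent collapses exactly to $\log(2/\delta_K)$, so each arm is missed with probability at most $\delta_K$. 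Since per-arm samples are independent, conditioning on the sample counts makes the $K$ capture events independent, giving $\P(G_\ell)\geq(1-\delta_K)^K$. Combining the defining identity $(1+\delta_K)^K=1+\delta$ with the elementary inequality $(1-\delta_K)^K+(1+\delta_K)^K\geq 2$ (only even-order binomial terms survive, and all are nonnegative) yields $(1-\delta_K)^K\geq 1-\delta$, hence $\P(G_\ell^c)\leq\delta$.

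Next I would run the deterministic retention argument on $G_\ell$. Since $\F_i\in\mcC_{\tau^{(\ell)}}^{\rm IR}(i)$ is feasible in both extremization problems, the true utility of every coefficient is sandwiched, ${\rm LCB}_{\tau^{(\ell)}}^{\rm IR}(\ba)\leq V(\ba,\F)\leq{\rm UCB}_{\tau^{(\ell)}}^{\rm IR}(\ba)$. The stopping rule~\eqref{eq:stopping_rule} fires only when the LCB of $\bb^{(\ell)}=\ba_{\tau^{(\ell)}}^{\rm IR}$ strictly exceeds the UCB of every competitor in $\Delta^{K-1}_\ell$; chaining these bounds gives $V(\bb^{(\ell)},\F)>V(\ba,\F)$ for all grid $\ba\neq\bb^{(\ell)}$, so $\bb^{(\ell)}$ is the \emph{exact} maximizer of $V(\cdot,\F)$ over $\Delta^{K-1}_\ell$. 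It then remains to show that the box $\Sigma^{K-1}_{\ell+1}$ of half-width $2^{\ell-1}/A^{\ell}$ centered at $\bb^{(\ell)}$ contains a global optimum whenever the incoming region did. Here I would invoke the Bernoulli reduction: a mixture of Bernoullis is again Bernoulli with mean $\phi(\ba)\triangleq\sum_i\alpha_i p_i$, so $V(\ba,\F)=h(\phi(\ba))$ with $\phi$ \emph{linear}. The problem collapses to steering the scalar $\phi(\ba)$ to the maximizer $p^\star$ of $h$ on the achievable mean interval, with optimal set the slice $\{\ba:\phi(\ba)=p^\star\}$. Using that $h$ is single-peaked over this interval (which holds for the distortion functions of interest) and that the grid spacing of $\Delta^{K-1}_\ell$ equals the retained half-width, optimality of $\bb^{(\ell)}$ forces $\phi(\bb^{(\ell)})$ within one grid-cell of $p^\star$; a coordinate displacement bounded by the half-width then reaches the slice while staying in the box, depositing an optimal coefficient in $\Sigma^{K-1}_{\ell+1}$.

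Finally, starting from $\balpha^\star\in\Delta^{K-1}$ (the phase-$1$ search region) and iterating the retention step over $\ell=1,\dots,L$, on $\bigcap_{\ell\in[L]}G_\ell$ an optimal coefficient survives into $\Sigma^{K-1}_{L+1}=\Sigma^{K-1}_\epsilon$; a union bound gives $\P(\bigcap_\ell G_\ell)\geq 1-L\delta$, which is the claim. I expect the retention step to be the main obstacle: converting ``$\bb^{(\ell)}$ is grid-optimal'' into ``its box contains a continuous optimum'' genuinely relies on the Bernoulli reduction together with a careful grid-to-continuum estimate (and single-peakedness of $h$) in the $\phi$-coordinate. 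A secondary delicacy is that Lemma~\ref{lemma: concentration} is a fixed-count bound whereas the confidence sets are read at the \emph{random} stopping time $\tau^{(\ell)}$; since the radius is not anytime-valid, the clean application needs conditioning on the realized count entering $\tau^{(\ell)}$, so that only that single count contributes to $G_\ell$.
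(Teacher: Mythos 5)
Your proposal is correct and follows essentially the same route as the paper: a per-phase good event on which the confidence sets capture the true CDFs (probability at least $1-\delta$ by the choice of $\delta_K$), the observation that on this event the stopping rule~\eqref{eq:stopping_rule} forces the retained coefficient to be the exact grid maximizer of $V(\cdot,\F)$, the Bernoulli reduction $V(\ba,\F)=h(\langle\ba,\bp\rangle)$ together with concavity (or monotonicity in the solitary case) to show the retained box still contains a continuous optimizer, and a union bound over the $L$ phases. The paper packages the grid-to-continuum retention step through an explicit ``refinement oracle'' and an induction (Lemmas~\ref{lemma: best_alpha_Bernoulli_solitary} and~\ref{lemma: best_alpha_Bernoulli_mixture}), but the two-coordinate displacement argument you sketch is exactly the mechanism used there, and the two delicacies you flag (single-peakedness of $h$ and reading a fixed-count concentration bound at the random stopping time) are present in, and not resolved any more explicitly by, the paper's own proof.
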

where we recall that $L = \left\lceil \log_{\frac{A}{2}} (1/\epsilon) \right\rceil$.

\begin{proof}
    See Appendix~\ref{Appendix:proof_of_lemma_alpha_star_in}.
\end{proof}

\paragraph{Proof sketch.} The proof follows from upper-bounding the probability that an optimal mixing coefficient is not contained in the refined continuous set $\Sigma_{\ell}^{K-1}$ at any phase $\ell\in [L]$, followed by a subsequent union bound over the phases. To obtain the phase-specific error bound, we leverage the fact that with probability at least $\delta$, the true arm CDFs are contained in the respective distribution confidence sets (i.e., the confidence sets are sufficiently accurate). As a result, when the lower confidence bound of the current empirical best mixing coefficient exceeds the upper confidence bound for all other candidate coefficients, which is the premise of the stopping rule in~\eqref{eq:stopping_rule}, with high probability, we have identified the discrete optimizer of the current phase.

\paragraph{Finiteness of the Phase Durations.} Next, we show that the length of any phase $\ell\in[L]$ is finite on average. A crucial intermediate step to this result is to show that the explicit exploration scheme, as defined in~\eqref{eq:any_sampling_rule}, ensures that each arm is explored at a sublinear rate almost surely. The immediate consequence of this step is a guarantee on the estimation accuracy of the arm CDFs. Alternatively, if arms are under-explored, it might result in a poor estimation accuracy, resulting in sub-optimal choices of interval retention. The next result, which has been significantly leveraged in the best arm identification literature~\cite{Garivier2016, mukherjee2022, mukherjee2023best}, establishes an almost sure lower bound on the frequency of arm selection. 

\newpage 
\begin{lemma}
\label{lemma:any_explicit_exploration}
    Due to explicit exploration of CIRT's sampling rule in~\eqref{eq:any_sampling_rule}, at every instant $t\in\N$, we almost surely have 
    \begin{align}
        \tau_t(i) \geq \left(\frac{t}{K} \right)^{\frac{1}{1+\gamma}}-1\ .
    \end{align}
\end{lemma}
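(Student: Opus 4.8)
The statement is equivalent to the lower bound $m_t \triangleq \min_{i\in[K]}\tau_t(i) \ge (t/K)^{1/(1+\xi)}-1$ on the least-sampled arm, where $\xi$ is the exploration exponent defining $E_t$ and the sampling rule~\eqref{eq:any_sampling_rule} (this is the $\gamma$ of the statement). I would first record that this is a \emph{pathwise} bound, and hence holds almost surely: it never invokes the rewards or the particular tracking choices, only the deterministic bookkeeping of the counts. Writing $g(t)\triangleq (t/K)^{1/(1+\xi)}$, the two structural facts I would extract from~\eqref{eq:any_sampling_rule} are: (i) $m_t$ is nondecreasing and increases by at most one per round; and (ii) whenever $E_t\neq\emptyset$, the global least-sampled arm belongs to $E_t$ and is exactly the arm selected. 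Fact (ii) holds because $E_t\neq\emptyset$ forces $m_t\le\lceil g(t)\rceil$, so the global argmin lies in $E_t$ and is also the argmin over $E_t$.

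The engine is a round-robin increment estimate: along any block of $K$ consecutive rounds in which $E$ is nonempty (``forced'' rounds), $m$ increases by at least one. Indeed, if the current minimum level is $v$, then at most $K$ arms sit at level $v$; since counts never decrease, each forced round lifts one level-$v$ arm to $v+1$ and no arm re-enters level $v$, so after at most $K$ forced rounds the level-$v$ set is exhausted and $m$ exceeds $v$. Hence over any $N$ consecutive forced rounds $m$ grows by at least $\lfloor N/K\rfloor\ge N/K-1$. I would pair this with the elementary threshold-growth bound $g'(s)=\frac{1}{(1+\xi)K}(s/K)^{-\xi/(1+\xi)}\le 1/K$ for $s\ge K$; that is, the forcing threshold rises by at most $1/K$ per round, precisely the rate at which forcing can replenish $m$.

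To assemble these, fix $t$; for $t\le K$ the claim is vacuous since its right-hand side is nonpositive. For $t>K$, let $r$ be the last round up to $t$ at which forcing is \emph{inactive} ($E_{r-1}=\emptyset$), or $r=0$ if every round is forced. If $r\ge1$, all rounds $r+1,\dots,t$ are forced, so by (ii) each pulls the global minimum arm and $m_t\ge m_{r-1}+\lfloor (t-r)/K\rfloor$; moreover $E_{r-1}=\emptyset$ gives $m_{r-1}\ge\lceil g(r-1)\rceil+1$, while $g(t)-g(r-1)\le (t-r+1)/K$. Substituting yields $m_t\ge g(t)-1/K\ge g(t)-1$. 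If $r=0$, every round is forced and $m_t\ge \lfloor t/K\rfloor\ge t/K-1\ge g(t)-1$, using $t/K\ge g(t)$ for $t\ge K$.

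The main obstacle is the amortization concealed in fact (ii): within a \emph{single} forced round the minimum need not increase (when several arms share the minimum level), so a naive round-by-round induction on $m_t\ge g(t)-1$ breaks exactly in this tie case. The resolution is the matching of rates established above — $m$ gains at least $1/K$ per forced round whereas $g$ rises by at most $1/K$ per round — so the deficit $g(t)-m_t$ cannot accumulate; the extra $+1$ from the ceiling in $E_{r-1}=\emptyset$ is what absorbs the floor loss $\lfloor N/K\rfloor$ and produces the stated constant $-1$. The only secondary issue is the regime $r-1<K$, where $g'\le1/K$ can fail; there forcing is active on essentially all of $[1,t]$ and the crude estimate $m_t\ge t/K-2$ already dominates $g(t)-1$, so those boundary rounds need only a direct check.
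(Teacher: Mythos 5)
Your argument is the standard forced-exploration amortization, and it is in substance the proof the paper intends: the paper's own ``proof'' is a one-line deferral to Lemma 17 of Garivier and Kaufmann (2016), whose argument rests on exactly the two ingredients you isolate --- the global minimum count gains at least $1/K$ per forced round, while the threshold $g(t)=(t/K)^{1/(1+\xi)}$ rises by at most $g'(s)\le \frac{1}{(1+\xi)K}\le \frac1K$ per round once $s\ge K$. You are also right that the exponent in the statement should be the exploration parameter $\xi$ rather than $\gamma$; that is a typo in the paper. The core steps all check out: the selected arm under the forced branch is a global argmin whenever $E_t\neq\emptyset$, any $K$ forced rounds exhaust the current minimum level, $E_{r-1}=\emptyset$ yields $m_{r-1}\ge \lceil g(r-1)\rceil+1$, and the substitution gives $m_t\ge g(t)-1/K$.

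The one place your justification fails is the closing remark about the regime $1\le r\le K$: the crude bound $m_t\ge t/K-2$ does \emph{not} dominate $g(t)-1$ there, since $x-1\ge x^{1/(1+\xi)}$ is false for $x=t/K$ close to $1$ (and the threshold $x$ at which it starts holding blows up as $\xi\to 0$). Fortunately that regime is vacuous, so no separate check is needed: for $1\le s<K$ the threshold is $\lceil (s/K)^{1/(1+\xi)}\rceil=1$ while only $s<K$ pulls have been made, so at least one arm still has count $0$ and hence $E_s\neq\emptyset$; likewise $E_0=[K]$. Thus the first $K$ rounds are always forced, and your ``last unforced round'' satisfies either $r=0$ (which you handle via $m_t\ge\lfloor t/K\rfloor\ge g(t)-1$ for $t\ge K$) or $r-1\ge K$ (where $g'\le 1/K$ is legitimately available on all of $[r-1,t]$). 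With that observation inserted in place of the final sentence, your proof is complete.
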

\begin{proof}
    The proof follows the same line of arguments as~\cite[Lemma 17]{Garivier2016}.
\end{proof}
Next, we show that for any phase $\ell\in[L]$, its length is finite on average. Note that the following result may be viewed as a sample complexity guarantee for estimation. Specifically, the lemma establishes that in order to achieve an estimation accuracy no smaller than $\epsilon$, the phased estimation routine of the CIRT algorithm uses a finite number of samples. Hence, for a sufficiently large horizon, the regret of the CIRT sampling routine is purely guided by the tracking procedure. While it is sufficient to show the finiteness of the phase lengths for the purpose of this paper, investigating an instance-dependent upper bound on the sample complexity is a promising direction for future research.

\begin{lemma}
\label{lemma:finite_stopping_time}
    For each phase $\ell\in[L]$, the CIRT algorithm comprising the sampling rule in~\eqref{eq:any_sampling_rule} and the stopping rule in~\eqref{eq:stopping_rule} satisfies
    \begin{align}
        \E_{\bnu}^{\rm IR}\Big[\tau^{(\ell)}\Big] < +\infty\ .
    \end{align}
\end{lemma}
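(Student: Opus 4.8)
The plan is to bound the expected stopping time through the tail-sum identity $\E_{\bnu}^{\rm IR}[\tau^{(\ell)}] = \sum_{t\geq 0}\P_{\bnu}^{\rm IR}(\tau^{(\ell)} > t)$ and to exhibit a deterministic time past which $\P_{\bnu}^{\rm IR}(\tau^{(\ell)} > t)$ decays fast enough to be summable. The central mechanism is a \emph{decoupling}: the \emph{widths} of the confidence intervals in~\eqref{eq: CIRT_UCB_LCB} shrink along a \emph{deterministic} schedule (via the explicit-exploration guarantee of Lemma~\ref{lemma:any_explicit_exploration}), while the \emph{location} of the empirical utilities is controlled by concentration against a \emph{fixed} tolerance dictated by the discrete suboptimality gap of the phase.

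First I would fix a phase $\ell$, let $\ba^\star_\ell \triangleq \argmax_{\ba\in\Delta^{K-1}_\ell} V(\ba,\F)$ be the discrete maximizer, and set $\Delta_\ell \triangleq V(\ba^\star_\ell,\F) - \max_{\ba\neq\ba^\star_\ell} V(\ba,\F) > 0$ (assuming, as holds for instances in general position, that the discrete optimizer of $\Delta^{K-1}_\ell$ is unique; the tie case is discussed below). Writing $\hat V_t(\ba)\triangleq U_h(\sum_i a(i)\F^{\rm IR}_{i,t})$ and $r_{i,t}$ for the confidence radius in~\eqref{eq:any_UCB_confidence_sets}, the sub-additivity of the $1$-Wasserstein distance under mixtures together with \holder~continuity (Definition~\ref{assumption:Holder}) yields the uniform width bound ${\rm UCB}_t^{\rm IR}(\ba) - {\rm LCB}_t^{\rm IR}(\ba) \leq 2\mcL(\max_i r_{i,t})^q =: w_t$ for every $\ba$. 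Since Lemma~\ref{lemma:any_explicit_exploration} forces $\tau_t^{\rm IR}(i)\geq (t/K)^{1/(1+\xi)}-1$ almost surely, the radii $r_{i,t}$ --- and hence $w_t$ --- vanish deterministically, so there is a finite deterministic $t_\ell^{(1)}$ with $w_t < 3\Delta_\ell/8$ for all $t\geq t_\ell^{(1)}$.

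Next I would introduce the estimation event $G_t \triangleq \{\,|\hat V_t(\ba) - V(\ba,\F)| \leq \Delta_\ell/8 \;\;\forall\, \ba\in\Delta^{K-1}_\ell\,\}$ and verify that $G_t$ together with $t\geq t_\ell^{(1)}$ forces the stopping rule~\eqref{eq:stopping_rule}. Indeed, under $G_t$ the empirical maximizer $\ba_t^{\rm IR}$ in~\eqref{eq:any_alpha} coincides with $\ba^\star_\ell$, the empirical gap obeys $\hat V_t(\ba^\star_\ell) - \hat V_t(\ba)\geq 3\Delta_\ell/4$ for all $\ba\neq\ba^\star_\ell$, and since ${\rm LCB}_t^{\rm IR}(\ba^\star_\ell)\geq \hat V_t(\ba^\star_\ell) - w_t$ and ${\rm UCB}_t^{\rm IR}(\ba)\leq \hat V_t(\ba)+w_t$, the separation $3\Delta_\ell/4 > 2w_t$ gives ${\rm LCB}_t^{\rm IR}(\ba_t^{\rm IR}) > \max_{\ba\neq\ba_t^{\rm IR}}{\rm UCB}_t^{\rm IR}(\ba)$. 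Hence $\{\tau^{(\ell)}>t\}\subseteq G_t^c$ for every $t\geq t_\ell^{(1)}$. Finally, \holder~continuity and mixture sub-additivity of $\norm{\cdot}_{\rm W}$ give $G_t^c \subseteq \{\exists i : \norm{\F^{\rm IR}_{i,t}-\F_i}_{\rm W} > c_\ell\}$ with the \emph{fixed} threshold $c_\ell\triangleq (\Delta_\ell/8\mcL)^{1/q}$, and applying Lemma~\ref{lemma: concentration} with this constant $y=c_\ell$ and the sublinear lower bound on $\tau_t^{\rm IR}(i)$ produces the stretched-exponential tail $\P_{\bnu}^{\rm IR}(G_t^c)\leq 2K\exp(-\frac{c_\ell^2}{1024\e}((t/K)^{1/(1+\xi)}-1))$ for all $t$ past a second deterministic time $t_\ell^{(2)}$. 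Summing beyond $t_\ell^\star\triangleq\max\{t_\ell^{(1)},t_\ell^{(2)}\}$ yields a convergent series, so $\E_{\bnu}^{\rm IR}[\tau^{(\ell)}]\leq t_\ell^\star + \sum_{t\geq t_\ell^\star}\P_{\bnu}^{\rm IR}(G_t^c) < \infty$.

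The main obstacle --- and the reason a naive approach fails --- is that the confidence sets in~\eqref{eq:any_UCB_confidence_sets} are built at a \emph{fixed} level $\delta_K$, so their radii shrink only as $\tau_t^{\rm IR}(i)^{-1/2}$ while the level never tightens; bounding $\{\tau^{(\ell)}>t\}$ by the \emph{validity} failure $\{\F_i\notin\mcC_t^{\rm IR}(i)\}$ would give only a constant $O(\delta_K)$ per-step probability and a divergent sum. The resolution is to compare against the \emph{phase-dependent but time-independent} tolerance $c_\ell$ set by $\Delta_\ell$: concentration against a constant threshold decays stretched-exponentially even though $\tau_t^{\rm IR}(i)$ grows only sublinearly, which is precisely what Lemma~\ref{lemma:any_explicit_exploration} guarantees. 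The one genuinely delicate point is the standing assumption $\Delta_\ell>0$: if the discrete optimizer in a phase is not unique, the overlapping confidence intervals of the tied coefficients prevent~\eqref{eq:stopping_rule} from ever triggering, so the argument requires (and I would state explicitly) a non-degeneracy condition ruling out exact ties in $\Delta^{K-1}_\ell$.
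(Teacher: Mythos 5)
Your proof is correct, but it takes a genuinely different route from the paper's. The paper defines $M(A^{-\ell}) \triangleq \inf_{\mu}\delta_{13}(\mu, A^{-\ell})$, the infimum of the phase-$\ell$ discrete suboptimality gap over \emph{all} admissible instances, and exploits the fact that the empirical CDFs themselves constitute such an instance: the empirical gap $\hat V_s(\ba^{\rm IR}_s)-\hat V_s(\bb_s)$ is therefore \emph{deterministically} bounded below by $M(A^{-\ell})$. Combined with the deterministic shrinkage of the radii $r_s(\cdot)$ guaranteed by Lemma~\ref{lemma:any_explicit_exploration}, the stopping rule in~\eqref{eq:stopping_rule} fires surely once $r_s(\ba^{\rm IR}_s)+r_s(\bb_s)<M(A^{-\ell})$, so the tail sum beyond the a.s.-finite time $T(\ell,A)$ is exactly zero and no concentration inequality is ever invoked. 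You instead anchor to the gap $\Delta_\ell$ of the \emph{true} instance, run a concentration argument (Lemma~\ref{lemma: concentration}) at the fixed tolerance $c_\ell=(\Delta_\ell/8\mcL)^{1/q}$, and sum a stretched-exponential tail. The trade-off: the paper's argument is shorter but rests on the assumption $\inf_\mu\delta_{13}(\mu,A^{-\ell})>0$ — an infimum over the whole instance class that generically vanishes — whereas your argument needs only $\Delta_\ell>0$ for the actual instance $\bnu$, which is the weaker non-degeneracy condition you correctly flag, and it additionally delivers a quantitative tail bound. Two minor points: your uniform width bound should carry a factor $\sum_i a(i)^q\leq K^{1-q}$ rather than $1$ (immaterial to finiteness), and applying Lemma~\ref{lemma: concentration} at a random count $\tau^{\rm IR}_t(i)$ strictly requires a union bound over the at most $t$ possible values, which costs a polynomial factor that the stretched exponential absorbs — a gap the paper's own concentration-based proofs share.
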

\begin{proof}
    See Appendix~\ref{Appendix:proof_of_finite_stopping_time}.
\end{proof}

\paragraph{Proof sketch.} The core idea in establishing a finite phase length on average is the observation that the distribution confidence sets defined in~\eqref{eq:any_UCB_confidence_sets} are monotonically decreasing functions of time, which follows from Lemma~\ref{lemma:any_explicit_exploration}. Consequently, there exists a finite instant at which point these sets have shrunk enough to become smaller than the sub-optimality gap in utilities between the discrete optimal coefficient and any other mixing coefficient. Leveraging the high-probability event that the true distributions are contained in the distribution confidence sets, the claim follows by noticing that the CIRT algorithm stops the current phase when the confidence sets between the empirical best coefficient cease to overlap with all other coefficients. Specifically, when the diameter of distribution confidence sets falls below the sub-optimality gap, the CIRT algorithm stops the current phase since any overlap between the distribution confidence sets is no longer feasible. We show that the time instant at which this happens is finite on average. 

\subsection{Regret Analysis}

In this section, we provide an upper bound on the regret incurred by the CIRT algorithm. Unlike Section~\ref{sec:analysis}, which quantifies the {\em average} regrets of the PM-ETC-M and the PM-UCB-M algorithms, we report a {\em probabilistic} bound on the regret of the CIRT algorithm. Specifically, with slight abuse of notation, we redefine the regret for the CIRT algorithm as
\begin{align}
\label{eq:any_regret}
    \mathfrak{R}^{\rm IR}_{\bnu}(T) \triangleq U_h\left(\sum_{i \in [K]} \alpha^{\star}(i) \F_i \right) - U_h\left(\sum_{i \in [K]} \frac{\tau_T^{\rm IR}(i)}{T} \F_i \right)
\end{align}
Then, we can decompose the regret into \emph{estimation regret} and \emph{sampling error}.
\begin{align}
    \mathfrak{R}^{\rm IR}_{\bnu}(T) &= \underbrace{U_h\left(\sum_{i \in [K]} \alpha^{\star}(i) \F_i \right) - U_h\left(\sum_{i \in [K]} a^{\rm IR}_T(i) \F_i \right)}_{\triangleq\;\delta_\tau(T)\text{ (estimation regret)}}  + \underbrace{U_h\left(\sum_{i \in [K]} a^{\rm IR}_T(i) \F_i \right)  - U_h\left(\sum_{i \in [K]} \frac{\tau_T^{\rm IR}(i)}{T}\F_i \right)}_{\triangleq\;H_2(T)\text{ (sampling error)}} \ .
\end{align}
The only difference between the regrets defined in~\eqref{eq:regret} and~\eqref{eq:any_regret} is the absence of the expectation operator with respect to the policy. The regret defined in~\eqref{eq:any_regret} is stochastic, and in Theorem~\ref{theorem:any_regret}, we provide a high probability upper bound on it due to the CIRT algorithm. Before stating the theorem, we remark that the result holds only for Bernoulli bandits. This is an inherent consequence of adaptive discretization, and the fact the probability of error bound in Lemma~\ref{lemma:alpha_star_in} has been shown to hold for Bernoulli bandits. While the result may seem restrictive, we would like to remark that even a {\em refinement oracle}, that knows the underlying arms' distributions, and forms interval refinement decisions based on this information, is not guaranteed to recover the best discrete mixing coefficient at the end of the refinement phase. Such a refinement oracle has been shown to retain the optimal mixing coefficient for Bernoulli bandits in Lemma~\ref{lemma: best_alpha_Bernoulli_mixture}. While extending this result to other distributions is interesting, we conjecture that it may not hold for distributions that yield non-concave PMs. Next, we state an upper bound on the regret of the CIRT algorithm for Bernoulli bandits.
\begin{theorem}
\label{theorem:any_regret}
    Let $\bnu$ denote a $K$-arm Bernoulli bandit instance. For any $\epsilon \in \R^+$ and distortion function $h$ with \holder~exponent $q$, there exist a stochastic time $\tau^{\epsilon}$
    such that with a probability at least $1-\delta \lceil \log_{\frac{A}{2}} (1/\epsilon) \rceil $, we have
    \begin{align}
        \mathfrak{R}^{\rm IR}_{\bnu}(T) \leq \mcL W^q K^{q+1}  \left(\epsilon^q + T^{-q}\right) \ , \qquad \forall T > \tau^\epsilon\ .
    \end{align}
\end{theorem}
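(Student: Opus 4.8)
The plan is to condition on the high-probability event that the interval-refinement step retains an optimal mixing coefficient, and then control the two pieces of the decomposition $\mathfrak{R}^{\rm IR}_{\bnu}(T) = \delta_\tau(T) + H_2(T)$ separately, in each case converting coefficient-space distances into utility gaps through \holder~continuity (Definition~\ref{assumption:Holder}) together with the Wasserstein-to-$\ell_1$ bound supplied by $W\le\sqrt{2\pi}$ (Theorem~\ref{theorem:W}). First I would invoke Lemma~\ref{lemma:alpha_star_in} to fix the event $\mathcal{G}\triangleq\{\balpha^\star\in\Sigma^{K-1}_\epsilon\}$, which has $\P^{\rm IR}_{\bnu}(\mathcal{G})\ge 1-L\delta$ with $L=\lceil\log_{A/2}(1/\epsilon)\rceil$; the asserted bound is then established on $\mathcal{G}$. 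I would also use Lemma~\ref{lemma:finite_stopping_time}, together with $L<\infty$, to certify that the random instant $\tau^\epsilon$ at which all $L$ refinement phases terminate and the tracking phase begins is finite almost surely, so that for every $T>\tau^\epsilon$ the committed coefficient is the fixed $\bb^{(L+1)}$ and hence $a^{\rm IR}_T=\bb^{(L+1)}$.

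For the estimation regret $\delta_\tau(T)=V(\balpha^\star,\F)-V(\bb^{(L+1)},\F)$, the key observation is that on $\mathcal{G}$ the final grid $\Delta^{K-1}_\epsilon$ of~\eqref{eq:any_discrete_set_tracking} is a per-coordinate $\epsilon$-net of the retained region $\Sigma^{K-1}_\epsilon$ that contains $\balpha^\star$. Thus there is a grid point $\tilde\balpha\in\Delta^{K-1}_\epsilon$ with $\|\balpha^\star-\tilde\balpha\|_1\le K\epsilon$, and since $\bb^{(L+1)}$ maximizes $V(\cdot,\F)$ over $\Delta^{K-1}_\epsilon$ we have $V(\bb^{(L+1)},\F)\ge V(\tilde\balpha,\F)$ (if the maximization is instead over empirical CDFs, a uniform-concentration step using the CDF accuracy guaranteed by Lemma~\ref{lemma:any_explicit_exploration} for $T>\tau^\epsilon$ transfers near-optimality to the true utilities). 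Applying \holder~continuity to $\sum_i\alpha^\star(i)\F_i$ and $\sum_i\tilde\alpha(i)\F_i$ and then the definition of $W$ in~\eqref{eq:W} bounds $\delta_\tau(T)$ by a term of order $\mcL W^q(K\epsilon)^q$; distributing the \holder~bound across the $K$ coordinates, exactly as in the proofs of Theorems~\ref{theorem: ETC upper bound} and~\ref{theorem:UCB upper bound}, yields the stated $\mcL W^q K^{q+1}\epsilon^q$ contribution.

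For the sampling error $H_2(T)=V(\bb^{(L+1)},\F)-V(\tfrac1T\btau_T^{\rm IR},\F)$, I would again apply \holder~continuity and the $W$-bound to reduce matters to controlling the $\ell_1$ tracking error $\|\bb^{(L+1)}-\tfrac1T\btau_T^{\rm IR}\|_1$. This error is driven down by the under-sampling rule~\eqref{eq:any_sampling_rule}, which keeps each surplus $t\,b^{(L+1)}(i)-\tau_t^{\rm IR}(i)$ uniformly bounded and so forces the empirical fractions of the positive-mass arms to track $\bb^{(L+1)}$ at rate $O(K/T)$, paralleling the under-sampling analysis behind Lemma~\ref{lemma:undersampling}. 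The delicate point is the sublinear explicit-exploration term guaranteed by Lemma~\ref{lemma:any_explicit_exploration}, which forces even zero-mass arms to accrue samples and thereby perturbs the allocation; this overhead must be shown to be of lower order than the stated $T^{-q}$ bound for $T$ past a (possibly larger) instance-dependent instant absorbed into $\tau^\epsilon$. Combining the two pieces then gives $\mathfrak{R}^{\rm IR}_{\bnu}(T)\le \mcL W^q K^{q+1}(\epsilon^q+T^{-q})$ on $\mathcal{G}$. I expect this sampling-error step to be the main obstacle: reconciling the sublinear anytime exploration schedule with the desired polynomial tracking rate, uniformly beyond the \emph{random} phase-completion time $\tau^\epsilon$, is precisely where the absence of horizon knowledge makes the argument genuinely harder than in the horizon-dependent counterparts.
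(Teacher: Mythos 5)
Your proposal matches the paper's proof in all essentials: the same decomposition into the estimation regret $\delta_\tau(T)$ and the sampling error $H_2(T)$, conditioning on the retention event of Lemma~\ref{lemma:alpha_star_in}, finiteness of $\tau^\epsilon$ via Lemma~\ref{lemma:finite_stopping_time} (combined with the finite instant $T_\xi$ after which under-explored implies under-sampled), and an under-sampling tracking argument paralleling Lemma~\ref{lemma:undersampling} giving the $O(K/T)$ coordinate-wise tracking error. The only divergence is in the estimation-regret step, where the paper argues purely geometrically --- on the good event both $\balpha^\star$ and the committed coefficient $\ba^{\rm IR}_T$ lie in the final retained box of per-coordinate width $(2/A)^L<\epsilon$, so $\norm{\balpha^\star-\ba^{\rm IR}_T}_1\le K\epsilon$ directly --- which avoids your grid-point/near-optimality detour and the extra empirical-to-true concentration step that route would require.
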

\begin{proof}
    See Appendix~\ref{sec:proof_any_regret}. 
\end{proof}
\par Before providing the proof sketch of Theorem~\ref{theorem:any_regret}, we provide some insights on the result. Note that in Theorem~\ref{theorem:any_regret}, the arm selection regret scales as $O(T^{-q})$, which matches the regret bound of the PM-ETC-M algorithm in Theorem~\ref{theorem: ETC upper bound}, and improves upon the regret bound of the PM-UCB-M algorithm in Theorem~\ref{theorem:UCB upper bound} by an order of $O(T^{-\frac{q}{2}})$. This observation stems from the fact that the CIRT algorithm inherits the better features of the PM-ETC-M and the PM-UCB-M algorithms. On the one hand, as opposed to the PM-ETC-M algorithm, which assumes knowing instance-dependent parameters through the length of its exploration phase, CIRT's interval refinement phase is purely data-adaptive, requiring no instance-dependence. The overarching idea here is to use PM-UCB-M's adaptive confidence set construction to guide the estimation routine to form a reliable estimate of the optimal mixing coefficient. On the other hand, from Theorem~\ref{theorem:UCB upper bound}, we observe that PM-UCB-M's rationale of forming estimates in each round and subsequently tracking these estimates may hurt the arm selection regret, since, while more refined estimates of the mixing coefficients should be helpful to the algorithm, it may not be required after a certain point where we have reached the desired estimation accuracy. The CIRT algorithm identifies this drawback and commits to an estimate that it forms from the interval refinement phase. 

We note that the first term in the regret, i.e., the regret contribution from discretizing the mixing‐coefficient grid, is driven entirely by the user’s choice of accuracy level, $\epsilon$. A tighter grid (smaller $\epsilon$) directly reduces this error term but also weakens the regret guarantee. Because the user specifies $\epsilon$ upfront, CIRT lets practitioners calibrate exactly how much estimation regret they are willing to accept in exchange for faster runtime. In contrast, horizon‐aware schemes can tie their discretization levels to the time horizon $T$, often forcing finer discretization (and higher cost) than strictly needed. When limited compute resources or rapid, anytime deployment are priorities—and a modest, user‐controlled estimation regret is acceptable — CIRT presents a clear practical advantage. Additionally, note that the statement in Theorem~\ref{theorem:any_regret} holds for $T>\tau^\epsilon$, where $\tau^\epsilon$ is a stochastic quantity which is finite on average. It can be readily verified using Markov's inequality that the probabilistic guarantee in Theorem~\ref{theorem:any_regret} holds in the asymptotic limit of the horizon $T$. \vspace{-.1 in}

\paragraph{Proof sketch of Theorem~\ref{theorem:any_regret}.} We first decompose the regret into an estimation regret component and an arm selection regret component. A bound on the arm selection regret follows the same arguments as Theorems~\ref{theorem:PM-ETC-M} and~\ref{theorem:UCB upper bound}. The fundamental analytical crux appears in bounding the second term, i.e., quantifying the gap between the utility evaluated at the discrete optimal solution and CIRT's arm selection fractions. A sublinear regret crucially hinges on the premise that the optimal solution is contained within the interval obtained after the last phase at the end of the interval refinement step, which holds due to Lemma~\ref{lemma:alpha_star_in}. The regret bound in Theorem~\ref{theorem:any_regret} is conditioned on the event that this indeed holds, which is the source of the probabilistic guarantee on CIRT's regret bound. Subsequently, the analysis proceeds by quantifying the sample complexity required for the desired estimation accuracy $\epsilon$, followed by upper bounding the regret in the tracking phase incurred by the under-sampling routine. The finiteness of the average sample complexity for a sufficiently accurate estimation is an immediate consequence of Lemma~\ref{lemma:finite_stopping_time}. Finally, the analysis concludes by upper-bounding the sampling estimation error in the tracking phase. While the analysis is analogous to upper-bounding the sampling estimation error of the PM-UCB-M algorithm in Theorem~\ref{theorem:UCB upper bound}, there are subtle differences due to the order of explicit exploration being sublinear. For further details, we refer to Appendix~\ref{sec:proof_any_regret}.

\section{Empirical Evaluations}
\label{sec: experiments}

In this section, we provide empirical evaluations of the horizon-dependent algorithms and the anytime algorithm, namely PM-ETC-M, PM-UCB-M, CE-UCB-M, and CIRT algorithms. First, in Section~\ref{sec:emp_horizon_dependent}, we provide empirical evaluations for the horizon-dependent algorithms. We begin by empirically validating our claim (in Theorem~\ref{lemma:mixture_lemma_exponential}) that for strictly concave utilities, the PM may be maximized by an optimal mixture. Subsequently, we empirically investigate two distinct aspects: (1) the scaling of the (average) regret versus horizon, and (2) the impact of the discretization level $\varepsilon$ on the regret. Next, we digress to the anytime algorithm in Section~\ref{section:emp_eval_anytime}, where we show (1) the scaling of the (average) regret versus horizon after the tracking step, and (2) showcase the computational advantage of the anytime algorithm against the fixed-discretization counterpart.

\paragraph{Utilities and Bandit Models.} We empirically evaluate the PM-centric algorithms for different PMs, specifically, for the Gini deviation (GD) with distortion function $h(u)=u(1-u)$, Wang's right-tail deviation (WRTD) with distortion function $h(u)=\sqrt{u}-u$, and mean (expected value) with distortion function $h(u)=u$. For the empirical evaluations in Section \ref{sec:emp_horizon_dependent} (horizon-dependent algorithms) and in Section~\ref{section:emp_eval_anytime} (anytime algorithm), we focus on Bernoulli bandits with mean vectors $\bp\in[0,1]^K$, in which case, the PM has a closed-form expression for the case of mixtures, given by $V(\balpha,\F)\;=\; \langle \balpha , \bp\rangle (1-\langle \balpha, \bp\rangle )$. Additionally, for horizon-dependent algorithms, we consider Gaussian bandit models, for which we do not have a closed form.

\subsection{Horizon-dependent Algorithms}
\label{sec:emp_horizon_dependent}

We first show that when the optimal solution is a mixture, the resulting PM is strictly higher than those yielded by any single arm. Subsequently, we empirically investigate the regrets that PM-UCB-M and PM-ETC-M algorithms incur for the Bernoulli bandit for different horizons. In the next step, we move on to Gaussian bandits and investigate the regrets of CE-UCB-M and PM-ETC-M algorithms for different horizons. Finally, we perform an ablation study to assess the impact of the discretization level $\varepsilon$ on regret.

\paragraph{Mixture Optimality.} We investigate the observations established in Theorems~\ref{lemma:convex_solitary} and~\ref{lemma:mixture_lemma_exponential} for the optimality of solitary policies for convex distortion functions and the optimality of mixture policies for strictly concave ones. For strictly concave cases, we consider GD and WRTD PMs, and for the convex case, we consider the mean PM. To empirically showcase the optimality of mixtures, for a $3$-armed bandit instance, we evaluate the PM achieved by PM-UCB-M against the utility achieved by solitary arm baselines for each arm indexed by $\{0,1,2\}$. Each experiment is averaged over $100$ independent trials. In Figures~\ref{fig:vs_arms_gini} and~\ref{fig:vs_arms_wang}, we observe that after a sufficiently large horizon, for GD and WRTD, the PM achieved by PM-UCB-M is strictly higher than those of the solitary arm baselines, which corroborates the finding in Theorem~\ref{lemma:mixture_lemma_exponential}. Furthermore, for the mean value, in Figure~\ref{fig:vs_arms_mean} we observe that the utility is maximized by arm $1$, and PM-UCB-M almost yields a lower PM, which aligns with the property foreseen by Theorem~\ref{lemma:convex_solitary}. Furthermore, these observations establish the important property that the PM-UCB-M algorithm automatically tracks the optimal mixture or the optimal solitary arm, depending on the underlying setting. 

\begin{figure}[h]
  \centering
  \begin{subfigure}[b]{0.32\linewidth}
    \centering
    \includegraphics[width=\linewidth]{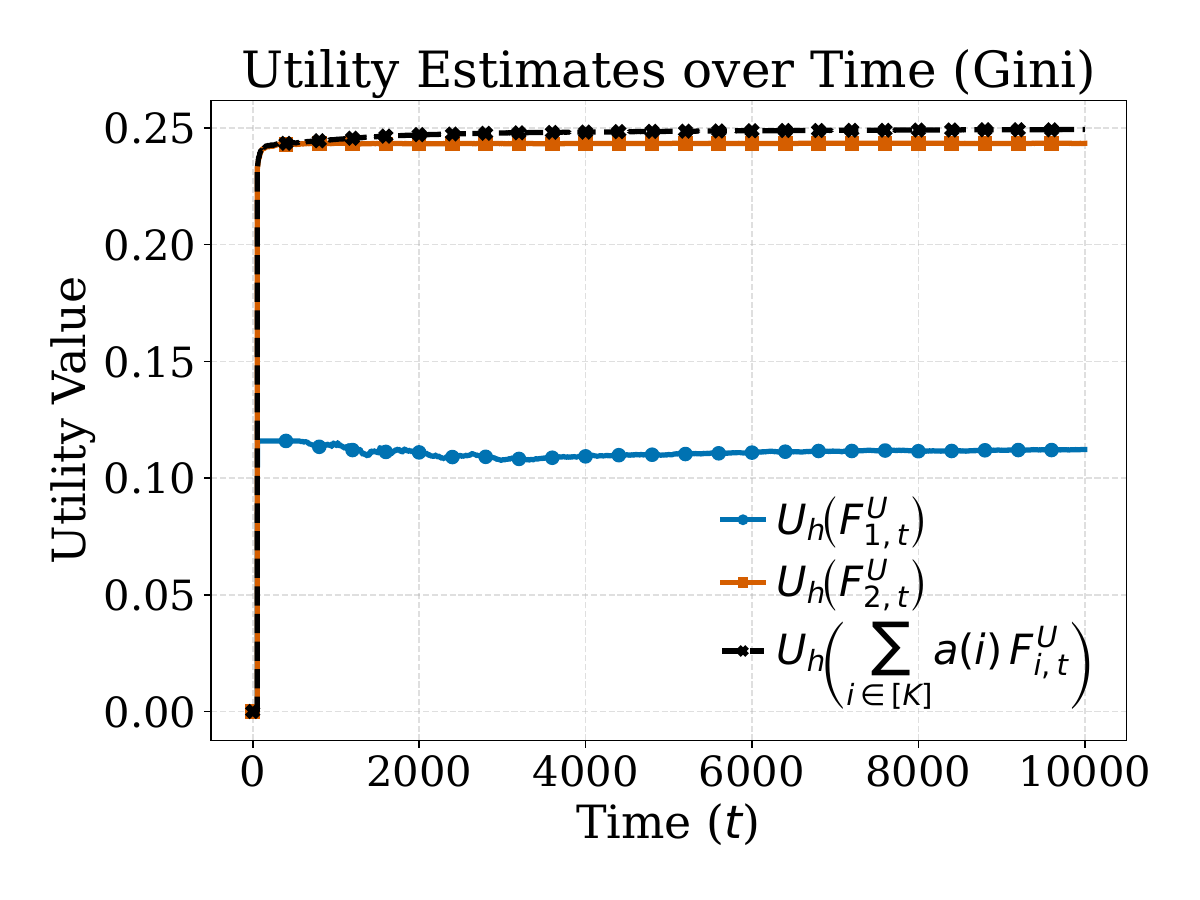}
    \caption{GD.}
    \label{fig:vs_arms_gini}
  \end{subfigure}\hfill
  \begin{subfigure}[b]{0.32\linewidth}
    \centering
    \includegraphics[width=\linewidth]{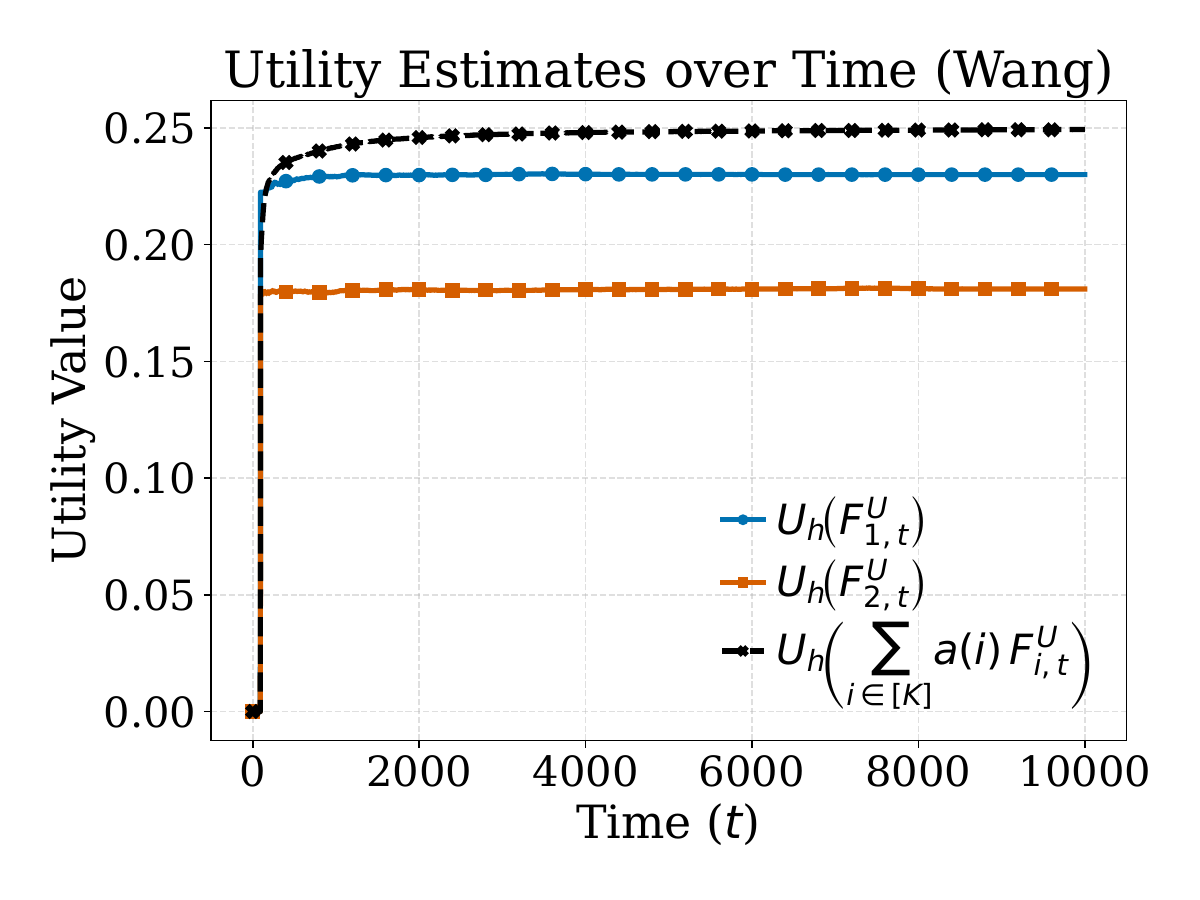}
    \caption{WRTD.}
    \label{fig:vs_arms_wang}
  \end{subfigure}\hfill
  \begin{subfigure}[b]{0.32\linewidth}
    \centering
    \includegraphics[width=\linewidth]{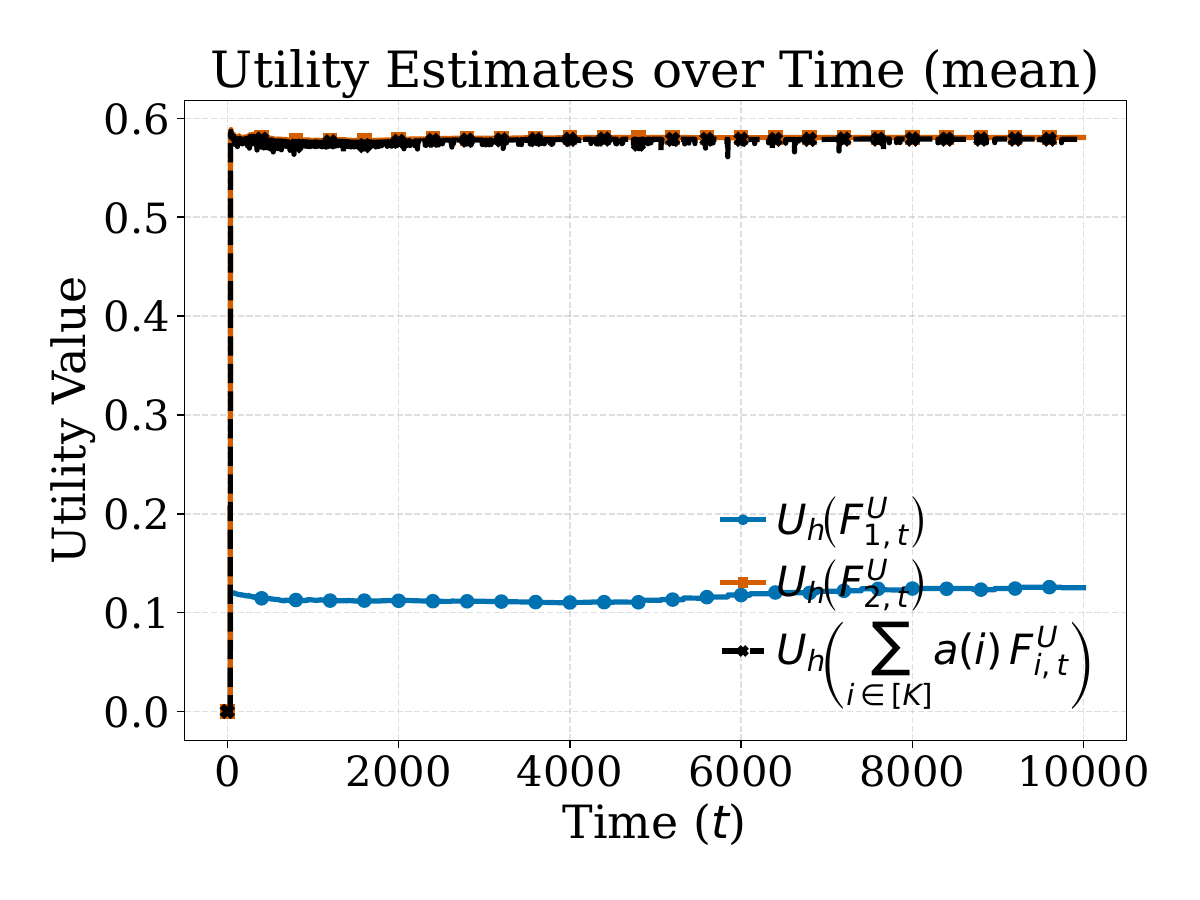}
    \caption{mean}
    \label{fig:vs_arms_mean}
  \end{subfigure}
  \caption{Empirical utility values for GD, WRTD, and mean for PM-UCB-M algorithm.}
  \label{fig:vs_arms_all}
\end{figure}

\paragraph{Regret versus Horizon (Bernoulli Bandits).}
Next, we evaluate the regret incurred by the PM-ETC-M and PM-UCB-M algorithms for various horizon values $T$. For performance comparison, we adopt the \emph{uniform sampling} as the baseline. The uniform sampling strategy allocates the same allocation resource for each arm, i.e., each arm is sampled $\big\lfloor \frac{T}{K}\big\rfloor$ times, with the remaining instants uniformly randomly assigned to arms. 

For the PM-ETC-M algorithm, note that $T>N(\varepsilon)$ is a necessary condition. However, for the utilities and bandit instances in consideration, $N(\varepsilon)$ may potentially be of the order~$O(T)$, and hence, providing experiments with a larger horizon is computationally prohibitive. Instead, we fix the horizon $T$ within an implementable range. If $N(\varepsilon) > \frac{T}{2}$, we set $N(\varepsilon) = \frac{1}{2}K\varepsilon T$. Furthermore, we choose $\varepsilon$ according to values specified in Section \ref{sec:analysis}. All the experiments are averaged over $100$ independent trials.
As seen in Figures~\ref{fig:gini_2_uni}, ~\ref{fig:wang_2_uni}, the uniform sampling strategy incurs a large regret in all our experiments. 
We observe that the regret of PM-ETC-M has a larger variance compared to PM-UCB-M. This is expected, as it commits to one mixing coefficient while PM-UCB-M continuously uses the information acquired from the rewards to choose a mixing coefficient. Therefore, if PM-ETC-M commits to an optimal mixing coefficient in one run, the resulting regret will be low. However, if it commits a suboptimal mixing coefficient, the regret will be higher. As PM-UCB-M does not commit to a mixing coefficient and instead continuously uses the new information to have a better estimate for the mixing coefficient, the resulting regret varies less. 

\begin{figure}[h]
  \centering
  \begin{subfigure}[b]{0.45\linewidth}
    \centering
    \includegraphics[width=\linewidth]{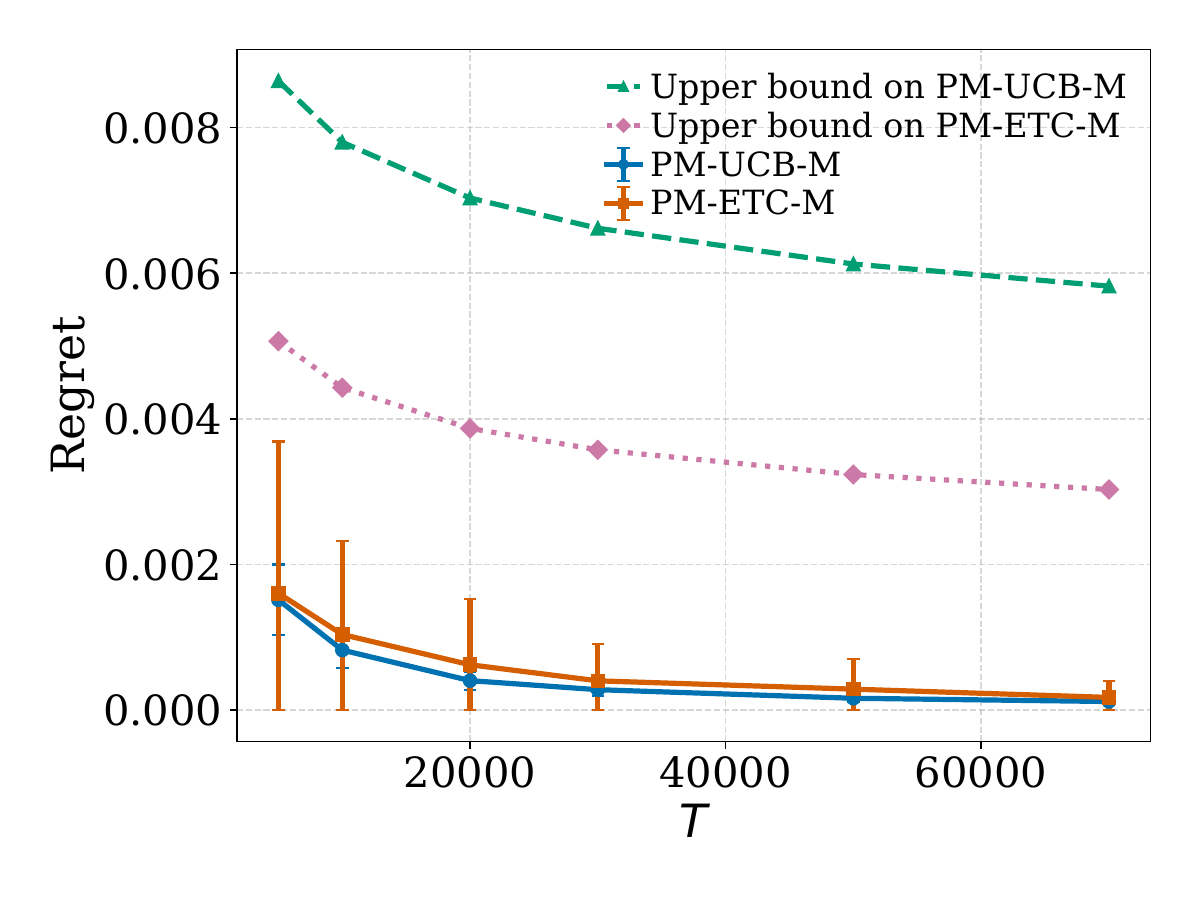}
    \caption{Regret of PM-ETC-M vs PM-UCB-M algorithms for varying horizons for GD.}
    \label{fig:gini_2_dev}
  \end{subfigure}\hfill
  \begin{subfigure}[b]{0.45\linewidth}
    \centering
    \includegraphics[width=\linewidth]{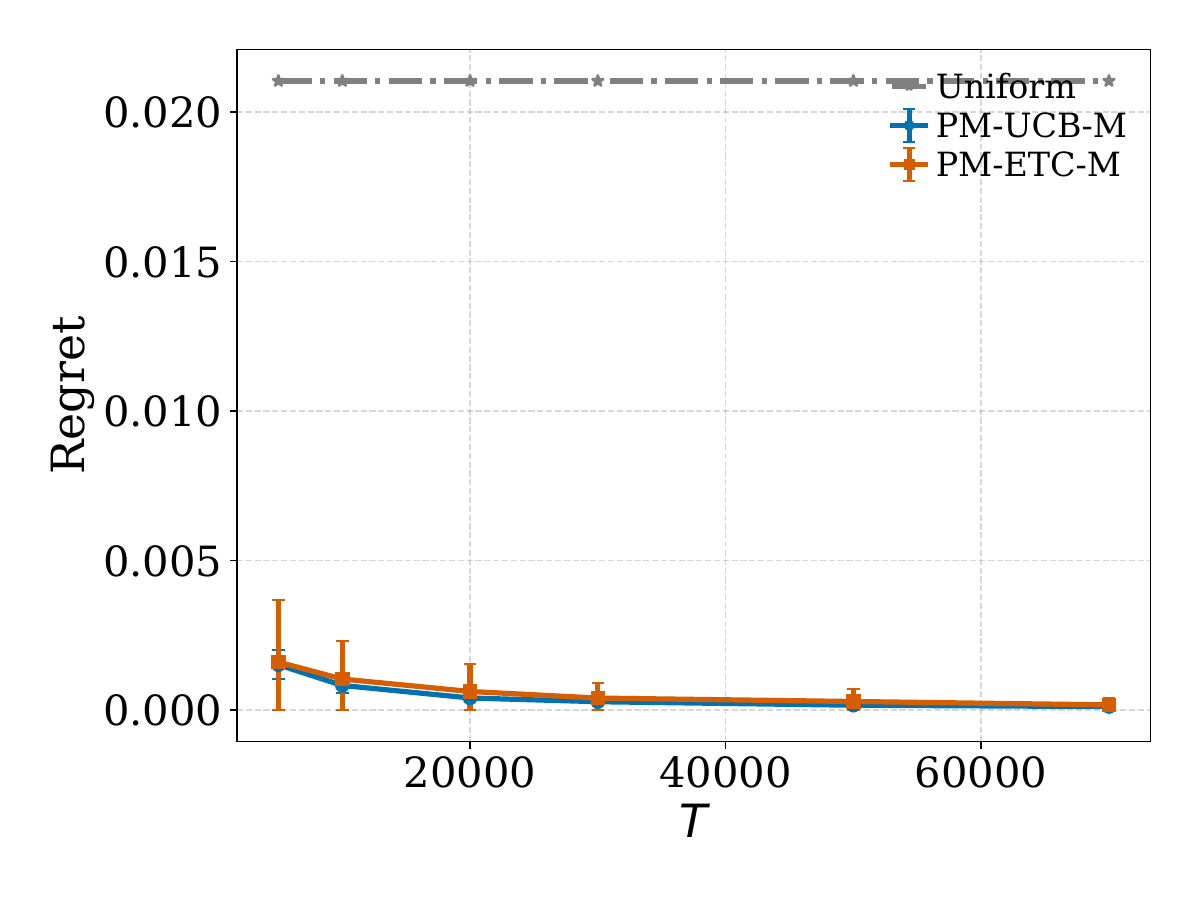}
    \caption{Horizon-dependent algorithms compared to Uniform sampling for GD}
    \label{fig:gini_2_uni}
  \end{subfigure}

  \vspace{1em}

  \begin{subfigure}[b]{0.45\linewidth}
    \centering
    \includegraphics[width=\linewidth]{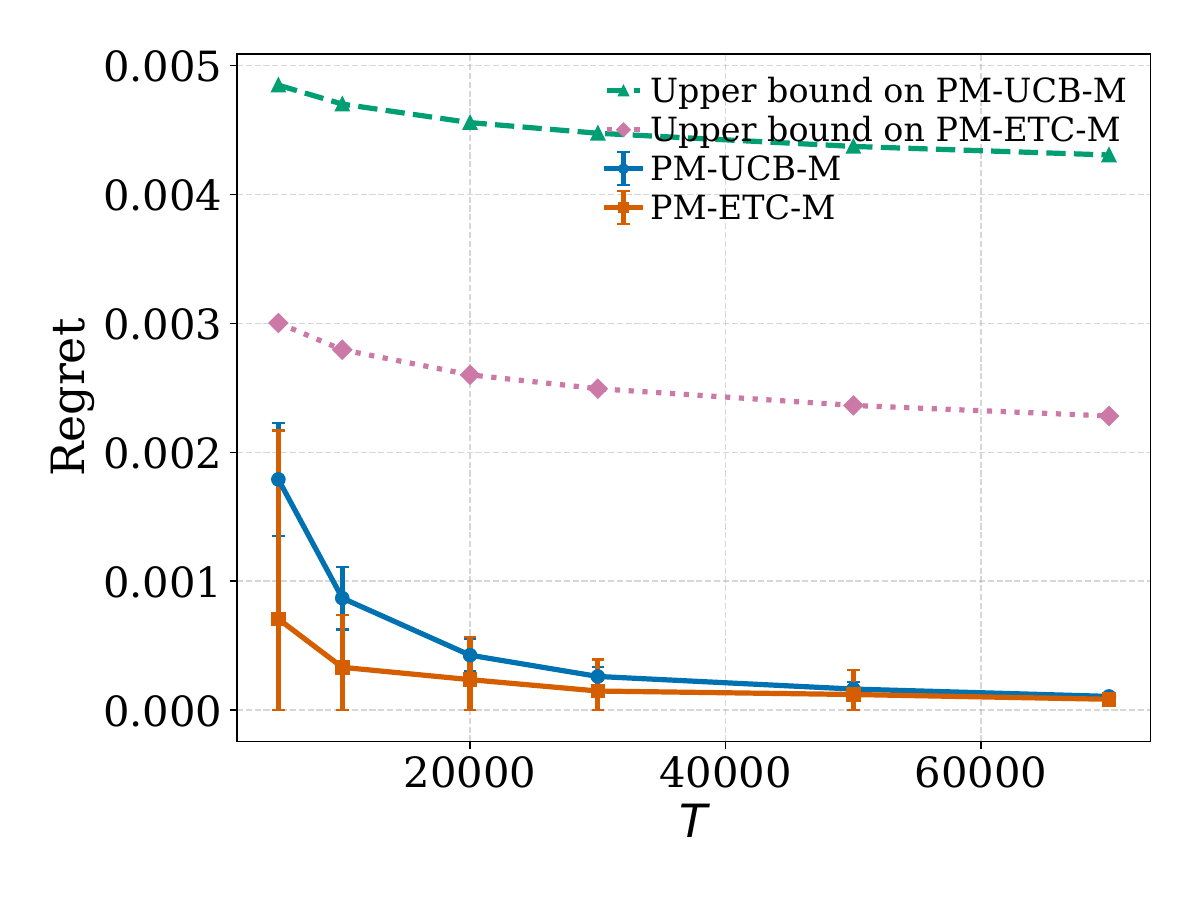}
    \caption{Regret of PM-ETC-M vs PM-UCB-M algorithms for varying horizons for WRTD.}
    \label{fig:wang_2_dev}
  \end{subfigure}\hfill
  \begin{subfigure}[b]{0.45\linewidth}
    \centering
    \includegraphics[width=\linewidth]{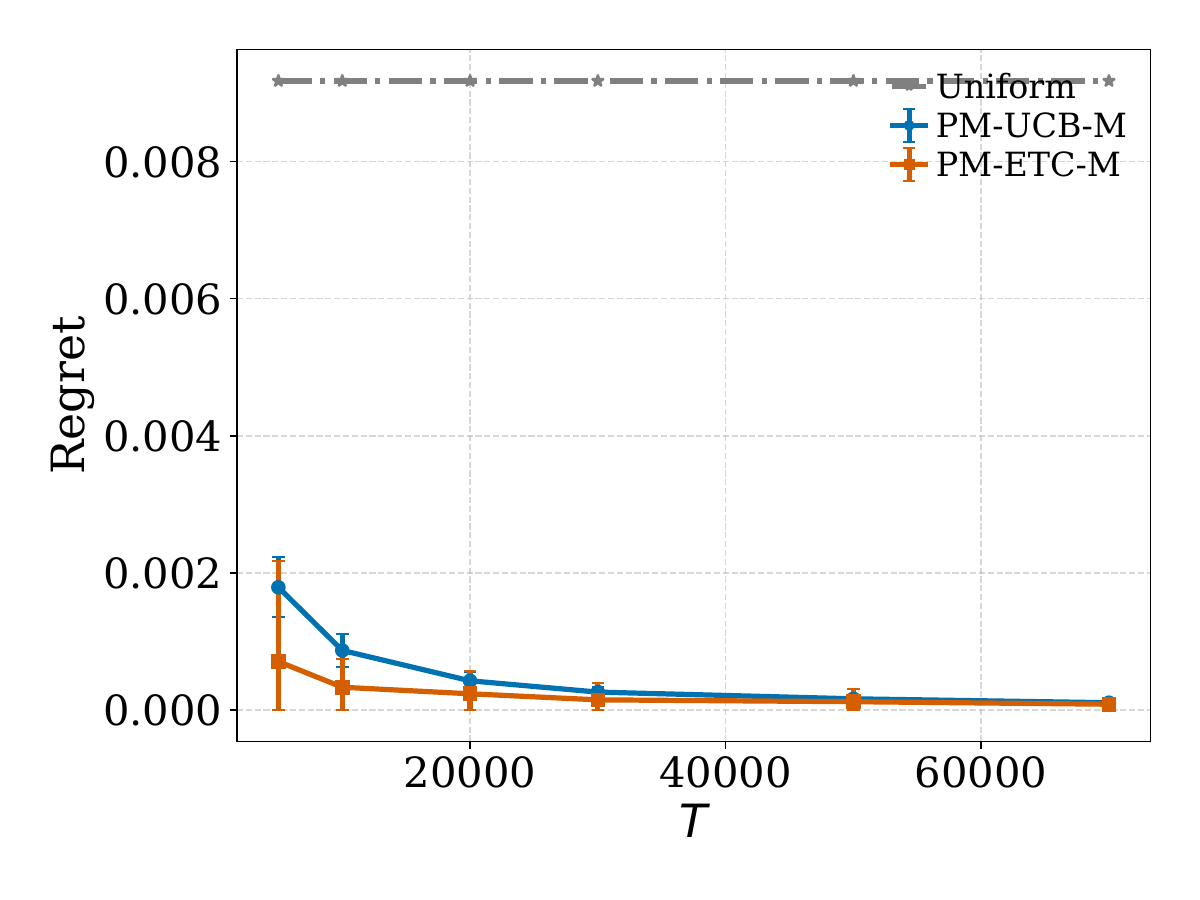}
    \caption{Horizon-dependent algorithm compared to uniform sampling for WRTD}
    \label{fig:wang_2_uni}
  \end{subfigure}

  \caption{Performance comparison of PM-ETC-M and PM-UCB-M algorithms on 2-arm Bernoulli bandits under  
    (top) GD and (bottom) WRTD PMs, with and without a uniform-sampling baseline.  
    Error bars denote one standard error over repeated trials.}
  \label{fig:combined_gini_wang}
\end{figure}

\paragraph{Regret versus Horizon (Gaussian Bandits)}

We also empirically evaluate CE-UCB-M and PM-ETC-M algorithms for the Gaussian Bandit model. We assess the regrets of two convex (mean) and concave (GD) models. We implement CE-UCB-M since it has computational advantages over PM-UCB-M, while achieving the same regret bounds. The Gaussian distributions have variance $1$ and mean values $1,3$ and $5$. Our choice of $\varepsilon$ depends on $\beta$. For the mean, the optimal solution is the arm that has the highest mean $5$, i.e., the solution lies on the boundaries of $\Delta^{K-1}_{\varepsilon}$. By Case 1 analyzed in \ref{Appendix:proof of lemma beta12}, which does not require strict concavity of the distortion function, it follows that $\beta=1$. For GD, the optimal solution might be a mixture of arm distributions. From Lemma \ref{lemma:beta12}, it is known that $\beta \in \{ 1,2\}$ for GD. In the absence of knowledge about the optimal policy, we set $\beta=2$, to obtain conservative scaling for $\delta_{12}(\varepsilon)$. 
As seen in Figures \ref{fig:gini_gaus}~and~\ref{fig:mean_gaus}, both algorithms achieve sublinear regret for the mean and GD. Furthermore, these figures show that for GD, PM-ETC-M incurs less regret, whereas for the mean, CE-UCB-M incurs less regret. 

\begin{figure}[t]
    \centering
    
    \begin{subfigure}[b]{0.45\linewidth}
        \centering
        \includegraphics[width=\linewidth]{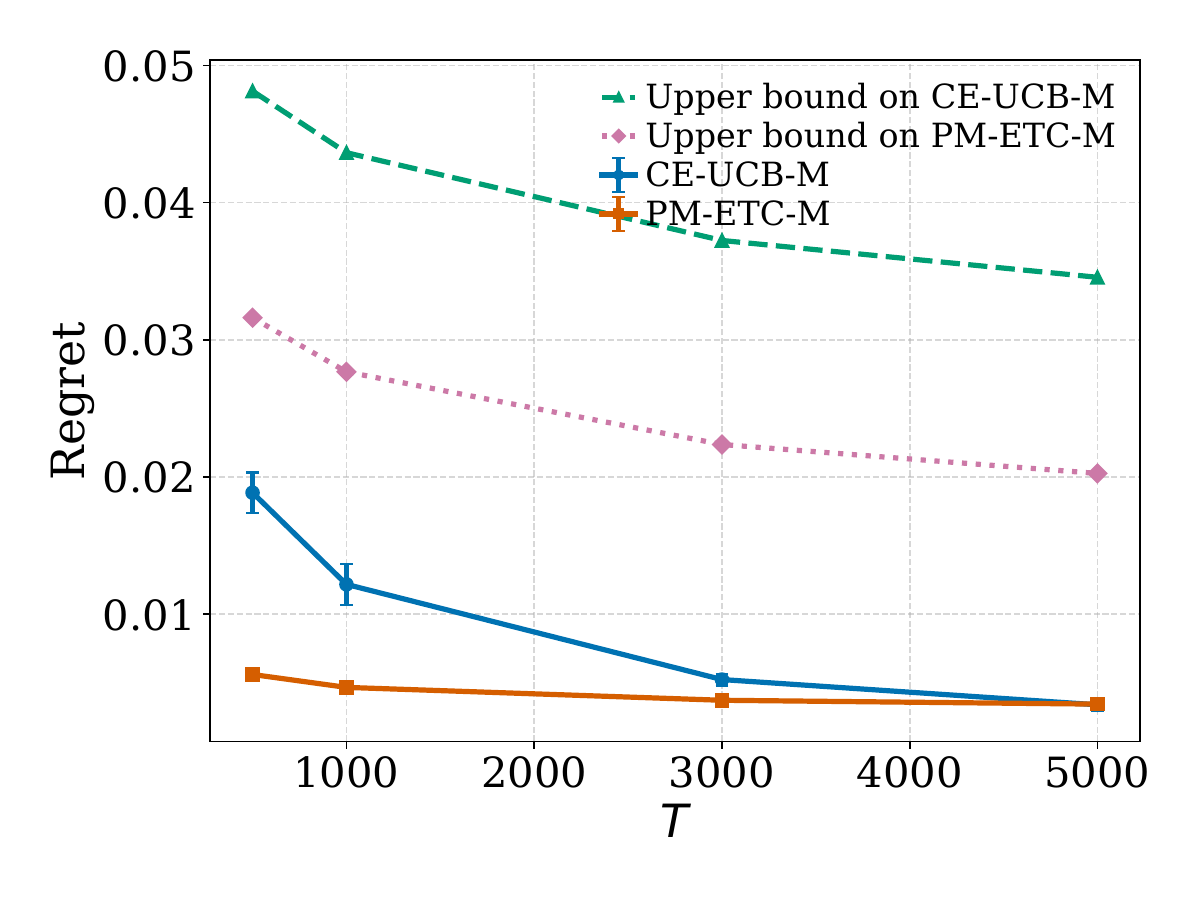}
        \caption{Scaling of regret for GD}
        \label{fig:gini_gaus}
    \end{subfigure}
    \hfill
    \begin{subfigure}[b]{0.45\linewidth}
        \centering
        \includegraphics[width=\linewidth]{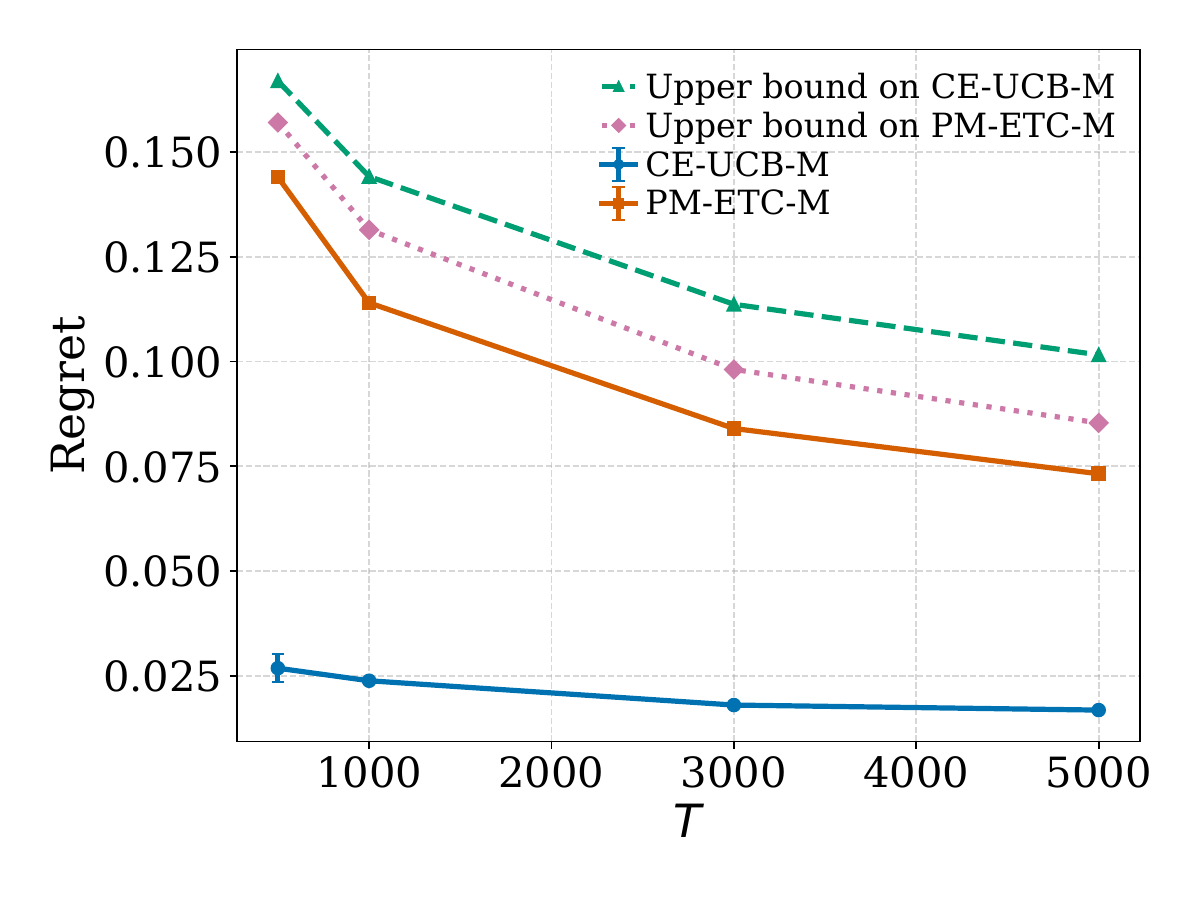}
        \caption{Scaling of regret for mean.}
        \label{fig:mean_gaus}
    \end{subfigure}

    \caption{Scaling of regrets of PM-ETC-M and CE-UCB-M algorithms for $3-$arm Gaussian bandit model for GD and mean.}
    \label{fig:gini_mean_gaus}
\end{figure}

\paragraph{${\bar\beta}$ evaluations}

We have run empirical evaluations to show how the ratio $\frac{\log(\delta_{13}(\varepsilon))}{\log(\varepsilon)}$ converges to $\bar\beta$ for the PMs in Table \ref{table:beta_values}. As seen in Figure \ref{fig:beta_all_pms_ucb}, for smaller $\varepsilon$ values, the ratio converges to the ${\bar\beta}$ values characterized in Table \ref{table:beta_values}. For the PMs we shared a range for ${\bar\beta}$, we observe that the ratio converges to a value in the given range.

\begin{figure}[h]
  \centering
  \begin{subfigure}[b]{0.48\textwidth}
    \centering
    \includegraphics[width=\linewidth]{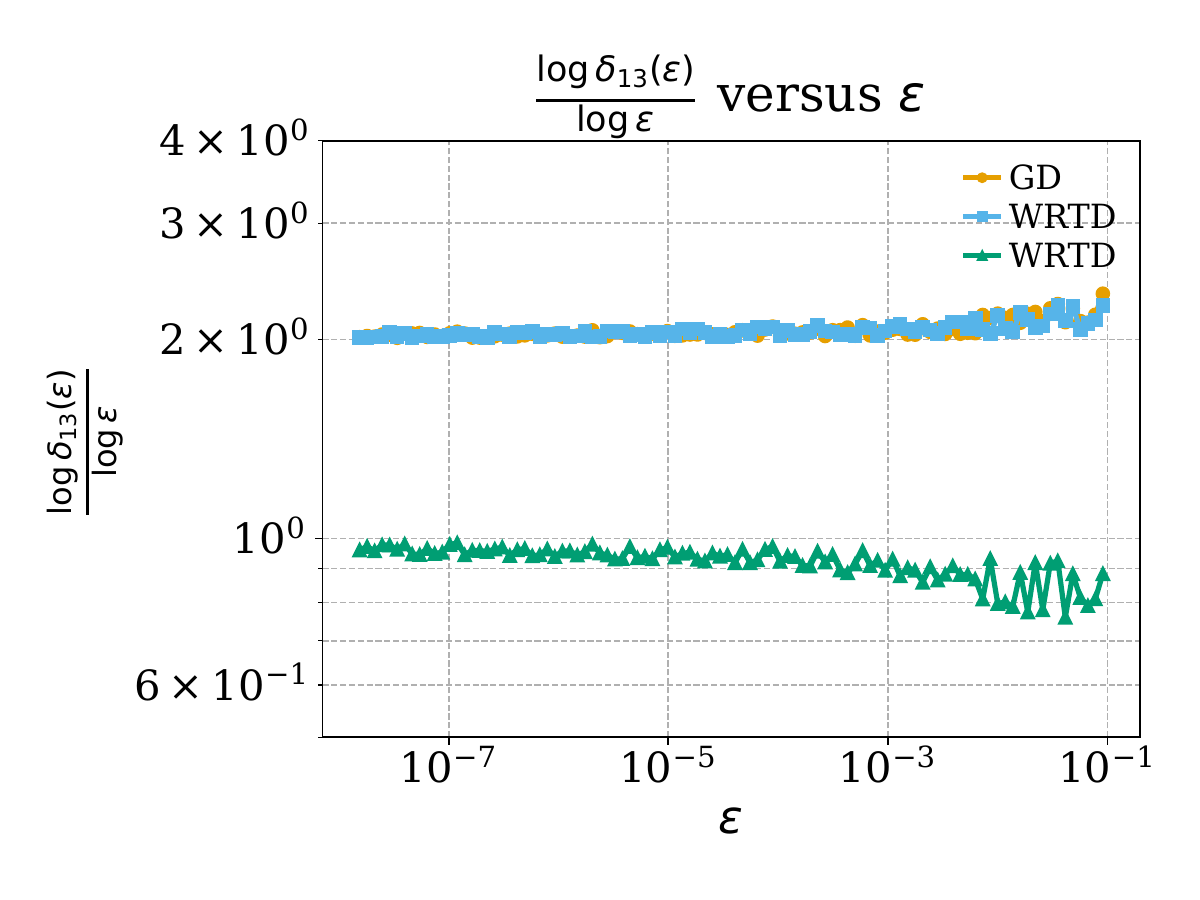}
    \caption{GD and WRTD}
    \label{fig:beta_gini_wang}
  \end{subfigure}\hfill
  \begin{subfigure}[b]{0.48\textwidth}
    \centering
    \includegraphics[width=\linewidth]{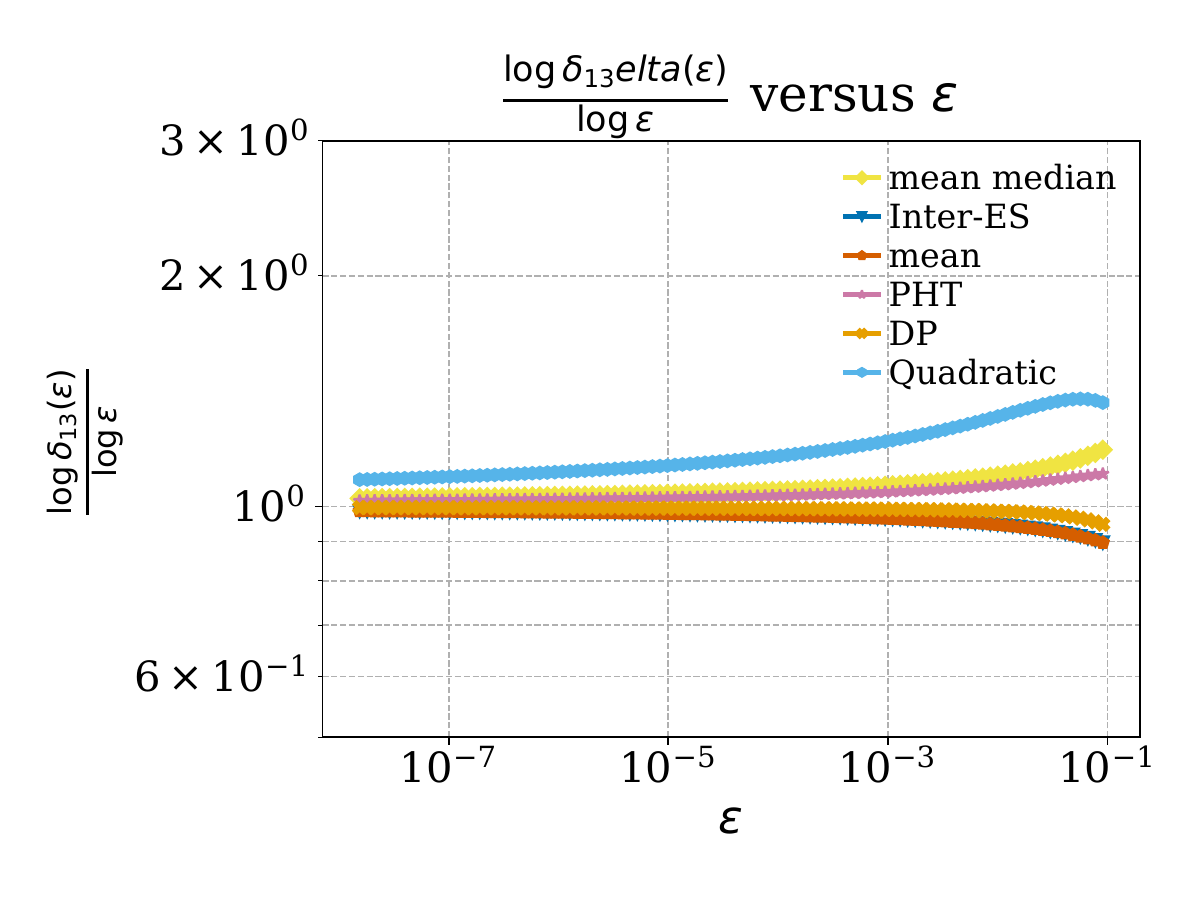}
    \caption{Mean-median, inter-ES($c=0.5$), mean, PHT measure, dual power(DP) and quadratic }
    \label{fig:beta_other_pms}
  \end{subfigure}
  \caption{Estimated $\bar\beta$ versus\ $\varepsilon$ for the PMs in Table \ref{table:beta_values}.}
  \label{fig:beta_all_pms_ucb}
\end{figure}

\paragraph{Impact of Discretization Resolution.}  For our empirical experiments, we have chosen the smallest discretization parameter $\varepsilon$ that still satisfies these guarantees required for the regret bounds in Section \ref{sec:analysis}. While a smaller $\varepsilon$ brings the solution closer to optimality, it also increases computational cost. By selecting the minimal $\varepsilon$ that preserves our regret bounds, we strike a balance: as the horizon $T$ grows, regret decreases without incurring unnecessary computation.  For GD, we empirically show how the regret upper bounds would scale for different $\varepsilon$ values in Figures \ref{fig:Gini_emp_epsilon} and \ref{fig:Gini_emp_epsilon_ETC}. We consider the distributions in which the optimal solution will not lie on the boundary points, hence, $\beta=2$. We plotted the regret bounds for $\varepsilon$ that comply with $T>T(\varepsilon)$ requirement for Theorem \ref{corollary:PM-UCB-M} in Figure \ref{fig:Gini_emp_epsilon} and $T>N(\varepsilon)$ requirement for Theorem \ref{theorem:PM-ETC-M} in Figure \ref{fig:Gini_emp_epsilon_ETC}. We observe that as $\varepsilon$ increases, the regret bounds worsen as the estimate for the mixing coefficient becomes less accurate.

\begin{figure}[h]
    \centering
    \begin{subfigure}[b]{0.45\linewidth}
        \centering
        \includegraphics[width=\linewidth]{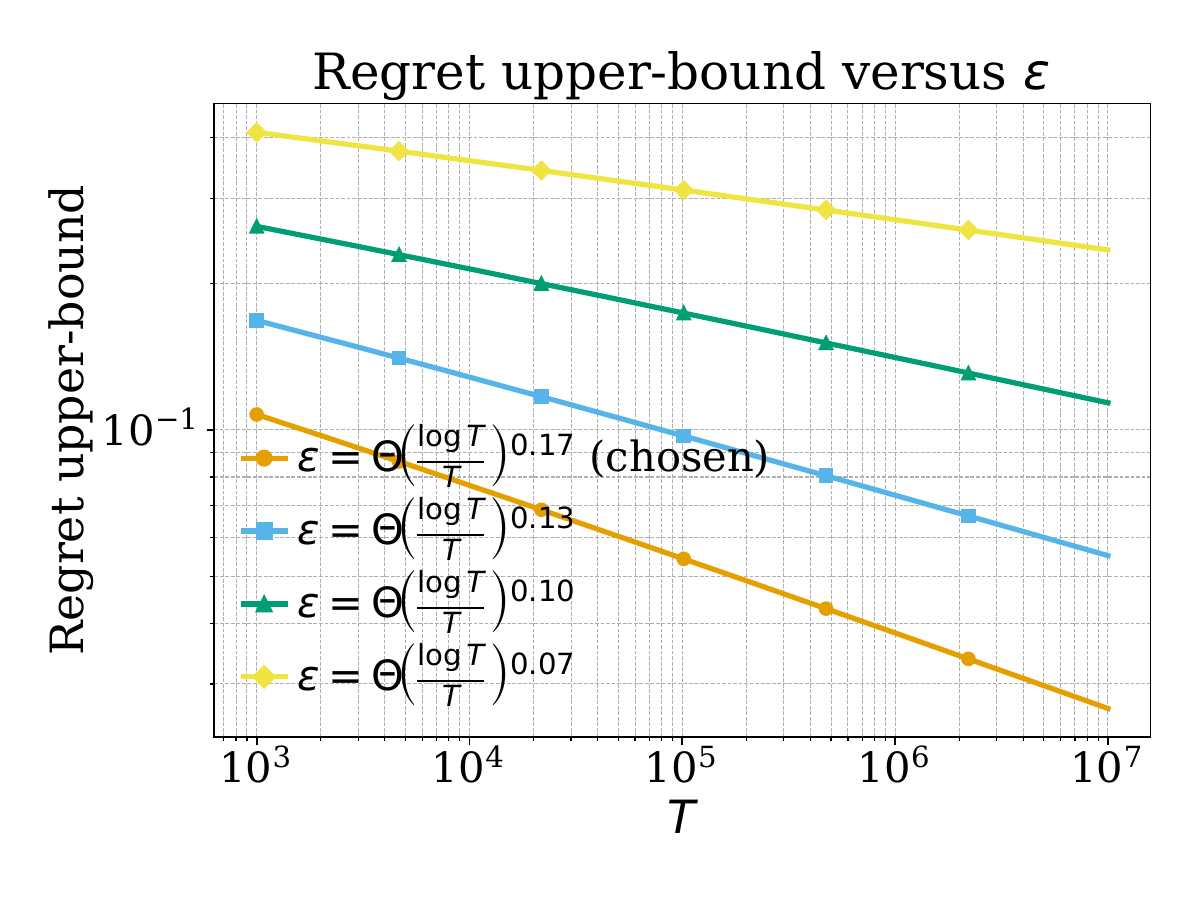}
        \caption{PM-UCB-M}
        \label{fig:Gini_emp_epsilon}
    \end{subfigure}
    \hfill
    \begin{subfigure}[b]{0.45\linewidth}
        \centering
        \includegraphics[width=\linewidth]{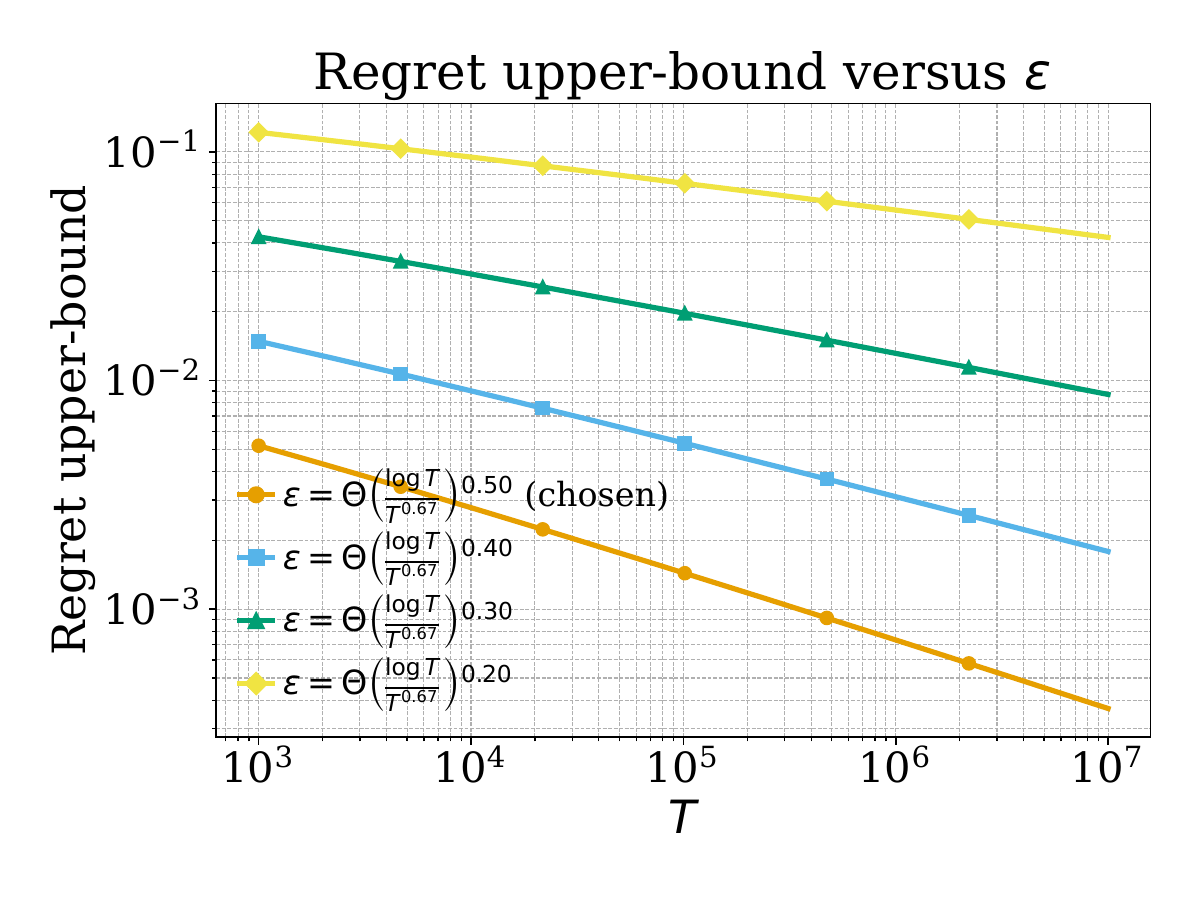}
        \caption{PM-ETC-M}
        \label{fig:Gini_emp_epsilon_ETC}
    \end{subfigure}
    \caption{Empirical evaluations of regret upper-bound of horizon-dependent algorithms for GD}
    \label{fig:gini_comparison}
\end{figure}

\subsection{Anytime Algorithm}
\label{section:emp_eval_anytime}

The anytime algorithm has two main phases: the interval refinement phase and the tracking phase. We focus on evaluating the algorithm's behavior once the tracking phase starts. This is due to the fact that before the refinement phase starts, the discretization is not on the level that the user specified. Due to this, our regret guarantee is also after the tracking phase of the algorithm starts. To this end, we first apply the anytime algorithm to the $2$-arm Bernoulli with horizon of $T=3\cdot10^6$, discretization interval length of $\epsilon \in \{0.12, 0.06, 0.03\}$, discretization number $A=4$, and target probability of error $\delta = 0.05$. We perform the experiment 100 times. We observe that for larger $\epsilon$, the tracking phase starts far earlier. However, the regret incurred is much larger, which is expected as the estimation regret for a given $\epsilon$ is a constant.

\begin{figure}[h]
  \centering
  \begin{subfigure}[b]{0.32\textwidth}
    \centering
    \includegraphics[width=\textwidth]{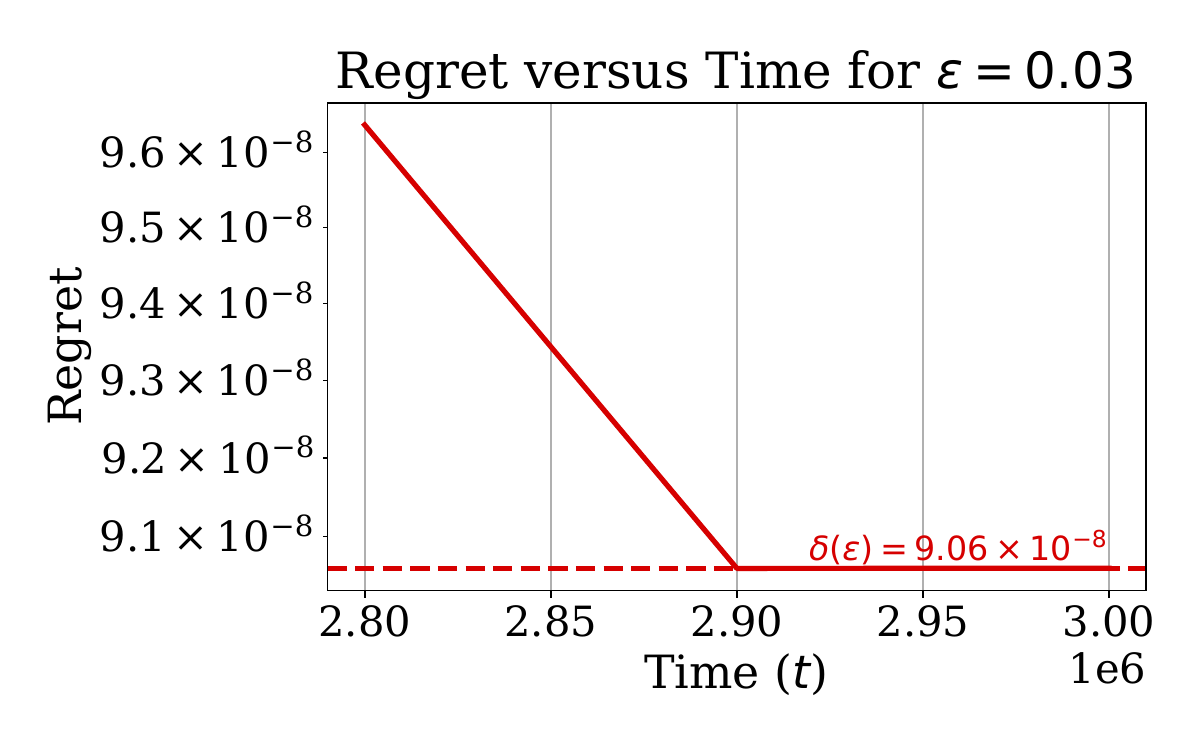}
    \caption{$\varepsilon = 0.03$}
    \label{fig:eps_003}
  \end{subfigure}\hfill
  \begin{subfigure}[b]{0.32\textwidth}
    \centering
    \includegraphics[width=\textwidth]{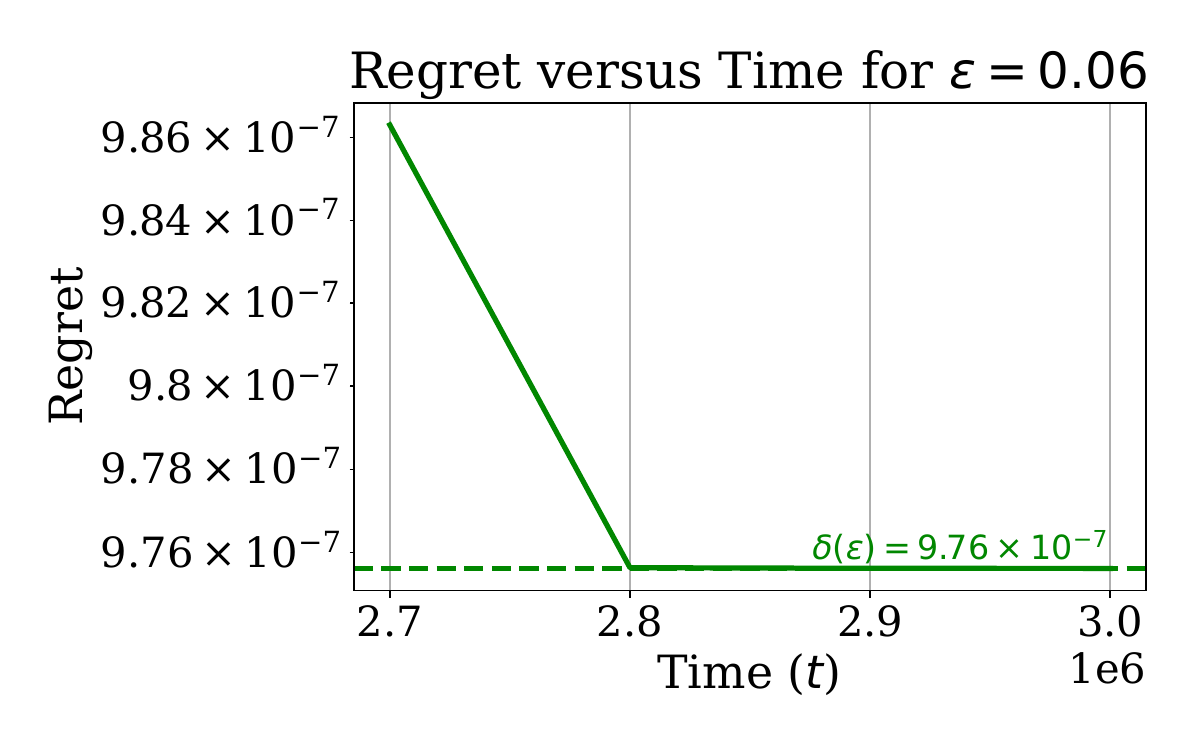}
    \caption{$\varepsilon = 0.06$}
    \label{fig:eps_006}
  \end{subfigure}\hfill
  \begin{subfigure}[b]{0.32\textwidth}
    \centering
    \includegraphics[width=\textwidth]{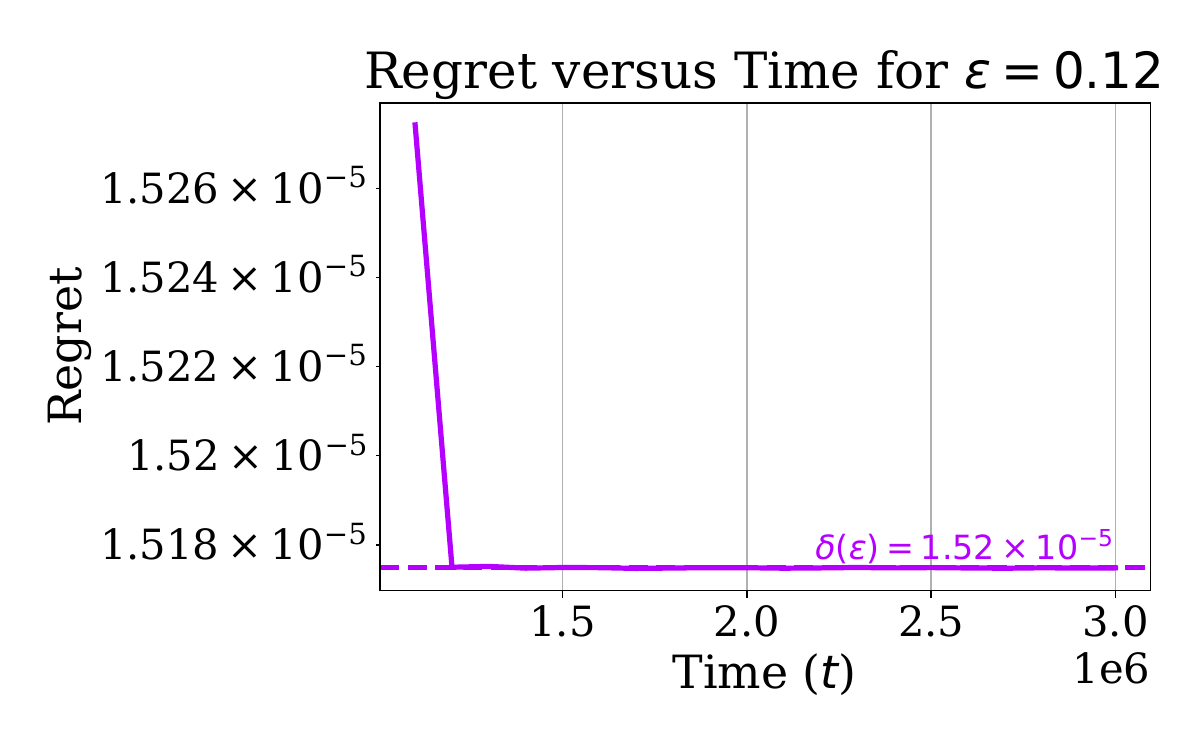}
    \caption{$\varepsilon = 0.12$}
    \label{fig:eps_012}
  \end{subfigure}
  \caption{Regret of the anytime algorithm versus horizon~$T$ for various~$\varepsilon$ values.}
  \label{fig:regret_vs_time_row}
\end{figure}

An important feature of the anytime algorithm is the computational time. Different from horizon-dependent algorithms, where we encountered a $O(\frac{1}{\varepsilon^K})$-dimensional discrete optimization problem, with the anytime algorithm, we only have a $O(\frac{1}{A^K})$-dimensional discrete optimization problem where $A$ is a user specified constant that does not scale with the horizon.
Therefore, to show the benefit of adaptive discretization, we propose the following experimental setup. First, for performance comparisons, we design a reasonable baseline algorithm. Since the main characteristic of our approach is the adaptive discretization, we construct a variation of the algorithm with fixed discretization for comparison. Specifically, we adapt PM-UCB-M into an anytime setting by setting $\varepsilon = \epsilon/A$ and modifying the sampling rule according to the logic used in the CIRT algorithm. The result is a fixed-discretization, anytime-compatible algorithm. To evaluate and compare computational efficiencies of both algorithms, we define a set of thresholds for regret, namely, $\{10^{-2}, 10^{-3}, 10^{-4}, 10^{-5}, 10^{-6}\}$, $\epsilon=0.06$ and $A=4$. Each algorithm is initialized run for increasing horizons until the observed regret falls below the specified threshold. We then record the computation time required to reach this point. For each threshold, we run the experiments $100$ times, and report the average of the computational time. As shown in Figure \ref{fig:anytime_fixed_comparison}, CIRT achieves substantially lower computation time than its fixed-discretization variation, especially for smaller regret thresholds. This observation is expected as computational complexity per time step in CIRT  scales with $A$, whereas in the fixed-discretization algorithm it scales with $A/\epsilon$, which is significantly larger.

\begin{figure}[h]
    \centering
    \includegraphics[width=0.45\linewidth]{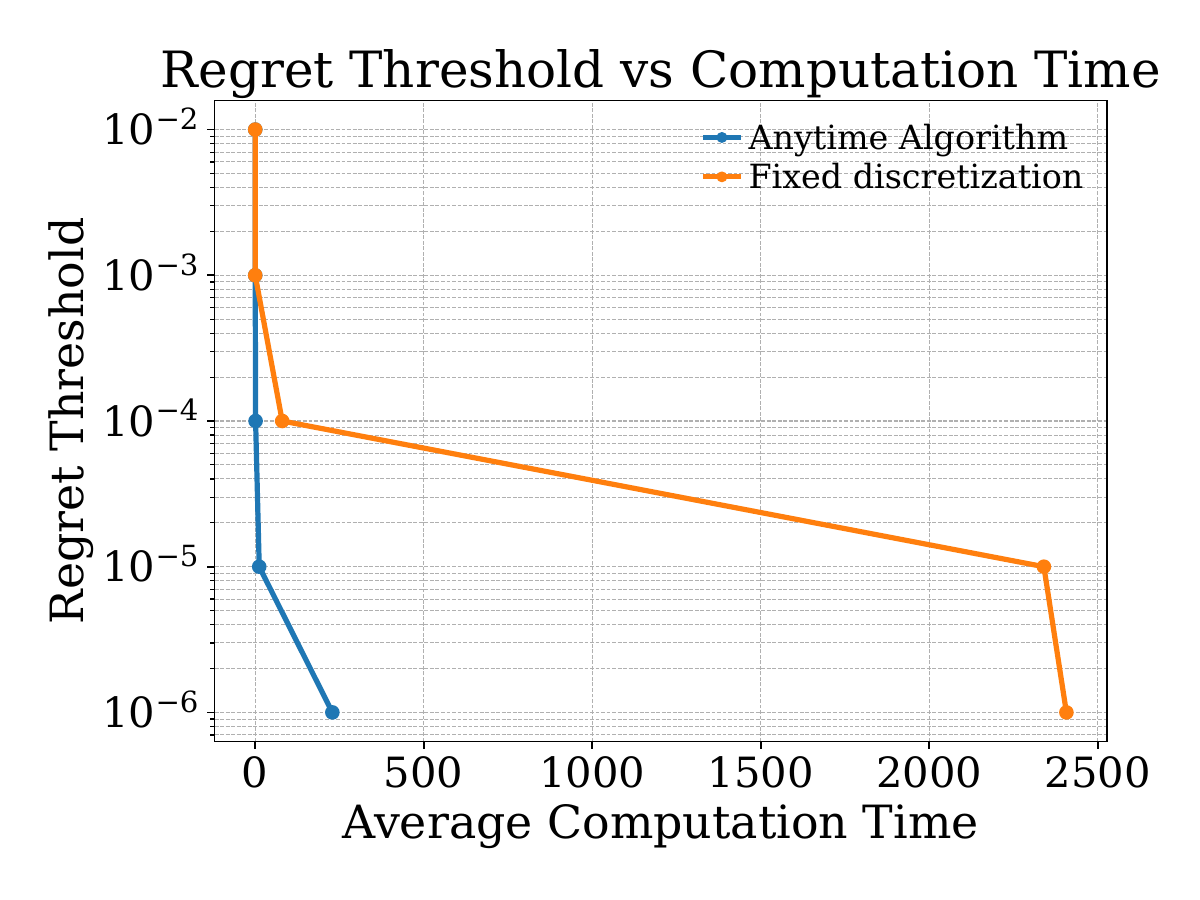}
    \caption{Computation times for the anytime algorithm and the fixed discretization algorithm.}
    \label{fig:anytime_fixed_comparison}
\end{figure}

\section{Application to Risk-sensitive Bandits}
\label{sec: risk_application}

Risk-sensitive bandits~\cite{tatli2025risk, prashanth2022wasserstein, chang2022, cassel2018general,baudry2021optimal, tamkin2019cvar,tan2022cvar,liang2023distribution,kagrecha2019distribution, prashanth2020concentration} has received notable attention in recent years. In such settings, instead of quantifying the quality of an arm through its mean value, a risk measure quantifies the quality of an arm by evaluating aspects of its reward distribution beyond the mean, often emphasizing tail behavior or variability to better capture application-specific preferences. Two main perspectives exist in the risk-sensitive bandit literature. One line of work focuses on {\em risk-averse} objectives, in which the regret is defined relative to the risk-minimizing arm~\cite{prashanth2022wasserstein, cassel2018general}, a setting typically adopted when both losses and gains are modeled through rewards. In contrast, in another line of investigation~\cite{baudry2020thompson, tamkin2019cvar, chang2022, kolla2023bandit} {\em risk-seeking} objectives are explored, in which the rewards primarily model gains, motivating the learner to favor highly volatile options over more stable choices.

While much of the literature has focused on specific risk measures such as CVaR~\cite{baudry2021optimal, tamkin2019cvar}, there have also been efforts to develop unified frameworks encompassing broader classes of risk measures~\cite{aditya2016weighted, cassel2018general, prashanth2022wasserstein}. For example, distortion risk measures (DRMs) -- which are equivalent to spectral risk measures -- form a rich family that includes CVaR, PHT, dual power, and quadratic risk measures. DRMs have been investigated in the context of risk-sensitive bandits in~\cite{kolla2023bandit, prashanth2022wasserstein}. Importantly, our definition of a PM in~\eqref{eq:wang_def} specializes to a DRM if the distortion function satisfies $h(1)=1$. In this section, we will discuss the application of the PM-centric framework in the context of risk-sensitivity, contrasting the regret guarantees of our proposed algorithms with the existing literature. We remark that the PM-centric framework specializes to the risk-seeking viewpoint, which has been investigated in earlier studies~\cite{baudry2020thompson, tamkin2019cvar, kolla2023bandit}. Hence, our performance comparisons have been provided with respect to these studies. The study in~\cite{tamkin2019cvar} investigates regret minimization in stochastic bandits, where regret of an arm is defined with respect to the arm having the maximal CVaR and provides a problem-independent bound of the order $O(1/\sqrt{T})$. The investigation in~\cite{baudry2020thompson} provides a Thompson sampling counterpart of the UCB-based algorithm proposed in~\cite{tamkin2019cvar} for CVaR bandits, showcasing an empirical improvement over the UCB-based strategy. Finally, \cite{kolla2023bandit} extends the class of risk measures from CVaR to DRMs with \holder-continuous utilities with exponent $\alpha$, providing an order of $\widetilde O(T^{-\frac{\alpha}{2}})$ problem-independent regret.

\paragraph{Performance Comparison.}

Next, we provide detailed comparisons of the settings and reported regrets for CVaR bandits and DRMs~\cite{tamkin2019cvar,baudry2020thompson,kolla2023bandit}. Furthermore, for CVaR bandits, we also compare against the UCB-based algorithm proposed for the EDPM framework in~\cite{cassel2018general}. Even though the focus of \cite{cassel2018general} is on risk aversion, for CVaR, a reward-based definition is considered, which translates to the risk-seeking objective. Before providing performance comparisons, we remark that the existing investigations on risk-sensitive bandits focus on solitary arm optimality. To recover the counterpart regrets of the PM-centric algorithms, we evaluate them when they generate solitary arm decisions. This can be controlled by properly selecting our simplex discretization scheme. Specifically, by setting $\varepsilon=1$, we can report solitary-arm policies and regrets for PM-ETC-M. Consequently, we have $\delta_{01}=0$, and hence we only have the arm selection regret $\bar{\mathfrak{R}}_{\bnu,1}^{\rm E}(T)$. For the PM-UCB-M algorithm, it can be readily verified that $\beta=1$, since the PM-maximizing arm is solitary, and we report regrets with $\beta=1$.

\begin{itemize}
    \item \textbf{CVaR:} The existing regret bounds for CVaR are restricted to distributions with bounded supports~\cite{tamkin2019cvar,baudry2020thompson,cassel2018general}. With this assumption, the best known regret bound is $O(\frac{1}{T}\log T)$. Furthermore, for a loss-based definition of CVaR,  \cite{prashanth2022wasserstein} reports a $O(\sqrt{\frac{1}{T}\log T})$ regret bound for sub-Gaussian distributions. Our PM-centric framework generalizes the regret bound for reward-based CVaR to sub-Gaussian distributions. PM-ETC-M achieves a regret bound of the order $O(\frac{1}{T}\log T)$, which is achieved by setting $\varepsilon=1$ and noting that $q=1$ for CVaR. The PM-UCB-M algorithm achieves a regret bound of the order $O((\frac{1}{T}\log T)^{\kappa})$, assuming that $\beta$ exists, where $\kappa$ and $\beta$ are defined in~\eqref{eq:vareps_kappa_UCB} and~\eqref{eq:beta_def}, respectively. Hence, for CVaR bandits, the PM-centric framework \emph{generalizes} to sub-Gaussian distributions (from bounded support) and \emph{matches} the best known regret reported in the literature. 
    \item \textbf{DRMs:} The existing studies of DRMs are reported in~\cite{kolla2023bandit}~and~\cite{prashanth2022wasserstein}, where the former adopts the risk-seeking objective while the latter adopts the risk-averse perspective, and reports a regret bound of $O(T^{-\frac{q}{2}}(\log T)^{\frac{q}{2}})$.
    The study in~\cite{kolla2023bandit} reports a problem-independent regret bound of the order $O(T^{-\frac{q}{2}}(\log T)^{\frac{q}{2}})$ for distributions with bounded supports. In contrast, the PM-centric framework generalizes to sub-Gaussian distributions. Furthermore, the PM-ETC-M algorithm outperforms the existing baseline in~\cite{kolla2023bandit}, achieving a regret of the order $O((\frac{1}{T}\log T)^q)$ when we set $\varepsilon=1$. On the other hand, the PM-UCB-M admits a regret bound of order $O((\frac{1}{T}\log T)^{q\kappa})$, assuming that $\beta$ exists.
    It is noteworthy that $\beta$ has been shown to exist for strictly concave distortion functions in Lemma~\ref{lemma:beta12}, which includes a broad range of DRMs such as PHT, quadratic, and dual power risk measures. To summarize, for DRMs, the PM-centric framework exhibits two advantages: (1) it \emph{generalizes} to sub-Gaussian distributions (from bounded support) for measures such as quadratic and dual power, and (2) it \emph{improves} upon the best known regret guarantees for DRMs for investigations into the risk-seeking objective. 
\end{itemize}

\section{Concluding Remarks}

In this paper, we have designed a novel preference-centric framework for viewing and analyzing bandit problems. It shifts bandit decisions from purely utilitarian and their rewards being defined as expected arm returns to a more general notion of reward that is sensitive not only to distributions' first moments, but also recognizes the events encoded in the distribution tails. For this purpose, we have designed the notion of preference metric (PM) as a generalization of the canonical objective, which encapsulates a wide variety of finer distributional properties through distorted functionals of the arms' cumulative distribution functions (CDFs). Adopting such a notion, besides facilitating decisions that are more sensitive to the entire probability model, also leads to a different way of thinking and designing bandit algorithms. Notably, an important observation in the PM-centric setting is that if the distortion function is strictly concave, there always exist bandit instances for which the PM will be optimized only by an arm selection policy that carefully selects a mixture of arms over time according to a well-defined mixture. This is in sharp contrast to the conventional approaches in bandits, which have the objective of identifying and pursuing a single best arm.  Motivated by this observation, we have designed both horizon-dependent (PM-ETC-M, PM-UCB-M, and CE-UCB-M) algorithms and an anytime algorithm (CIRT). The key components in these algorithms that facilitate generalizing to mixture optimality are: (i) an {\em estimation routine} that forms estimates of the mixing coefficients based on CDF estimates computed from the accrued rewards, and, (ii) a {\em tracking mechanism} that navigates the arm selection fractions to match the estimated mixtures. We have analyzed the regret of the proposed algorithms for a wide range of PMs based on utility-specific parameters, which guide in selecting an algorithm given a specific choice of PM and the availability of instance-dependent information. Furthermore, we have experimentally evaluated several facets of the proposed algorithms, including their regret performance, superiority compared to solitary arm policies, and computational complexity.   

This investigation lays the foundation for viewing bandit problems through the lens of PMs and opens several avenues for future research. First, designing minimax and instance-dependent lower bounds for PMs with mixture optimality is a promising direction. The canonical approach of applying the change-of-measures argument to two different bandit instances with distinct optimal arms does not work, since PMs may exhibit mixture optimality. Second, our results rely on the exponential convergence of the CDF estimates in the $1$-Wasserstein metric, which follows from the sub-Gaussianity of the class of bandits we investigate. Extending these results to heavy-tailed distributions is an open problem and would necessitate designing distribution confidence sequences for heavy-tailed distributions. Third, another important direction is to devise provably no-regret anytime algorithms beyond Bernoulli bandits. Finally, while we analyze PM-centric stochastic bandits, extending the PM-centric setting to correlated bandit classes, such as linear and causal bandits, has important applications.

\bibliographystyle{IEEEbib}
\bibliography{BAIRef, references}


 \newpage

\clearpage
\appendix
\onecolumn

\doparttoc 
\faketableofcontents

\newpage

\section*{Organization of the Appendix}

The supplementary material consists of nine parts grouped in Appendices \ref{proof:lemma_example}-\ref{Appendix:Anytime-alg}. 
 We begin by presenting the proof of Lemma~\ref{example utility} in Appendix \ref{proof:lemma_example}, Lemma~\ref{lemma:U} in Appendix \ref{Appendix:lemma_U_proof}, and the proof of the finiteness of the Wasserstein-related constant $W$ in Appendix~\ref{Appendix:W_finitess}. Then, in Appendix \ref{Appendix:convex_concave}, we provide the proofs for Theorem \ref{lemma:convex_solitary}, which shows solitary arm optimality for convex distortion functions, and Theorem \ref{lemma:mixture_lemma_exponential}, which shows possible mixture optimality for strict concave distortion functions. Subsequently, in Appendix \ref{Appendix:estimation_regret}, we provide an upper bound on the estimation regret in terms of the discretization parameter $\varepsilon$. Then, in Appendix \ref{Appendix:Holder_exp}, we obtain \holder~exponents and constants for PMs reported in Table \ref{table:beta_values}. We continue with proof of Lemma \ref{lemma:beta12} in Appendix \ref{Appendix:proof of lemma beta12}. Then, in Appendix \ref{Appendix:Beta_Appendix}, we obtain gap parameter values $\beta$ and ${\bar\beta}$ for $K-$arm Bernoulli bandit model. Lastly, we have Appendices \ref{Appendix:PM-ETC-M}-\ref{Appendix:Anytime-alg} to present the performance guarantees for the PM-ETC-M (Appendix~\ref{Appendix:PM-ETC-M}), PM-UCB-M (Appendix~\ref{proof:UCB upper bound}), CE-UCB-M and (Appendix~\ref{Appendix:Upper_last_UCB}). and CIRT (Appendix~\ref{Appendix:Anytime-alg}) algorithms.

\section{Proof of Lemma \ref{example utility}}
\label{proof:lemma_example}
For any values $p_1,p_2\in(0,1)$, consider two Bernoulli distributions ${\sf Bern}(p_1)$ and ${\sf Bern}(p_2)$ with CDFs $\F_1$ and $\F_2$, respectively. It can be readily verified that for any $i\in\{1,2\}$,
\begin{align} 
    U_h(\F_i) &= \int_{-\infty}^{0} \Big(h\big(1- \F_i(x)\big)  - h(1)\Big) \, \der x  + \int_{0}^{\infty} h\big(1-\F_i(x)\big) \, \der x \    = h(p_i)\ .
    \label{eq:h_1_p}
\end{align}
Owing to the strict concavity of the distortion function \(h(u) = u(1-u)\), the maximizer
\begin{align}
    p^\star \triangleq \argmax_{p \in [0,1]} h(p)\;=\;\frac{1}{2}
\end{align}
is unique. Furthermore, due to the function being non-monotone, \(p^\star\) cannot lie at the boundaries. Hence, \(p^\star \in (0,1)\). Let us choose the mean values of the arms such that $p_1 < \frac{1}{2}$ and $p_2>\frac{1}{2}$. With these choices, there exists $\lambda\in(0,1)$ such that
\begin{align}
\label{eq:p_max_def}
    p^\star = \lambda p_1 + (1-\lambda)p_2 \ .
\end{align}
For the mixture distribution 
$\lambda \F_1 + (1-\lambda)\F_2$, we have
\begin{align*}
    U_h\big(\lambda\F_1 + (1-\lambda)\F_2\big) = h(p^\star) > \max\{h(p_1), h(p_2)\} = \max\{U_h(\F_1),U_h(\F_2)\}\ .
\end{align*}
Hence proved.

\section{Proof of Lemma \ref{lemma:U}}
\label{Appendix:lemma_U_proof}
Owing to the \holder~continuity of the distortion function $h$, we have
\begin{align}
\label{eq:h_holder}
    |h(u)-h(v)| \leq L_h |u-v|^r \ ,
\end{align}
for some $L_h\in\R_+$ and $r > 0$. Furthermore, recalling that $\Xi$ denotes the space of $1$-sub-Gaussian distributions, for any $\G\in\Xi$ we have
\begin{align}
\label{eq:G_subgaussian}
    1-\G(t)\;\leq\;\e^{-\frac{t^2}{2}}\ ,\quad\text{and}\quad \G(-t)\;\leq\;\e^{-\frac{t^2}{2}}\ .
\end{align}
Now, we will provide an upper bound on $U_h(\G)$. We have
\begin{align} 
\label{eq:h_0_0_use}
    U_h(\G) &= \int_{-\infty}^{0} (h(1-\G(x)) -h(1)) \der x + \int_{0}^{\infty} (h(1-\G(x)) -h(0))\der x \\ 
    &\stackrel{\eqref{eq:h_holder}}{\leq} L_h\left(\int_{-\infty}^{0}  |G(x)|^r \der x + \int_{0}^{\infty} |1-\G(x)|^r \der x \right) \\ &\stackrel{\eqref{eq:G_subgaussian}}{\leq} L_h\left(\int_{-\infty}^{\infty} e^{\frac{-x^2r}{2}} \der x \right) 
     \label{eq:U_finite}\\
     &= L_h \sqrt{\frac{2 \pi}{r} }\ .
     \label{eq:U_finite_quant}
\end{align}
Furthermore, following the same steps as~\eqref{eq:h_0_0_use}-\eqref{eq:U_finite_quant}, noting that from the \holder~continuity condition stated in~\eqref{eq:h_holder} we have $h(u)-h(v)\geq - L_h|u-v|^r$, it can be readily verified that
\begin{align}
    \label{eq:U_finite_quant_neg}
    U_h(\G)\;\geq\; - L_h \sqrt{\frac{2 \pi}{r} }\ .
\end{align}
Lemma \ref{lemma:U} follows from~\eqref{eq:U_finite_quant} and~\eqref{eq:U_finite_quant_neg}.

\section{Proof of Theorem~\ref{theorem:W}}
\label{Appendix:W_finitess}

In order to show the finiteness of $W$ for $1$-sub-Gaussian distributions, we leverage the Kantorvich-Rubinstein duality of the $1$-Wasserstein measure, which is stated below.

\begin{theorem}[\cite{villani2009optimal}]
\label{theorem:Wasserstein_dual}
Let $\mcL(\Omega)$ denote the space of probability measures supported on $\Omega$ with finite first moment. For any $\P_1, \P_2\in \mcL(\Omega)$, we have
  \begin{align}
    \norm{\mathbb{P}_1 - \mathbb{P}_2}_{\rm W} = \sup \limits_{\lVert  f \rVert_L \leq 1} \left\{\displaystyle\int_{\Omega} f\der \P_1 -\displaystyle\int_{\Omega} f\der\P_2\right\}\ ,
  \end{align}
where $\lVert f \rVert_{L} \leq 1  $ denotes the space of all $1$- Lipschitz functions $f:\mathbb{R} \rightarrow \mathbb{R}$. 
\end{theorem}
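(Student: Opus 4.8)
The statement is the Kantorovich–Rubinstein duality, and since here $\Omega\subseteq\R$ the plan is to give a self-contained one-dimensional argument rather than invoking the full optimal-transport machinery. Write $\mathbb{P}_1,\mathbb{P}_2$ for the two measures, $F_1,F_2$ for their CDFs, and recall the primal (coupling) definition $\norm{\mathbb{P}_1-\mathbb{P}_2}_{\rm W}=\inf_{\gamma\in\Pi(\mathbb{P}_1,\mathbb{P}_2)}\int|x-y|\,\der\gamma(x,y)$, where $\Pi(\mathbb{P}_1,\mathbb{P}_2)$ is the set of couplings with the prescribed marginals. Denote the dual value by $D\triangleq\sup_{\norm{f}_L\le1}\big(\int f\,\der\mathbb{P}_1-\int f\,\der\mathbb{P}_2\big)$. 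The strategy is a three-term sandwich: I will show $\int_\R|F_1-F_2|\,\der x\le D\le\norm{\mathbb{P}_1-\mathbb{P}_2}_{\rm W}\le\int_\R|F_1-F_2|\,\der x$, which forces all three quantities to coincide and yields the claimed identity (also delivering the $1$-D closed form as a by-product).

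The middle inequality $D\le\norm{\cdot}_{\rm W}$ is the easy direction. For any $1$-Lipschitz $f$ and any coupling $\gamma$, $\int f\,\der\mathbb{P}_1-\int f\,\der\mathbb{P}_2=\int(f(x)-f(y))\,\der\gamma\le\int|x-y|\,\der\gamma$; taking the infimum over $\gamma$ on the right and the supremum over $f$ on the left gives the bound, and the finite-first-moment hypothesis guarantees $\int|f|\,\der\mathbb{P}_i<\infty$ so every integral is well defined. For the right inequality $\norm{\cdot}_{\rm W}\le\int|F_1-F_2|\,\der x$, I would exhibit a single admissible coupling, namely the monotone (quantile) coupling $(F_1^{-1}(U),F_2^{-1}(U))$ with $U\sim\mathrm{Unif}[0,1]$, whose transport cost is $\int_0^1|F_1^{-1}(u)-F_2^{-1}(u)|\,\der u$, and then use the elementary area identity $\int_0^1|F_1^{-1}-F_2^{-1}|\,\der u=\int_\R|F_1-F_2|\,\der x$. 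Since this is one particular coupling, its cost upper-bounds the infimum.

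The remaining and crux inequality is the lower bound $\int_\R|F_1-F_2|\,\der x\le D$, which I would prove by an explicit dual witness. Let $G\triangleq F_1-F_2$ and define the $1$-Lipschitz function $f$ through $f'(x)=-\operatorname{sgn}(G(x))$ (with any value in $[-1,1]$ where $G=0$), e.g.\ $f(x)=-\int_0^x\operatorname{sgn}(G(t))\,\der t$. Writing $\mu\triangleq\mathbb{P}_1-\mathbb{P}_2$ so that $\der\mu=\der G$ in the Stieltjes sense, integration by parts gives $\int f\,\der\mu=[fG]_{-\infty}^{\infty}-\int G f'\,\der x=\int|G(x)|\,\der x$, where the boundary term vanishes because finite first moments force $x\,G(x)\to0$ as $x\to\pm\infty$ while $f$ grows at most linearly. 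Hence this particular $f$ attains $\int f\,\der\mathbb{P}_1-\int f\,\der\mathbb{P}_2=\int_\R|F_1-F_2|\,\der x$, so $D$ is at least this value, closing the sandwich.

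The main obstacle is the analytic bookkeeping in this last step: justifying the Stieltjes integration by parts and the vanishing of the boundary term rigorously (controlling $f(x)G(x)$ at $\pm\infty$ and confirming $\int_\R|G|\,\der x<\infty$), which is precisely where the finite-first-moment assumption is consumed. A secondary technical point is the area identity $\int_0^1|F_1^{-1}-F_2^{-1}|\,\der u=\int_\R|F_1-F_2|\,\der x$ for possibly discontinuous or atomic CDFs, which I would handle via Fubini applied to the indicator of the region lying between the two graphs (reflecting across the diagonal interchanges the roles of the CDFs and their quantile functions). An alternative route, avoiding the explicit construction, is to invoke the general Kantorovich duality for the cost $c(x,y)=|x-y|$ and reduce the optimal dual potentials to a pair $(\phi,-\phi)$ with $\phi$ being $1$-Lipschitz via the $c$-concavity ($c$-transform) structure of a metric cost; the obstacle then migrates to establishing that reduction.
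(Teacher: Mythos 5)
Your proof is correct, but note that the paper itself contains no proof of this statement: it is imported as a black-box citation to Villani's optimal-transport monograph, where it follows from general Kantorovich duality for metric costs on Polish spaces together with the $c$-transform reduction of the dual potentials to a single $1$-Lipschitz function (the alternative route you mention at the end). Your sandwich $\int_{\R}|F_1-F_2|\,\der x \le D \le \norm{\mathbb{P}_1-\mathbb{P}_2}_{\rm W} \le \int_{\R}|F_1-F_2|\,\der x$ is therefore a genuinely different, self-contained one-dimensional argument, and all three legs are sound: the coupling bound for the middle inequality is immediate (with integrability of $f$ supplied by $|f(x)|\le|f(0)|+|x|$ and finite first moments); the quantile coupling plus the area identity (Fubini applied to the region between the two graphs, which is valid for atomic or discontinuous CDFs using generalized inverses) gives the right leg; and your explicit witness $f(x)=-\int_0^x \mathrm{sgn}(G(t))\,\der t$ with $G=F_1-F_2$ attains $\int_{\R}|G|\,\der x$ via Stieltjes integration by parts — here the absolute continuity of $f$ means no jump-correction terms arise even when $G$ has atoms, the boundary term vanishes because $|f(x)G(x)|\le |x|\,|G(x)|\to 0$ at $\pm\infty$, and $\int_{\R}|G|\,\der x<\infty$ holds by comparing with the tail integrals $\int F_i$ and $\int(1-F_i)$; as you correctly observe, this is precisely where the finite-first-moment hypothesis is consumed. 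As for what each approach buys: the paper's citation buys generality and brevity, which suffices since the paper only ever invokes the duality on the real line; your route buys more than self-containedness, because as a by-product it establishes the closed-form identity $\norm{F_1-F_2}_{\rm W}=\int_{\R}|F_1(s)-F_2(s)|\,\der s=\int_0^1|F_1^{-1}(\beta)-F_2^{-1}(\beta)|\,\der\beta$, i.e., exactly Lemma~\ref{lemma:lipschitz-wasserstein}, which the paper separately imports from~\cite{prashanth2022wasserstein}, and it additionally exhibits an explicit maximizer, showing the dual supremum is attained. In effect, one argument of yours replaces two of the paper's black-box citations.
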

Based on the characterization of the $1$-Wasserstein distance in Theorem~\ref{theorem:Wasserstein_dual}, we next provide an upper bound on $W$ for sub-Gaussian distributions. Let $\balpha,\bbeta\in\Delta^{K-1}$ denote two distinct probability mass functions on $[K]$. We have
\begin{align}
    \label{eq:W_1}
   \Big\|\sum_{i\in[K]} \Big(\alpha(i) - \beta(i)\Big) \mathbb{F}_i \Big\|_{\rm W} \;&=\;\sup \limits_{\lVert  f \rVert_L \leq 1} \sum_{i\in[K]} \Big(\alpha(i) - \beta(i)\Big) \E_{\mathbb{F}_i}\big[f(X)\big] \\
   \label{eq:W_2}
   \hfill & \leq \;\sup \limits_{\lVert  f \rVert_L \leq 1} \bigg\lvert \sum_{i\in[K]} \Big(\alpha(i) - \beta(i)\Big) \Big(\E_{\mathbb{F}_i}\big[f(X) - f(0)\big]  + f(0)\Big) \bigg\rvert \\ 
   \label{eq:W_3}
    \hfill & =\; \sup \limits_{\lVert  f \rVert_L \leq 1} \bigg\lvert \sum_{i\in[K]} \Big(\alpha(i) - \beta(i)\Big) \Big(\E_{\mathbb{F}_i}\big[f(X) - f(0)\big] \Big) \bigg\rvert \\ 
    \label{eq:W_4}
   \hfill & \leq\; \sup \limits_{\lVert  f \rVert_L \leq 1}
    \sum_{i\in[K]} \bigg\lvert  \Big(\alpha(i) - \beta(i)\Big) \E_{\mathbb{F}_i}\big[f(X)-f(0)\big] \bigg\rvert\\
    \label{eq:W_5}
    \hfill & \leq\; \sum_{i\in[K]} \Big\lvert \alpha(i) - \beta(i) \Big\rvert \cdot\sup \limits_{\lVert  f \rVert_L \leq 1} \Big\lvert \E_{\mathbb{F}_i}[f(X)-f(0)] \Big\rvert \\
    \label{eq:W_7}
    \hfill & \leq\; \sum_{i\in[K]} \Big\lvert \alpha(i) - \beta(i)\Big\rvert\cdot     \E_{\mathbb{F}_i}\big[|X|\big] \ ,
\end{align}
where,~the equality in \eqref{eq:W_1}~follows from Theorem~\ref{theorem:Wasserstein_dual},~the transition~\eqref{eq:W_1}-\eqref{eq:W_2} holds since we take the absolute value,~the transition~\eqref{eq:W_2}-\eqref{eq:W_3} follows from the fact that $\sum_{i\in[K]}\alpha(i) = \sum_{i\in[K]}\beta(i) = 1$,~the transition~\eqref{eq:W_3}-\eqref{eq:W_4} follows from triangle inequality,~the transition~\eqref{eq:W_4}-\eqref{eq:W_5} follows from the fact that for any two functions $f_1$ and $f_2$, we have $\sup_x \{f_1(x) + f_2(x) \}\leq \sup_x f_1(x) + \sup_x f_2(x)$, and,~the transition~\eqref{eq:W_5}-\eqref{eq:W_7} follows from $1$-Lipschitzness of $f$. For sub-Gaussian variables, $\E[|X|]$ is bounded in terms of the sub-Gaussian parameter. Since all distributions are $1$-sub- Gaussian, we have  
\begin{align}
    E_{\mathbb{F}_i}[|X|] &= \int_{0}^{+\infty} \P(|x| > u)\der u \;\leq\; \int_{0}^{+\infty} 2\e^{-\frac{t^2}{2}} \der t \;=\; \int_{-\infty}^{+\infty} \e^{-\frac{t^2}{2}} \der t \;=\; \sqrt{2\pi} \underbrace{\int_{-\infty}^{+\infty} \frac{1}{\sqrt{2\pi}} \e^{-\frac{t^2}{2}} \der t}_{= 1} \;=\;  \sqrt{2\pi}\ ,
\end{align}
 which implies that
\begin{align}
\label{eq:W_8}
     \Big\| \sum_{i\in[K]} \Big(\alpha(i) - \beta(i)\Big) \mathbb{F}_i \Big\|_{\rm W}\;\stackrel{\eqref{eq:W_7}}{\leq}\;\sqrt{2\pi}\cdot\norm{\balpha-\bbeta}_1\ .
\end{align}
Hence, from~\eqref{eq:W_8}, we find
\begin{align}
    W\;=\; \max \limits_{\balpha \neq \bbeta \in \Delta^K} \frac{1}{\lVert \balpha - \bbeta \rVert_1}\Big\|\sum_i \alpha_i \F_i-\sum_j \beta_j \F_j\Big\|_{\rm W}\;\leq\;\sqrt{2\pi}\ .
\end{align}

\section{Convex and Concave Preference Metrics}
\label{Appendix:convex_concave}

\subsection{Proof of Theorem~\ref{lemma:convex_solitary} (Convex Distortion Functions)}
\label{proof:lemma:convex_solitary}
For an arm $i \in [K]$, we have
\begin{align}
\label{eq:convex_def}
    U_h\left( \F_i\right) &= \int_{-\infty}^{\infty} \left(h\left(1- \F_i(x)\right) - h(1)u(-x) \right) \der x\ ,
\end{align}
where $u(\cdot)$ denotes the unit step function. Let us consider a convex combination of arm CDFs, $\sum_{i \in [K]} \alpha(i) \F_i$ where $\alpha(i)\in[0,1]$ and $\sum_{i \in [K]} \alpha(i) = 1$. For this mixture of CDFs, we have
    \begin{align}
        U_h\left(\sum_{i \in [K]} \alpha(i) \F_i \right) &= \int_{-\infty}^{\infty} \Big(h\Big(1-\sum_{i \in [K]} \alpha(i) \F_i(x)\Big) - h(1)u(-x) \Big) \der x \\
        &= \int_{-\infty}^{\infty} \Big(h\Big(\sum_{i \in [K]} \alpha(i) (1-\F_i(x))\Big) - h(1)u(-x) \Big) \der x \\
        \label{eq:convex_h_convex}
        &\leq \int_{-\infty}^{\infty} \sum_{i \in [K]} \alpha(i)\Big(h\Big( (1-\F_i(x))\Big) - h(1)u(-x) \Big) \der x \\
        \label{eq:convex_sum_int_change}
        &\leq \sum_{i \in [K]} \alpha(i) \int_{-\infty}^{\infty} \Big(h\Big( (1-\F_i(x))\Big) - h(1)u(-x) \Big) \der x \\
        \label{eq:convex_max}
        &\stackrel{\eqref{eq:convex_def}}{\leq} \max_{i \in [K]}  \int_{-\infty}^{\infty} \Big(h\Big( (1-\F_i(x))\Big) - h(1)u(-x) \Big) \der x \\
        \label{eq:convex_result}
        &= \max_{i \in [K]}  U_h(\F_i)
    \end{align}
where~\eqref{eq:convex_h_convex} follows from the convexity of the distortion function, and~\eqref{eq:convex_sum_int_change} follows from Fubini-Tonelli's theorem and $\sum_{i}\alpha(i) = 1$. From~\eqref{eq:convex_result}, we can conclude that for a PM with a convex distortion function, the optimal solution is a solitary arm.

\subsection{Proof of Theorem~\ref{lemma:mixture_lemma_exponential} (Concave Distortion Functions)}
\label{proof:lemma:mixture_lemma_exponential}
The proof proceeds in two steps. First, we construct an appropriate bandit instance such that each arm has the same utility for a strictly concave distortion function. Secondly, we will show that the constructed bandit instance always admits mixtures optimality for strictly concave distortion functions. 
\paragraph{Constructing bandit model:} Consider a $K$-arm shifted exponential bandit model. Let each arm have CDF denoted by $\F_i$. For $c_i, \lambda_i >0$ $\forall i \in [K]$, we have
\begin{align}
    \F_i(x) = \begin{cases}
        1-e^{-\lambda_i(x-c_i)}\ , \quad &\text{if} \quad x \geq c_i \\
        0\ , \qquad \qquad \qquad &\text{otherwise} 
    \end{cases} \ .
\end{align}
We start by expressing the utility function in terms of the parameters of distributions for arm $i \in [K]$. Denoting by $u(\cdot)$ the unit step function, we have
\begin{align}
    U_h(\F_i) &= \int_{-\infty}^{\infty} \left(h\left(1- \F_i(x)\right) - h(1)u(-x) \right) \der x \\
    &= \int_{0}^{\infty} h\left(1- \F_i(x)\right) \der x \\
    &= \int_{0}^{c_i} h\left(1\right) \der x + \int_{c_i}^{\infty} h\left(1- \F_i(x)\right) \der x \\
    &= c_i h\left(1\right) + \int_{c_i}^{\infty} h\left(e^{-\lambda_i (x-c_i)}\right) \der x \\
    &= c_i h\left(1\right) + \int_{0}^{\infty} h\left(e^{-\lambda_i x}\right) \der x 
\end{align}
In the case when $h(1)=0$, choosing different $c_i$ for each arm $i$ and same $\lambda_i$ results in $U_h(\F_i)=U_h(\F_j)$, $\forall i, j \in [K]$. In contrast, if $h(1)\neq0$, we can always construct $(c_i,\lambda_i)$ pairs such that $U_h(\F_i)=U_h(\F_j)$, $\forall i, j \in [K]$. Hence, we have designed bandit instances such that for all arms $i \in [K]$, $U_h(\F_i)=U_h(\F_j)$.

\paragraph{Mixtures optimality:} Let us choose $\balpha\in {\rm int}(\Delta^{K-1})$, where ${\rm int}(\mcS)$ denotes the interior of a compact set $\mcS$. We have,

\begin{align}
        U_h\left(\sum_{i \in [K]} \alpha(i)\F_i \right) &= \int_{-\infty}^{\infty} \left(h\left(1-\sum_{i \in [K]} \alpha(i) \F_i(x)\right) - h(1)u(-x) \right) \der x \\
        \label{eq:strict_concave}
        &> \int_{-\infty}^{\infty} \sum_{i \in [K]} \alpha(i) \left(h\left(1- \F_i(x)\right) - h(1)u(-x) \right) \der x \\
        \label{eq:fubini}
        &= \sum_{i \in [K]} \alpha(i) \int_{-\infty}^{\infty} \left(h\left(1- \F_i(x)\right) - h(1)u(-x) \right) \der x \\
        \label{eq:def_uh_f}
        &= \sum_{i \in [K]} \alpha(i) U_h(\F_i) \\
        \label{eq:noone_better_than_mixture}
        &= U_h(\F_i) \quad \forall i \in [K]
 \end{align}
 where~\eqref{eq:strict_concave} follows from {\em strict concavity} of the distortion function, together with the fact that $\sum_{i \in [K]} a(i) \F_i(x) \neq 0$, ~\eqref{eq:fubini} follows from Fubini-Tonelli's theorem,  ~\eqref{eq:def_uh_f} follows from the definition of a PM, and~\eqref{eq:noone_better_than_mixture} follows since $\balpha\in\Delta^{K-1}$. From \eqref{eq:noone_better_than_mixture}, it is observed that there is no solitary arm that is better than the optimal mixture $\balpha$.

\section{Upper Bound on Estimation Regret}
The following lemma upper bounds the estimation regret for all algorithms, PM-ETC-M, PM-UCB-M, CE-UCB-M, and CIRT. It will be used in the analyses of Theorems~\ref{theorem: ETC upper bound}, \ref{theorem:UCB upper bound}, and \ref{theorem:any_regret}.
\label{Appendix:estimation_regret}

\begin{lemma}
\label{lemma:Delta_error}
    The estimation regrets \(\delta_{01}({\varepsilon})\) and \(\delta_{02}({\varepsilon})\) defined in~\eqref{eq:regret_decomposition1_1} are upper bounded as
\begin{equation}
    \delta_{01}(\varepsilon) \leq \mcL  (KW\varepsilon)^q  \quad \textit{and} \quad
    \delta_{02}(\varepsilon) \leq 2\mcL  (KW\varepsilon)^q  \ .
\end{equation}

\end{lemma}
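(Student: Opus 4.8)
The plan is to reduce both inequalities to one geometric fact about the discretization, namely that $\Delta^{K-1}_{\varepsilon}$ is an $\varepsilon$-cover of the continuous simplex in each coordinate, and then to push closeness in the mixing weights through two Lipschitz-type steps: from weights to induced mixture distributions via the constant $W$ in~\eqref{eq:W}, and from mixture distributions to PM values via the \holder~continuity~\eqref{eq:Holder}. Throughout I use that~\eqref{eq:Holder} holds for both orderings of its arguments, so $|U_h(\G_1)-U_h(\G_2)|\leq \mcL\norm{\G_1-\G_2}_{\rm W}^q$, and that every relevant mixture lies in $\Xi$, so \holder~continuity applies.

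First I would establish the rounding step for $\delta_{01}(\varepsilon)$. Given $\balpha^{\star}\in\Delta^{K-1}$, write each coordinate as $\alpha^{\star}(i)=m_i\varepsilon+r_i$ with $r_i\in[0,\varepsilon)$, round the coordinates down, and redistribute the leftover mass (an integer multiple of $\varepsilon$) by rounding up the coordinates with the largest residuals, producing a feasible grid point $\hat\ba\in\Delta^{K-1}_{\varepsilon}$. Each coordinate then moves by strictly less than $\varepsilon$, so $\norm{\balpha^{\star}-\hat\ba}_1\leq K\varepsilon$. Applying the definition of $W$ gives $\norm{\sum_i\alpha^{\star}(i)\F_i-\sum_i\hat a(i)\F_i}_{\rm W}\leq WK\varepsilon$, and then~\eqref{eq:Holder} yields $V(\balpha^{\star},\F)-V(\hat\ba,\F)\leq \mcL(WK\varepsilon)^q$. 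Since $\hat\ba\in\Delta^{K-1}_{\varepsilon}$ and $\ba^{(1)}$ maximizes $V(\cdot,\F)$ over this set, we have $V(\ba^{(1)},\F)\geq V(\hat\ba,\F)$, whence $\delta_{01}(\varepsilon)=V(\balpha^{\star},\F)-V(\ba^{(1)},\F)\leq \mcL(KW\varepsilon)^q$.

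For $\delta_{02}(\varepsilon)$ the idea is to exhibit a grid point guaranteed to differ from $\ba^{(1)}$ while remaining close to $\balpha^{\star}$. I would take $\hat\ba$ from the previous step and form a second grid point $\hat\ba'$ by shifting a single $\varepsilon$-unit of mass from any positive coordinate of $\hat\ba$ to another coordinate; this stays feasible, satisfies $\norm{\hat\ba-\hat\ba'}_1=2\varepsilon$, and hence $\norm{\balpha^{\star}-\hat\ba'}_1\leq(K+2)\varepsilon\leq 2K\varepsilon$ for $K\geq2$. Because $\hat\ba\neq\hat\ba'$, at least one of them, say $\tilde\ba$, differs from $\ba^{(1)}$, so by the definition of the second-best discrete coefficient in~\eqref{eq:discrete optimal mixture} we get $V(\ba^{(2)},\F)\geq V(\tilde\ba,\F)$. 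Since $\tilde\ba$ lies within $L_1$-distance $2K\varepsilon$ of $\balpha^{\star}$, the same $W$-then-\holder~chain gives $V(\balpha^{\star},\F)-V(\tilde\ba,\F)\leq \mcL(2KW\varepsilon)^q=\mcL\,2^q(KW\varepsilon)^q\leq 2\mcL(KW\varepsilon)^q$, where the final inequality uses $q\in(0,1]$. This proves $\delta_{02}(\varepsilon)\leq 2\mcL(KW\varepsilon)^q$.

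The main obstacle is the bookkeeping in the rounding and redistribution: one must verify that the constructed vectors are admissible for whichever discretized simplex is in force -- the grid~\eqref{eq:discrete_set_ETC} for PM-ETC-M and CIRT versus the half-offset grid~\eqref{eq:discrete_set_UCB} for the UCB variants, including the edge case where $1/\varepsilon\notin\N$ and the deficit is absorbed in the last coordinate -- while simultaneously keeping each per-coordinate displacement below $\varepsilon$. The second delicate point is ensuring the auxiliary point $\hat\ba'$ is both feasible and genuinely distinct from $\ba^{(1)}$ (and checking that $K\geq2$, since for $K=1$ there is no second coefficient and $\delta_{02}$ is vacuous); handling this cleanly is where the argument needs the most care, whereas the $W$-bound and \holder~steps are immediate consequences of the earlier results.
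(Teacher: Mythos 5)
Your proposal is correct, and the treatment of $\delta_{01}(\varepsilon)$ coincides with the paper's: round $\balpha^{\star}$ to a nearest grid point, bound the $\ell_1$ displacement by $K\varepsilon$, pass through $W$ and then \holder~continuity, and use optimality of $\ba^{(1)}$ over $\Delta^{K-1}_{\varepsilon}$. For $\delta_{02}(\varepsilon)$ you take a genuinely different route. The paper decomposes $\delta_{02}(\varepsilon)=\delta_{01}(\varepsilon)+\delta_{12}(\varepsilon)$ and bounds the second term by comparing $\ba^{(1)}$ to an adjacent grid point $\ba^{\ast}\neq\ba^{(1)}$ lying within $\varepsilon$ per coordinate, which gives $\delta_{12}(\varepsilon)\leq\mcL(KW\varepsilon)^q$ and hence the factor $2$ by simple addition. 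You instead exhibit a grid point $\tilde\ba\neq\ba^{(1)}$ within $\ell_1$-distance $(K+2)\varepsilon\leq 2K\varepsilon$ of $\balpha^{\star}$ and absorb the factor via $2^q\leq 2$. Both arguments are valid and deliver the same constants; yours is arguably more direct for the stated inequality, while the paper's decomposition has the side benefit of producing the standalone bound $\delta_{12}(\varepsilon)\leq\mcL(KW\varepsilon)^q$, which is the quantity actually invoked later (e.g., in the discussion of how $\delta_{12}(\varepsilon)$ scales with $\varepsilon$ and in the $\beta$ analysis), so the intermediate term is not incidental. Your flagged concerns about grid admissibility under \eqref{eq:discrete_set_ETC} versus \eqref{eq:discrete_set_UCB} and the $K\geq 2$ edge case are legitimate bookkeeping points, but the paper's own proof handles them at the same level of informality, so neither constitutes a gap.
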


\begin{proof}
We start by decomposing the estimation regret into two parts as follows.
\begin{align}
    \delta_{02}(\varepsilon) &= \E_{\bnu}^{\pi} \left[U_h\left(\sum_{i\in[K]} \alpha^{\star}(i)\F_i\right) - U_h\left(\sum_{i\in[K]} a^{(2)}(i)\F_i\right)\right] \\
    &= U_h\left(\sum_{i\in[K]} \alpha^{\star}(i)\F_i\right) - U_h\left(\sum_{i\in[K]} a^{(2)}(i)\F_i\right) \\
    &= \underbrace{U_h\left(\sum_{i\in[K]} \alpha^{\star}(i)\F_i\right) - U_h\left(\sum_{i\in[K]} a^{(1)}(i)\F_i\right)}_{\triangleq\;\delta_{01}(\varepsilon)} \\
    &+ \underbrace{U_h\left(\sum_{i\in[K]} a^{(1)}(i)\F_i\right) - U_h\left(\sum_{i\in[K]} a^{(2)}(i)\F_i\right)}_{\triangleq\;\delta_{12}(\varepsilon)}\ .
\end{align}
To bound the first term $\delta_{01}(\varepsilon)$, let us define $\bar\ba$ as the discrete mixing coefficient that has the least $\ell_1$ distance to the optimal coefficient $\balpha^{\star}$, i.e.,
\begin{align}
    \bar\ba\;\in\; \argmin\limits_{\ba\in\Delta_{\varepsilon}^{K-1}} \norm{\balpha^{\star} - \ba}_1\ . 
\end{align}
Accordingly, we have
\begin{align}
    \delta_{01}(\varepsilon)& =  U_h\left(\sum_{i\in[K]} \alpha^{\star} (i)\F_i\right) - U_h\left(\sum_{i\in[K]} a^{(1)}(i)\F_i\right)  \\
    & \leq  U_h\left(\sum_{i\in[K]} \alpha^{\star} (i)\F_i\right) - U_h\left(\sum_{i\in[K]} \bar a(i)\F_i\right) \\
    \label{eq:delta_error_43}
    & \leq \mcL\left\|\sum_{i\in[K]}  \Big(\alpha^{\star}(i)- \bar a(i)\Big)  \F_i\right\|_{\rm W}^q \\
    \label{eq:delta_error_5}
    & \leq \mcL  \norm{\balpha^{\star}- \bar\ba}_1^q W^q \\
    & \leq \mcL K^{q} \varepsilon^q  W^q  ,
    \label{eq:Delta_error_2}
\end{align}
where,~\eqref{eq:delta_error_43} follows from Definition~\ref{assumption:Holder},~\eqref{eq:delta_error_5} follows from the definition of $W$ in~\eqref{eq:W}, and,~\eqref{eq:Delta_error_2} follows from the fact that $\bar\balpha$ may lie at most $\frac{\varepsilon}{2}$ away from the optimal coefficient $\balpha^{\star}$ along each coordinate. We use a similar approach to bound the second term $\delta_{12}(\varepsilon)$. Let us define $\ba^\ast \in \Delta^{K-1}_{\varepsilon}$ as the discrete mixing coefficient that is different from $\ba^{(1)}$ by at most $\varepsilon$ in any coordinate, such that $U_h\big(\sum_{i\in[K]} a^\ast (i)\F_i\big) < U_h\big(\sum_{i\in[K]} a^{(1)} (i)\F_i\big)$ (by definition). Specifically, we chose $\ba^\ast$ such that each coordinate $i\in[K]$ satisfies 
\begin{align}
\label{eq:a_ast}
    |a^{(1)}(i) - a^\ast(i)| \leq \varepsilon\ ,
\end{align}
and $\ba^{\ast}\neq \ba^{(1)}$. Accordingly, we have 
\begin{align}
    \delta_{12}(\varepsilon) &= U_h\left(\sum_{i\in[K]} a^{(1)}(i)\F_i\right) - U_h\left(\sum_{i\in[K]} a^{(2)}(i)\F_i\right) \\
    &\leq U_h\left(\sum_{i\in[K]} a^{(1)}(i)\F_i\right) - U_h\left(\sum_{i\in[K]} a^\ast(i)\F_i\right) \\ 
    \label{eq:delta_2_error_holder}
    &\leq \mcL \norm{\sum_{i \in [K]} (a^{(1)}(i) - a^\ast(i)) \F_i}^q_{\rm W} \\
    \label{eq:delta_2_W}
    &\leq \mcL \norm{\ba^{(1)} - \ba^\ast)}^q_{1}W^q \\
     \label{eq:delta_2_varepsilon}
    &\leq \mcL K^q \varepsilon^q W^q\ ,
\end{align}
where,~\eqref{eq:delta_2_error_holder} follows from Definition~\ref{assumption:Holder},~\eqref{eq:delta_2_W} follows from the definition of W in~\eqref{eq:W}, and~\eqref{eq:delta_2_varepsilon} follows from the fact that $\ba^\ast$ may lie at most $\varepsilon$ away from the optimal coefficient $\ba^{(1)}$ along each coordinate. Therefore, we have
\begin{align}
    \delta_{02}(\varepsilon) &= \delta_{01}(\varepsilon) + \delta_{12}(\varepsilon) \;\leq\; \mcL K^q\varepsilon^q  W^q + \mcL K^q \varepsilon^q W^q \;\leq\; 2\mcL (KW)^q \varepsilon^q  \ .
\end{align}

\end{proof}

\section{\holder~Exponents and Constants}
\label{Appendix:Holder_exp}

In this section, we will show the \holder~exponent and constant values for some PM functions. For the analyses of \holder~exponents and constants, we leverage the following lemma, the proof of which can be found in~\cite[Lemma 2]{prashanth2022wasserstein}.
\begin{lemma}
	\label{lemma:lipschitz-wasserstein}
	Consider random variables $X$ and $Y$ with CDFs $F_{X}$ and $F_{Y}$, respectively. Then,
	\begin{equation}
		\norm{F_X-F_Y}_{\rm{W}}=\sup_{\|f\|_L\leq 1} \Big|\E(f(X) - \E(f(Y))\Big|= \int_{-\infty}^{\infty}|F_{X}(s)-F_{Y}(s)|\mathrm{d}s=\int_{0}^{1}|F_{X}^{-1}(\beta)-F_{Y}^{-1}(\beta)|\mathrm{d}\beta \ . \label{lipwass2} 
	\end{equation}
\end{lemma}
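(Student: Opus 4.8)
The plan is to prove the chain of three equalities in order, taking as the starting point the Kantorovich--Rubinstein duality already recorded in Theorem~\ref{theorem:Wasserstein_dual}. The first equality is essentially free: specializing Theorem~\ref{theorem:Wasserstein_dual} to the laws of $X$ and $Y$ gives $\norm{F_X-F_Y}_{\rm W}=\sup_{\|f\|_L\leq 1}\{\E[f(X)]-\E[f(Y)]\}$, and because the class $\{f:\|f\|_L\leq 1\}$ is symmetric under $f\mapsto -f$, the supremum of the signed quantity coincides with the supremum of its absolute value. This yields $\norm{F_X-F_Y}_{\rm W}=\sup_{\|f\|_L\leq 1}|\E[f(X)]-\E[f(Y)]|$ with no further work.

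For the second equality I would pass from the dual form to the CDF integral by Stieltjes integration by parts. Fix a $1$-Lipschitz $f$; since $X$ and $Y$ are probability measures, $\E[f(X)]-\E[f(Y)]$ is unchanged by adding a constant to $f$, so I may assume $f(0)=0$ and write $f(s)=\int_0^s f'(t)\,\der t$ with $|f'|\leq 1$ almost everywhere. Setting $G(s)\triangleq F_X(s)-F_Y(s)$, which satisfies $G(\pm\infty)=0$, integration by parts gives
\[
\E[f(X)]-\E[f(Y)] \;=\; \int_{\mathbb{R}} f\,\der G \;=\; -\int_{\mathbb{R}} G(s)\,f'(s)\,\der s\ ,
\]
the boundary terms $f(s)G(s)$ vanishing at $\pm\infty$ thanks to the tail decay of $G$ (guaranteed here by the sub-Gaussian/finite-first-moment assumption). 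Taking the supremum over $|f'|\leq 1$ and using the pointwise-optimal admissible choice $f'(s)=-\operatorname{sgn} G(s)$ produces $\int_{\mathbb{R}}|F_X(s)-F_Y(s)|\,\der s$, which is the second equality.

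For the third equality I would run a Tonelli argument built on the Galois connection between a CDF and its generalized inverse, $F^{-1}(\beta)\leq s \iff \beta\leq F(s)$. Consider the planar region
\[
A \;\triangleq\; \{(s,\beta)\in\mathbb{R}^2 : \beta \text{ lies strictly between } F_X(s) \text{ and } F_Y(s)\}\ .
\]
Its vertical slice at abscissa $s$ is an interval of length $|F_X(s)-F_Y(s)|$, so $\operatorname{meas}(A)=\int_{\mathbb{R}}|F_X(s)-F_Y(s)|\,\der s$. On the other hand, membership $(s,\beta)\in A$ means exactly one of $F_X(s)>\beta$, $F_Y(s)>\beta$ holds, and the Galois connection rewrites this as exactly one of $s<F_X^{-1}(\beta)$, $s<F_Y^{-1}(\beta)$; hence the horizontal slice at height $\beta$ is the interval between $F_X^{-1}(\beta)$ and $F_Y^{-1}(\beta)$, of length $|F_X^{-1}(\beta)-F_Y^{-1}(\beta)|$, and it is empty unless $\beta\in(0,1)$. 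Since $\mathds{1}_A\geq 0$, Tonelli's theorem lets me compute $\operatorname{meas}(A)$ in either order, giving $\int_{\mathbb{R}}|F_X(s)-F_Y(s)|\,\der s=\int_0^1 |F_X^{-1}(\beta)-F_Y^{-1}(\beta)|\,\der\beta$.

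The main obstacle is measure-theoretic bookkeeping rather than conceptual difficulty: justifying that the boundary terms in the integration by parts genuinely vanish (controlling $f(s)G(s)$ at infinity via the decay of $G$), checking that the sign-optimal integrand $f'=-\operatorname{sgn}G$ corresponds to an admissible $1$-Lipschitz $f$ (or is approximable by such), and handling the generalized inverse carefully when the CDFs have flat stretches or jumps so that the Galois connection and the identification of the vertical and horizontal slices of $A$ hold up to Lebesgue-null sets. Once these technicalities are settled, the three equalities chain together to give the stated identity.
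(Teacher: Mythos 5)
Your proof is correct. Note that the paper does not actually prove this lemma itself --- it simply cites \cite[Lemma~2]{prashanth2022wasserstein} --- so there is no internal argument to compare against; what you have written is a sound, self-contained derivation along the standard lines. The three steps all check out: (i) the reduction of the absolute-value supremum to the signed Kantorovich--Rubinstein supremum via the symmetry $f\mapsto -f$ is immediate; (ii) the integration-by-parts step is valid because, with $f(0)=0$ and $|f(s)|\leq|s|$, the finite-first-moment assumption gives $|s|\,|F_X(s)-F_Y(s)|\to 0$ at $\pm\infty$ (e.g. $s(1-F(s))\leq \E[|X|\mathds{1}\{X>s\}]\to 0$), and the extremizer $f(s)=-\int_0^s \operatorname{sgn}(F_X(t)-F_Y(t))\,\der t$ is genuinely admissible ($1$-Lipschitz), so the supremum is attained rather than merely approached; one should also note that $\int|F_X-F_Y|\leq \E|X|+\E|Y|<\infty$ so everything is finite; and (iii) the Tonelli argument on the region between the graphs of the two CDFs is the standard way to obtain the quantile form, and your observation that the strict/non-strict discrepancies in the Galois connection $F^{-1}(\beta)\leq s \Leftrightarrow \beta\leq F(s)$ live on product-null sets is exactly the right bookkeeping. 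The only cosmetic remark is that the Fubini route (writing $f(s)=\int_0^s f'$ and exchanging the order of integration directly) dispenses with boundary terms altogether and is marginally cleaner than Stieltjes integration by parts, but your version is complete as argued.
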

The results provided in this section are summarized in Table~\ref{table:table_risks}. In order to unify some lemmas for presenting the \holder~constants and exponents, we define the following two categories of distortion functions.

\begin{definition}[Category $1$ PMs]
    We define the category $1$ of PMs as the one induced by a distortion function of the form $h(u)=C_1u^m - C_2u^{2m}$, where $C_1 \geq C_2 > 0$ and $m \in (0, 1]$.
\end{definition}
The PMs under Category $1$ include Gini deviation and Wang's Right-tail deviation.
\begin{definition}[Category $2$ PMs]
    We define the category $2$ of PMs as the one induced by a distortion function of the form $h(u)= C\min \{u, (1-u) \}$, where $C\in\R_+$.
\end{definition}
Some examples of Category $2$ PMs include mean-median deviation and the inter-ES range ($c=0.5$) (${\rm IER}_{0.5}$). 

\setlength{\textfloatsep}{5pt}
\begin{table}[h]
  \captionsetup{position=top}
  \caption{\holder~continuity parameters, distortion functions, and the supporting lemmas for the PMs reported in Table~\ref{table:table_regrets}.}
  \label{table:table_risks}
  \centering
  \begin{tabular}{@{}|l|c|c|c|@{}}
    \hline
    \textbf{Preference Metric}  
      & \textbf{$q$} 
      & \textbf{$\mathcal{L}$}
      & \textbf{Lemma} \\ 
    \hline\hline 
    Mean Value                      
      & $1$ 
      & $1$ 
      & - \\[0.5ex]
    Dual Power  \cite{vijayan2021policy}                  
      & $1$  
      & $s$ 
      & Lemma \ref{lemma:Dual_Power_general}\\[0.5ex]
    Quadratic    \cite{vijayan2021policy}                  
      & $1$  
      & $1+s$ 
      & Lemma \ref{lemma:Quadratic_general}\\[0.5ex]
    CVaR\footnote{\cite{prashanth2022wasserstein} reports $q=1$ for sub‐Gaussian distributions.}   \cite{dowd2006}  
      & $1$
      & $\tfrac{1}{1-c}$ 
      &  - \\[0.5ex]
    PHT Measure\protect\footnote{Under the assumption that arm distributions are bounded.} (\(s \in (0,1)\)) \cite{jones03}                   
      & $s$  
      & $\tau^{1-s}$ 
      & Lemma \ref{lemma:PHT}\\[0.5ex]
    \rowcolor{gray!30}
    Mean–median deviation  \cite{jones03,Wang2020}    
      & $1$   
      & $1$ 
      & Lemma \ref{lemma:mean_median_Holder_general}\\[0.5ex]
    \rowcolor{gray!30}
    Inter–ES Range \(\mathrm{IER}_{c=\tfrac12}\) \cite{Wang2020}
      & $1$ 
      & $2$ 
      & Lemma \ref{lemma:mean_median_Holder_general}\\[0.5ex]
    \rowcolor{gray!30}
    Wang’s Right–Tail Deviation\protect\footnote{Under the assumption that arm distributions are bounded between [V, Z].}  \cite{jones03}     
      & $\tfrac12$  
      & $\sqrt{Z-V}$ 
      & Lemma \ref{lemma:example_utility_Holder_general}\\[0.5ex]
    \rowcolor{gray!30}
    Gini Deviation         
      & $1$ 
      & $1$ 
      & Lemma \ref{lemma:example_utility_Holder_general}\\[0.5ex]
    \hline
  \end{tabular}
\end{table}
\setlength{\textfloatsep}{5pt}

\begin{lemma}[\holder~Constant for a Category $1$ PM]
\label{lemma:example_utility_Holder_general}
For any Category $1$ PM with the distortion function $h(u)=C_1u^m - C_2u^{2m}$, we have the following properties.
\begin{enumerate}
\item For sub-Gaussian distributions, if $m=1$, we have $q=1$ and $\mcL = C_1$.
\item For bounded distributions with support $\Omega=[V, Z]$, if $m<1$, we have $q=m$ and $\mcL = C_1(Z-V)^{1-m}$.
\end{enumerate}
\end{lemma}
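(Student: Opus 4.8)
The plan is to reduce the \holder\ estimate on the functional $U_h$ to a pointwise \holder\ estimate on the scalar distortion $h$, and then to convert the pointwise bound into a Wasserstein bound via Lemma~\ref{lemma:lipschitz-wasserstein}. First I would note that, since the constant $h(1)$ terms in~\eqref{eq:wang_def} cancel upon subtraction, for any $\G_1,\G_2\in\Xi$,
\begin{align}
U_h(\G_1)-U_h(\G_2)=\int_{-\infty}^{\infty}\Big(h\big(1-\G_1(x)\big)-h\big(1-\G_2(x)\big)\Big)\,\der x\ .
\end{align}
Thus everything hinges on controlling $|h(u)-h(v)|$ for $u,v\in[0,1]$ and then integrating.

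The key structural observation is to write $h(u)=g(u^m)$, where $g(w)\triangleq C_1 w-C_2 w^2$. I would first show that $g$ is $C_1$-Lipschitz on $[0,1]$: since $g'(w)=C_1-2C_2 w$ and $0<C_2\le C_1$, one checks that $|g'(w)|\le\max\{C_1,\,2C_2-C_1\}\le C_1$ for $w\in[0,1]$. Combining this with the elementary subadditivity bound $|u^m-v^m|\le|u-v|^m$, valid for $u,v\ge 0$ and $m\in(0,1]$, yields the sharp pointwise estimate
\begin{align}
|h(u)-h(v)|=|g(u^m)-g(v^m)|\le C_1|u^m-v^m|\le C_1|u-v|^m\ ,\qquad u,v\in[0,1]\ .
\end{align}
This is the step that delivers the constant $C_1$ rather than a looser $C_1+2C_2$, and I expect this composition trick to be the crux of obtaining the claimed tightness of $\mcL$.

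For the first claim ($m=1$, sub-Gaussian), the bound above is Lipschitz, so substituting $u=1-\G_1(x)$ and $v=1-\G_2(x)$ and integrating gives
\begin{align}
|U_h(\G_1)-U_h(\G_2)|\le C_1\int_{-\infty}^{\infty}|\G_1(x)-\G_2(x)|\,\der x=C_1\norm{\G_1-\G_2}_{\rm W}\ ,
\end{align}
where the final equality is Lemma~\ref{lemma:lipschitz-wasserstein}; the right-hand side is finite for sub-Gaussian $\G_1,\G_2$, yielding $q=1$ and $\mcL=C_1$.

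For the second claim ($m<1$, bounded support $\Omega=[V,Z]$), the integrand vanishes for $x<V$ (where both CDFs equal $0$) and for $x>Z$ (where both equal $1$), so the integral reduces to $\int_V^Z$. Applying the $m$-\holder\ pointwise bound and then \holder's inequality with conjugate exponents $1/m$ and $1/(1-m)$ on the finite interval $[V,Z]$,
\begin{align}
\int_V^Z|\G_1-\G_2|^m\,\der x\le (Z-V)^{1-m}\Big(\int_V^Z|\G_1-\G_2|\,\der x\Big)^m=(Z-V)^{1-m}\norm{\G_1-\G_2}_{\rm W}^m\ ,
\end{align}
which gives $|U_h(\G_1)-U_h(\G_2)|\le C_1(Z-V)^{1-m}\norm{\G_1-\G_2}_{\rm W}^m$, i.e.\ $q=m$ and $\mcL=C_1(Z-V)^{1-m}$. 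The main obstacle here is the essential use of bounded support: the power-integral conversion relies on the finite length $Z-V$, and it is precisely this step that fails for sub-Gaussian distributions on all of $\R$, which is why the two regimes of $m$ require genuinely different hypotheses.
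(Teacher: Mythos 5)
Your proof is correct and follows essentially the same route as the paper's: both reduce the claim to the pointwise estimate $|h(u)-h(v)|\le C_1|u-v|^m$ and then integrate, invoking Lemma~\ref{lemma:lipschitz-wasserstein} for $m=1$ and a Jensen/H\"older step on the bounded interval for $m<1$. The only (cosmetic) difference is how the pointwise bound is obtained — the paper factors $C_1(a-b)-C_2(a^2-b^2)=(a-b)\bigl(C_1-C_2(a+b)\bigr)$ with $a=u^m$, $b=v^m$, while you compose with the $C_1$-Lipschitz map $g(w)=C_1w-C_2w^2$; these are the same computation.
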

\begin{proof}
Consider distinct CDFs $\F$ and $\G$. We have
\begin{align}
     U_h\left(\F  \right) &- U_h\left(\G  \right)\; \nonumber\\
     &= \int_{-\infty}^{\infty} \Big(C_1(1-\F(x))^m - C_2(1-\F(x))^{2m} -C_1(1-\G(x))^m + C_2(1-\G(x))^{2m} \Big)\der x \\
     &= \int_{-\infty}^{\infty} \Big(C_1(1-\F(x))^m - (1-\G(x))^m\Big)\left(1-\frac{C_2}{C_1}(1-\F(x))^{m}-\frac{C_2}{C_1}(1-\G(x))^{m}\right) \der x \\ 
     &\leq \int_{-\infty}^{\infty} C_1|(1-\F(x))^m - (1-\G(x))^m)(1-\frac{C_2}{C_1}(1-\F(x))^{m}-\frac{C_2}{C_1}(1-\G(x))^{m})| \der x \\ 
      &\leq \int_{-\infty}^{\infty} C_1 |(1-\F(x))^m - (1-\G(x))^m||1-\frac{C_2}{C_1}(1-\F(x))^{m}-\frac{C_2}{C_1}(1-\G(x))^{m}| \der x \\ 
      \label{eq:general_last_2}
      &\leq \int_{-\infty}^{\infty} C_1|(1-\F(x))^m - (1-\G(x))^m| \der x \\ 
       \label{eq:general_last_3}
      &\leq C_1 \int_{-\infty}^{\infty} |(1-\F(x) - 1+\G(x))|^m \der x \\ 
      \label{eq:general_last_in}
      &\leq C_1 \int_{-\infty}^{\infty} |\F(x) -\G(x)|^m \der x \ ,
\end{align}
where~\eqref{eq:general_last_2} follows from $0 \leq \F(x), \G(x) \leq 1$ and $C_1 \geq C_2$, and~\eqref{eq:general_last_3} follows from the inequality $|x^m-y^m| \leq |x-y|^m$ for $x, y \geq 0$. Furthermore, when $m=1$, we have
\begin{align}
    U_h\left(\F  \right) - U_h\left(\G  \right)\; &\leq C_1 \int_{-\infty}^{\infty} |\F(x) -\G(x)| \der x \;=\; C_1 \norm{\F - \G}_{\rm W} \ ,
\end{align}
which implies that the \holder~constant is $\mathcal{L}_{\rm H} = C_1$ and the exponent is $q=1$. Alternatively, when $m<1$, we have to restrict the distributions to be bounded with support $\Omega=[V, Z]$, in which case we have
\begin{align}
     U_h\left(\F  \right) - U_h\left(\G  \right)\; 
      &\leq C_1 \int_{-\infty}^{\infty} |\F(x) - \G(x)|^m \der x \\ 
      &\leq C_1 \int_{V}^{Z} |\F(x) - \G(x)|^m \der x \\ 
      \label{eq:general_last_4}
      &\leq C_1 \left[ \int_{V}^{Z} |\G(x)-\F(x) | \der x \right]^m\\
      \label{eq:general_last}
      &= C_1 (Z-V)^{1-m} \norm{\F - \G}_{\rm W}^m\ ,
\end{align}
where~\eqref{eq:general_last_4} follows from Jensen's inequality, and,~\eqref{eq:general_last} follows from Lemma \ref{lemma:lipschitz-wasserstein}. Hence, for Category $1$ PMs with $m<1$, the \holder~constant is $\mathcal{L}_{\rm H} = C_1(Z-V)^{1-m} $ and the exponent is $q=m$.
\end{proof}

\begin{lemma}[\holder~Constant for a Category $2$ PM]
\label{lemma:mean_median_Holder_general}
For any Category $2$ PM with the distortion function $h(u)= C\min \{u, (1-u) \}$, we have $q=1$ and $\mcL=C$.
\end{lemma}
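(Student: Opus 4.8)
The plan is to exploit the fact that the distortion function $h(u)=C\min\{u,1-u\}$ is globally $C$-Lipschitz on $[0,1]$, which reduces the claim to the $m=1$ instance already treated for Category~$1$ PMs in Lemma~\ref{lemma:example_utility_Holder_general}. First I would verify the Lipschitz property directly: each of the maps $u\mapsto u$ and $u\mapsto 1-u$ is $1$-Lipschitz, and the pointwise minimum of two $1$-Lipschitz functions is again $1$-Lipschitz; scaling by $C$ then yields $|h(u)-h(v)|\leq C|u-v|$ for all $u,v\in[0,1]$. The only point deserving care is the kink at $u=\tfrac12$ where the two linear branches meet, but a short case analysis (or the min-of-Lipschitz argument) confirms the bound holds there as well.

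Second, for distinct CDFs $\F$ and $\G$ I would express the difference of the PMs as a single integral. Since the constant terms $-h(1)\,u(-x)$ in the integral representation of $U_h$ are identical for $\F$ and $\G$, they cancel upon subtraction (indeed here $h(1)=C\min\{1,0\}=0$), leaving
\begin{align}
    U_h(\F)-U_h(\G)=\int_{-\infty}^{\infty}\big(h(1-\F(x))-h(1-\G(x))\big)\,\der x\ .
\end{align}
Finiteness of these integrals, which justifies the subtraction, follows from Lemma~\ref{lemma:U} together with the sub-Gaussianity assumption.

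Third, I would bound the integrand pointwise using the $C$-Lipschitz estimate from the first step, obtaining
\begin{align}
    U_h(\F)-U_h(\G)\;\leq\; C\int_{-\infty}^{\infty}\big|(1-\F(x))-(1-\G(x))\big|\,\der x \;=\; C\int_{-\infty}^{\infty}|\F(x)-\G(x)|\,\der x\ .
\end{align}
Finally, invoking the Wasserstein identity in Lemma~\ref{lemma:lipschitz-wasserstein}, the last integral equals $\norm{\F-\G}_{\rm W}$, so $U_h(\F)-U_h(\G)\leq C\norm{\F-\G}_{\rm W}$, establishing $q=1$ and $\mcL=C$.

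I do not anticipate a serious obstacle: the argument is essentially a verbatim specialization of the Category~$1$ computation with $m=1$, and the only genuinely new ingredient is the elementary Lipschitz verification for the piecewise-linear $h$. The mildest care point is confirming the Lipschitz bound at the nondifferentiable vertex $u=\tfrac12$, which I would dispatch by treating the cases $u,v\le\tfrac12$, $u,v\ge\tfrac12$, and the mixed case separately.
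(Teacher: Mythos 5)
Your proof is correct, but it takes a genuinely different route from the paper's. The paper proves this lemma by an explicit three-region decomposition of the integral based on the medians $x_\F=\inf\{x:\F(x)\geq\tfrac12\}$ and $x_\G=\inf\{x:\G(x)\geq\tfrac12\}$: on $(-\infty,\min\{x_\F,x_\G\})$ the minimum resolves to $\F(x)-\G(x)$, on $(\max\{x_\F,x_\G\},\infty)$ it resolves to $\G(x)-\F(x)$, and the middle region is handled by a two-case analysis on which median comes first, each case using the sign information $\F(x)\gtrless\tfrac12$ to bound the mixed term by $\pm(\F(x)-\G(x))$. Everything is then collected into $C\int|\F(x)-\G(x)|\,\der x = C\norm{\F-\G}_{\rm W}$. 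You instead observe that $h(u)=C\min\{u,1-u\}$ is globally $C$-Lipschitz (the pointwise minimum of two $1$-Lipschitz maps is $1$-Lipschitz, kink included), which gives the pointwise bound $|h(1-\F(x))-h(1-\G(x))|\leq C|\F(x)-\G(x)|$ in one line and makes the region-by-region bookkeeping unnecessary. Your argument is shorter, avoids the case analysis entirely, and generalizes verbatim to any $C$-Lipschitz distortion function, whereas the paper's argument is specific to this piecewise-linear $h$; both yield exactly the same constants $q=1$ and $\mcL=C$. Your handling of the cancellation of the $h(1)u(-x)$ terms (noting $h(1)=0$ here) and the appeal to Lemma~\ref{lemma:lipschitz-wasserstein} for the Wasserstein identity match the paper's usage, so there is no gap.
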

\begin{proof}
Consider distinct CDFs $\F$ and $\G$. Let us define
\begin{align}
    x_\F = \inf\left\{x : \F(x) \geq \frac{1}{2}\right\}, \qquad\text{and}\qquad 
    x_\G = \inf\left\{x : \G(x) \geq \frac{1}{2}\right\} \ .
\end{align}
We have 
\begin{align}
     U_h\left(\F  \right) - U_h\left(\G  \right)\; &= \int_{-\infty}^{\infty} C\min\{1-\F(x), \F(x)\} - C\min\{1-\G(x), \G(x)\} \der x \\
     &= C\int_{-\infty}^{\min\{x_\F, x_\G \}} \min\{1-\F(x), \F(x)\} - \min\{1-\G(x), \G(x)\} \der x \\
     \nonumber
     &\quad + C\int_{\min\{x_\F, x_\G \}}^{\max\{x_\F, x_\G \}} \min\{1-\F(x), \F(x)\} - \min\{1-\G(x), \G(x)\} \der x \\
     \nonumber
     &\quad + C\int_{\max\{x_\F, x_\G \}}^{\infty}  \min\{1-\F(x), \F(x)\} - \min\{1-\G(x), \G(x)\} \der x \\
     \nonumber
     &= C\int_{-\infty}^{\min\{x_\F, x_\G \}} \F(x) - \G(x) \der x \\
     \nonumber
     &\quad + C\underbrace{\int_{\min\{x_\F, x_\G \}}^{\max\{x_\F, x_\G \}} \min\{1-\F(x), \F(x)\} - \min\{1-\G(x), \G(x)\} \der x}_{\triangleq\;M} \\
     \label{eq:lemma_m_decompost}
     &\quad + C\int_{\max\{x_\F, x_\G \}}^{\infty} \G(x) - \F(x) \der x 
\end{align}
Now let us consider two cases to provide an upper bound on $M$.
\begin{enumerate}
    \item \textbf{Case 1} ($x_\F < x_\G$) : 
    \begin{align}
        M &= \int_{\min\{x_\F, x_\G \}}^{\max\{x_\F, x_\G \}} \min\Big\{1-\F(x), \F(x)\Big\} - \min\Big\{1-\G(x), \G(x)\Big\} \der x \\
        &=\int_{x_\F}^{x_\G} 1-\F(x) - \G(x) \der x \\
        \label{eq:leamm_mm_general_1}
        &\leq \int_{x_\F}^{x_\G} \F(x) - \G(x) \der x\ ,
    \end{align}
    where \eqref{eq:leamm_mm_general_1} follows from $\F(x) \geq \frac{1}{2}$ for all $x \geq x_\F$.
    \item \textbf{Case 2} ($x_\F > x_\G$): In this case, we have
    \begin{align}
        M &= \int_{\min\{x_\F, x_\G \}}^{\max\{x_\F, x_\G \}} \min\Big\{1-\F(x), \F(x)\Big\} - \min\Big\{1-\G(x), \G(x)\Big\} \der x \\
        &=\int_{x_\G}^{x_\G}\Big(  \F(x) + \G(x) -1\Big)\der x \\
        \label{eq:leamm_mm_general_2}
        &\leq \int_{x_\G}^{x_\G} \G(x) - \F(x)\der x \ ,
    \end{align}
    where \eqref{eq:leamm_mm_general_2} follows from $\F(x) \leq \frac{1}{2}$ for all $x \leq x_\F$.  
\end{enumerate}
Finally combining \eqref{eq:lemma_m_decompost}, \eqref{eq:leamm_mm_general_1}, and \eqref{eq:leamm_mm_general_2}, we have
\begin{align}
U_h\left(\F  \right) - U_h\left(\G  \right)\; &= C\int_{-\infty}^{\infty} \min\Big\{1-\F(x), \F(x)\Big\} - \min\Big\{1-\G(x), \G(x)\Big\} \der x \\
    \label{eq:mm_general_last_2}
      &\leq C\int_{-\infty}^{\infty} |\F(x) - \G(x)| \der x \\
      \label{eq:mm_general_last}
      &= C \norm{\F - \G}_{\rm W}\ ,
\end{align}
where \eqref{eq:mm_general_last} follows from Lemma \ref{lemma:lipschitz-wasserstein}. This indicates that $q=1$ and $\mathcal{L}_{\rm H} =C$.
\end{proof}

\begin{lemma}[PHT Measure \holder~Constants]
\label{lemma:PHT}
    Consider distributions supported on $[0,\tau]$ for some $\tau\in\R_+$. For the PHT measure, i.e., $h(u) =  u^s$ for some $s\in (0,1)$, we have $q=s$ and $\mcL = \tau^{1-s}$.
\end{lemma}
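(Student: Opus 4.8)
The plan is to mirror the Category~1 analysis of Lemma~\ref{lemma:example_utility_Holder_general}, combining the subadditivity of $t\mapsto t^s$ on $\R_+$ for $s\in(0,1)$ with Jensen's inequality, and then invoking Lemma~\ref{lemma:lipschitz-wasserstein} to recover the Wasserstein distance. First I would simplify $U_h$ for distributions supported on $[0,\tau]$. Since any such CDF $\F$ satisfies $\F(x)=0$ for $x<0$ and $\F(x)=1$ for $x>\tau$, the defining integral in~\eqref{eq:wang_def} collapses: on $(-\infty,0)$ the integrand $h(1-\F(x))-h(1)=1-1=0$ vanishes because $h(1)=1$, while on $(\tau,\infty)$ we have $h(1-\F(x))=h(0)=0$. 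Hence $U_h(\F)=\int_0^\tau (1-\F(x))^s\,\der x$, and identically for $\G$.

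Second, for two distinct CDFs $\F,\G$ supported on $[0,\tau]$ I would write the difference and bound the integrand by the elementary inequality $|a^s-b^s|\le |a-b|^s$ for $a,b\ge 0$ and $s\in(0,1)$ (the same subadditivity used in~\eqref{eq:general_last_3}):
\[
U_h(\F)-U_h(\G)=\int_0^\tau \big[(1-\F(x))^s-(1-\G(x))^s\big]\,\der x\;\le\;\int_0^\tau |\F(x)-\G(x)|^s\,\der x\ .
\]
Third, I would convert this $L^s$-type integral into a Wasserstein term via Jensen's inequality applied to the concave map $t\mapsto t^s$ under the normalized probability measure $\der x/\tau$ on $[0,\tau]$, giving
\[
\frac{1}{\tau}\int_0^\tau |\F(x)-\G(x)|^s\,\der x\;\le\;\Big(\frac{1}{\tau}\int_0^\tau |\F(x)-\G(x)|\,\der x\Big)^s\ .
\]
Rearranging produces the prefactor $\tau^{1-s}$, and Lemma~\ref{lemma:lipschitz-wasserstein} identifies $\int_0^\tau |\F(x)-\G(x)|\,\der x=\norm{\F-\G}_{\rm W}$ (the integral over $\R$ reduces to $[0,\tau]$ by the support assumption). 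Chaining the three steps yields $U_h(\F)-U_h(\G)\le \tau^{1-s}\norm{\F-\G}_{\rm W}^s$, i.e., $q=s$ and $\mcL=\tau^{1-s}$.

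I do not expect a serious obstacle, since this is a direct specialization of the Category~1 argument; the only points requiring care are that the boundedness hypothesis is genuinely essential — without a finite support $\tau$ the Jensen normalization would diverge and the constant $\tau^{1-s}$ would be meaningless, which is precisely why the lemma restricts to $[0,\tau]$ — and that the simplification of $U_h$ must correctly use the normalization $h(1)=1$ so that the negative-axis contribution cancels exactly. One should also verify the edge exponents are handled consistently (the inequality $|a^s-b^s|\le|a-b|^s$ holds with equality trends only as $s\to 1$), but this does not affect the stated regime $s\in(0,1)$.
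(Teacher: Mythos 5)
Your proposal is correct and follows essentially the same route as the paper's proof of Lemma~\ref{lemma:PHT}: the subadditivity bound $|a^s-b^s|\le|a-b|^s$, Jensen's inequality for the concave map $t\mapsto t^s$ over $[0,\tau]$, and Lemma~\ref{lemma:lipschitz-wasserstein} to identify the Wasserstein distance. If anything, your version is slightly more careful than the paper's displayed chain in making explicit that the $\tau^{1-s}$ factor arises from the normalization $\der x/\tau$ in the Jensen step rather than from the final Wasserstein identification.
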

\begin{proof}
Consider two distinct CDFs $\F$ and $\G$ supported on $[0,\tau]$. We have
\begin{align}
U_h(\F)-U_h(\G)&=  \int_{0}^{\tau}\Big((1-\F(x))^{s}-(1-\G(x))^{s}\mathrm{d}x \nonumber\\
&\leq   \int_{0}^{\tau}|\F(x)-\G(x)|^{s}\Big)\mathrm{d}x \\
&\leq   \left[\int_{0}^{\tau}|\F(x)-\G(x)|\mathrm{d}x\right]^{s} \\
& \leq \tau^{1-s} \norm{\F-\G}_{\rm W}^{s},
\end{align}
where we used Jensen's inequality for the penultimate inequality and Lemma \ref{lemma:lipschitz-wasserstein} for the final inequality. This indicates that $q=s$ and $\mcL=\tau^{1-s}$. 
\end{proof}

\begin{lemma}[Dual Power]
\label{lemma:Dual_Power_general}
Consider two distinct distributions with CDFs $\F$ and $\G$. 
For dual power with the parameter $s \geq 2$, i.e., $h(u)= 1-(1-u)^s$, we have $q=1$ and $\mcL=s$.
\end{lemma}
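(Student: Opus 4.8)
The plan is to follow the same template as the preceding \holder~lemmas (e.g., Lemmas~\ref{lemma:example_utility_Holder_general} and~\ref{lemma:PHT}): evaluate the distorted complement CDF in closed form, reduce $U_h(\F)-U_h(\G)$ to a single integral of a difference of powers of the CDFs, control that integrand pointwise, and finally invoke Lemma~\ref{lemma:lipschitz-wasserstein} to recognize the $1$-Wasserstein distance.

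First I would substitute $u = 1-\F(x)$ into $h(u) = 1-(1-u)^s$ to obtain $h(1-\F(x)) = 1-\F(x)^s$, and likewise $h(1-\G(x)) = 1-\G(x)^s$; in particular $h(1)=1$. Writing $U_h(\F) = \int_{-\infty}^{\infty}\big(h(1-\F(x)) - h(1)u(-x)\big)\,\der x$ as in the earlier proofs, the constant and step-function contributions cancel in the difference, leaving
\[
U_h(\F) - U_h(\G) \;=\; \int_{-\infty}^{\infty}\big(\G(x)^s - \F(x)^s\big)\,\der x \;\leq\; \int_{-\infty}^{\infty}\big|\F(x)^s - \G(x)^s\big|\,\der x \ .
\]

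Next I would establish the elementary pointwise bound $|a^s - b^s| \leq s\,|a-b|$ valid for all $a,b \in [0,1]$ and $s \geq 1$. This follows from the mean value theorem: $a^s - b^s = s\,\xi^{s-1}(a-b)$ for some $\xi$ between $a$ and $b$, and since $\xi \in [0,1]$ and $s-1 \geq 0$ we have $\xi^{s-1} \leq 1$. Applying this with $a = \F(x)$ and $b = \G(x)$ gives $|\F(x)^s - \G(x)^s| \leq s\,|\F(x) - \G(x)|$ for every $x \in \R$.

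Combining these with Lemma~\ref{lemma:lipschitz-wasserstein}, which identifies $\int_{-\infty}^{\infty}|\F(x)-\G(x)|\,\der x = \norm{\F-\G}_{\rm W}$, yields $U_h(\F) - U_h(\G) \leq s\,\norm{\F-\G}_{\rm W}$, so that $q=1$ and $\mcL = s$. I do not expect a serious obstacle here; the only delicate point is verifying that the power bound holds uniformly over the whole real line, which it does precisely because $\F,\G$ take values in $[0,1]$ and $s \geq 1$. Consequently, unlike the PHT and Category~$1$ cases with exponent $m<1$, no boundedness assumption on the support is needed, and I note that the argument in fact only requires $s \geq 1$ rather than the stated $s \geq 2$.
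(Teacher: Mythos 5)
Your proposal is correct and follows essentially the same route as the paper's proof: reduce the difference of preference metrics to $\int(\G(x)^s-\F(x)^s)\,\der x$, apply the mean value theorem with the observation that the intermediate point lies in $[0,1]$ so its $(s-1)$-th power is at most $1$, and invoke Lemma~\ref{lemma:lipschitz-wasserstein}. Your remark that the argument only needs $s\geq 1$ is accurate, since the paper's bound $(M(x))^{s-1}\leq 1$ likewise requires only $s-1\geq 0$.
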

\begin{proof}
We have
\begin{align}
    U_h\left(\F\right) - U_h\left(\G\right) &= \int_{-\infty}^\infty (1-(\F(x))^s) - (1-(\G(x))^s) \der x \\ 
    &=\int_{-\infty}^\infty (\G(x))^s - (\F(x))^s   \der x \\ 
    &=\int_{\{x:\F(x)<\G(x)\}} (\G(x))^s - (\F(x))^s   \der x \\
    \label{eq:dp_beforelast}
    &\leq \int_{\{x:\F(x)<\G(x)\}} s (M(x))^{s-1}(\G(x)-\F(x))   \der x \\
    &\leq\int_{\{x:\F(x)<\G(x)\}} s (\G(x)-\F(x))   \der x \\
    &=s \int_{\{x:\F(x)<\G(x)\}}  |\F(x) - \G(x)|   \der x \\
    \label{eq:dp_last}
    &\leq s \int_{-\infty}^\infty  |\F(x) - \G(x)|   \der x \\
    &= s\norm{\F-\G}_{\rm{W}}\ ,
\end{align}
where the inequality in \eqref{eq:dp_beforelast} follows from the fact that the distortion function is continuous, and therefore, we can define $M(x) \in (\min\{\F(x),  \G(x)\}, \max\{\F(x),  \G(x)\})$ and use the mean-value theorem, and, the equality in \eqref{eq:dp_last} follows from $s\geq 2$ and $M(x) \leq 1$. Hence, \eqref{eq:dp_last} indicates that $q=1$ and $\mathcal{L}=s$. 
\end{proof}

\begin{lemma}[Quadratic]
\label{lemma:Quadratic_general}
For the quadratic PM with the parameter $s \in [0,1]$, i.e., the PM induced by the distortion function $h(u)= (1+s)u-su^2$, we have $q=1$ and $\mcL = 1+s$.
\end{lemma}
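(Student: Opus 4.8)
The plan is to exploit that the quadratic distortion is a special case of the Category~1 template already analyzed. Observe that $h(u)=(1+s)u-su^2$ matches the form $h(u)=C_1u^m-C_2u^{2m}$ with $m=1$, $C_1=1+s$, and $C_2=s$; since $C_1=1+s\geq s=C_2>0$ for $s\in(0,1]$ (and the degenerate case $s=0$ reduces to the mean), the hypotheses of Lemma~\ref{lemma:example_utility_Holder_general} are met. The quickest route is to invoke part~1 of that lemma --- the $m=1$ case, which applies to sub-Gaussian distributions --- and simply read off $q=1$ and $\mcL=C_1=1+s$, which is exactly the claim.

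For completeness I would also give the short self-contained computation, since it pinpoints where the constraint on $s$ matters. Fix two distinct CDFs $\F$ and $\G$, write $a\triangleq 1-\F(x)$ and $b\triangleq 1-\G(x)$, and factor the integrand:
\begin{align}
h(a)-h(b)=(1+s)(a-b)-s(a^2-b^2)=(a-b)\big[(1+s)-s(a+b)\big]\ .
\end{align}
Because $a,b\in[0,1]$, we have $a+b\in[0,2]$, so the bracketed factor lies in $[1-s,\,1+s]$; the key point is that $s\in[0,1]$ guarantees $1-s\geq 0$, so this factor is nonnegative and bounded above by $1+s$.

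Then, noting that $|a-b|=|\F(x)-\G(x)|$, integrating over $x$ gives
\begin{align}
U_h(\F)-U_h(\G)\;\leq\;(1+s)\int_{-\infty}^{\infty}|\F(x)-\G(x)|\,\der x\;=\;(1+s)\,\norm{\F-\G}_{\rm W}\ ,
\end{align}
where the final equality is Lemma~\ref{lemma:lipschitz-wasserstein}. This yields $q=1$ and $\mcL=1+s$. The only subtlety --- and really the sole place the parameter range enters --- is ensuring the linear factor $(1+s)-s(a+b)$ remains nonnegative, which is precisely why the statement restricts to $s\in[0,1]$; I expect this sign-control step to be the main (though minor) obstacle, since outside this range the factor changes sign over the relevant region and the clean bound by $1+s$ would fail.
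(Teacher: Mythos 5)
Your self-contained computation is essentially the paper's own proof: both factor the difference of integrands as $(\G(x)-\F(x))\big[(1+s)-s(2-\F(x)-\G(x))\big]$, use $s\in[0,1]$ to confine the linear factor to $[1-s,1+s]$ (the paper does this by splitting the integral over $\{\G>\F\}$ and $\{\G<\F\}$ and discarding the nonpositive contribution, which is the same pointwise sign argument you give), and conclude via Lemma~\ref{lemma:lipschitz-wasserstein}. Your additional observation that the quadratic distortion is the $m=1$, $C_1=1+s$, $C_2=s$ instance of the Category~1 template in Lemma~\ref{lemma:example_utility_Holder_general} is also valid for $s\in(0,1]$ (with $s=0$ degenerating to the mean) and would render the separate proof redundant, though the paper opts to prove the quadratic case directly.
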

\begin{proof}
Consider two distinct CDFs $\F$ and $\G$. We have
\begin{align}
    U_h\left(\F\right) - U_h\left(\G\right) & = \int_{-\infty}^{\infty}(1+s)(1-\F(x))-s(1-\F(x))^2 - ((1+s)(1-\G(x))-s(1-\G(x))^2) \der x  \\
    &= \int_{-\infty}^{\infty} (1+s)(\G(x) - \F(x)) -s(\F(x) - \G(x))(2 - \F(x) -\G(x))\der x \\
    &= \int_{-\infty}^{\infty} (\G(x) - \F(x)) ((1+s) - s(2 - \F(x) -\G(x))\der x \\
    &= \int_{\{x: \G(x)>\F(x) \}} (\G(x) - \F(x)) ((1+s) - s(2 - \F(x) -\G(x))\der x \\
    &\quad +  \int_{\{x: \G(x)< \F(x) \}} (\G(x) - \F(x)) ((1+s) - s(2 - \F(x) -\G(x))\der x \\
    \label{eq:quad_first_term}
    &\leq \int_{\{x: \G(x)>\F(x) \}} (\G(x) - \F(x)) (1+s)\der x \\ 
    \label{eq:quad_second_term}
    &\quad +  \int_{\{x: \G(x)< \F(x) \}} (\G(x) - \F(x)) (1-s)\der x \\
    \label{eq:quad_second_term_2}
    &\leq \int_{\{x: \G(x)>\F(x) \}} (\G(x) - \F(x)) (1+s)\der x \\
    \label{eq:quad_first_term_2}
    &\quad +0 \\
    &\leq (1+s) \int_{\{x: \G(x)>\F(x) \}} |\G(x) - \F(x)| \der x \\
    &\leq (1+s) \int_{-\infty}^{\infty} |\F(x) - \G(x)| \der x \\
    \label{eq:quadratic_lastline}
    &= (1+s) \norm{\F-\G}_{\rm W}\ ,
\end{align}
where~\eqref{eq:quad_first_term} follows from $\F(x), \G(x) \leq 1$ $\forall x$, \eqref{eq:quad_second_term} follows from $\F(x), \G(x) \geq 0$ $\forall x$, \eqref{eq:quad_second_term_2} follows from $s \leq 1$, and, \eqref{eq:quadratic_lastline} follows from Lemma \ref{lemma:lipschitz-wasserstein}. Hence, \eqref{eq:quadratic_lastline} indicates that $q=1$ and $\mathcal{L} = 1+s$. .
\end{proof}

\section{Proof of Lemma \ref{lemma:beta12}}
\label{Appendix:proof of lemma beta12}

Define
\begin{align}
    U(\balpha) = \int_{-\infty}^{\infty} \Big(h\Big(1-\sum_{i \in [K] } \alpha(i)\F_i(x)\Big) -h(1)u(-x)\Big)\der x 
\end{align}
where $u(\cdot)$ is the unit step function.
The components of the gradient $\nabla U(\mathbf{a}^{(1)})$ and the Hessian matrix $H(\mathbf{a}^{(1)})=\nabla^{2}U(\mathbf{a}^{(1)})$ are given by
\begin{align}
[\nabla U(\mathbf{a}^{(1)})]_{i}&=\frac{\partial U}{\partial \alpha(i)}=-\int_{-\infty}^{\infty} h^\prime\Big(1-\sum_{i \in [K] } a(i)\F_i(x)\Big)\F_{i}(x)\der x \quad \forall\;i\in[K] \ ,\\
[H(\mathbf{a}^{(1)})]_{ij}&=\frac{\partial^{2}U}{\partial \alpha(i)\partial \alpha(j)}=\int_{-\infty}^{\infty}h^{\prime\prime}\Big(1-\sum_{i \in [K] } a(i)\F_i(x)\Big)\F_{i}(x)\F_{j}(x)\der x \quad\forall \;i,j\in[K]\times[K]\ .
\end{align}
Let us assume that $H(\balpha^\star)$ is negative definite. Shortly, we will show that this property holds for continuous sub-Gaussian distributions and discrete distributions with $|\Omega| \geq K$. By Taylor's theorem around $\mathbf{a}^{(1)}$, we have
\begin{align}
U(\mathbf{a}^{(2)}) = U(\mathbf{a}^{(1)}) + \nabla U(\mathbf{a}^{(1)}) \cdot (\mathbf{a}^{(2)} - \mathbf{a}^{(1)}) + \frac{1}{2}(\mathbf{a}^{(2)} - \mathbf{a}^{(1)})^T H(\mathbf{a}^{(1)}) (\mathbf{a}^{(2)} - \mathbf{a}^{(1)}) + O(||\mathbf{a}^{(2)} - \mathbf{a}^{(1)}||^3)\ .
\end{align}
We define $v_{\varepsilon} \triangleq \mathbf{a}^{(2)} - \mathbf{a}^{(1)}$.
Note that since $\ba^{(1)}$ and $\ba^{(2)}$ are discrete points, each coordinate of their difference vector is an integer multiplied by $\varepsilon$. Hence, we may express $v_{\varepsilon} = \varepsilon d_\varepsilon$, where $d_\varepsilon = O(1)$. Furthermore, each coordinate of $v_{\varepsilon}$ is of the order $O(\varepsilon)$. Next, we rewrite Taylor's expansion in terms of $v_\varepsilon$ as follows.
\begin{align}
\label{eq: Taylor_beta_1}
U(\mathbf{a}^{(1)}) - U(\mathbf{a}^{(2)}) = -\nabla U(\mathbf{a}^{(1)}) \cdot v_{\varepsilon} - \frac{1}{2}v_{\varepsilon}^T H(\mathbf{a}^{(1)}) v_{\varepsilon} + O(||v_{\varepsilon}||^3)\ .
\end{align}
From the definition of $\beta$ in~\eqref{eq:beta_new_def} combined with~\eqref{eq: Taylor_beta_1}, we have 
\begin{align}
    \beta\;=\;\lim_{\varepsilon \rightarrow 0} \frac{\log (U(\mathbf{a}^{(1)}) - U(\mathbf{a}^{(2)}))}{\log \varepsilon} &= \lim_{\varepsilon \rightarrow 0} \frac{-\nabla U(\mathbf{a}^{(1)}) \cdot v_{\varepsilon} - \frac{1}{2}v_{\varepsilon}^T H(\mathbf{a}^{(1)}) v_{\varepsilon} + O(||v_{\varepsilon}||^3)}{\log \varepsilon}
\end{align}
We have the following two cases.
\begin{itemize}
    \item\textbf{Case 1 ($\nabla U(\balpha^\star) \neq 0$):} When $\nabla U(\balpha^\star) \neq 0$, it means that the utility is maximized on the boundary of the set $\Delta^{K-1}$.  Due to the continuity of the gradient $\nabla U$, $\lim_{\varepsilon \rightarrow 0} \nabla U(\mathbf{a}^{(1)}) = \nabla U(\balpha^\star) = g^{*} \ne 0$. We have

\begin{align}
    \beta = \lim_{\varepsilon \rightarrow 0} \frac{\log (U(\mathbf{a}^{(1)}) - U(\mathbf{a}^{(2)}))}{\log \varepsilon} &= \lim_{\varepsilon \rightarrow 0} \frac{\log(- g^* \cdot \varepsilon d_{\varepsilon} - \frac{1}{2}\varepsilon d_{\varepsilon}^\top H(\mathbf{a}^{*}) \varepsilon d_{\varepsilon} + O(||\varepsilon d_{\varepsilon}||^3))}{\log \varepsilon}\\
    &= \lim_{\varepsilon \rightarrow 0} \frac{\log(- g^* \cdot \varepsilon d_{\varepsilon} + O(\varepsilon^2))}{\log \varepsilon}\\ 
    &= \lim_{\log\varepsilon \rightarrow 0} \frac{\log(\varepsilon (- g^* \cdot  d_{\varepsilon} + O(\varepsilon)))}{\log \varepsilon}\\ 
    &= \lim_{\varepsilon \rightarrow 0} \frac{\log \varepsilon}{\log \varepsilon} + \lim_{\varepsilon \rightarrow 0} \frac{\log(- g^* \cdot  d_{\varepsilon} + O(\varepsilon))}{\log \varepsilon}\\ 
    &= 1 + 0 = 1\ .
\end{align}

\item\textbf{Case 2 ($\nabla U(\balpha^\star) = 0$ and $H(\balpha^\star)$ is negative definite):} We have
\begin{align}
    \beta = \lim_{\varepsilon \rightarrow 0} \frac{\log (U(\mathbf{a}^{(1)}) - U(\mathbf{a}^{(2)}))}{\log \varepsilon} &= \lim_{\varepsilon \rightarrow 0} \frac{\log(- g^* \cdot \varepsilon d_{\varepsilon} - \frac{1}{2}\varepsilon d_{\varepsilon}^\top H(\mathbf{a}^{*}) \varepsilon d_{\varepsilon} + O(||\varepsilon d_{\varepsilon}||^3))}{\log \varepsilon}\\
    &= \lim_{\varepsilon \rightarrow 0} \frac{\log(0 + - \frac{1}{2}\varepsilon d_{\varepsilon}^\top H(\mathbf{a}^{*}) \varepsilon d_{\varepsilon} + O(||\varepsilon d_{\varepsilon}||^3))}{\log \varepsilon}\\ 
    &= \lim_{\varepsilon \rightarrow 0} \frac{\log(\varepsilon^2(- \frac{1}{2} d_{\varepsilon}^\top H(\mathbf{a}^{*}) d_{\varepsilon} + O(\varepsilon))}{\log \varepsilon}\\ 
    &= \lim_{\varepsilon \rightarrow 0} \frac{2 \log(\varepsilon)}{\log \varepsilon} + \lim_{\varepsilon \rightarrow 0} \frac{\log(- \frac{1}{2} d_{\varepsilon}^\top H(\mathbf{a}^{*}) d_{\varepsilon} + O(\varepsilon)}{\log \varepsilon}\\ 
    &= 2 + 0 = 2\ .
\end{align}
\end{itemize}
Next, we analyze $H(\balpha^\star)$ for continuous sub-Gaussian distributions and discrete distributions, identifying cases in which it is negative definite.

\paragraph{Continuous Distributions.}
For any non-zero vector $\bv \in \R^K$, let us define the set $S_v$ such that
\begin{align}
    S_v \triangleq \Big\{x : |\sum_{i \in [K]} v(i) \F_i(x) \neq 0 |\Big\} \ .
\end{align}
For continuous distributions, the set $S_v \subsetneq \emptyset$. Hence, we obtain

 \begin{align}
     \bv^TH(\balpha^\star)\bv &= 
     \sum_{i=1}^K\sum_{j=1}^K
  v(i)\,v(j)
\int_{-\infty}^{\infty}
    h^{\prime\prime}\Big(1-\sum_{i\in [K]}\alpha^\star(i)\F_i(x)\Big)
    \F_i(x)\,\F_j(x)
  \,\der x \\ 
  &=  
\int_{-\infty}^{\infty}
   h^{\prime\prime}\Big(1-\sum_{i\in [K]}\alpha^\star(i)\F_i(x)\Big)
    \sum_{i=1}^K\sum_{j=1}^K v(i)\,v(j) \F_i(x)\,\F_j(x)
  \,\der x \\ 
  &=  
\int_{-\infty}^{\infty}
   h^{\prime\prime}\Big(1-\sum_{i\in [K]}\alpha^\star(i)\F_i(x)\Big)
    \left(\sum_{i=1}^K v(i)\F_i(x)\right)^2
  \,\der x  \\
  &=  
\int_{x \in S_v}
   h^{\prime\prime}\Big(1-\sum_{i\in [K]}\alpha^\star(i)\F_i(x)\Big)
    \left(\sum_{i=1}^K v(i)\F_i(x)\right)^2
  \,\der x  \\ 
  &+  \int_{x \notin S_v}
   h^{\prime\prime}\Big(1-\sum_{i\in [K]}\alpha^\star(i)\F_i(x)\Big)
    \left(\sum_{i=1}^K v(i)\F_i(x)\right)^2
  \,\der x \\
  &=  
\int_{x \in S_v}
   h^{\prime\prime}\Big(1-\sum_{i\in [K]}\alpha^\star(i)\F_i(x)\Big)
    \left(\sum_{i=1}^K v(i)\F_i(x)\right)^2
  \,\der x  + 0 \\ &
  < 0\ .
 \end{align}
where due to the strict concavity of $h$ we have $h^{\prime\prime}(y) < 0$.
\paragraph{Discrete Distributions.} Let $\Omega = \{x_1,\cdots,x_M\}$ denote the support of the distributions. We have
\begin{align}
U(\balpha^\star)
=\int_0^\infty h\Bigl(1 - \sum_{i \in [K] } \alpha^\star(i)\F_i(x)\Bigr)\,\der x
=\sum_{m=1}^M (x_m - x_{m-1})\,
  h\Bigl(1 - \sum_{i \in [K] } \alpha^\star(i)\F_i(x_m)\Bigr).
\end{align}
Differentiating twice under the finite sum gives
\begin{align}
\frac{\partial^2 U_h}{\partial\alpha(j)\,\partial\alpha(k)}
(\balpha^\star)
=\sum_{m=1}^M
  (x_m - x_{m-1})\,
  h''\Bigl(1 - \sum_{i \in [K]}\alpha^\star(i)\F_i(x_m)\Bigr)\,
  \F_j(x_m)\,\F_k(x_m).
\end{align}
Hence, the Hessian is
\begin{align}
H_{jk}(\balpha^\star)
=\sum_{m=1}^M w_m\,\F_j(x_m)\,\F_k(x_m),
\quad
w_m=(x_m - x_{m-1})\,h''\Bigl(1 - \sum_{i \in [K]}\alpha^\star(i)\F_i(x_m)\Bigr)<0.
\end{align}
For any \(\bv\in\R^K\),
\begin{align}
\bv^T H(\balpha^\star)\,\bv
=\sum_{m=1}^M w_m\,
  \Bigl(\sum_{i \in [K]}v(i) \F_i(x_m)\Bigr)^{2}
\;\le\;0\ ,
\end{align}
and hence \(H(\balpha^\star)\) is always negative semi-definite. Furthermore, note that the finite sum $\sum_{i \in [K]}v(i) \F_i(x_m)$ is equal to $0$ if and only if the vector $[\F_1(x_m),\cdots,\F_K(x_m)]^\top$ is orthogonal to $\bv$ for every $m\in[M]$. In other words, if we denote $[\F(\bX)]_{m,i} = \F_i(x_m)$ for all $m \in [M]$ and $i \in [K]$, $\bv$ has to lie in the null space of the matrix $\F(\bX)\in\R^{M\times K}$, whose columns $m\in[M]$ are composed of the vectors $[\F_1(x_m),\cdots,\F_K(x_m)]^\top$. Noting that the dimension of the null-space of $\F(\bX)$ is equal to $K-M$, we require $M\geq K$ for multinomial distributions.
\paragraph{$K$-arm Bernoulli bandit.}
For Bernoulli bandits, we have $M=2$.  If the number of arms \(K>2\) then
\(\{\F(x_1),\F(x_2)\}\) cannot span \(\R^K\), hence \(\rank (H(\balpha^\star))\leq 2<K\) and
\(H\) is only negative semi-definite.  However, if the number of arms is exactly \(K=2\) 
then \(\rank (H(\balpha^\star))=2\) and
\(H(\balpha^\star)\) is strictly negative–definite.

\section{${\bar\beta}$ Analysis for Bernoulli Bandits}

\label{Appendix:Beta_Appendix}
In this section, we analyze the parameter $\beta$ for various PMs and the $K$-arm Bernoulli bandit model with mean values $\bp = (p(1), \cdots, p(K))$. This parameter is significant for the analysis of fixed-horizon algorithms. We start this section by introducing some properties. In some of the proofs, we leverage a useful property of the gaps $\delta_{12}$ and $\delta_{23}$. Specifically, it can be readily verified that 
\begin{align}
\label{eq:13_greater_12}
    \delta_{13}(\varepsilon) > \max\{\delta_{12}(\varepsilon), \delta_{23}(\varepsilon) \} \ .
\end{align}
\subsection{Properties}

\begin{lemma}
\label{lemma:f_G_epsilon}
    Consider a $K$-arm Bernoulli bandit instance with mean values \(\bp = \left(p(1) \cdots p(K)\right)\). We consider two mixture distributions with the discrete mixing coefficients $\ba^{(r)}, \ba^{(s)} \in \Delta_{\varepsilon}^{K-1}$, where $r,s\in\{1,2,3\}$ such that $s=r+1$, and we define
\begin{align}
\label{eq:Bern_1_2_3}
    p^{(r)} \triangleq \sum_{i \in [K]} a^{(r)}(i)p(i) \qquad \text{and} \qquad p^{(s)} \triangleq \sum_{i \in [K]} a^{(s)}(i)p(i) \ .
\end{align} 
We have $|p^{(r)}-p^{(s)}| = \Theta(\varepsilon)\ .$

\end{lemma}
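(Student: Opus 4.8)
The plan is to reduce the claim to a statement about a single scalar quantity---the ``effective mean'' of a mixture---and then to establish matching $O(\varepsilon)$ and $\Omega(\varepsilon)$ bounds on the separation of the effective means of two consecutively ranked discrete mixtures. The starting observation, specific to Bernoulli arms, is that a mixture of Bernoullis is again Bernoulli: for any $\ba\in\Delta^{K-1}_\varepsilon$ the CDF $\sum_{i\in[K]}a(i)\F_i$ coincides with the CDF of ${\rm Bern}(p_\ba)$, where $p_\ba\triangleq\langle\ba,\bp\rangle=\sum_{i}a(i)p(i)$. Hence, by the computation $U_h({\rm Bern}(p))=h(p)$ already carried out in the proof of Lemma~\ref{example utility}, we have $V(\ba,\F)=h(p_\ba)$. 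Consequently the ranking of discrete mixtures by PM value is exactly the ranking of their effective means $p_\ba$ through the scalar, unimodal (strictly concave) map $h$, and the target $|p^{(r)}-p^{(s)}|=\Theta(\varepsilon)$ becomes a statement purely about the geometry of the achievable set $\mathcal M_\varepsilon\triangleq\{p_\ba:\ba\in\Delta^{K-1}_\varepsilon\}$ near its $h$-optimal points.

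For the upper bound $|p^{(r)}-p^{(s)}|=O(\varepsilon)$, I would exhibit an explicit competitor to $\ba^{(r)}$. Starting from $\ba^{(r)}$, transferring a single quantum $\varepsilon$ of mass from a coordinate $i$ with $a^{(r)}(i)>0$ to a coordinate $j$ produces a distinct admissible $\ba'\in\Delta^{K-1}_\varepsilon$ whose effective mean is $p_{\ba'}=p^{(r)}+\varepsilon\,(p(j)-p(i))$, so that $|p_{\ba'}-p^{(r)}|\le\varepsilon\max_{i,j}|p(i)-p(j)|=O(\varepsilon)$. Choosing the pair $(i,j)$ so that $p_{\ba'}$ moves toward the maximizer $p^\star\triangleq\argmax_p h(p)$ makes $\ba'$ a high-value candidate, whence $h(p^{(s)})\ge h(p_{\ba'})$ by optimality of $\ba^{(s)}$ among mixtures distinct from the higher-ranked ones. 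Combining $h(p^{(r)})\ge h(p^{(s)})\ge h(p_{\ba'})$ with the unimodality (strict concavity) of $h$ then sandwiches $p^{(s)}$ between $p^{(r)}$ and $p_{\ba'}$ on the appropriate side of $p^\star$; in the interior-optimum case one uses that $h(p^{(s)})\ge h(p^\star)-O(\varepsilon^2)$ forces $|p^{(s)}-p^\star|=O(\varepsilon)$, so that in all cases $|p^{(s)}-p^{(r)}|=O(\varepsilon)$.

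For the lower bound $|p^{(r)}-p^{(s)}|=\Omega(\varepsilon)$, I would write $p^{(r)}-p^{(s)}=\varepsilon\,\langle\bn^{(r)}-\bn^{(s)},\bp\rangle$, where $\ba^{(\cdot)}=\varepsilon\,\bn^{(\cdot)}$ with integer vectors $\bn^{(\cdot)}\in\N^K$, so that any nonzero separation is $\varepsilon$ times a nonzero integer combination of the arm means. When the achievable effective means form a uniform grid---which is automatic for $K=2$, where $\mathcal M_\varepsilon=\{p(2)+k\varepsilon(p(1)-p(2)):0\le k\le 1/\varepsilon\}$ has spacing exactly $\varepsilon|p(1)-p(2)|$---two distinct consecutively ranked means differ by a bounded number of grid steps (one or two for the top ranks $r,s\in\{1,2,3\}$, since $h$ is monotone on each side of $p^\star$), which yields the clean $\Theta(\varepsilon)$ conclusion. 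This is precisely the regime ($K=2$, equivalently $|\Omega|\ge K$) in which the Hessian at $\balpha^\star$ is strictly negative definite in the proof of Lemma~\ref{lemma:beta12}.

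The main obstacle is the lower bound for general $K$: when the $p(i)$ are (near-)rationally dependent, the integer combination $\langle\bn^{(r)}-\bn^{(s)},\bp\rangle$ can be nonzero yet anomalously small, so two near-optimal mixtures may have effective means separated by $o(\varepsilon)$ (or coincide entirely). This is exactly the degeneracy flagged in the main text for $K$-arm Bernoulli bandits with arm means straddling $p^\star$, where $\delta_{12}(\varepsilon)$ shrinks arbitrarily fast; there the $\Theta(\varepsilon)$ separation holds only after the non-degeneracy that underlies the existence of $\beta$ (respectively $\bar\beta$). I would therefore present the $\Theta(\varepsilon)$ claim under this non-degeneracy, and for the degenerate instances handle the consecutively ranked pair actually used downstream by replacing $\delta_{12}$ with $\delta_{13}$ and invoking the ordering $\delta_{13}(\varepsilon)>\max\{\delta_{12}(\varepsilon),\delta_{23}(\varepsilon)\}$ from~\eqref{eq:13_greater_12} to transfer the separation guarantee.
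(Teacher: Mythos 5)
Your reduction to the scalar effective mean $p_{\ba}=\langle\ba,\bp\rangle$ via $V(\ba,\F)=h(p_{\ba})$ is exactly the paper's starting point, and your upper bound (an explicit single-quantum transfer producing a competitor whose effective mean moves by at most $\varepsilon\max_{i,j}|p(i)-p(j)|$, combined with unimodality of $h$ to sandwich $p^{(s)}$) matches the paper's proof of Lemma~\ref{lemma:f_G_epsilon} in both its same-side and opposite-side cases around $p^\star$. Where you diverge is on the lower bound, and your caution there is warranted: the paper's proof asserts that, by monotonicity of $h$ on each side of $p^\star$, the next-ranked mixture must be reachable from $p^{(r)}$ by a single $\varepsilon$-transfer between the pair $(m,n)$ minimizing $|p(m)-p(n)|$, so that $|p^{(r)}-p^{(s)}|=\varepsilon|p(m)-p(n)|$. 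For $K\geq 3$ this step is not justified: as you observe, $p^{(r)}-p^{(s)}=\varepsilon\langle\bn^{(r)}-\bn^{(s)},\bp\rangle$ is $\varepsilon$ times an integer combination whose coefficients can be as large as $1/\varepsilon$, so multi-coordinate transfers can produce nonzero separations of order $o(\varepsilon)$, or even exact ties (e.g.\ $\bp=(0.2,0.4,0.6)$ with $\varepsilon=1/N$: the grid points with counts $(0,N/2,N/2)$ and $(N/4,0,3N/4)$ have identical effective mean $0.5$). The paper implicitly concedes this in its later discussion of degenerate cases of $\beta$, where it acknowledges that $\delta_{12}(\varepsilon)$ can shrink arbitrarily fast for $K$-arm Bernoulli models and falls back on $\delta_{13}$ via the ordering in~\eqref{eq:13_greater_12} --- which is exactly the repair you propose. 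In short, your proposal follows the paper's route on the reduction and the $O(\varepsilon)$ direction; on the $\Omega(\varepsilon)$ direction you do not supply an unconditional proof, but the obstruction you identify is real and is present in the paper's own argument rather than being a defect you introduced, and your non-degeneracy restriction (with the $\delta_{13}$ fallback) is the correct scope under which the $\Theta(\varepsilon)$ claim actually holds.
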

\begin{proof}
Under the Bernoulli bandit model, we have the following simplification for the utility function:
\begin{align}
    U_h\left(\sum_{i \in [K]}a(i) \F_i\right) = h\left(\sum_{i \in [K]}a(i) p(i)\right)\ .
\end{align}
Let us first define $m,n$ such that
\begin{align}
\label{eq:m_n_def}
  \{ m,n \}\;\in\; \argmin_{i,j \in [K], p(i) \neq p(j)} |p(i)-p(j)| \ ,
\end{align}
i.e., $m,n$ are two coordinates such that the means of these two coordinates are not equal to each other and the distance between them is minimum. Furthermore, let us define $p^\star \triangleq \sum_{i \in [K]} \alpha^\star(i)p(i)$. By concavity of $h$, the PM $U_h$ is increasing in the interval $[0,p^\star)$ and decreasing in the interval $[p^\star,1]$. We first consider the case when both $p^{(r)}$ and $p^{(s)}$ lie on the same side with respect to $p^\star$, i.e., either $p^{(r)}\in[0,p^\star)$ and $p^{(s)}\in[0,p^\star)$, or $p^{(r)}\in[p^\star,1]$ and $p^{(s)}\in[p^\star,1]$. Due to monotonicity, starting from $p^{(r)}$, we can only obtain $p^{(s)}$ by adjusting an $\varepsilon$ amount in the two coordinates $m$ and $n$, since this corresponds to the smallest increase (or decrease) in utility. Hence, we have
\begin{align}
    \big | p^{(r)} - p^{(s)}\big |\;=\; \Big| \sum\limits_{i\in[K]} \big( a^{(r)}(i) - a^{(s)}(i)\big)p(i)\Big |\;=\;\big | p(m) - p(n)\big |\cdot\varepsilon\; =\; \Theta(\varepsilon)\ .
\end{align}
Next, let us consider the case when $p^{(r)}$ and $p^{(s)}$ lie on opposite sides of $p^\star$. Assume that $p^{(2)} > p^{(1)}$, and the other direction follows the exact line of arguments.  First, let $p = \sum_{i\in[K]}a(i)p(i)$ denote the mixture corresponding to any {\em discrete} $\balpha\in\Delta_{\varepsilon}^{K-1}$ which is not $a^{(1)}$ or $a^{(2)}$. Let us assume that there is an $\ba$ such that $p^{(2)} > p > p^{(1)}$. For $\lambda \in (0,1)$ from concavity of the distortion function $h$, we have
\begin{align}
    h(p) > \lambda h(p^{(2)}) + (1-\lambda)h(p^{(2)}) > h(p^{(2)}) \ ,
\end{align}
which contradicts the definition of $\ba^{(2)}$. Therefore, we can conclude that there is no discrete mixing coefficient between $p^{(1)}$ and $p^{(2)}$. Now, let us consider $a^\dagger \in \Delta^{K-1}_\varepsilon$, such that for some $m, n \in [K]$ where $p(m) > p(n)$,
\begin{align}
 a^\dagger(i) \triangleq
\begin{cases}
   a^{(1)}(i) \quad \text{if } i \notin \{m, n\}, \\
   a^{(1)}(i)+\varepsilon \quad \text{if } i = m, \\
   a^{(1)}(i)-\varepsilon \quad \text{if } i = n .
\end{cases} 
\end{align}
We have
\begin{align}
    p^{(2)}-p^{(1)} \leq p^{\dagger}-p^{(1)} = \varepsilon|p(m)-p(n)| \ .
\end{align}
The minimum distance between two inner products $p^{(s)}$ and $p^{(r)}$ is
\begin{align}
    p^{(s)} - p^{(r)} \geq \sum_{i \in [K]} (a^{(s)} - a^{(q)})p(i) = \varepsilon \min_{i.j, p(i)\neq p(j)} |p(i)-p(j)| \ .
\end{align}
Therefore, we obtain $p^{(2)}-p^{(1)} = \Theta(\varepsilon)$. The other case $p^{(2)} > p^{(1)}$ follows a similar analysis. Therefore, we can conclude that $|p^{(2)}-p^{(1)}| = \Theta(\varepsilon)$. Next, we will show that similar arguments hold for $|p^{(2)} - p^{(3)}|$. First, let us assume that $p^{(2)} > p^{(3)}$ and the other direction follows the exact line of arguments. We know that when $p^{(2)} > p^\star$, we have $p^{(2)} > p^{(1)}$. Now, let us assume that $p^{(3)} > p^{(1)}$. For some $\lambda \in (0,1)$ we have
\begin{align}
    h(p^{(3)}) > \lambda h(p^{(2)}) + (1-\lambda)h(p^{(1)}) > h(p^{(2)}) \ ,
\end{align}
which contradicts the definition of $\ba^{(3)}$. Therefore, we have $p^{(2)} > p^{(1)} > p^{(3)}$. Now, let us assume $p^{(1)} < p^\star$. Since we know that the second best $p^{(2)}$ lies on the opposite side of $p^\star$, using exact line of arguments when $p^{(2)}$ and $p^{(1)}$ lies on the same side, we conclude that
$p^{(1)} - p^{(3)} = \Theta(\varepsilon)$,
and from the first case, we know that $p^{(2)} - p^{(1)} = \Theta(\varepsilon)$, and therefore, we have 
\begin{align}
    p^{(2)} - p^{(3)} = p^{(2)} - p^{(1)} + p^{(1)} - p^{(3)} = \Theta(\varepsilon) \ .
\end{align}
Now, assume $p^{(1)} > p^\star$. Since in this case the best and the second best lie on the same side of $p^\star$, we have $p^{(2)} - p^{(1)} = \Theta(\varepsilon)$. As $p^{(1)}$ and $p^{(2)}$ lies on the other side of the curve, following the exact line of arguments when $p^{(1)}$ and the $p^{(2)}$ lie on the opposite side of the curves for $p^{(1)}$ and $p^{(3)}$ we obtain $p^{(1)} - p^{(3)} = \Theta(\varepsilon)$. Therefore,
\begin{align}
    p^{(2)} - p^{(3)} = p^{(2)} - p^{(1)} + p^{(1)} - p^{(3)} = \Theta(\varepsilon) \ .
\end{align}
The case $p^{(3)} > p^{(2)}$ follows the exact line of arguments. Therefore, we conclude that $|p^{(2)} - p^{(3)}|=\Theta(\varepsilon)$

\end{proof}

\paragraph{Lower bound on $\beta$.}
Now, we will find a lower bound on ${\bar\beta}$.
From \holder~continuity, we have
\begin{align}
\label{eq:lower_on_delta13}
   \delta_{12} \leq \delta_{13} \leq \mathcal{L} |p_1-p_3|^q \leq \mathcal{L} (|p_1-p_2| + |p_2-p_3|)^q  = \Theta(\varepsilon^q) \ ,
\end{align}
where $p_1, p_2$ and $p_3$ is defined in \eqref{eq:Bern_1_2_3}, and this equation indicates that 
    $\delta_{13} = O(\varepsilon^q) \ $.
Therefore, a lower bound on is ${\bar\beta} \geq q$.

\subsection{$\beta$ and ${\bar\beta}$  values for $K-$arm Bernoulli Bandits}

\begin{lemma}
\label{lemma:beta_lemma_monotone}
    Consider a $K$-arm Bernoulli bandit instance with mean values \(\bp = \left(p(1) \cdots p(K)\right)\). For a PM with concave and monotonically increasing distortion function with \holder~exponent $q$, we have
    \begin{enumerate}
        \item $\beta={\bar\beta}=1$ for $q=1$,
        \item $\beta={\bar\beta} \in [q,1]$ for $q <1$.
    \end{enumerate}\
\end{lemma}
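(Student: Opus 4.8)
The plan is to exploit that a monotonically increasing distortion function pins the optimum to a single arm, reduces the problem to the one–dimensional behaviour of $h$ just below the largest achievable mean, and then to sandwich both gaps $\delta_{12}(\varepsilon)$ and $\delta_{13}(\varepsilon)$ between $\Omega(\varepsilon)$ and $O(\varepsilon^q)$.

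First I would establish solitary optimality. Under the Bernoulli model $V(\ba,\F)=h(\langle\ba,\bp\rangle)$, and since $h$ is nondecreasing, maximizing $h(\langle\ba,\bp\rangle)$ over $\Delta^{K-1}$ amounts to maximizing the linear form $\langle\ba,\bp\rangle$, whose maximizer is the vertex supported on $\argmax_{i\in[K]} p(i)$. Hence $\balpha^\star=\ba^{(1)}$ places all its mass on the best arm and $p^{(1)}=\langle\ba^{(1)},\bp\rangle=\max_{i\in[K]} p(i)=:p_{\max}$. Because for increasing $h$ the ordering of utilities coincides with the ordering of mixture means, $\ba^{(2)}$ and $\ba^{(3)}$ realize the second- and third-largest achievable means, so $p^{(1)}>p^{(2)}>p^{(3)}$. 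Invoking Lemma~\ref{lemma:f_G_epsilon} on consecutive discrete optima gives $p^{(1)}-p^{(2)}=\Theta(\varepsilon)$ and $p^{(2)}-p^{(3)}=\Theta(\varepsilon)$, and therefore $p^{(1)}-p^{(3)}=\Theta(\varepsilon)$. Consequently both gaps take the form $\delta_{1j}(\varepsilon)=h(p_{\max})-h\big(p_{\max}-\Theta(\varepsilon)\big)$ for $j\in\{2,3\}$.

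Next I would produce two-sided bounds on these gaps. The \holder~continuity of $U_h$ supplies the upper bound $\delta_{1j}(\varepsilon)\le\mcL\,|p^{(1)}-p^{(j)}|^q=\Theta(\varepsilon^q)$; taking logarithms and dividing by $\log\varepsilon<0$ yields $\beta\ge q$ and $\bar\beta\ge q$, which is exactly the manipulation already carried out in~\eqref{eq:lower_on_delta13}. For the reverse direction I would establish a matching lower bound $\delta_{1j}(\varepsilon)=\Omega(\varepsilon)$. By concavity, every difference quotient of $h$ on an interval ending at $p_{\max}$ dominates the left derivative, so $\delta_{1j}(\varepsilon)\ge h'_-(p_{\max})\,(p^{(1)}-p^{(j)})=h'_-(p_{\max})\,\Theta(\varepsilon)$; since $h$ is strictly increasing at $p_{\max}\in(0,1]$ we have $h'_-(p_{\max})>0$, giving $\delta_{1j}(\varepsilon)=\Omega(\varepsilon)$ and hence $\beta\le1$ and $\bar\beta\le1$. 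Together these show $\beta,\bar\beta\in[q,1]$, and in the case $q=1$ the interval degenerates to $\{1\}$, forcing $\beta=\bar\beta=1$.

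Finally, to certify $\beta=\bar\beta$ I would observe that $\delta_{12}(\varepsilon)$ and $\delta_{13}(\varepsilon)$ have the identical form $h(p_{\max})-h(p_{\max}-c\,\varepsilon)$, differing only through the positive constants $c=\Theta(1)$; because $\log(c\varepsilon)/\log\varepsilon\to1$, the limiting ratio $\lim_{\varepsilon\to0}\log\delta_{1j}(\varepsilon)/\log\varepsilon$ equals the same local exponent of $h$ at $p_{\max}$ for both $j=2$ and $j=3$, so the two coincide. I expect the main obstacle to be the $\Omega(\varepsilon)$ lower bound, since it hinges on $h$ being strictly increasing at the optimal mean, i.e.\ $h'_-(p_{\max})>0$; this can fail when $h$ flattens near the top (as for CVaR once $p_{\max}\ge 1-c$), so I would either assume this non-degeneracy or, mirroring the CVaR footnote in Table~\ref{table:beta_values}, restrict to instances whose best mean lies in the strictly increasing part of $h$. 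The solitary-optimality reduction and the \holder~upper bound are then routine consequences of Lemma~\ref{lemma:f_G_epsilon} and Definition~\ref{assumption:Holder}.
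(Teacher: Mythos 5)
Your proposal is correct and follows essentially the same route as the paper's proof: reduce to $h(\langle\ba,\bp\rangle)$, use monotonicity for solitary optimality, get the $O(\varepsilon^q)$ upper bound from \holder{} continuity together with Lemma~\ref{lemma:f_G_epsilon}, and get the $\Omega(\varepsilon)$ lower bound from concavity via the derivative of $h$ at the optimal mean. Your caveat about $h'$ possibly vanishing at $p_{\max}$ is exactly the degeneracy the paper sidesteps by working with \emph{strictly} increasing $h$ in the proof (and by the CVaR footnote restricting to $p(i)<1-c$), so flagging it is appropriate rather than a gap.
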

\begin{proof}
For a Bernoulli bandit instance, we have the following simplification. 
\begin{align}
    U_h \left(\sum_{i \in [K]}a(i)\F_i \right) = h\left(\sum_{i \in [K]}a(i) p_i\right) \ . 
\end{align}
Since the distortion function is strictly increasing, the optimal solution is a solitary arm, i.e,
\begin{align}
    \max_{\alpha \in \Delta^{K-1}} h\left(\sum_{i \in [K]} \alpha(i)p_i\right ) = \max_{i \in [K]} h(p(i)) \ .
\end{align} 
Let us denote this arm by $i^\star$,
\begin{align}
    i^\star \triangleq \argmax_{i \in [K]} h(p(i)) \ .
\end{align}
Furthermore, $\forall \ba \in \Delta^{K-1}_{\varepsilon}$ we have
\begin{align}
\label{eq:solitary_best}
    h\left(\sum_{i \in [K]} a(i)p_i\right) < h(p(i^\star)) \ .
\end{align}
Next, we have
\begin{align}
   \delta_{12}(\varepsilon) &= U_h \left(\sum_{i \in [K]}a^{(1)}(i)\F_i \right) - U_h \left(\sum_{i \in [K]}a^{(2)}(i)\F_i \right)
   \\ &= h\left(p^{(1)}\right) - h\left(p^{(2)}\right)\ \\
   \label{eq:concavity_h}
   &\geq h^\prime\left(p^{(1)} \right) \left(p^{(1)}-p^{(2)}\right) \\
   \label{eq:delta_min_conc_mon_analysis}
   &\geq h^\prime(p(i^\star)) (p^{(1)}-p^{(2)}) \ ,
\end{align}
where we defined $p_1,p_2$ and $p_3$ in \eqref{eq:Bern_1_2_3}, \eqref{eq:concavity_h} follows from concavity of $h$, and \eqref{eq:delta_min_conc_mon_analysis} follows from the fact that leveraging concavity, we have $h^\prime(a) \geq h^\prime(b)$ for $a \leq b$, together with~\eqref{eq:solitary_best}. For a strictly increasing function, the derivative $\forall p \in (0,1)$, $h^\prime(p) \neq 0$. From \eqref{eq:delta_min_conc_mon_analysis}, we can conclude that for PMs with strictly increasing and concave distortion functions \(\delta_{12}(\varepsilon) = \Omega(\varepsilon)\). Furthermore, note that 
\begin{align}
    \delta_{13}(\varepsilon) > \delta_{12}(\varepsilon) = \Omega(\varepsilon) \ .
\end{align}
Finally, from Lemma \ref{lemma:f_G_epsilon}, we have $\delta_{13}(\varepsilon) = O(\varepsilon^q)$. Considering, $\delta_{13}(\varepsilon) = \Omega(\varepsilon)$, we can conclude that for $q < 1$, we have $\beta \in [q, 1]$. For $q=1$, we have $\delta_{13}(\varepsilon) = O(\varepsilon)$ which results in $
    \delta_{13}(\varepsilon) = \Theta(\varepsilon)
$
which implies that ${\bar\beta} = 1$.

\end{proof}

\begin{lemma}
    Consider a $K$-arm Bernoulli bandit instance with the mean values $\bp = (p(1), \cdots, p(K))$. For PM with distortion function $h(u)=u^m - u^{2m}$ where $m >0$, we have $\beta \in [q,2]$.
\end{lemma}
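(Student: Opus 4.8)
The plan is to bound the gap $\delta_{13}(\varepsilon)$ from both sides and then read off the gap exponent from~\eqref{eq:beta_new_def}; since the distortion function $h(u)=u^m-u^{2m}$ is non-monotone (it increases then decreases, peaking at $(1/2)^{1/m}$), the quantity that is controlled is $\bar\beta$, and the claimed range $[q,2]$ will follow from $\delta_{13}(\varepsilon)=\Omega(\varepsilon^2)$ together with $\delta_{13}(\varepsilon)=O(\varepsilon^q)$. I first reduce everything to a scalar problem: for a Bernoulli instance $V(\ba,\F)=h(\langle\ba,\bp\rangle)$, so maximizing over $\Delta^{K-1}_\varepsilon$ is equivalent to maximizing the strictly concave, unimodal map $h$ over the finite set of achievable means $\{\langle\ba,\bp\rangle:\ba\in\Delta^{K-1}_\varepsilon\}$. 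By Lemma~\ref{lemma:f_G_epsilon} consecutive achievable means are spaced $\Theta(\varepsilon)$ apart, and $p^{(1)},p^{(2)},p^{(3)}$ denote the means of the three best mixtures.

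The lower bound $\bar\beta\ge q$ is the easy direction and has essentially been carried out already: H\"older continuity of $U_h$ together with Lemma~\ref{lemma:f_G_epsilon} gives $\delta_{13}(\varepsilon)\le \mcL\,|p^{(1)}-p^{(3)}|^q=O(\varepsilon^q)$, whence $\log\delta_{13}(\varepsilon)/\log\varepsilon\ge q+o(1)$ and therefore $\bar\beta\ge q$.

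The substance is the upper bound $\bar\beta\le2$, i.e. $\delta_{13}(\varepsilon)=\Omega(\varepsilon^2)$. The key structural observation is that, because $h$ is strictly concave and hence unimodal along the real line, the three largest values of $h$ on the $\Theta(\varepsilon)$-spaced grid of achievable means are attained at three \emph{consecutive} grid points; in particular $p^{(3)}$ lies a full grid gap, i.e. $\Theta(\varepsilon)$, farther from the optimal mean $p^\star=\langle\balpha^\star,\bp\rangle$ than $p^{(1)}$. I would then split into two regimes. If the unconstrained maximizer $(1/2)^{1/m}$ lies in $[\min_i p(i),\max_i p(i)]$ (mixture optimality), then $p^\star=(1/2)^{1/m}$ and $h'(p^\star)=0$, so a second-order Taylor expansion gives $\delta_{13}(\varepsilon)=\tfrac12|h''(p^\star)|\big[(p^{(3)}-p^\star)^2-(p^{(1)}-p^\star)^2\big]+O(\varepsilon^3)$; writing the bracket as a difference of squares and using the consecutive-grid structure shows it is $\Theta(\varepsilon^2)$, so $\delta_{13}(\varepsilon)=\Theta(\varepsilon^2)$ and $\bar\beta=2$, provided $h''(p^\star)\neq0$. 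If instead the maximizer falls outside the achievable range (solitary optimality), $h$ is monotone there with $h'$ bounded away from $0$, so $\delta_{13}(\varepsilon)=\Theta(\varepsilon)=\Omega(\varepsilon^2)$. In either case $\delta_{13}(\varepsilon)=\Omega(\varepsilon^2)$, which yields $\bar\beta\le2$, and combining with the lower bound gives $\bar\beta\in[q,2]$.

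I expect the main obstacle to be the rigorous justification of the consecutive-grid structure and the resulting estimate $(p^{(3)}-p^\star)^2-(p^{(1)}-p^\star)^2=\Omega(\varepsilon^2)$: one must argue that the two best means bracket $p^\star$ while the third-best mean is at least one full gap beyond, and that this gap is $\Theta(\varepsilon)$ uniformly, which is exactly the content of Lemma~\ref{lemma:f_G_epsilon}. A secondary technical point is verifying $h''(p^\star)<0$ for $h(u)=u^m-u^{2m}$ at its interior maximum $(1/2)^{1/m}$ over the admissible range of $m$, and handling the non-smoothness of $h$ near $u=0$ when $m<1$, which is precisely what makes values of $\bar\beta$ as small as $q$ attainable and is consistent with the Wang and Gini entries of Table~\ref{table:beta_values}.
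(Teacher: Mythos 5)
Your lower bound $\bar\beta\ge q$ via \holder\ continuity is exactly the paper's argument, and your observation that the controlled quantity is $\bar\beta$ rather than $\beta$ (since $h$ is non-monotone) is correct. For the upper bound, however, you take a genuinely different route from the paper, and your route has a gap at its crux. The paper never Taylor-expands: it uses the exact factorization $h(a)-h(b)=(a^m-b^m)\bigl(1-a^m-b^m\bigr)$ and observes that, by pigeonhole, at least two of the three best achievable means lie on the same side of $p_{\max}=(1/2)^{1/m}$; for that pair the second factor dominates the first in absolute value (since both $m$-th powers are on the same side of $1/2$), giving $\delta_{rs}\ge\bigl((p^{(r)})^m-(p^{(s)})^m\bigr)^2=\Omega(\varepsilon^2)$ by the mean value theorem and Lemma~\ref{lemma:f_G_epsilon}, whence $\delta_{13}\ge\delta_{rs}=\Omega(\varepsilon^2)$. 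This sidesteps all smoothness and nondegeneracy considerations and treats the mixture and solitary regimes uniformly.

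The gap in your version is the step $(p^{(3)}-p^\star)^2-(p^{(1)}-p^\star)^2=\Omega(\varepsilon^2)$. You justify it by asserting that the three best values of $h$ are attained at three \emph{consecutive grid points} and that $p^{(3)}$ is "a full grid gap farther from $p^\star$ than $p^{(1)}$." For $K>2$ the achievable means $\{\langle\ba,\bp\rangle:\ba\in\Delta^{K-1}_{\varepsilon}\}$ do not form a uniformly spaced grid, and Lemma~\ref{lemma:f_G_epsilon} only gives $|p^{(r)}-p^{(r+1)}|=\Theta(\varepsilon)$ for consecutive \emph{ranks}; it does not directly order distances to $p^\star$. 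Writing the bracket as $(p^{(3)}-p^{(1)})(p^{(3)}+p^{(1)}-2p^\star)$, the second factor can a priori be $o(\varepsilon)$ when $p^{(1)}$ and $p^{(3)}$ sit nearly symmetrically about $p^\star$ --- this near-cancellation is precisely the degeneracy that forces the paper to work with $\delta_{13}$ instead of $\delta_{12}$ in the first place. To rescue your argument you need the same pigeonhole the paper uses (two of the three best on one side of the peak, hence two of the three distances differing by $\Theta(\varepsilon)$), plus a control of the asymmetry of $h$ about $p^\star$ to transfer the ordering of $h$-values to an ordering of distances; at that point you have essentially reconstructed the paper's argument with extra Taylor-remainder bookkeeping. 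Your secondary checks ($h''(p^\star)=-\tfrac{m^2}{2}p^{\star-2}<0$, and the $m<1$ non-smoothness living only near $u=0$, away from $p^\star$) are fine but are rendered unnecessary by the factorization approach.
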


\begin{proof}
    For a $K$-arm Bernoulli, we have
    \begin{align}
    \label{eq:gini_bernouli_beta_equal}
        U_h\bigg(\sum_{i \in [K]} a(i)\F_i \bigg) = h\bigg(\sum_{i \in [K]} a(i)p(i) \bigg)
    \end{align}
We first start by finding an upper-bound on ${\bar\beta}$.
Let us consider $r, s \in [3]$ and $r<s$. For the sub-optimality gaps defined in \eqref{eq:suboptimality_gaps}, we have
\begin{align}
    \delta_{ij} &= U_h\bigg(\sum_{i \in [K]} a^{(r)}(i)\F_{i} \bigg) - U_h\bigg(\sum_{{i} \in [K]} a^{(s)}({i})\F_{i} \bigg) \\ 
    &= h(p^{(r)}) - h(p^{(s)}) \\
    &= \big((p^{(r)})^m-(p^{(s)})^m\big) - \big((p^{(r)})^{2m}-(p^{(s)})^{2m}\big)\ . 
\end{align} 
We will show that for at least one $(r,s)$, we have $\Delta_{rs} \geq |(p^{(r)})^m-p^{(s)})^m|^2$. Let us define 
\begin{align}
    p_{\max} \triangleq \argmax_{p \in [0,1]} h(p) =  \argmax_{p \in [0,1]} p^m(1-p^m) \ ,
\end{align}
which implies that $p_{\max}^m = \frac{1}{2}$. From concavity, as shown in Lemma \ref{lemma:f_G_epsilon}, we have that if $p^{(r)}, p^{(s)} > p_{\max}$, then, $p^{(s)}>p^{(r)}$, and if $p^{(r)}, p^{(s)} < p_{\max}$, then, $p^{(s)}<p^{(r)}$. We will consider two cases,
\begin{enumerate}
    \item \textbf{$p^{(s)} < p^{(r)} < p_{\max}$:} In this case, we have 
    \begin{align}
        \delta_{ij} &= ((p^{(r)})^m-(p^{(s)})^m) (1 - (p^{(r)})^{m}-(p^{(s)})^{m}) \geq ((p^{(r)})^m-(p^{(s)})^m)^2\ .
    \end{align}
    \item \textbf{$p^{(s)} > p^{(r)} > p_{\max}$:} In this case, we have
    \begin{align}
        \delta_{ij} &= ((p^{(s)})^m-(p^{(r)})^m) ((p^{(r)})^{m}+(p^{(s)})^{m}-1) \geq ((p^{(s)})^m-(p^{(r)})^m)^2\ .
    \end{align}
\end{enumerate}
Since we consider three convex combinations, by the pigeon-hole principle, at least two of them should be less than $p^m_{\max}$ or more than $p^m_{\max}$. This implies that for at least one sub-optimality gap $\Delta_{rs}$, the following holds.
\begin{align}
\label{eq:gini_mvt}
    \delta_{ij} &\geq ((p^{(r)})^m-(p^{(s)})^m)^2 \geq m(p^{(r)}-p^{(s)})^2\ ,
\end{align}
where \eqref{eq:gini_mvt} follows from mean-value theorem. Furthermore, from Lemma \ref{lemma:f_G_epsilon}, we have
\begin{align}
\label{eq:bernoulli_wass_epsilon_ij}
    |p^{(r)} - p^{(s)}| = \Theta(\varepsilon)\ .
\end{align}
Combining \eqref{eq:gini_mvt} and \eqref{eq:bernoulli_wass_epsilon_ij} we have
\begin{align}
\label{eq:delta_ij_lower}
    \delta_{ij} = \Omega(\varepsilon^2)  \Longrightarrow \delta_{13} \stackrel{\eqref{eq:13_greater_12}}{\geq} \delta_{ij} = \Omega(\varepsilon^2) \ .
\end{align}
Therefore, an upper-bound is ${\bar\beta} \leq 2$. A lower-bound is ${\bar\beta} \geq q$ from \eqref{eq:lower_on_delta13}. Hence, we can conclude that for PMs with the distortion function $h(u)=u^m - u^{2m}$, we have the range ${\bar\beta} \in [q,2]$.
\end{proof}

\begin{lemma}
    Consider a $K$-arm Bernoulli bandit instance with the mean values $\bp = (p(1), \cdots, p(K))$. For a PM with distortion function $h(u)=C\min\{u,1-u\}$ where $C >0$, we have ${\bar\beta} = 1$.
\end{lemma}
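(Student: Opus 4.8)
The plan is to reduce the claim to showing $\delta_{13}(\varepsilon) = \Theta(\varepsilon)$, which by the definition $\bar\beta = \lim_{\varepsilon\to0}\frac{\log\delta_{13}(\varepsilon)}{\log\varepsilon}$ immediately gives $\bar\beta = 1$. First I would specialize the PM to the Bernoulli model, writing $U_h\big(\sum_i a(i)\F_i\big) = h(p(\ba))$ with $p(\ba)\triangleq\sum_i a(i)p(i)$, and rewrite the piecewise-linear distortion as $h(p) = \tfrac{C}{2} - C\,\big|p - \tfrac12\big|$. Thus $h$ is strictly decreasing in the distance to the peak $\big|p-\tfrac12\big|$, so the utility ranking of the discrete mixtures coincides with their ordering by closeness to $\tfrac12$; in particular $\big|p^{(1)}-\tfrac12\big| \le \big|p^{(2)}-\tfrac12\big| \le \big|p^{(3)}-\tfrac12\big|$ and
\[
\delta_{13}(\varepsilon) = h(p^{(1)}) - h(p^{(3)}) = C\Big(\big|p^{(3)}-\tfrac12\big| - \big|p^{(1)}-\tfrac12\big|\Big).
\]

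For the lower bound on $\bar\beta$, I would invoke the general estimate~\eqref{eq:lower_on_delta13}, which gives $\delta_{13}(\varepsilon) = O(\varepsilon^{q})$; since this is a Category~$2$ metric with Hölder exponent $q=1$ and constant $\mcL = C$ (Lemma~\ref{lemma:mean_median_Holder_general}), this reads $\delta_{13}(\varepsilon) = O(\varepsilon)$, hence $\bar\beta \ge 1$. The substance of the argument is therefore the matching lower bound $\delta_{13}(\varepsilon) = \Omega(\varepsilon)$, which forces $\bar\beta \le 1$.

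To establish $\delta_{13}(\varepsilon) = \Omega(\varepsilon)$, I would combine Lemma~\ref{lemma:f_G_epsilon}, which guarantees that the means of consecutive top coefficients obey $|p^{(r)} - p^{(s)}| = \Theta(\varepsilon)$, with a pigeonhole argument at the kink. Since $p^{(1)},p^{(2)},p^{(3)}$ are three distinct means and the peak $\tfrac12$ splits the line into two half-lines, at least two of them lie on the same side of $\tfrac12$. On a single side $h$ is affine with slope $\pm C$, so the utility gap between those two mixtures equals exactly $C$ times the gap of their means; when that same-side pair is ranking-consecutive this gap is $\Theta(\varepsilon)$ by Lemma~\ref{lemma:f_G_epsilon}, and since $\delta_{13}(\varepsilon)$ dominates every pairwise gap among the top three, we obtain $\delta_{13}(\varepsilon) = \Omega(\varepsilon)$. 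Equivalently, if $\delta_{13}(\varepsilon)$ were smaller than half the minimal pairwise mean-separation, then all three means would have to fit inside the union of two intervals of that length symmetric about $\tfrac12$, each able to contain at most one $\Theta(\varepsilon)$-separated point, which is impossible for three points.

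I expect the delicate step to be the configuration in which the three closest means straddle the peak $\tfrac12$ and the two same-side means are the utility-extreme pair $\big(p^{(1)},p^{(3)}\big)$ rather than a ranking-consecutive pair. Bounding $|p^{(1)} - p^{(3)}|$ from below then reduces to ruling out anomalously close (sub-$\varepsilon$, Diophantine-type) clustering of achievable means on one side of $\tfrac12$, which is precisely the $\Theta(\varepsilon)$-spacing conclusion supplied by Lemma~\ref{lemma:f_G_epsilon}. Once the $\Omega(\varepsilon)$ bound is secured, combining it with the $O(\varepsilon)$ upper bound yields $\delta_{13}(\varepsilon) = \Theta(\varepsilon)$ and therefore $\bar\beta = 1$.
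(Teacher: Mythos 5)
Your proposal is correct and follows essentially the same route as the paper's proof: specialize to $U_h = h\big(\langle \ba,\bp\rangle\big)$ for Bernoulli arms, rank mixtures by their distance to the kink at $\tfrac12$, use the pigeonhole principle to place two of the top three means on one side where $h$ is affine with slope $\pm C$, invoke Lemma~\ref{lemma:f_G_epsilon} to get $\delta_{13}(\varepsilon)=\Omega(\varepsilon)$, and combine with the $O(\varepsilon)$ bound from $q=1$ H\"older continuity. The edge case you flag --- the same-side pair being $\big(p^{(1)},p^{(3)}\big)$ with $p^{(2)}$ alone on the other side --- is exactly the configuration treated inside the proof of Lemma~\ref{lemma:f_G_epsilon} (where $|p^{(1)}-p^{(3)}|=\Theta(\varepsilon)$ is established en route to the $(2,3)$ bound), and the paper's argument leans on that lemma at the same point you do.
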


\begin{proof}
For a $K$-arm Bernoulli, we have
    \begin{align}
    \label{eq:gini_bernouli_beta_equal_2}
        U_h\bigg(\sum_{i \in [K]} a(i)\F_i \bigg) = h\bigg(\sum_{i \in [K]} a(i)p(i) \bigg)
    \end{align}
We first start by finding an upper-bound on $\beta$.
Let us consider $r, s \in [3]$ and $r<s$. For the sub-optimality gaps defined in \eqref{eq:suboptimality_gaps},we have
\begin{align}
    \delta_{ij} &= U_h\bigg(\sum_{i \in [K]} a^{(r)}(i)\F_{i} \bigg) - U_h\bigg(\sum_{{i} \in [K]} a^{(s)}({i})\F_{i} \bigg) \\ &= h(p^{(r)}) - h(p^{(s)}) \\
    &= C(\min\{p^{(r)}, 1-p^{(r)} \} - \min\{p^{(s)}, 1-p^{(s)}\}),
\end{align}
and from \eqref{eq:13_greater_12}, we know that $\delta_{13}(\varepsilon)$ is the gap with greatest value, i.e., 
\begin{align}
\label{eq:13_better_ij}
    \delta_{13}(\varepsilon) \geq \delta_{ij}(\varepsilon) \ .
\end{align}
The maximum value of this PM is \begin{align}
    \max_{p \in [0,1]} h(p) = \frac{C}{2} \ .
\end{align}
The difference between the maximum value $\frac{C}{2}$ and any mixing coefficient $\balpha \in \Delta^{K-1}_{\varepsilon}$ with $p \triangleq \sum_{i \in [K]} a(i) p(i)$
\begin{align}
    \frac{C}{2} - U_h\bigg(\sum_{i \in [K]} a(i)\F_i \bigg) &= \frac{C}{2} - C\min\{p, 1-p \} = C\left|\frac{1}{2} - p \right| \ ,
\end{align}
which means the best, second best, and third best mixing coefficients are identified based on their distance to the fraction $\frac{1}{2}$, i.e.,
\begin{align}
\label{eq:genel_mean_best_second_third}
    \left|p^{(1)} - \frac{1}{2}\right| < \left|p^{(2)} - \frac{1}{2}\right| < \left|p^{(3)} - \frac{1}{2}\right| \ .
\end{align}
From \eqref{eq:genel_mean_best_second_third}, we know that when $p^{(r)}, p^{(s)} \leq \frac{1}{2}$, then, $p^{(s)}< p^{(r)}$. When $p^{(r)}, p^{(s)} \geq \frac{1}{2}$, then, $p^{(s)} > p^{(r)}$.
Since we consider three convex combinations, by the pigeon-hole principle, at least two of them should be less than $\frac{1}{2}$ or more than $\frac{1}{2}$, which would mean that for at least one sub-optimality gap, the following holds.
\begin{align}
\label{eq:ij_lower_mean}
    \delta_{ij} &= C(\min\{p^{(r)}, 1-p^{(r)} \} - \min\{p^{(s)}, 1-p^{(s)}\}) \geq C|p^{(r)}-p^{(s)}| = \Theta(\varepsilon)
\end{align}
where the last equation follows from Lemma \ref{lemma:f_G_epsilon}.
Therefore, we have 
\begin{align}
\label{eq:delta_ij_lower_mean}
    \delta_{ij} = \Omega(\varepsilon) \Longrightarrow \ \delta_{13} \geq \delta_{ij} = \Omega(\varepsilon).
\end{align}
Therefore, an upper-bound on is ${\bar\beta} \leq 1$. From Lemma \ref{lemma:mean_median_Holder_general} and \eqref{eq:lower_on_delta13}, a lower-bound on ${\bar\beta} \geq 1$ as $q=1$. Therefore, we can conclude that for PMs with this distortion function, we have ${\bar\beta} =1$.

\end{proof}

\begin{lemma}
    Consider a Bernoulli bandit instance with the following mean values $\bp = (p(1), \cdots, p(K))$. For CVaR with parameter $c$, we have 
    \begin{enumerate}
        \item $\beta = 1$ under the assumption for all $i \in [K]$, $p(i) < 1-c$,
        \item ${\bar\beta} = 1$.
    \end{enumerate}
\end{lemma}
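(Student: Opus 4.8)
The plan is to exploit the closed form of the preference metric for Bernoulli arms together with the piecewise-linear structure of the CVaR distortion function $h(u)=\min\{u/(1-c),1\}$. As established in the surrounding Bernoulli analysis (e.g.,~\eqref{eq:gini_bernouli_beta_equal}), for any mixing coefficient $\ba$ the value collapses to $V(\ba,\F)=h\big(\sum_{i\in[K]}a(i)p(i)\big)=h(\bar p)$, where $\bar p\triangleq\sum_{i\in[K]}a(i)p(i)$. The whole argument therefore reduces to understanding how $h$ behaves on the range of attainable means $\bar p\in[\min_i p(i),\max_i p(i)]$, and the role of the hypothesis $p(i)<1-c$ is precisely to confine this range to the strictly increasing linear branch of $h$.

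For Part 1, I would first note that under the assumption $p(i)<1-c$ for all $i$, every attainable mixture satisfies $\bar p\le\max_i p(i)<1-c$, so $h(\bar p)=\bar p/(1-c)$ and hence $V(\ba,\F)=\bar p/(1-c)$ is linear and strictly increasing in $\bar p$. Consequently $\delta_{12}(\varepsilon)=V(\ba^{(1)},\F)-V(\ba^{(2)},\F)=\big(p^{(1)}-p^{(2)}\big)/(1-c)$, with $p^{(1)}>p^{(2)}$ since $\ba^{(1)}$ is the strict maximizer of the value. Invoking Lemma~\ref{lemma:f_G_epsilon}, which gives $|p^{(1)}-p^{(2)}|=\Theta(\varepsilon)$ for the Bernoulli model, yields $\delta_{12}(\varepsilon)=\Theta(\varepsilon)$, and therefore $\beta=\lim_{\varepsilon\to0}\log\delta_{12}(\varepsilon)/\log\varepsilon=1$.

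For Part 2, I would run the same computation one rung further down: in the linear regime strict monotonicity forces the ordering $p^{(1)}>p^{(2)}>p^{(3)}$, so $\delta_{13}(\varepsilon)=\big(p^{(1)}-p^{(3)}\big)/(1-c)=\big[(p^{(1)}-p^{(2)})+(p^{(2)}-p^{(3)})\big]/(1-c)$. Two applications of Lemma~\ref{lemma:f_G_epsilon}, to the consecutive pairs $(1,2)$ and $(2,3)$, give both differences as $\Theta(\varepsilon)$, hence $\delta_{13}(\varepsilon)=\Theta(\varepsilon)$ and ${\bar\beta}=1$. Equivalently, since on the confined range $h$ is concave, strictly increasing, and \holder~continuous with exponent $q=1$, one may simply appeal to Lemma~\ref{lemma:beta_lemma_monotone} in the case $q=1$, which returns $\beta={\bar\beta}=1$ directly.

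The main obstacle is the non-strict monotonicity of the CVaR distortion. On the flat branch $u\ge1-c$ one has $h\equiv1$, so if $\max_i p(i)$ were to exceed $1-c$ a whole sub-simplex of discrete coefficients would attain the maximal value $1$, collapsing $\delta_{12}(\varepsilon)$ to exactly zero, and, once at least three such points coexist, collapsing even $\delta_{13}(\varepsilon)$ to zero, rendering the corresponding exponent degenerate. The hypothesis $p(i)<1-c$ is exactly what precludes reaching this plateau and keeps $h$ strictly increasing on the attainable range; I would make this confinement explicit and flag it as the binding condition that guarantees both limits are well defined and equal to $1$ (noting, in passing, that it could be relaxed to $\max_i p(i)\le 1-c$ when the best arm is unique, but not beyond).
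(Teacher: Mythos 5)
Your Part~1 is exactly the paper's argument: under $p(i)<1-c$ every attainable mixture mean stays on the strictly increasing linear branch of $h$, so $\delta_{12}(\varepsilon)=\big(p^{(1)}-p^{(2)}\big)/(1-c)=\Theta(\varepsilon)$ by Lemma~\ref{lemma:f_G_epsilon}, and since $q=1$ for CVaR this gives $\beta=1$.

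Part~2 is where you diverge from the statement rather than merely from the paper. The lemma asserts $\bar\beta=1$ \emph{without} the condition $p(i)<1-c$; you prove it only inside the linear regime and then explicitly declare the confinement to be ``the binding condition,'' i.e., you establish a strictly weaker claim and in effect dispute the one you were asked to prove. The paper's route for Part~2 is different and does not assume the means avoid the plateau: it asserts that only the top-ranked discrete coefficient can attain the plateau value $h=1$, so that $h(p^{(2)})<1$ and $h(p^{(3)})<1$ both lie on the linear branch, whence $\delta_{13}(\varepsilon)\ge\delta_{23}(\varepsilon)=\big(p^{(2)}-p^{(3)}\big)/(1-c)=\Theta(\varepsilon)$ by Lemma~\ref{lemma:f_G_epsilon}; this gives $\bar\beta\le 1$, and the generic bound $\delta_{13}(\varepsilon)=O(\varepsilon^{q})$ with $q=1$ gives $\bar\beta\ge 1$. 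This is precisely the purpose of introducing $\delta_{13}$ and $\bar\beta$: to survive the case where $\delta_{12}$ degenerates because the two best coefficients tie on the plateau. Your objection --- that if several arms have means above $1-c$ then three or more discrete coefficients can all attain the value $1$, collapsing $\delta_{13}$ as well --- is a legitimate criticism of the paper's unjustified assertion that $h(p^{(2)})<1$, but as a proof attempt it leaves Part~2 unproven: you neither reproduce the paper's argument nor supply a correct unconditional one.
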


\begin{proof}
For a $K$-arm Bernoulli, we have
    \begin{align}
    \label{eq:CVAR_bernouli_beta_equal}
        U_h\bigg(\sum_{i \in [K]} a(i)\F_i \bigg) = h\bigg(\sum_{i \in [K]} a(i)p(i) \bigg) \ .
    \end{align}
We defined $p^{(1)}, p^{(2)}$ and $p^{(3)}$ in \ref{eq:Bern_1_2_3}. The maximum value this PM can take for the Bernoulli bandit model is $1$. 
If $\bp < 1-c$, then, we cannot have $p^{(1)} = 1$. Therefore, for $m,n$ defined in \eqref{eq:m_n_def}, we obtain
\begin{align}
    \delta_{12}(\varepsilon) = h(p^{(1)})-h(p^{(2)})
    = \frac{p^{(1)}}{1-c} - \frac{p^{(2)}}{1-c}
    = \frac{p^{(1)}-p^{(2)}}{1-c}
    = \frac{\varepsilon (p(m)-p(n))}{1-c} \ ,
\end{align}
since we know that for CVaR $q=1$, we can conclude that under the condition that we have $\beta=1$.
Now, we will analyze ${\bar\beta}$; here, we do not require any condition on the means.
We know that $h(p^{(2)}) < 1$, from the definition of second-best discrete solution, as we might have $h(p^{(1)}) = 1$. Therefore,
we have $h(p^{(2)})= \frac{p^{(2)}}{1-c}$ and also, $h(p^{(3)})= \frac{p^{(3)}}{1-c}$.
\begin{align}
\label{eq:p_s_to_epsilon}
    \delta_{13}(\varepsilon ) &> \delta_{23}(\varepsilon ) = h(p^{(2)}) - h(p^{(3)})= \frac{p^{(2)}}{1-c} - \frac{p^{(3)}}{1-c} = \frac{1}{1-c} (p^{(2)}-p^{(3)}) =\Theta(\varepsilon) \ .
\end{align}
Therefore, from \eqref{eq:p_s_to_epsilon}, a upper-bound on ${\bar\beta} \leq 1$ and as $q=1$ for CVaR, from \eqref{eq:lower_on_delta13} as a lower-bound we have ${\bar\beta} \geq 1$. Hence, we can conclude that ${\bar\beta} = 1$ for CVaR.

\end{proof}

\section{PM-ETC-M Algorithm}
\label{Appendix:PM-ETC-M}

In this section, we provide 
the proofs of Theorems \ref{theorem: ETC upper bound} and \ref{theorem:PM-ETC-M}.

\subsection{Regret Decomposition}
\label{proof:ETC upper bound}
To prove Theorem~\ref{theorem: ETC upper bound}, we use the decomposition~\eqref{eq:regret_decomposition}  to provide a regret bound on the arm selection regret \(\bar{\mathfrak{R}}_{\bnu,1}^{\rm E}(T)\). Let us define the {\em set} of discrete optimal mixtures of the PM $U_h$ as
\begin{align}
    {\rm OPT}_{\varepsilon}\;\triangleq\; \argmax\limits_{\ba\in\Delta_{\varepsilon}^{K-1}}\; U_h\left (\sum\limits_{i\in[K]} a(i)\F_i\right)\ ,
\end{align}
and the {\em set} of optimistic mixtures computed from the estimated CDFs at the time instant $t$ as
\begin{align}
    \widehat{\rm OPT}_{\varepsilon,t}\;\triangleq\; \argmax\limits_{\ba \in\Delta_{\varepsilon}^{K-1}} U_h\left ( \sum\limits_{i\in[K]}a(i)\F_{i,t}^{\rm E}\right )\ .
\end{align}
In the regret decomposition we have provided in~\eqref{eq:regret_decomposition}, the estimation regret \(\delta_{01}(\varepsilon)\) is upper bounded in Lemma~\ref{lemma:Delta_error}. Hence, for regret analysis, we focus on the arm selection regret. 
We can decompose the arm selection regret into three main parts as follows.
\begin{align}
\label{eq: Regret_DISCRET_ETC}
    \bar{\mathfrak{R}}_{\bnu, 1}^{\rm E}(T) &= \underbrace{\E_{\bnu}^{\rm E} \left[ \left( U_h\left(\sum_{i\in[K]} a^{(1)}(i)\F_i\right) - U_h\left(\sum_{i\in[K]} a^{\rm E}_{N(\varepsilon)}(i)\F_i\right)\right)\mathds{1}\{\ba^{\rm E}_{N(\varepsilon)} \in{\rm OPT}_{\varepsilon} \}\right]}_{\triangleq A_1(T)}\nonumber \\
    &\qquad + \underbrace{\E_{\bnu}^{\rm E} \left[  \left(U_h\left(\sum_{i\in[K]} a^{(1)}(i)\F_i\right) - U_h\left(\sum_{i\in[K]} a^{\rm E}_{N(\varepsilon)}(i)\F_i\right)\right)\mathds{1}\{\ba^{\rm E}_{N(\varepsilon)} \notin{\rm OPT}_{\varepsilon}\}\right]}_{\triangleq A_2(T)}\nonumber \\
    &\qquad + \underbrace{\E_{\bnu}^{\rm E} \left[ U_h\left(\sum_{i\in[K]} a^{\rm E}_{N(\varepsilon)}(i)\F_i\right)  - U_h\left(\sum_{i\in[K]} \frac{\tau_t^{\rm E}(i)}{T}\F_i\right) \right]}_{\triangleq A_3(T)}.
\end{align}
It can be readily verified that \(A_1(T)=0\). The term $A_2(T)$ captures the {\em mixing coefficient estimation error} when the PM-ETC-M algorithm generates an incorrect mixing coefficient at the end of its exploration phase. Finally, the term $A_3(T)$ captures the {\em sampling estimation error}, i.e., the error in matching the arm selection fractions to the estimated mixing coefficient at the end of the exploration phase. Next, we provide an upper bound for \(A_2(T)\) and \(A_3(T)\).

\subsection{Upper Bound on $A_2(T)$}
\label{appendix:PM-ETC-M mixing error}

Expanding the mixing coefficient estimation error term $A_2(T)$, we obtain
\begin{align}
    A_2(T) &= {\E_{\bnu}^{\rm E} \left[ \left( U_h\left(\sum_{i\in[K]} a^{(1)}(i)\F_i\right) - U_h\left(\sum_{i\in[K]}a^{\rm E}_{N(\varepsilon)}(i)\F_i\right) \right)\mathds{1}\{\ba^{\rm E}_{N(\varepsilon)} \notin{\rm OPT}_{\varepsilon}\}\right]} \\
    &= \E_{\bnu}^{\rm E} \left[ U_h\left(\sum_{i\in[K]} a^{(1)}(i)\F_i\right) - U_h\left(\sum_{i\in[K]} a^{\rm E}_{N(\varepsilon)}(i)\F_i\right)\;\bigg\lvert\;\ba^{\rm E}_{N(\varepsilon)} \notin{\rm OPT}_{\varepsilon}\right]\times \P_{\bnu}^{\rm E}\left(\ba^{\rm E}_{N(\varepsilon)} \notin{\rm OPT}_{\varepsilon}\right)\ .
\end{align}
Furthermore, because the PM is bounded above by $B_h$, we have 
\begin{align}
\label{eq: A2first}
    A_2(T) \leq B_h \cdot \P_{\bnu}^{\rm E}\left(\ba^{\rm E}_{N(\varepsilon)} \notin{\rm OPT}_{\varepsilon}\right)\ .
\end{align}
Next, we bound the probability of forming an incorrect estimate of the mixing coefficients. For this, for any $t$, we define the following events.
\begin{align}
    \mcE_{1,t}(x)\;&\triangleq\; \Bigg\{ \bigg\lvert U_h\bigg(\sum_{i\in[K]} a^{(1)}(i)\F_{i,t}^{\rm E}\bigg) - U_h\bigg(\sum_{i\in[K]} a^{(1)} (i)\F_i\bigg)\bigg \rvert\leq x\Bigg\}\ ,\\
    \mcE_{2,t}(x)\;&\triangleq\; \Bigg\{ \bigg\lvert U_h\bigg(\sum_{i\in[K]} a^{\rm E}_{N(\varepsilon)}(i)\F_{i,t}^{\rm E}\bigg) - U_h\bigg(\sum_{i\in[K]} a^{\rm E}_{N(\varepsilon)}(i)\F_i\bigg)\bigg \rvert\leq x\Bigg\}\ ,\\
    \text{and,}\quad\mcE_t(x)\;&\triangleq\;\mcE_1(x,t)\bigcap\mcE_2(x,t)\ .
    \label{eq:E_event}
\end{align}
Note that for any $\ba^{(1)}\in{\rm OPT}_{\varepsilon}$, we have
\begin{align}
    & \P_{\bnu}^{\rm E}\Big(\ba^{\rm E}_{N(\varepsilon)} \notin{\rm OPT}_{\varepsilon}\Big)
    \\ &\leq\;\P_{\bnu}^{\rm E}\left ( U_h\bigg(\sum_{i\in[K]} a^{(1)} (i)\F_i\bigg) \geq U_h\bigg(\sum_{i\in[K]} a^{\rm E}_{N(\varepsilon)}(i)\F_i\bigg) + \delta_{12}(\varepsilon)\right)\\
    \label{eq:ETC_A21}
    &=\;\P_{\bnu}^{\rm E}\left ( U_h\bigg(\sum_{i\in[K]} a^{(1)}(i)\F_i\bigg) \geq U_h\bigg(\sum_{i\in[K]} a^{\rm E}_{N(\varepsilon)}(i)\F_i\bigg) + \delta_{12}(\varepsilon)\;\bigg\lvert\;\mcE_{N(\varepsilon)}\bigg(\frac{1}{2}\delta_{12}(\varepsilon)\bigg)\right)\nonumber\\
    &\qquad\qquad\qquad\times\P_{\bnu}^{\rm E} \left(\mcE_{N(\varepsilon)}\bigg(\frac{1}{2}\delta_{12}(\varepsilon)\bigg)\right)\\
    &\;\;+\P_{\bnu}^{\rm E} \left ( U_h\bigg(\sum_{i\in[K]} a^{(1)}(i)\F_i\bigg) \geq U_h\bigg(\sum_{i\in[K]} a^{\rm E}_{N(\varepsilon)}(i)\F_i\bigg) + \delta_{12}(\varepsilon)\;\bigg\lvert\; \overline{\mcE}_{N(\varepsilon)}\bigg(\frac{1}{2}\delta_{12}(\varepsilon)\bigg)\right)\nonumber\\
    &\qquad\qquad\qquad\times\P_{\bnu}^{\rm E} \bigg( \overline{\mcE}_{N(\varepsilon)}\bigg(\frac{1}{2}\delta_{12}(\varepsilon)\bigg)\bigg)\ .
    \label{eq:ETC_A22}
\end{align}
Note that the first term, i.e., ~\eqref{eq:ETC_A21} is upper bounded by
\begin{align}
    &\P_{\bnu}^{\rm E} \left ( U_h\bigg(\sum_{i\in[K]} a^{(1)}(i)\F_i\bigg) \geq U_h\bigg(\sum_{i\in[K]} a^{\rm E}_{N(\varepsilon)}(i)\F_i\bigg) + \delta_{12}(\varepsilon)\;\bigg\lvert\; \mcE_{N(\varepsilon)}\bigg(\frac{1}{2}\delta_{12}(\varepsilon)\bigg)\right)\\
    \label{eq:ETC_1}
    &\leq \P_{\bnu}^{\rm E} \Bigg ( U_h\bigg(\sum_{i\in[K]} a^{(1)}(i)\F_{i,N(\varepsilon)}^{\rm E}\bigg) +  \frac{1}{2}\delta_{12}(\varepsilon)\geq U_h\bigg(\sum_{i\in[K]} a^{\rm E}_{N(\varepsilon)}(i)\F_{i,N(\varepsilon)}^{\rm E}\bigg)\nonumber\\
    &\qquad\qquad + \frac{1}{2}\delta_{12}(\varepsilon)\;\bigg\lvert\;\mcE_{N(\varepsilon)}\bigg(\frac{1}{2}\delta_{12}(\varepsilon)\bigg)\Bigg)\\
    &=0\ ,
    \label{eq:ETC_2}
\end{align}
where ~\eqref{eq:ETC_1} holds due to the conditioning on the event $\mcE_{N(\varepsilon)}(\frac{1}{2}\delta_{12}(\varepsilon))$, and ~\eqref{eq:ETC_2} holds since $\ba^{\rm E}_{N(\varepsilon)}\in\widehat{\rm OPT}_{\varepsilon, N}$. The second term, i.e., the term in~\eqref{eq:ETC_A22} is upper bounded by
\begin{align}
    \P_{\bnu}^{\rm E} \bigg( \overline{\mcE}_{N(\varepsilon)}\Big(\frac{1}{2}\delta_{12}(\varepsilon)\Big )\bigg)\;\leq\;\P_{\bnu}^\pi\bigg( \overline{\mcE}_{1,N}\Big (\frac{1}{2}\delta_{12}(\varepsilon)\Big)\bigg) + \P_{\bnu}^{\rm E} \bigg( \overline{\mcE}_{2,N}\Big (\frac{1}{2}\delta_{12}(\varepsilon)\Big)\bigg)\ .
\end{align}
Expanding each term, we obtain
\begin{align}
    \label{eq:ETC_bern_cue_start}
    \P_{\bnu}^{\rm E} \bigg( \overline{\mcE}_{1,N}\Big(\frac{1}{2}\delta_{12}(\varepsilon)\Big)\bigg)&= \P\left( \left\lvert U_h\bigg(\sum_{i\in[K]} a^{(1)}(i)\F_{i, N(\varepsilon)}^{\rm E}\bigg) - U_h\bigg(\sum_{i\in[K]} a^{(1)}(i)\F_i\bigg)\right\rvert > \frac{1}{2}\delta_{12}(\varepsilon)\right )\\
    &\stackrel{\eqref{eq:Holder}} {\leq} \P_{\bnu}^{\rm E} \left(\mcL \sum_{i\in[K]} \norm{a^{(1)} (i)^\star(\F_{i,N(\varepsilon)}^{\rm E} - \F_i)}_{\rm W}^q > \frac{1}{2}\delta_{12}(\varepsilon) \right)\\
    & {\leq}  \sum_{i\in[K]}  \P_{\bnu}^{\rm E} \left(\mcL\underbrace{\big( a^{(1)}(i)\big)^q}_{\leq 1}\norm{\F_{i,N(\varepsilon)}^{\rm E} - \F_i }_{\rm W}^q > \frac{1}{2K}\delta_{12}(\varepsilon) \right)\\
    \label{eq:alpha_to_1}
    &\leq \sum_{i\in[K]} \P_{\bnu}^{\rm E} \left ( \norm{\F_{i,N(\varepsilon)}^{\rm E}-\F_i}_{\rm W} > \left( \frac{1}{2K\mcL}\delta_{12}(\varepsilon)\right)^{\frac{1}{q}}  \right )\\
    &\stackrel{\eqref{eq:meta_concentration}}{\leq} \sum_{i\in[K]} 2\exp\left ( -\frac{\tau^{\rm E} _{N(\varepsilon)}(i)}{256e}\Bigg(\left(\frac{1}{2K\mcL( a^{(1)}(i))^q}\delta_{12}(\varepsilon)\right)^{1/q} - \frac{512}{\sqrt{\tau^{\rm E} _{N(\varepsilon)}}}\Bigg)^2 \right) \\
     &{=} \sum_{i\in[K]} 2\exp\left ( -\frac{\frac{N(\varepsilon)}{K}}{256e}\Bigg(\left(\frac{1}{2K\mcL( a^{(1)}(i))^q}\delta_{12}(\varepsilon)\right)^{1/q} - \frac{512}{\sqrt{\frac{N(\varepsilon)}{K}}}\Bigg)^2 \right) \\
    &= 2K \exp\left ( -\frac{\frac{N(\varepsilon)}{K}}{256e}\Bigg(\left(\frac{1}{2K\mcL( a^{(1)}(i))^q}\delta_{12}(\varepsilon)\right)^{1/q} - \frac{512}{\sqrt{\frac{N(\varepsilon)}{K}}}\Bigg)^2 \right) \ .
    \label{eq:E1C}
\end{align}
Furthermore, we have
\begin{align}
&\P_{\bnu}^{\rm E} \bigg( \overline{\mcE}_{2,N(\varepsilon)}\Big(\frac{1}{2}\delta_{12}(\varepsilon)\Big)\bigg)\nonumber\\
&\quad= \P_{\bnu}^{\rm E} \left( \bigg\lvert U_h\bigg(\sum_{i\in[K]}a^{\rm E}_{N(\varepsilon)}(i)\F_{i,N(\varepsilon)}^{\rm E}\bigg) - U_h\bigg(\sum_{i\in[K]}a^{\rm E}_{N(\varepsilon)}(i)\F_i \bigg)\bigg\rvert > \frac{1}{2}\delta_{12}(\varepsilon)\right )\\
    &\quad\leq\P_{\bnu}^{\rm E} \left(\bigcup_{\ba^{\rm E}_{N(\varepsilon)}\in\widehat{\rm OPT}_{\varepsilon, N(\varepsilon)}}  \bigg\lvert U_h\bigg(\sum_{i\in[K]} a^{\rm E}_{N(\varepsilon)}(i)\F_{i,N(\varepsilon)}^{\rm E}\bigg) - U_h\bigg(\sum_{i\in[K]} a^{\rm E}_{N(\varepsilon)}(i)\F_i\bigg) \bigg \rvert > \frac{1}{2}\delta_{12}(\varepsilon)\right)\\
    &\quad\leq\sum_{\ba^{\rm E}_{N(\varepsilon)}\in\widehat{\rm OPT}_{\varepsilon, N(\varepsilon)}}\P_{\bnu}^{\rm E} \left(\bigg\lvert U_h\bigg(\sum_{i\in[K]} a^{\rm E}_{N(\varepsilon)}(i)\F_{i,N(\varepsilon)}^{\rm E}\bigg) - U_h\bigg(\sum_{i\in[K]} a^{\rm E}_{N(\varepsilon)}(i)\F_i \bigg) \bigg \rvert > \frac{1}{2}\delta_{12}(\varepsilon)\right)\\
    &\quad\stackrel{\eqref{eq:Holder}}{\leq} \sum_{\ba^{\rm E}_{N(\varepsilon)}\in\widehat{\rm OPT}_{\varepsilon, N(\varepsilon)}} \sum_{i\in[K]} \P_{\bnu}^{\rm E} \left ( \norm{\F_{i,N(\varepsilon)}^{\rm E}-\F_i}_{\rm W} > \left( \frac{1}{2K\mcL\underbrace{\big(a^{\rm E}_{N(\varepsilon)}(i)\big)^q}_{\leq 1}}\delta_{12}(\varepsilon)\right)^{\frac{1}{q}}  \right )\\
    &\quad {\leq} \sum_{\ba^{\rm E}_{N(\varepsilon)}\in\widehat{\rm OPT}_{\varepsilon, N(\varepsilon)}} \sum_{i\in[K]} \P_{\bnu}^{\rm E} \left ( \norm{\F_{i,N(\varepsilon)}^{\rm E}-\F_i}_{\rm W} > \left( \frac{1}{2K\mcL}\delta_{12}(\varepsilon)\right)^{\frac{1}{q}}  \right )\\
    &\quad\stackrel{\eqref{eq:meta_concentration}}{\leq}\sum_{\ba^{\rm E}_{N(\varepsilon)}\in\widehat{\rm OPT}_{\varepsilon, N(\varepsilon)}} \sum_{i\in[K]} \exp\left ( -\frac{\tau^{\rm E} _{N(\varepsilon)}(i)}{256e}\Bigg(\left(\frac{1}{2K\mcL( a^{(1)}(i))^q}\delta_{12}(\varepsilon)\right)^{1/q} - \frac{512}{\sqrt{\tau^{\rm E} _{N(\varepsilon)}}}\Bigg)^2 \right)\\
    & \quad\leq \sum_{\ba^{\rm E}_{N(\varepsilon)}\in\widehat{\rm OPT}_{\varepsilon, N(\varepsilon)}} 2K \exp\left ( -\frac{\frac{N(\varepsilon)}{K}}{256e}\Bigg(\left(\frac{1}{2K\mcL( a^{(1)}(i))^q}\delta_{12}(\varepsilon)\right)^{1/q} - \frac{512}{\sqrt{\frac{N(\varepsilon)}{K}}}\Bigg)^2 \right) \\
    &\quad\leq 2K\left(\frac{1}{\varepsilon} \right)^{K-1} \exp\left ( -\frac{\frac{N(\varepsilon)}{K}}{256e}\Bigg(\left(\frac{1}{2K\mcL( a^{(1)}(i))^q}\delta_{12}(\varepsilon)\right)^{1/q} - \frac{512}{\sqrt{\frac{N(\varepsilon)}{K}}}\Bigg)^2 \right), 
    \label{eq:E2C}
\end{align}
where~\eqref{eq:E2C} follows from the fact that the total number of discrete $\ba$ values may not exceed \(\left(\frac{1}{\varepsilon} \right)^{K-1}\). Combining~\eqref{eq:E1C} and~\eqref{eq:E2C}, we obtain
\begin{align}
    &\P_{\bnu}^{\rm E} \bigg( \overline{\mcE}_{1,N(\varepsilon)}\Big(\frac{1}{2}\delta_{12}(\varepsilon)\Big)\bigg) + \P_{\bnu}^{\rm E} \bigg( \overline{\mcE}_{2,N(\varepsilon)}\Big(\frac{1}{2}\delta_{12}(\varepsilon)\Big)\bigg) \nonumber \\
    & \quad\leq 2K \exp\left ( -\frac{\frac{N(\varepsilon)}{K}}{256e}\Bigg(\left(\frac{1}{2K\mcL( a^{(1)}(i))^q}\delta_{12}(\varepsilon)\right)^{1/q} - \frac{512}{\sqrt{\frac{N(\varepsilon)}{K}}}\Bigg)^2 \right) \nonumber \\
    & \qquad+ \left(\frac{1}{\varepsilon} \right)^{K-1} 2K \exp\left ( -\frac{\frac{N(\varepsilon)}{K}}{256e}\Bigg(\left(\frac{1}{2K\mcL( a^{(1)}(i))^q}\delta_{12}(\varepsilon)\right)^{1/q} - \frac{512}{\sqrt{\frac{N(\varepsilon)}{K}}}\Bigg)^2 \right)  \\
    &\quad= 2K \exp\left ( -\frac{\frac{N(\varepsilon)}{K}}{256e}\Bigg(\left(\frac{1}{2K\mcL( a^{(1)}(i))^q}\delta_{12}(\varepsilon)\right)^{1/q} - \frac{512}{\sqrt{\frac{N(\varepsilon)}{K}}}\Bigg)^2 \right) \cdot \left( \left(\frac{1}{\varepsilon} \right)^{K-1} +1 \right)\ .
    \label{eq:E3C}
\end{align}
At this step, for some $\delta\in(0,1)$, choosing \( N(\varepsilon) \) as
\begin{align}
\label{eq:number of samples}
    N(\varepsilon) \triangleq 256K  \e \left(\frac{2K \mcL}{\delta_{12}(\varepsilon)}\right)^{\frac{2}{q}} \bigg[\frac{32}{\sqrt{\rm e}} + \log^{\frac{1}{2}} \Big( 2K\delta   \big(\varepsilon^{-(K-1)} + 1\big) \Big) \bigg]^2\ , 
\end{align}
 and leveraging~\eqref{eq:E3C}, it can be readily verified that 
\begin{align}
    \P_{\bnu}^{\rm E} \bigg( \overline{\mcE}_{1,N(\varepsilon)}\Big(\frac{1}{2}\delta_{12}(\varepsilon)\Big)\bigg) + \P_{\bnu}^{\rm E} \bigg( \overline{\mcE}_{2,N(\varepsilon)}\Big(\frac{1}{2}\delta_{12}(\varepsilon)\Big)\bigg)\;\leq\;\delta\ ,
\end{align}
which implies that $\P_{\bnu}^{\rm E}\left(\ba^{\rm E}_{N(\varepsilon)}\notin{\rm OPT}_{\varepsilon}\right)  \leq \delta$. Hence, we have
\begin{align}
\label{eq: ETC A2T final}
    A_2(T) \stackrel{\eqref{eq: A2first}}{\leq} B_h \cdot \delta \ ,
\end{align}
where $B_h \in \R^{+}$ denotes the upper bound PM can take.

\subsection{Upper Bound on the PM-ETC-M Sampling Estimation Error $A_3(T)$}
\label{sec:ETC_sampling_error}

Next, we analyze \(A_3(T)\), which captures the sampling estimation error. We have
\begin{align}
    A_3(T) &= \E_{\bnu}^{\rm E} \left[ U_h\left(\sum_{i\in[K]} a^{\rm E}_{N(\varepsilon)}(i)\F_i\right)  - U_h\left(\sum_{i\in[K]} \frac{\tau_T^{\rm E}(i)}{T}\F_i\right) \right] \\ 
    \label{eq:lemma_8_2}
&\leq  \mcL\E_{\bnu}^{\rm E} \left[ \norm{\ba^{\rm E}_{N(\varepsilon)} - \frac{\btau_T^{\rm E}}{T} }_1^q W^q \right] \\
&\leq \mcL K W^q  \E_{\bnu}^{\rm E} \bigg[ \max_{i \in [K]} \left|a^{\rm E}_{N(\varepsilon)}(i) - \frac{\tau_t^{\rm E}(i)}{T}\right|^q  \bigg] ,
\end{align}
where~\eqref{eq:lemma_8_2} follows from the \holder~continuity of the utility stated in~\eqref{eq:Holder} and recalling the definition of $W$ stated in~\eqref{eq:W}. We need to find an upper bound on the estimation error due to the policy's sampling proportions. Note that as a result of the ``commitment'' process of the ETC algorithm stated in~\eqref{eq: ETC_samp}, for the first $K-1$ arms, the PM-ETC-M commits its arms selection such that it is pulled \( \max\{0,\lfloor Ta_{N(\varepsilon)}^{\rm E}(i)\rfloor - N(\varepsilon)/K\}\) times post exploration, and it allocates the remaining rounds to arm $K$. Let us define the set
\begin{align}
\label{eq:ETC set}
    \mcS^\prime\;\triangleq\; \Big\{i\in[K] : a^{\rm E}_{N(\varepsilon)}(i)< N(\varepsilon)/KT\Big\}\ .
\end{align}
Accordingly, for any arm $i\notin\mcS^\prime$, if $i\neq K$, we have the following bound.
\begin{align}
    \bigg \lvert \frac{\tau_t^{\rm E}(i)}{T} - a^{\rm E}_{N(\varepsilon)}(i)\bigg\rvert\;&\stackrel{\eqref{eq:ETC set}}{=}\; a^{\rm E}_{N(\varepsilon)}(i) - \frac{\tau_t^{\rm E}(i)}{T}\\
    &\stackrel{\eqref{eq: ETC_samp}}{=}\; a^{\rm E}_{N(\varepsilon)}(i) - \frac{\lfloor Ta^{\rm E}_{N(\varepsilon)}(i)\rfloor}{T}\\
    &\leq\; a^{\rm E}_{N(\varepsilon)}(i) - \frac{Ta^{\rm E}_{N(\varepsilon)}(i) - 1}{T}\\
    &<\;\frac{1}{T}\ .
    \label{eq: ETC commit 3}
\end{align}
Alternatively, if $K\notin\mcS^\prime$, we have
\begin{align}
    \bigg \lvert \frac{\tau_T^{\rm E}(K)}{T} - a^{\rm E}_{N(\varepsilon)}(K)\bigg\rvert\;&\stackrel{\eqref{eq:ETC set}}{=}\; a^{\rm E}_{N(\varepsilon)}(K) - \frac{\tau_T^{\rm E}(K)}{T}\\
    &\stackrel{\eqref{eq: ETC_samp}}{=}\; a^{\rm E}_{N(\varepsilon)}(K) - \frac{T - \sum\limits_{i\neq K}\tau_t^{\rm E}(i)}{T}\\
    &=\; a^{\rm E}_{N(\varepsilon)}(K) - 1 - \frac{1}{T} \left (\sum\limits_{i\in\mcS^\prime:i\neq K} \tau_t^{\rm E}(i) + \sum\limits_{i\notin\mcS^\prime: i \neq K} \tau_t^{\rm E}(i)\right)\\
    \label{eq: ETC commit 1}
    &=\; a^{\rm E}_{N(\varepsilon)}(K) - 1 - \frac{1}{T}\left( \frac{N(\varepsilon)}{K}|\mcS^\prime| + \sum\limits_{i\notin\mcS^\prime:i\neq K} \lfloor Ta^{\rm E}_{N(\varepsilon)}(i)\rfloor\right)\\
    &\leq\;a^{\rm E}_{N(\varepsilon)}(K) - 1 - \frac{1}{T}\left( \frac{N(\varepsilon)}{K}|\mcS^\prime| + \sum\limits_{i\notin\mcS^\prime:i\neq K}  Ta^{\rm E}_{N(\varepsilon)}(i)\right)\\
    &=\;\sum\limits_{i\notin\mcS^\prime} a^{\rm E}_{N(\varepsilon)}(i) - 1 + \frac{N(\varepsilon)|\mcS^\prime|}{KT}\\
    &= \;\frac{N(\varepsilon)|\mcS^\prime|}{KT} - \sum\limits_{i\in\mcS^\prime}a^{\rm E}_{N(\varepsilon)}(i) \\
    \label{eq: ETC commit 2}
    &\leq\; \sum\limits_{i\in\mcS^\prime}\frac{N(\varepsilon)}{KT} + \frac{N(\varepsilon)|\mcS^\prime|}{KT}\\
    &= \frac{2N(\varepsilon)|\mcS^\prime|}{KT}\\
    &\leq\; \frac{2N(\varepsilon)}{T}\ ,
    \label{eq: ETC commit 4}
\end{align}
where \eqref{eq: ETC commit 1} follows from the fact that for every $i\in\mcS^\prime$, since these arms have already been over-explored, they are not going to get sampled further by the PM-ETC-M algorithm, and~\eqref{eq: ETC commit 2} follows from the definition of the set $\mcS$, which dictates that for every arm $i\in\mcS^\prime$, we have \(a^{\rm E}_{N(\varepsilon)}(i)< N(\varepsilon)/KT\). Finally, for every $i\in\mcS^\prime$, we have
\begin{align}
     \bigg \lvert \frac{\tau_T^{\rm E}(i)}{T} - a^{\rm E}_{N(\varepsilon)}(i)\bigg\rvert\;&\stackrel{\eqref{eq:ETC set},\eqref{eq: ETC_samp}}{=}\; \bigg\lvert \frac{N(\varepsilon)}{KT} - a^{\rm E}_{N(\varepsilon)}(i)\bigg\rvert\\
     &\stackrel{\eqref{eq:ETC set}}{=}\; \frac{N(\varepsilon)}{KT} - a^{\rm E}_{N(\varepsilon)}(i)\\
     &\;< \frac{N(\varepsilon)}{KT}\ .
     \label{eq: ETC commit 5}
\end{align}
Hence, combining~\eqref{eq: ETC commit 3},~\eqref{eq: ETC commit 4}, and~\eqref{eq: ETC commit 5}, we conclude that for any $i\in[K]$, we have
\begin{align}
     \bigg \lvert \frac{\tau_T^{\rm E}(i)}{T} - a^{\rm E}_{N(\varepsilon)}(i)\bigg\rvert\;\leq\; \frac{2N(\varepsilon)}{T}\ .
     \label{eq: ETC commit 6}
\end{align}
Leveraging~\eqref{eq: ETC commit 6}, we can now upper bounded \(A_3(T)\) as follows.
\begin{align}
    A_3(T) &=\; \E_{\bnu}^{\rm E} \left[ U_h\left(\sum_{i\in[K]} a^{\rm E}_{N(\varepsilon)}(i)\F_i\right)  - U_h\left(\sum_{i\in[K]} \frac{\tau_t^{\rm E}(i)}{T}\F_i\right) \right] \\ 
&\leq\; \mcL KW^q\E_{\bnu}^{\rm E} \left[ \max_{i \in [K]} \left|a^{\rm E}_{N(\varepsilon)}(i) - \frac{\tau_t^{\rm E}(i)}{T}\right|^q  \right]\\
&\stackrel{\eqref{eq: ETC commit 6}}{\leq}\; \mcL KW^q \left(\frac{2N(\varepsilon)}{T}\right)^q \ .
\label{eq: ETC A3T final}
\end{align}

\subsection{Proof of Theorem \ref{theorem: ETC upper bound}}
\label{Appendix:ETC_theorem_proof}
We can upper bound the discretized regret leveraging the upper bounds on $A_1(T)$, $A_2(T)$ in~\eqref{eq: ETC A2T final}, and $A_3(T)$ in~\eqref{eq: ETC A3T final} and considering \(\delta = 1/T^2 \). Specifically,
\begin{align}
    \Bar{\mathfrak{R}}_{\bnu,1}^{\rm E}(T) &\stackrel{\eqref{eq: Regret_DISCRET_ETC}}{=}  A_1(T) + A_2(T) + A_3(T) \\
    & \leq  \frac{B_h}{T^2} + \mcL K \left(3W\frac{N(\varepsilon)}{T}\right)^q \\
    & \leq  (\mcL K + W^{-q}) \left(3W\frac{N(\varepsilon)}{T}\right)^q \ ,
    \label{eq:UP_ETC_eqn1}
\end{align}
where~\eqref{eq:UP_ETC_eqn1} follows from the fact that $\frac{B_h}{T^2}$ is upper bounded by $(\frac{N(\varepsilon)}{T})^q$. Finally, combining all the terms $A_1(T)$, $A_2(T)$ in~\eqref{eq: ETC A2T final}, $A_3(T)$ in~\eqref{eq: ETC A3T final}, and the estimation regret $\delta_{01}(\varepsilon)$ in~\eqref{lemma:Delta_error}, we obtain
\begin{align}
\label{eq:disc_up_ETC}
    \mathfrak{R}_{\bnu}^{\rm E}(T) &\stackrel{\eqref{eq:regret_decomposition}}{=} \delta_{01}(\varepsilon) + \Bar{\mathfrak{R}}_{\bnu,1}^{\rm E}(T) \\
    & \leq \delta_{01} + (\mcL K +W^{-q}) \left(\frac{2N(\varepsilon)}{T}\right)^q  \\
    \label{eq:ETC_M_eps}
    & = \delta_{01} +  (\mcL K +W^{-q}) \Bigg( 3WM(\varepsilon)\frac{\log T}{T} \Bigg)^q\ ,
\end{align}
where~\eqref{eq:ETC_M_eps} we define \(M(\varepsilon) \triangleq \frac{N(\varepsilon)}{\log T}\).

\subsection{Proof of Theorem~\ref{theorem:PM-ETC-M}}
\label{Appendix:ETC_theorem_epsilon_proof}
Let us set $\varepsilon = \Theta((K^{2+\frac{2}{q}}\frac{\log T}{T^{\gamma}})^{\frac{q}{2\beta}})$ for \(\gamma \in (0,1)\). Leveraging \(\delta_{12}(\varepsilon) = \Omega(\varepsilon^\beta)\ \), from~\eqref{eq:suboptimality_gaps}, the exploration horizon is $N(\varepsilon)=\Theta(T^\gamma)$. Hence, for arm selection regret, from~\eqref{eq:ETC_M_eps} we have
\begin{align}
\label{eq:Discrete_order}
    \Bar{\mathfrak{R}}_{\bnu,1}^{\rm E}(T) = O(KT^{(\gamma-1)q})\ .
\end{align}
Also, from Lemma~\eqref{lemma:Delta_error}, we have
\begin{align}
    \delta_{01}(\varepsilon) &\leq 2\mcL (KW\varepsilon)^q  \ .
\end{align}
As a result, \(\delta_{01}(\varepsilon) = O(\varepsilon^q)\). Hence, by our choice of $\varepsilon$, we obtain 
\begin{align}
\label{eq:delta_order}
    \delta_{02}(\varepsilon) = O\left(K^{q + \frac{q(q+1)}{\beta}}T^{-\frac{q^2\gamma}{2\beta}}(\log T)^{\frac{q^2}{2\beta}}\right).
\end{align}
From~\eqref{eq:disc_up_ETC},~\eqref{eq:Discrete_order}, and~\eqref{eq:delta_order} the regret of the PM-ETC-M algorithm is upper bounded by
    \begin{align}
    \label{eq:ETC upper bound gamma appendix}
        \mathfrak{R}_{\bnu}^{\rm E}(T) = O \left (\max\Big\{ K^{q + \frac{q(q+1)}{\beta}}T^{-\frac{rq\gamma}{2\beta}}(\log T)^{\frac{rq}{2\beta}}\;,\;K^qT^{(\gamma-1)q}\Big\} \right)\ .
    \end{align}
Finally, setting $\gamma = \frac{2\beta}{2\beta + q}$, the regret of the PM-ETC-M algorithm is upper bounded by 
\begin{align}
     \mathfrak{R}_{\bnu}^{{\textnormal{\rm E}}}(T)\ & \leq O\Big(\Big[K^{2(q+\beta+1)}\; T^{-\frac{q}{1+q/2\beta}}\; (\log T)^{q}\Big]^{\frac{q}{2\beta}}\Big)\ .
\end{align}

\subsection{Proof of Theorem~\ref{theorem:PM-ETC-M_beta13}}
\label{Appendix:ETC_theorem_epsilon_proof_13}
In this section, we prove Theorem~\ref{theorem:PM-ETC-M_beta13}. We leverage the regret decomposition in~\eqref{eq:regret_decomposition} to provide a bound on the arm selection regret \(\bar{\mathfrak{R}}_{\bnu, 2}^{\rm E}(T)\). The analysis closely follows PM-ETC-M, and we highlight the important distinctions. Let us define the sets of discrete optimal mixtures and second-best mixtures as
\begin{align}
    {\rm OPT}_{1, \varepsilon}\;\triangleq\; \argmax\limits_{\ba\in\Delta_{\varepsilon}^{K-1}}\; U_h\left (\sum\limits_{i\in[K]} a(i)\F_i\right)\ , \quad\text{and}\quad {\rm OPT}_{2, \varepsilon}\;\triangleq\; \argmax\limits_{\ba \neq \ba^{(1)},\: \ba\in\Delta_{\varepsilon}^{K-1}}\; U_h\left (\sum\limits_{i\in[K]} a(i)\F_i\right)\ .
\end{align}
Furthermore, let us define the combined set of discrete optimal and second-best mixing coefficients as
\begin{align}
   {\rm OPT}_{\varepsilon} \triangleq {\rm OPT}_{1, \varepsilon} \cup{\rm OPT}_{2, \varepsilon}\ .
\end{align}
and the {\em set} of optimistic mixtures computed from the estimated CDFs at the time instant $t$ as
\begin{align}
    \widehat{\rm OPT}_{\varepsilon,t}\;\triangleq\; \argmax\limits_{\ba \in\Delta_{\varepsilon}^{K-1}} U_h\left ( \sum\limits_{i\in[K]}a(i)\F_{i,t}^{\rm E}\right )\ .
\end{align}
In the regret decomposition we have provided in~\eqref{eq:regret_decomposition}, the estimation regret \(\delta_{02}(\varepsilon)\) is upper bounded in Lemma~\ref{lemma:Delta_error}. Hence, for regret analysis, we focus only on the arm selection regret.  We can decompose the arm selection regret into three main parts as follows.
\begin{align}
\label{eq: Regret_DISCRET_ETC_bern}
    \bar{\mathfrak{R}}_{\bnu,2}^{\rm E}(T) &= \underbrace{\E_{\bnu}^{\rm E} \left[ \left( U_h\left(\sum_{i\in[K]} a^{(2)}(i)\F_i\right) - U_h\left(\sum_{i\in[K]} a^{\rm E}_{N(\varepsilon)}(i)\F_i\right)\right)\mathds{1}\{\ba^{\rm E}_{N(\varepsilon)} \in{\rm OPT}_{\varepsilon} \}\right]}_{\triangleq A_1(T)}\nonumber \\
    &\qquad + \underbrace{\E_{\bnu}^{\rm E} \left[  \left(U_h\left(\sum_{i\in[K]} a^{(2)}(i)\F_i\right) - U_h\left(\sum_{i\in[K]} a^{\rm E}_{N(\varepsilon)}(i)\F_i\right)\right)\mathds{1}\{\ba^{\rm E}_{N(\varepsilon)} \notin{\rm OPT}_{\varepsilon}\}\right]}_{\triangleq A_2(T)}\nonumber \\
    &\quad\qquad + \underbrace{\E_{\bnu}^{\rm E} \left[ U_h\left(\sum_{i\in[K]} a^{\rm E}_{N(\varepsilon)}(i)\F_i\right)  - U_h\left(\sum_{i\in[K]} \frac{\tau_t^{\rm E}(i)}{T}\F_i\right) \right]}_{\triangleq A_3(T)}.
\end{align}
It can be readily verified that \(A_1(T)\leq 0\). The term $A_2(T)$ captures the {\em mixing coefficient estimation error} when the PM-ETC-M algorithm generates an incorrect mixing coefficient at the end of its exploration phase. Finally, the term $A_3(T)$ captures the {\em sampling estimation error}, i.e., the error in matching the arm selection fractions to the estimated mixing coefficient at the end of the exploration phase. An upper bound on $A_3(T)$ readily follows from Section~\ref{sec:ETC_sampling_error}. Hence, we provide an upper bound for \(A_2(T)\). Expanding $A_2(T)$, we obtain
\begin{align}
    A_2(T) &= {\E_{\bnu}^{\rm E} \left[ \left( U_h\left(\sum_{i\in[K]} a^{(2)}(i)\F_i\right) - U_h\left(\sum_{i\in[K]}a^{\rm E}_{N(\varepsilon)}(i)\F_i\right) \right)\mathds{1}\{\ba^{\rm E}_{N(\varepsilon)} \notin{\rm OPT}_{\varepsilon}\}\right]} \\
    &= \E_{\bnu}^{\rm E} \left[ U_h\left(\sum_{i\in[K]} a^{(2)}(i)\F_i\right) - U_h\left(\sum_{i\in[K]} a^{\rm E}_{N(\varepsilon)}(i)\F_i\right)\;\bigg\lvert\;\ba^{\rm E}_{N(\varepsilon)} \notin{\rm OPT}_{\varepsilon}\right] \nonumber \\ & \qquad \qquad \times \P_{\bnu}^{\rm E}\left(\ba^{\rm E}_{N(\varepsilon)} \notin{\rm OPT}_{\varepsilon}\right)\ .
\end{align}
Furthermore, owing to the fact that the PM is bounded above by $B_h$, we have 
\begin{align}
\label{eq: A2first beta 13}
    A_2(T) \leq B_h \cdot \P_{\bnu}^{\rm E}\left(\ba^{\rm E}_{N(\varepsilon)} \notin{\rm OPT}_{\varepsilon}\right)\ .
\end{align}
Next, we bound the probability of forming an incorrect estimate of the mixing coefficients. Let us recall the event $\mcE_t(x)$ for any $t\in\N$ and $x\in\R_+$, as defined in~\eqref{eq:E_event}.
Note that for any $\ba^{(1)}\in{\rm OPT}_{1,\varepsilon}$ and $\ba^{(2)}\in{\rm OPT}_{2,\varepsilon}$, we have
\begin{align}
    & \P_{\bnu}^{\rm E}\Big(\ba^{\rm E}_{N(\varepsilon)} \notin{\rm OPT}_{\varepsilon}\Big)
    \\ &=\;\P_{\bnu}^{\rm E}\Big ( U_h\bigg(\sum_{i\in[K]} a^{(1)} (i)\F_i\bigg) \geq U_h\bigg(\sum_{i\in[K]} a^{\rm E}_{N(\varepsilon)}(i)\F_i\bigg) + \delta_{13}(\varepsilon),   \\ & \quad \quad \quad \quad U_h\bigg(\sum_{i\in[K]} a^{(2)} (i)\F_i\bigg) \geq U_h\bigg(\sum_{i\in[K]} a^{\rm E}_{N(\varepsilon)}(i)\F_i\bigg) + \delta_{23}(\varepsilon) \Big)\\
    &\leq\;\P_{\bnu}^{\rm E}\left ( U_h\bigg(\sum_{i\in[K]} a^{(1)} (i)\F_i\bigg) \geq U_h\bigg(\sum_{i\in[K]} a^{\rm E}_{N(\varepsilon)}(i)\F_i\bigg) + \delta_{13}(\varepsilon) \right)\\
    \label{eq:ETC_A21_bern}
    &=\;\P_{\bnu}^{\rm E}\left ( U_h\bigg(\sum_{i\in[K]} a^{(1)}(i)\F_i\bigg) \geq U_h\bigg(\sum_{i\in[K]} a^{\rm E}_{N(\varepsilon)}(i)\F_i\bigg) + \delta_{13}(\varepsilon)\;\bigg\lvert\;\mcE_{N(\varepsilon)}\bigg(\frac{1}{2}\delta_{13}(\varepsilon)\bigg)\right)\nonumber\\
    &\qquad\qquad\qquad\times\P_{\bnu}^{\rm E} \left(\mcE_{N(\varepsilon)}\bigg(\frac{1}{2}\delta_{13}(\varepsilon)\bigg)\right)\\
    &\;\;+\P_{\bnu}^{\rm E} \left ( U_h\bigg(\sum_{i\in[K]} a^{(1)}(i)\F_i\bigg) \geq U_h\bigg(\sum_{i\in[K]} a^{\rm E}_{N(\varepsilon)}(i)\F_i\bigg) + \delta_{13}(\varepsilon)\;\bigg\lvert\; \overline{\mcE}_{N(\varepsilon)}\bigg(\frac{1}{2}\delta_{13}(\varepsilon)\bigg)\right)\nonumber\\
    &\qquad\qquad\qquad\times\P_{\bnu}^{\rm E} \bigg( \overline{\mcE}_{N(\varepsilon)}\bigg(\frac{1}{2}\delta_{13}(\varepsilon)\bigg)\bigg)\ .
    \label{eq:ETC_A22_bern}
\end{align}
Furthermore, the first term, i.e., ~\eqref{eq:ETC_A21_bern} is upper bounded by
\begin{align}
    &\P_{\bnu}^{\rm E} \left ( U_h\bigg(\sum_{i\in[K]} a^{(1)}(i)\F_i\bigg) \geq U_h\bigg(\sum_{i\in[K]} a^{\rm E}_{N(\varepsilon)}(i)\F_i\bigg) + \delta_{13}(\varepsilon)\;\bigg\lvert\; \mcE_{N(\varepsilon)}\bigg(\frac{1}{2}\delta_{13}(\varepsilon)\bigg)\right)\\
    \label{eq:ETC_1_bern}
    &\leq \P_{\bnu}^{\rm E} \Bigg ( U_h\bigg(\sum_{i\in[K]} a^{(1)}(i)\F_{i,N(\varepsilon)}^{\rm E}\bigg) +  \frac{1}{2}\delta_{13}(\varepsilon)\geq U_h\bigg(\sum_{i\in[K]} a^{\rm E}_{N(\varepsilon)}(i)\F_{i,N(\varepsilon)}^{\rm E}\bigg)\nonumber\\
    & \qquad\qquad + \frac{1}{2}\delta_{13}(\varepsilon)\;\bigg\lvert\;\mcE_{N(\varepsilon)}\bigg(\frac{1}{2}\delta_{13}(\varepsilon)\bigg)\Bigg)\\
    &=0\ ,
    \label{eq:ETC_10_bern}
\end{align}
where~\eqref{eq:ETC_1_bern} holds due to the conditioning on the event $\mcE_{N(\varepsilon)}(\delta_{13}(\varepsilon)/2)$, and ~\eqref{eq:ETC_10_bern} holds since $\ba^{\rm E}_{N(\varepsilon)}\in\widehat{\rm OPT}_{\varepsilon, N}$. Furthermore, the term in~\eqref{eq:ETC_A22_bern} is upper bounded by
\begin{align}
    \P_{\bnu}^{\rm E} \bigg( \overline{\mcE}_{N(\varepsilon)}\Big(\frac{1}{2}\delta_{13}(\varepsilon)\Big )\bigg)\;\leq\;\P_{\bnu}^\pi\bigg( \overline{\mcE}_{1,N}\Big (\frac{1}{2}\delta_{13}(\varepsilon)\Big)\bigg) + \P_{\bnu}^{\rm E} \bigg( \overline{\mcE}_{2,N}\Big (\frac{1}{2}\delta_{13}(\varepsilon)\Big)\bigg)\ ,
\end{align}
which can be analyzed following the exact steps as~\eqref{eq:ETC_bern_cue_start}-\eqref{eq:E3C}, with the only distinctions that $N_{12}(\varepsilon)$ is replaced by $N_{13}(\varepsilon)$, and $\delta_{12}(\varepsilon)$ is replaced by $\delta_{13}(\varepsilon)$. The remainder of the proof follows similar arguments as Section~\ref{Appendix:ETC_theorem_epsilon_proof}, except we set $\varepsilon = \Theta((K^{2+\frac{2}{q}}T^{-{\bar\gamma}}\log T)^{\frac{q}{2\beta}})$ for \(\bar\gamma \in (0,1)\). Since \(\delta_{13}(\varepsilon) = \Omega(\varepsilon^{\bar\beta})\ \), from~\eqref{eq:number of samples main paper beta13}, the exploration horizon is $N_{13}(\varepsilon)=\Theta(T^{\bar\gamma})$. Hence, for arm selection regret, from~\eqref{eq:ETC_M_eps} we have
\begin{align}
\label{eq:Discrete_order_3}
    \Bar{\mathfrak{R}}_{\bnu,2}^{\rm E}(T) = O(KT^{(\bar\gamma-1)q})\ .
\end{align}
Furthermore, from Lemma~\eqref{lemma:Delta_error}, we have
\begin{align}
    \delta_{02}(\varepsilon) &\leq 2\mcL (KW\varepsilon)^q  \ .
\end{align}
Hence, \(\delta_{02}(\varepsilon) = O(\varepsilon^q)\). Consequently, by our choice of $\varepsilon$, we obtain 
\begin{align}
\label{eq:delta_order_beta13}
    \delta_{02}(\varepsilon) = O\left(K^{q + \frac{q(q+1)}{{\bar\beta}}}T^{-\frac{q^2{\bar\gamma}}{2{\bar\beta}}}(\log T)^{\frac{q^2}{2{\bar\beta}}}\right).
\end{align}
From~\eqref{eq:disc_up_ETC},~\eqref{eq:Discrete_order}, and~\eqref{eq:delta_order_beta13} the regret of the PM-ETC-M algorithm is upper bounded by
    \begin{align}
    \label{eq:ETC upper bound gamma appendix beta13}
        \mathfrak{R}_{\bnu}^{\rm E}(T) = O \left (\max\Big\{ K^{q + \frac{q(q+1)}{{\bar\beta}}}T^{-\frac{rq{\bar\gamma}}{2{\bar\beta}}}(\log T)^{\frac{rq}{2{\bar\beta}}}\;,\;K^qT^{(\bar\gamma-1)q}\Big\} \right)\ .
    \end{align}
Finally, setting $\bar\gamma \triangleq \frac{2{\bar\beta}}{2{\bar\beta} + q}$, the regret of the PM-ETC-M algorithm is upper bounded by 
\begin{align}
     \mathfrak{R}_{\bnu}^{{\textnormal{\rm E}}}(T)\ & \leq O\Big(\Big[K^{2(q+{\bar\beta}+1)}\; T^{-\frac{q}{1+q/2{\bar\beta}}}\; (\log T)^{q}\Big]^{\frac{q}{2{\bar\beta}}}\Big)\ .
\end{align}

\section{PM-UCB-M Algorithm}
\label{proof:UCB upper bound}
In this section, we provide the proofs of Theorems \ref{theorem:UCB upper bound} and \ref{corollary:PM-UCB-M} for PM-UCB-M.

\subsection{Regret Decomposition}
In~\eqref{eq:regret_decomposition}, the regret is decomposed into a estimation regret component, and the arm selection regret. Lemma~\ref{lemma:Delta_error} shows an upper bound for the estimation regret. In this section, we provide an upper bound on arm selection regret $\bar{\mathfrak{R}}_{\bnu,1}^{\rm U}(T)$ of Algorithm~\ref{algorithm:B-UCB-M} as follows. Recall that we have defined \(\tau_t^{\rm U}(i)\) as the number of times PM-UCB-M selects arm \(i \in [K]\) up to time $t$. We have
\begin{align}
    \Bar{\mathfrak{R}}_{\bnu,1}^{\rm U}(T) &=   U_h\left(\sum_{i\in[K]} a^{(1)}(i)\F_i\right)  - \E_{\bnu}^{\rm U} \left[U_h\left(\sum_{i\in[K]} \frac{\tau_T^{\rm U}(i)}{T}\F_i\right)\right] \\
    &\leq \underbrace{\sum_{\mcS \subseteq [K]: \mcS \neq \emptyset}
    \E_{\bnu}^{\rm U} \left[\Bigg( U_h\left(\sum_{i\in[K]} a^{(1)}(i)\F_i\right)  -  U_h\left(\sum_{i\in[K]} \frac{\tau_T^{\rm U}(i)}{T}\F_i\right) \Bigg) \mathds{1}\{\F_i \notin \mcC_T(i): i \in \mcS\}\right]}_{\triangleq B_1(T)} \\
    &\qquad + \underbrace{\E_{\bnu}^{\rm U} \left[ \Bigg( U_h\left(\sum_{i\in[K]} a^{(1)}(i)\F_i\right)  - U_h\left(\sum_{i\in[K]} \frac{\tau_T^{\rm U}(i)}{T}\F_i\right) \Bigg) \mathds{1}\{\F_i \in \mcC_T(i) \;\forall i \in [K]\} \right]}_{\triangleq B_2(T)} \ .
\end{align}
Expanding $B_1(T)$, we have
\begin{align}
    B_1(T)&=\sum_{\mcS \subseteq [K]: \mcS \neq \emptyset}\P_{\bnu}^{\rm U}\Big ( \F_i \notin \mcC_T(i): i \in \mcS\Big )\nonumber\\
    &\times \quad
    \E_{\bnu}^{\rm U} \left[U_h\left(\sum_{i\in[K]} a^{(1)}(i)\F_i\right)  -  U_h\left(\sum_{i=1}^{K} \frac{\tau_T^{\rm U}(i)}{T}\F_i\right)\bigg\lvert\; \F_i \notin \mcC_T(i): i \in \mcS\right]\ .
\end{align}
Leveraging the fact that
\begin{align}
    \sum_{i=1}^{K} \binom{K}{i} x^i = (x+1)^K-1\ ,
\end{align}
along with the large deviation bound on the confidence sequences from Lemma~\ref{lemma: concentration}, i.e., $\P_{\bnu}^{\rm U}(\F_i\notin\mcC_T(i))\leq 1/T^2$ for every $i\in[K]$, we have
\begin{align}
\label{eq:probi}
    \sum\limits_{\mcS\subseteq[K] : \mcS\neq\emptyset}\P_{\bnu}^{\rm U} \Big(\F_i \notin \mcC_T(i):  i \in \mcS\Big) = \left(\frac{2}{T^2}+1\right)^K-1\ .
\end{align}
Furthermore, owing to the fact that $U_h(\sum_{i\in[K]}\alpha(i)\F_i)\leq B_h$ for any $\balpha\in\Delta^{K-1}$, we obtain
\begin{align}
\label{eq:A_1 bound refer}
    B_1(T)\;\leq\; B_h\left(\left(\frac{2}{T^2}+1\right)^K-1\right)\ .
\end{align}
Next, we will upper bound $B_2(T)$. For this purpose, we begin by defining the upper confidence bound of a parameter $\balpha\in\Delta^{K-1}$ at time $t\in\N$ as 
 \begin{align}
 \label{eq:UCB_alpha_def}
     {\rm UCB}_t(\balpha)\;\triangleq\; \max\limits_{\eta_1\in\mcC_t(1),\cdots,\mcC_t(K)}\; U\left ( \sum\limits_{i\in[K]}\alpha(i)\eta_i\right )\ .
\end{align}
Furthermore, let us define 
the optimistic CDF estimates which maximize the upper confidence bound for every arm $i\in[K]$ as 
\begin{align}  \widetilde\F_{i,t}\;\triangleq\;\argmax_{\eta_i\in\mcC_t(i)}\;\max\limits_{\eta_j\in\mcC_t(j) : j\neq i} {\rm UCB}_t(\ba^{\rm U}_t)\ .
\end{align}
Expanding $B_2(T)$, we have 
\begin{align}
    B_2(T) &= \left . \E_{\bnu}^{\rm U}\left[ U_h\left(\sum_{i\in[K]} a^{(1)}(i) \F_i \right) - U_h\left(\sum_{i\in[K]} \frac{\tau_T^{\rm U}(i)}{T} \F_i \right) \right\lvert \F_i \in \mcC_T(i) \;\forall i \in [K] \right]\\
    & \stackrel{\eqref{eq:UCB_alpha_def}}{\leq} \E_{\bnu}^{\rm U}\left[\left .{\rm UCB}_T(\ba^{(1)}) - U_h\left(\sum_{i\in[K]} \frac{\tau_T^{\rm U}(i)}{T}\F_i\right)\right\lvert \F_i \in \mcC_T(i) \;\forall i \in [K]\right] \\
    & \stackrel{\eqref{eq:UCB_alpha}}{\leq} \E_{\bnu}^{\rm U}\left[\left . {\rm UCB}_T(\ba^{\rm U}_T) - U_h\left(\sum_{i\in[K]} \frac{\tau_T^{\rm U}(i)}{T}\F_i\right)\right\lvert \F_i \in \mcC_T(i) \;\forall i \in [K]\right] \\
    & = \underbrace{\left .\E_{\bnu}^{\rm U}\left[{\rm UCB}_T(\ba^{\rm U}_T) - U_h\left(\sum_{i\in[K]} a^{\rm U}_T(i)\F_i\right)\right\lvert \F_i \in \mcC_T(i) \;\forall i \in [K]\right]}_{\triangleq B_{21}(T)}\nonumber\\
    &\quad +\underbrace{\left .\E_{\bnu}^{\rm U}\left[U_h\left(\sum_{i\in[K]} a^{\rm U}_T(i)\F_i\right) - U_h\left(\sum_{i\in[K]} \frac{\tau_T^{\rm U}(i)}{T}\F_i\right)\right\lvert \F_i \in \mcC_T(i)\; \forall i \in [K]\right]}_{B_{22}(T)}\ . 
\end{align}
Note that the term $B_{21}(T)$ captures the {\em CDF estimation error} incurred by the UCB algorithm, and the term $B_{22}(T)$ reflects the {\em sampling estimation error} incurred by the UCB algorithm. Subsequently, we analyze each of these components.

\subsection{Upper Bound on $B_{21}(T)$}
Expanding $B_{21}(T)$ we obtain
\begin{align}
    B_{21}(T) &\leq\left .\E_{\bnu}^{\rm U}\left[U_h\left(\sum_{i\in[K]} a_{T}^{\rm U}(i)\widetilde\F_{i,T}\right) - U_h\left(\sum_{i\in[K]} a_{T}^{\rm U}(i)\F_i\right)\right\lvert \F_i \in \mcC_T(i) \;\forall i \in [K]\right]\\
    &\stackrel{\eqref{eq:Holder}}{\leq} \E_{\bnu}^{\rm U} \left .\left[\mathcal{L} \sum_{i\in[K]} \left(a_{T}^{\rm U}(i)\right)^q \norm{ \widetilde{\F}_{i,T}-\F_i}_{\rm W}^q  \right\lvert \F_i \in \mcC_T(i) \;\forall i \in [K] \right] \\
&\stackrel{\eqref{eq:UCB_confidence_sets}}{\leq} \E_{\bnu}^{\rm U} \left . \left[\mcL (32)^q\Big(\sqrt{2\e\log T} + 32\big)^q \sum\limits_{i\in[K]} \big(a_T^{\rm U}(i)\big)^q\left( \frac{1}{\tau_T^{\rm U}(i)}\right)^{\frac{q}{2}} \right\lvert \F_i \in \mcC_T(i) \;\forall i \in [K] \right]\\
    \label{eq:UUB_1}
    &\leq  \E_{\bnu}^{\rm U} \left . \left[ \mcL\left( 32 \Big( \sqrt{2\e\log T}+32\Big)\right)^q\cdot \sum\limits_{i\in[K]} \left (\frac{1}{\tau_T^{\rm U}(i)} \right)^{\frac{q}{2}} \right\lvert \F_i \in \mcC_T(i) \;\forall i \in [K] \right]\\
    \label{eq:UUB_2}
    &\leq \mcL\left( 32 \Big( \sqrt{2\e\log T}+32\Big)\right)^q\cdot \sum\limits_{i\in[K]} \left (\frac{1}{\frac{\rho}{4} T \varepsilon}\right)^{\frac{q}{2}}\\
    &=\frac{\mcL K}{T} \left( \frac{32}{\sqrt{\frac{\rho}{4}}}\right)^q T^{1-\frac{q}{2}} \left(\frac{\Big(\sqrt{2\e\log T} + 32\Big)}{\sqrt{\varepsilon}}\right)^q\ ,
    \label{eq:A21}
\end{align}
where \eqref{eq:UUB_1} follows from the fact that $a_T^{\rm U}(i)\leq 1$ for every $i\in[K]$, and~\eqref{eq:UUB_2} follows from the explicit exploration of each arm $i\in[K]$ in Algorithm~\ref{algorithm:UCB-M}.

\subsection{Upper Bound on the Sampling Estimation Error}
\label{appendix: UCB sampling estimation error}

Finally, we will bound $B_{22}(T)$. The key idea in upper bounding the sampling estimation error is to show that after a certain instant, the PM-UCB-M algorithm outputs an estimate $\balpha_t^{\rm U}$ of the mixing coefficients that belongs to the set of discrete optimal mixtures with a very high probability. Subsequently, we show that the undersampling routine of the PM-UCB-M algorithm ensures that once it has identified a correct optimal mixture, it navigates its sampling proportions to match the estimated coefficient. Let us denote the vector of arm sampling fractions by $\btau_T^{\rm U}\triangleq [\tau_T^{\rm U}(1),\cdots,\tau_T^{\rm U}(K)]^\top$. For the first step, note that
\begin{align}
    B_{22}(T)\;&\stackrel{\eqref{eq:Holder}}{\leq}\;\mcL \E_{\bnu}^{\rm U}\left [ \norm{\sum\limits_{i\in[K]}\left(a^{\rm U}_T(i) - \frac{\tau_T^{\rm U}(i)}{T}\right)\F_i}^q_{\rm W} \;\bigg\lvert\; \F_i\in \mcC_T(i)\;\forall i \in [K]\right ]\\
    &\stackrel{\eqref{eq:W}}{\leq}\;\mcL\E_{\bnu}^{\rm U}\left[ W^q\cdot \norm{\ba_T^{\rm U} - \frac{1}{T}\btau_T^{\rm U}}_1^q\;\bigg\lvert\; \F_i\in \mcC_T(i)\;\forall i \in [K]\right]\ .
    \label{eq:UCB_UB2}
\end{align}
In order to bound $B_{22}(T)$, we will leverage a low probability event (which we will call $\mcE_{3,T}$), and expand~\eqref{eq:UCB_UB2} by conditioning on this event. To define the low probability event, let us first lay down a few definitions. First, let us define the event 
\begin{align}
    \mcE_{0,t} \triangleq \Big\{\F_i \in \mcC_t(i) \quad \forall i \in [K]\Big\}\ .
\end{align}
Let us denote the {\em set} of discrete optimal mixtures of the PM $U_h$ by
\begin{align}
    {\rm OPT}_{\varepsilon}\;\triangleq\; \argmax\limits_{\ba\in\Delta_{\varepsilon}^{K-1}}\; U_h\left (\sum\limits_{i\in[K]} a(i)\F_i\right)\ ,
\end{align}
and the {\em set} of optimistic mixtures at each instant $t$ by
\begin{align}
    \widetilde{\rm OPT}_{\varepsilon,t}\;\triangleq\; \argmax\limits_{\ba\in\Delta_{\varepsilon}^{K-1}}\;\max\limits_{\eta_i\in\mcC_t(i)\;\forall i\in[K]}\; U_h\left ( \sum\limits_{i\in[K]}a(i)\eta_i\right )\ .
\end{align}
Furthermore, for any $\ba^{(1)}\in{\rm OPT}_{\varepsilon}$ and $\widetilde\balpha_t\in\widetilde{\rm OPT}_{\varepsilon,t}$, let us define the events
\begin{align}
    \mcE_{1,t}(x) \triangleq \left\{\left| U_h\left(\sum_{i\in[K]} a^{(1)}(i)\widetilde{\F}_{i,t}\right) - U_h\left(\sum_{i\in[K]} a^{(1)}(i) \F_i\right) \right| < x \right\}\ ,
\end{align}
\begin{align}
    \mcE_{2,t}(x) \triangleq \left\{\left| U_h\left(\sum_{i\in[K]} a^{\rm U}_t(i)\widetilde{\F}_{i,t}\right) - U_h\left(\sum_{i\in[K]} a^{\rm U}_t(i)\F_i\right) \right| < x \right\}\ ,
\end{align}

\begin{align}
   \text{and,} \quad \mcE_t(x) \triangleq \mcE_{1,t}(x) \bigcap \mcE_{2,t}(x)\ .
\end{align}
Note that
\begin{align}
    \P_{\bnu}^{\rm U} \Big ( \widetilde{\rm OPT}_{\varepsilon,t} \neq {\rm OPT}_{\varepsilon}\Big )\;= \;\P_{\bnu}^{\rm U}\Big (\exists \ba\in\widetilde{\rm OPT}_{\varepsilon,t} : \ba\notin {\rm OPT}_{\varepsilon} \Big )\ .
\end{align}
Let us denote the mixing coefficients that is contained in $\widetilde{\rm OPT}_{\varepsilon,t}$ and yet not in ${\rm OPT}_{\varepsilon}$ by $\ba_t^{\rm U}$. Accordingly, we have
\begin{align}
    \P_{\bnu}^{\rm U}\Big(\ba^{\rm U}_t\notin{\rm OPT}_{\varepsilon}\Big) &= \P_{\bnu}^{\rm U}\Big (\ba^{\rm U}_t\notin{\rm OPT}_{\varepsilon}\mid \bar \mcE_{0,t}\Big )\P_{\bnu}^{\rm U}(\bar \mcE_{0,t}) + \P_{\bnu}^{\rm U}\Big(\ba^{\rm U}_t\notin{\rm OPT}_{\varepsilon} \mid \mcE_{0,t}\Big)\P(\mcE_{0,t})\\
    & \leq \P_{\bnu}^{\rm U}(\bar \mcE_{0,t}) + \P_{\bnu}^{\rm U}\Big(\ba^{\rm U}_t\notin{\rm OPT}_{\varepsilon} \mid \mcE_{0,t}\Big)\\
    &\stackrel{\eqref{eq:probi}}{\leq} \left(\left(\frac{2}{T^2}+1\right)^K-1\right) + \P_{\bnu}^{\rm U}\Big(\ba^{\rm U}_t\notin{\rm OPT}_{\varepsilon} \mid \mcE_{0,t}\Big)\ .
    \label{eq:UCB_UBh1}
\end{align}
Next, note that    
\begin{align}
    &\P_{\bnu}^{\rm U}\Big(\ba^{\rm U}_t\notin{\rm OPT}_{\varepsilon} \mid \mcE_{0,t}\Big)\nonumber\\
    &\;= \P_{\bnu}^{\rm U}\left(U_h\left(\sum_{i\in[K]} a^{(1)}(i) \F_i\right) > U_h\left(\sum_{i\in[K]} a^{\rm U}_t(i)\F_i\right) + \delta_{12}(\varepsilon) \bigg\lvert \mcE_{0,t}\right) \\
    &\; = \P_{\bnu}^{\rm U}\left(U_h\left(\sum_{i\in[K]} a^{(1)}(i) \F_i\right) > U_h\left(\sum_{i\in[K]}  a^{\rm U}_t(i)\F_i\right) + \delta_{12}(\varepsilon) \Bigg\lvert \mcE_{0,t},\; \mcE_t\left(\frac{1}{2}\delta_{12}(\varepsilon)\right)\right)\nonumber\\
    &\qquad\qquad\qquad\qquad\times\P_{\bnu}^{\rm U}\left(\mcE_t\left(\frac{1}{2}\delta_{12}(\varepsilon)\right)\bigg\lvert \mcE_{0,t}\right) \\
    &\quad + \P_{\bnu}^{\rm U}\left(U_h\left(\sum_{i\in[K]} a^{(1)}(i) \F_i\right) > U_h\left(\sum_{i\in[K]} a^{\rm U}_t(i)\F_i\right) + \delta_{12}(\varepsilon) \Bigg\lvert \mcE_{0,t},\; \bar\mcE_t\left(\frac{1}{2}\delta_{12}(\varepsilon)\right)\right)\nonumber\\
    &\qquad\qquad\qquad\qquad\times\P_{\bnu}^{\rm U}\left(\bar\mcE_t\left(\frac{1}{2}\delta_{12}(\varepsilon)\right)\bigg\lvert \mcE_{0,t}\right) \\
    & \;\leq \P_{\bnu}^{\rm U}\left(U_h\left(\sum_{i\in[K]} a^{(1)}(i) \F_i\right) > U_h\left(\sum_{i\in[K]} a^{\rm U}_t(i)\F_i\right) + \delta_{12}(\varepsilon) \Bigg\lvert \mcE_{0,t},\; \mcE_t\left(\frac{1}{2}\delta_{12}(\varepsilon)\right)\right)\nonumber\\
    &\qquad\qquad\qquad\qquad+\P_{\bnu}^{\rm U}\left(\bar\mcE_t\left(\frac{1}{2}\delta_{12}(\varepsilon)\right)\bigg\lvert \mcE_{0,t}\right)   \\
    \label{eq:UCB_UB3}
    &\;\leq \P_{\bnu}^{\rm U} \left( U_h\left(\sum_{i\in[K]} a^{(1)}(i)\widetilde{\F}_{i,t}\right)  >  U_h\left(\sum_{i\in[K]} a^{\rm U}_t(i)\widetilde{\F}_{i,t}\right) \Bigg\lvert \mcE_{0,t},\; \bar\mcE_t\left ( \frac{1}{2}\delta_{12}(\varepsilon)\right)\right) \nonumber\\
    &\qquad\qquad\qquad\qquad+\P_{\bnu}^{\rm U}\left(\bar\mcE_t\left(\frac{1}{2}\delta_{12}(\varepsilon)\right)\bigg\lvert \mcE_{0,t}\right)   \\
    \label{eq:UCB_UB4}
    &\;=\P_{\bnu}^{\rm U}\left(\bar\mcE_t\left(\frac{1}{2}\delta_{12}(\varepsilon)\right)\bigg\lvert \mcE_{0,t}\right)   \\
    \label{eq:UCB_UB5}
    & \;\leq \P_{\bnu}^{\rm U}\left(\bar\mcE_{1,t}\left(\frac{1}{2}\delta_{12}(\varepsilon)\right)\bigg\lvert \mcE_{0,t}\right) + \P_{\bnu}^{\rm U}\left(\bar\mcE_{2,t}\left(\frac{1}{2}\delta_{12}(\varepsilon)\right)\bigg\lvert \mcE_{0,t}\right)\ ,
\end{align}
where \eqref{eq:UCB_UB3} follows from the definition of the event $\mcE_t(\delta_{13}(\frac{1}{2}\varepsilon))$, \eqref{eq:UCB_UB4} follows from the definitions of $\ba^{(1)}$ (which is an optimizer in the set ${\rm OPT}_{\varepsilon}$) and $\balpha_t^{\rm U}$ (which is an optimizer in the set $\widetilde{\rm OPT}_{\varepsilon,t}$, and~\eqref{eq:UCB_UB5} follows from a union bound.

Next, we will find upper bounds on the two probability terms in~\eqref{eq:UCB_UB5}. Note that for any $t>K\lceil\rho T\varepsilon/4\rceil$ we have
\begin{align}
    &\P_{\bnu}^{\rm U}\left(\bar\mcE_{1,t}\left(\frac{1}{2}\delta_{12}(\varepsilon)\right)\bigg\lvert \mcE_{0,t}\right)\nonumber\\
    &\qquad= \P_{\bnu}^{\rm U}\left( \left| U_h\left(\sum_{i\in[K]} a^{(1)}(i)\widetilde{\F}_{i,t}\right) - U_h\left(\sum_{i\in[K]} a^{(1)}(i) \F_i\right) \right| \geq \frac{1}{2}\delta_{12}(\varepsilon) \bigg\lvert \mcE_{0,t} \right )\\
    &\qquad\stackrel{\eqref{eq:Holder}}{\leq} \P_{\bnu}^{\rm U} \left(\sum_{i\in[K]} (a^{(1)}(i))^q  \norm{ \widetilde{\F}_{i,t}-\F_i}_{\rm W}^q \geq \frac{1}{2\mathcal{L}}\delta_{12}(\varepsilon) \bigg\lvert \mcE_{0,t} \right )\\
    &\qquad\stackrel{\eqref{eq:UCB_confidence_sets}}{=} \P_{\bnu}^{\rm U}\left ( \sum\limits_{i\in[K]}\big(a^{(1)}(i))^q \norm{\widetilde\F_{i,t} - \F_i}_{\rm W}^q\bigg\lvert \mcE_{0,t}\right)\\
    &\qquad\stackrel{\eqref{eq:UCB_confidence_sets}}{\leq} \P_{\bnu}^{\rm U}\left( \sum\limits_{i\in[K]}\underbrace{\big( a^{(1)}(i)\big)^q}_{\leq 1}\left(16\sqrt{\frac{1}{\tau_T^{\rm U}}(i)}\cdot\Big( \sqrt{2\e\log T} + 32\Big)\right)^q>\frac{1}{2\mcL}\delta_{12}(\varepsilon)\bigg\lvert \mcE_{0,t}\right)\\
    \label{eq:E3_1}
    &\qquad\leq \sum\limits_{i\in[K]} \P_{\bnu}^{\rm U}\left(\left(\sqrt{\frac{1}{\tau_T^{\rm U}}(i)}\cdot\Big( \sqrt{2\e\log T} + 32\Big)\right)^q>\frac{1}{2K\mcL(16)^q}\delta_{12}(\varepsilon)\bigg\lvert \mcE_{0,t}\right)\\
    \label{eq:E3_2}
    &\qquad\leq \sum\limits_{i\in[K]} \P_{\bnu}^{\rm U}\left(\left(\frac{\Big( \sqrt{2\e\log T} + 32\Big)}{\sqrt{\frac{\rho}{4} T \varepsilon}}\right)^q>\frac{1}{2K\mcL(16)^q}\delta_{12}(\varepsilon)\bigg\lvert \mcE_{0,t}\right),
\end{align}
where the transition~\eqref{eq:E3_1}-\eqref{eq:E3_2} follows from the explicit exploration phase of the PM-UCB-M algorithm.
Now, let us define a time instant $T_0(\varepsilon)$ as follows.
\begin{align}
\label{eq:T_epsilon}
    T_0(\varepsilon)\; \triangleq \;\inf \left\{ t\in\N :\left(\frac{\Big( \sqrt{2\e\log s} + 32\Big)}{\sqrt{ \frac{\rho}{4} s \varepsilon }}\right) \leq \frac{1}{16} \left( \frac{\delta_{12}(\varepsilon)}{2K\mcL}\right)^{\frac{1}{q}} \quad \forall s \geq t\right\}\ .
\end{align}
Hence, $\forall t \geq T_0(\varepsilon)$ we have 
\begin{align}
\label{eq:UCB_UB8}
     \P_{\bnu}^{\rm U}\left(\bar\mcE_{1,t}\left(\frac{1}{2}\delta_{12}(\varepsilon)\right)\bigg\lvert \mcE_{0,t}\right)\;=\;0\ .
\end{align}
Using a similar line of arguments, we may readily show that for all $t \geq T_0(\varepsilon)$,
\begin{align}
\label{eq:UCB_UB9}
    \P_{\bnu}^{\rm U}\left(\bar\mcE_{2,t}\left(\frac{1}{2}\delta_{12}(\varepsilon)\right)\bigg\lvert \mcE_{0,t}\right)\;=\;0\ .
\end{align}
Combining~\eqref{eq:UCB_UB8} and~\eqref{eq:UCB_UB9} we infer that for all $t \geq T_0(\varepsilon)$,
\begin{align}
\label{eq:UCB_UB10}
    \P_{\bnu}^{\rm U}\Big(\ba^{\rm U}_t\notin{\rm OPT}_{\varepsilon} \mid \mcE_{0,t}\Big)\;=\;0\ ,
\end{align}
which implies that
\begin{align}
\label{eq:UCB_UB10-1}
    \P_{\bnu}^{\rm U}\Big(\widetilde{\rm OPT}_{\varepsilon,t}\neq{\rm OPT}_{\varepsilon} \mid \mcE_{0,t}\Big)\;=\;0\ .
\end{align}
We are now ready to define the low probability event $\mcE_{3,T}$. Let us define
\begin{align}
    \mcE_{3,T}\triangleq\Big\{\exists t \in[T_0(\varepsilon), T]: \widetilde{\rm OPT}_{\varepsilon,t}\neq{\rm OPT}_{\varepsilon}\Big\}\ .
\end{align}
Accordingly, we have the following lemma.

\begin{lemma} 
\label{lemma:probability of E3}
For event \(\mcE_{3,T}\), we have
\begin{align}
    \mathbb{P}_{\bnu}^ {\rm U}\left(\mcE_{3,T}\right) \leq T\left(\left(\frac{2}{T^2}+1\right)^K-1\right)\ .
\end{align}
\end{lemma}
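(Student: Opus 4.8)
The plan is to control $\mcE_{3,T}$ by a union bound over the time indices $t\in[T_0(\varepsilon),T]$ and, at each such instant, to reduce the event $\{\widetilde{\rm OPT}_{\varepsilon,t}\neq{\rm OPT}_\varepsilon\}$ to a failure of the confidence sets. First I would write
\begin{align}
\P_{\bnu}^{\rm U}(\mcE_{3,T})\;\le\;\sum_{t=T_0(\varepsilon)}^{T}\P_{\bnu}^{\rm U}\Big(\widetilde{\rm OPT}_{\varepsilon,t}\neq{\rm OPT}_\varepsilon\Big)\ .
\end{align}
Then, for each fixed $t$ in this range, I would condition on the good event $\mcE_{0,t}=\{\F_i\in\mcC_t(i)\ \forall i\in[K]\}$ and its complement,
\begin{align}
\P_{\bnu}^{\rm U}\Big(\widetilde{\rm OPT}_{\varepsilon,t}\neq{\rm OPT}_\varepsilon\Big)
&=\P_{\bnu}^{\rm U}\Big(\widetilde{\rm OPT}_{\varepsilon,t}\neq{\rm OPT}_\varepsilon\;\big\lvert\;\mcE_{0,t}\Big)\P_{\bnu}^{\rm U}(\mcE_{0,t})\nonumber\\
&\quad+\P_{\bnu}^{\rm U}\Big(\widetilde{\rm OPT}_{\varepsilon,t}\neq{\rm OPT}_\varepsilon\;\big\lvert\;\bar\mcE_{0,t}\Big)\P_{\bnu}^{\rm U}(\bar\mcE_{0,t})\ ,
\end{align}
where the first conditional term vanishes by~\eqref{eq:UCB_UB10-1}, which was established precisely for $t\ge T_0(\varepsilon)$, and the second conditional probability is bounded trivially by $1$.

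Next I would bound $\P_{\bnu}^{\rm U}(\bar\mcE_{0,t})$ uniformly in $t$. Since the radius of $\mcC_t(i)$ carries the factor $\sqrt{2\e\log T}+32$, applying Lemma~\ref{lemma: concentration} with $y$ equal to this radius makes the slack term $y-512/\sqrt{\tau_t^{\rm U}(i)}$ collapse to $16\sqrt{2\e\log T}/\sqrt{\tau_t^{\rm U}(i)}$, so the exponent becomes $-2\log T$ and the right-hand side reduces to $2\e^{-2\log T}=2/T^2$, independently of $\tau_t^{\rm U}(i)$ and hence of $t$. By independence across arms together with the inclusion–exclusion identity recorded in~\eqref{eq:probi}, this yields $\P_{\bnu}^{\rm U}(\bar\mcE_{0,t})\le(1+2/T^2)^K-1$ for every $t$. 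Substituting this per-instant bound into the union bound and noting that the sum has at most $T$ terms gives
\begin{align}
\P_{\bnu}^{\rm U}(\mcE_{3,T})\;\le\;T\Big((1+2/T^2)^K-1\Big)\ ,
\end{align}
which is the claim.

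The only nontrivial point is the uniform-in-$t$ confidence-set guarantee: I would verify that the horizon-dependent radius (built from $\log T$) forces the per-arm miscoverage probability to be $2/T^2$ at \emph{every} instant $t$, not merely at $t=T$. This is exactly what permits the single constant $(1+2/T^2)^K-1$ to be used at each instant and keeps the final bound linear in $T$. Everything else is bookkeeping: the vanishing of the conditional term is inherited verbatim from~\eqref{eq:UCB_UB10-1}, and the subset sum is already supplied by~\eqref{eq:probi}.
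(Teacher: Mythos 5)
Your proposal is correct and follows essentially the same route as the paper: a union bound over $t\in[T_0(\varepsilon),T]$, the decomposition $\P(\widetilde{\rm OPT}_{\varepsilon,t}\neq{\rm OPT}_\varepsilon)\le\P(\bar\mcE_{0,t})+\P(\cdot\mid\mcE_{0,t})$ with the conditional term annihilated by~\eqref{eq:UCB_UB10}, and the per-instant bound $(1+2/T^2)^K-1$ from~\eqref{eq:probi}. Your explicit check that the $\log T$-based radius yields the $2/T^2$ miscoverage at every $t$ (not only $t=T$) is a detail the paper leaves implicit, but it does not change the argument.
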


\begin{proof}
Note that
    \begin{align}
\mathbb{P}_{\bnu}^{\rm U}\Big(\exists t>T_0(\varepsilon): \widetilde{\rm OPT}_{\varepsilon,t}\neq{\rm OPT}_{\varepsilon}\Big)  &\leq \sum_{t=T_0(\varepsilon)+1}^T \mathbb{P}\left(\widetilde{\rm OPT}_{\varepsilon,t}\neq{\rm OPT}_{\varepsilon}\right) \\
& \stackrel{\eqref{eq:UCB_UBh1}}{\leq} \sum_{t=T_0(\varepsilon)+1}^T \left(\left(\frac{2}{T^2}+1\right)^K-1\right) \nonumber \\ & \qquad + \P_{\bnu}^{\rm U}\Big(\ba^{\rm U}_t\notin{\rm OPT}_{\varepsilon}\med \mcE_{o,t}\Big)\\
& \stackrel{\eqref{eq:UCB_UB10}}{\leq} T\left(\left(\frac{2}{T^2}+1\right)^K-1\right)\ .
\end{align}
\end{proof}
First, thanks to the discretization, all arms are sampled at least once, according to under-sampling.
Let us assume that after the explicit exploration, an arm is never under sampled.
\begin{align}
\label{eq:samlingequiv}
    \tau_t(i) = \frac{\rho}{4} T \epsilon \ .
\end{align}
If an arm is not under sampled that would mean 
\begin{align}
\label{eq:oversampledproof}
    \tau_t(i) \geq t a_t^{U}(i) \Longrightarrow
    \frac{\rho}{4} T \epsilon t^{-1}  \geq a_t^{U}(i) 
     \geq \frac{\varepsilon}{2} \Longrightarrow \frac{\rho}{2} & \geq \frac{t}{T} \ ,
\end{align}
where right hand side follows from \eqref{eq:samlingequiv} and \eqref{eq:oversampledproof}, and from the definition of discretization.
For $t > \frac{\rho T}{2}$, this inequality does not hold. Therefore, no matter which mixing coefficient is chosen, if an arm is not sampled after the explicit exploration, it becomes under sampled at the latest at the time instant $\frac{\rho T}{2}$ which means after that time instant it will be sampled according to under-sampling rule. \newline
Next, we will show that under the event $\bar\mcE_{3,T}$, i.e., when the PM-UCB-M algorithm only selects a mixture distribution from the set of optimal mixtures, the PM-UCB-M arm selection routine, using the {\em under-sampling} procedure, eventually converges to an optimal mixture distribution. This is captured in the following lemma. Prior to stating the lemma, note that under the event $\bar\mcE_{3,T}$, the set of estimated mixing coefficients is the same in each iteration between $T_0(\varepsilon)$ and $T$. Hence, the PM-UCB-M algorithm aims to track {\em one} of these optimal mixing coefficients, which we denote by $\ba^{(1)}\in{\rm OPT}_{\varepsilon}$.

\begin{lemma}
\label{lemma:undersampling}
Under the event $\bar\mcE_{3,T}$, there exists a time instant $T(\varepsilon)<+\infty$, and $\ba^{(1)}\in{\rm OPT}_{\varepsilon}$ such that we have
\begin{align}
    \left| \frac{\tau_t^{\rm U}(i)}{t} - a^{(1)}(i)\right| < \frac{K}{t}  \quad \forall t \geq T(\varepsilon)\ .
\end{align}
\end{lemma}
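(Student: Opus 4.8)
The plan is to pair the consistency of the optimistic estimates guaranteed under $\bar\mcE_{3,T}$ with a deterministic tracking argument for the under-sampling rule~\eqref{eq:UCB_sampling_rule}. First I would establish that the target the algorithm chases eventually freezes. Under $\bar\mcE_{3,T}$ we have $\widetilde{\rm OPT}_{\varepsilon,t}={\rm OPT}_{\varepsilon}$ for every $t\in[T_0(\varepsilon),T]$, where $T_0(\varepsilon)$ is defined in~\eqref{eq:T_epsilon}. Because the tracking block re-selects the previous coefficient whenever it still maximizes the optimistic utility (the consistency rule introduced in the ``Multiple optimal mixtures'' step), the coefficient $\ba^{(1)}\in{\rm OPT}_{\varepsilon}$ chosen at $T_0(\varepsilon)$ remains a maximizer of the now-stationary optimistic problem at every later instant, and is therefore re-selected. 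Hence $\ba^{\rm U}_t=\ba^{(1)}$ for all $t\ge T_0(\varepsilon)$, so the sampling rule reduces to tracking a single fixed target.

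Next I would argue that every arm is eventually governed by the under-sampling criterion. The explicit exploration over-samples each arm up to $\lceil \rho T\varepsilon/4\rceil$ times, so I would reuse the computation in~\eqref{eq:samlingequiv}--\eqref{eq:oversampledproof} to show that for $t>\rho T/2$ each arm whose target weight is at least $\varepsilon/2$ acquires a strictly positive deficit $t\,a^{(1)}(i)-\tau_t^{\rm U}(i)$, after which it can only be selected through~\eqref{eq:UCB_sampling_rule}. Thus for $t\ge t^{\star}\triangleq\max\{T_0(\varepsilon),\rho T/2\}$ the whole process is pure under-sampling of the stationary target $\ba^{(1)}$.

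The core is then a tracking bound. I would define the surplus $D_t(i)\triangleq\tau_t^{\rm U}(i)-t\,a^{(1)}(i)$, so that $\sum_{i\in[K]}D_t(i)=0$. Each transition $t\to t+1$ increments $\tau^{\rm U}$ of the selected arm by one and lowers every surplus by $a^{(1)}(i)$ through the growth of $t\,a^{(1)}(i)$; since~\eqref{eq:UCB_sampling_rule} picks the arm with the smallest surplus, and the surpluses average to zero, the selected arm has $D_t(i)\le 0$, so after it is pulled $D_{t+1}(i)\le 1$, whereas an unselected arm's surplus decreases. An over-explored arm therefore has a monotonically decreasing surplus (it is never selected while its surplus stays positive) until the surplus falls to at most $1$; this transient lasts at most $\max_i \Delta_i/a^{(1)}(i)\le (\rho T\varepsilon/4)/(\varepsilon/2)=\rho T/2$, where $\Delta_i$ is the initial surplus at $t^{\star}$. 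Consequently, for $t\ge T(\varepsilon)\triangleq t^{\star}+\rho T/2$ we have $D_t(i)\le 1$ for all $i$; combined with $\sum_{i}D_t(i)=0$ this forces $D_t(i)\ge -(K-1)$, hence $|D_t(i)|<K$, and dividing by $t$ gives $|\tau_t^{\rm U}(i)/t-a^{(1)}(i)|<K/t$. Finiteness of $T(\varepsilon)$ follows since $T_0(\varepsilon)<+\infty$ by~\eqref{eq:T_epsilon}.

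The main obstacle is the two-sided tracking estimate in the presence of horizon-scale over-exploration: I must verify that an over-explored arm's surplus genuinely decreases monotonically, that the correcting transient is $O(\rho T)$ rather than growing, and that the uniform upper bound $D_t(i)\le 1$ together with the zero-sum constraint delivers the lower bound $-(K-1)$ simultaneously for all coordinates. A secondary subtlety is the discretization convention in~\eqref{eq:discrete_set_UCB}: if the discrete weights do not sum exactly to one, then $\sum_i D_t(i)$ is $O(\varepsilon t)$ rather than $0$, and I would either normalize $\ba^{(1)}$ before tracking or absorb the resulting $O(\varepsilon)$ slack into the estimation-regret term $\delta_{01}(\varepsilon)$ rather than the tracking term.
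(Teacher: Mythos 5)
Your overall strategy is the same as the paper's: freeze the tracked target at some $\ba^{(1)}\in{\rm OPT}_{\varepsilon}$ once $t\geq T_0(\varepsilon)$ under $\bar\mcE_{3,T}$, show that any over-sampled arm is never selected by the under-sampling rule and therefore has a monotonically shrinking surplus, obtain the one-sided bound $\tau_t^{\rm U}(i)/t \leq a^{(1)}(i)+1/t$ uniformly after a finite transient, and convert it into the two-sided bound via the simplex constraint. Your surplus variable $D_t(i)=\tau_t^{\rm U}(i)-t\,a^{(1)}(i)$ is exactly the paper's over-sampled-set bookkeeping in Lemma~\ref{lemma:oversampling}, and your zero-sum step $D_t(i)\geq -(K-1)$ is the paper's contradiction argument in a cleaner form. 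The flag you raise about $\sum_i a^{(1)}(i)$ under the convention~\eqref{eq:discrete_set_UCB} is a legitimate observation (the paper's own final step also implicitly uses $\sum_i a^{(1)}(i)=1$), and your proposed fix is reasonable.

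There is, however, one concrete step that fails as written: the bound $\Delta_i\leq \rho T\varepsilon/4$ on the surplus at $t^{\star}$. The quantity $\lceil\rho T\varepsilon/4\rceil$ only counts the \emph{forced} exploration pulls; between the end of explicit exploration and $T_0(\varepsilon)$ the UCB rule may track incorrect optimistic mixtures and pull arm $i$ up to order $t^{\star}$ additional times, so the only a priori bound is $\Delta_i=\tau_{t^{\star}}^{\rm U}(i)-t^{\star}a^{(1)}(i)\leq t^{\star}\bigl(1-a^{(1)}(i)\bigr)$. Consequently the transient can last up to $t^{\star}(1-a^{(1)}(i))/a^{(1)}(i)\leq (2/\varepsilon-1)\,t^{\star}$ rather than $\rho T/2$, and your claimed $T(\varepsilon)=t^{\star}+\rho T/2$ need not satisfy the conclusion. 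The fix is immediate — replace your transient bound by $(2/\varepsilon-1)\,t^{\star}$, which recovers precisely the paper's choice $m=(2/\varepsilon-1)T_0(\varepsilon)-2/\varepsilon$ and $T(\varepsilon)=T_0(\varepsilon)+m$ — and finiteness of $T(\varepsilon)$ is unaffected, but as stated your quantitative claim is not justified.
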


\begin{proof}
We begin by defining a set of {\em over-sampled} arms as follows. We define the set $\mcO_t$ as
\begin{align}
    \mcO_t \;\triangleq\; \left\{i \in [K]: \frac{\tau_t^{\rm U}(i)}{t} > a^{(1)}(i) + \frac{1}{t}\right\}\ .
\end{align}
We will leverage Lemma~\ref{lemma:oversampling}, in which we show that there exists a finite time instant, which we call $T(\varepsilon)$, such that for all $t>T(\varepsilon)$ the set of over-sampled arms is {\em empty}. Specifically, leveraging Lemma~\ref{lemma:oversampling} we obtain
\begin{align}
    \frac{\tau_t^{\rm U}(i)}{t} - a^{(1)}(i) \;\leq\; \frac{1}{t}  \quad \forall t \geq T(\varepsilon)\ ,
\end{align}
which also implies that
\begin{align}
\label{eq:UK_oneside}
    \frac{\tau_t^{\rm U}(i)}{t} - a^{(1)}(i) \;<\; \frac{K}{t}  \quad \forall t \geq T(\varepsilon)\ .
\end{align}
Next, let us assume that there exists some $j\in[K]$ such that
\begin{align}
    \frac{\tau_t^{\rm U}(j)}{t} < a^{(1)}(j)  - \frac{K}{t} \ .
\end{align}
In this case, we have
\begin{align}
    \sum\limits_{i\in[K]} \frac{\tau_t^{\rm U}(i)}{t} &= \sum\limits_{i \neq j} \frac{\tau_t^{\rm U}(i)}{t} + \frac{\tau_t^{\rm U}(j)}{t} \\
    & \stackrel{\eqref{eq:UK_oneside}}{\leq}  \sum\limits_{i \neq j} a^{(1)}(i)  + \frac{K-1}{t} + a^{(1)}(j)  - \frac{K}{t}  \\
    & = 1 - \frac{1}{t}\ ,
\end{align}
which is a contradiction, since we should have 
\begin{align}
    \sum\limits_{i\in[K]} \frac{\tau_t^{\rm U}(i)}{t} \;=\; 1\ .
\end{align}
Hence, we can conclude that 
\begin{align}
  \left| \frac{\tau_t^{\rm U}(i)}{t} - a^{(1)}(i)\right| < \frac{K}{t}  \quad \forall t \geq T(\varepsilon) \ . 
\end{align}
\end{proof}

\begin{lemma}
\label{lemma:oversampling}
Under the event $\bar\mcE_{3,T}$, there exists a finite time $T(\varepsilon)<+\infty$ such that for every $t>T(\varepsilon)$, we have $\mcO_t = \emptyset$.
\end{lemma}
\begin{proof}
    For any time $t>T_0(\varepsilon)$, 
    and some arm $j\notin\mcO_t$, we have
    \begin{align}
        \frac{\tau_t^{\rm U}(j)}{t}\;\leq\; a^(1)(j) + \frac{1}{t}\ .
    \end{align}
    Based on this, we have the following two regimes of the arm selection fraction for arm $j$. \newline
    \textbf{Case (a): ($\tau_t^{\rm U}(j)/t \leq \bar\alpha_j^\star$).} Since the PM-UCB-M algorithm only samples under-sampled arms, it is probable that the arm $j$ is sampled at time $t+1$. In that case, we would have
    \begin{align}
    \frac{\tau_{t+1}^{\rm UCB}(j)}{t+1} \;&\leq\; \frac{\tau_t^{\rm U}(j)+1}{t+1}  \\
    & =\; \frac{\tau_t^{\rm U}(j)}{t+1} + \frac{1}{t+1} \\
    \label{eq:oversampling1}
    & \leq\; \frac{a^{(1)}(j) t}{t+1} + \frac{1}{t+1} \\
    \label{eq:UK_1}
    & \leq\; a^{(1)}(j) + \frac{1}{t+1}\ ,
    \end{align}
    where~\eqref{eq:oversampling1} follows from the fact that $\tau_t^{\rm U}(j)\leq t\bar\alpha_j^\star$. Clearly, we see that the arm $j$ does not enter the set of over-sampled arms $\mcO_t$ in the subsequent round $t+1$. \newline
    \textbf{Case (b): ($\bar\alpha_j^\star\leq\tau_t^{\rm U}(j)/t \leq \bar\alpha_j^\star + 1/t$).} In this case, there always exists at least one arm $k\in[K]$ which satisfies that $\tau_t^{\rm U}(k)/t\leq a^{(1)}_k$. This implies that the arm $j$ is not sampled by the PM-UCB-M arm selection rule, since it is not the most under-sampled arm. In this case, we have
    \begin{align}
    \frac{\tau_{t+1}^{\rm UCB}(j)}{t+1} \;&=\; \frac{\tau_t^{\rm U}(j)}{t+1}  \\
    & =\; \frac{\tau_t^{\rm U}(j)}{t}\frac{t}{t+1} \\
    \label{eq:oversampling2}
    & \leq\; \left(a^{(1)}(j) + \frac{1}{t}\right)\frac{t}{t+1} \\
    & =\; a^{(1)}(j)\frac{t}{t+1} + \frac{1}{t+1} \\ 
    \label{eq:UK_2}
    & \;\leq a^{(1)}(j) + \frac{1}{t+1}\ ,
\end{align}
where~\eqref{eq:oversampling2} from the definition of case (b). Again, we have concluded that $j\notin\mcO_{t+1}$. Combining~\eqref{eq:UK_1} and~\eqref{eq:UK_2}, we conclude that none of the arms which are not already contained in the set $\mcO_t$ ever enters this set. Hence, all we are left to show is the existence of the time instant after which the set $\mcO_t$ becomes an empty set. Evidently, since the PM-UCB-M algorithm never samples from the set of over-sampled arms $\mcO_t$, we will establish an upper bound $m$ such that after $t > T_0(\varepsilon) + m$, all arms leave the set $\mcO_t$. Notably, for any arm $i\in\mcO_t$, it holds that
\begin{align}
    \frac{\tau_t^{\rm U}(i)}{t}\; >\; a^{(1)}(i) + \frac{1}{t}\ .
\end{align}
Let $m$ be such that
\begin{align}
\label{eq:oversampling3}
    \frac{\tau_{T_0(\varepsilon)+m}^{\rm U}(i)}{T_0(\varepsilon)+m}\;\leq\;a^{(1)}(i) + \frac{1}{T_0(\varepsilon)+m}\ ,
\end{align}
which implies that the arm $i$ has left the set $\mcO_{T(\varepsilon)+m}$. Furthermore, since arm $i$ is never sampled between times instants $T(\varepsilon)$ and $T(\varepsilon)+m$ (as it belongs to the over-sampled set),~\eqref{eq:oversampling3} can be equivalently written as
\begin{align}
\label{eq:m_finite}
    \frac{\tau_{T_0(\varepsilon)}^{\rm U}(i)}{T_0(\varepsilon)+m}\;\leq\;a^{(1)}(i)+ \frac{1}{T_0(\varepsilon)+m}\ .
\end{align}
Next, noting that for any arm $i\in[K]$, we have the upper bound $\tau_t^{\rm U}(i)\leq t$ on the number of times $i$ is chosen up to time $t$, and that 
$a^{(1)}(i)\geq\varepsilon / 2$ for every $i\in[K]$, we have the following choice for $m$.  
\begin{align}
\label{eq:m}
    m \; = \; \left(\frac {2}{\varepsilon}-1 \right)T_0(\varepsilon) - \frac{2}{\varepsilon}\ . 
\end{align}
The proof is completed by defining 
\begin{align}
\label{eq:T_epsilon_actual}
    T(\varepsilon) \triangleq T_0(\varepsilon) + m\ .
\end{align}
\end{proof}
Next, from~\eqref{eq:UCB_UB2}, we have
\begin{align}
    B_{22}(T)\;&\leq\; \mcL W^q \E_{\bnu}^{\rm U} \left [ \sum_{i\in[K]} \left\lvert a_T^{\rm U}(i) - \frac{\tau_T^{\rm}(i)}{T}\right\rvert^q\;\Bigg\lvert\; \F_i\in\mcC_T(i)\;\forall\; i\in[K]\right ]\\
    \label{eq:UCB_f1}
    &\leq\; \frac{1}{\P_{\bnu}^{\rm U}\Big( \F_i\in\mcC_T(i)\;\forall\; i\in[K]\Big)}\mcL W^q \E_{\bnu}^{\rm U} \left [ \sum_{i\in[K]}\left \lvert a^{\rm U}_T(i) - \frac{\tau_T^{\rm U}(i)}{T}\right\rvert^q\right]\\
    \label{eq:UCB_f2}
    &\leq\; \frac{1}{2-\Big( \frac{1}{T^2} + 1\Big)^K}\mcL W^q \E_{\bnu}^{\rm U} \left [ \sum_{i\in[K]}\left \lvert a^{\rm U}_T(i) - \frac{\tau_T^{\rm U}(i)}{T}\right\rvert^q\right]\ , 
\end{align}
where~\eqref{eq:UCB_f1} follows from the fact that $\E[A|B] = (\E[A] - \E[A|\bar B]\cdot\P(\bar B))/\P(B) \leq \E[A]/\P(B)$, and, ~\eqref{eq:UCB_f2} follows from~\eqref{eq:probi}. \newline
Furthermore, for all $T\geq 3$ and $K\geq 2$, it can be readily verified that $2 - (2/T^2 + 1)^K > \frac{1}{2}$, which implies that
\begin{align}
    B_{22}(T) & \stackrel{\eqref{eq:UCB_f2}}{\leq} 2\mcL W^q \E_{\bnu}^{\rm U}\left [ \sum_{i\in[K]}\left\lvert a_T^{\rm U} - \frac{\tau_T^{\rm U}(i)}{T}\right\rvert^q\right]
    \\
    &= 2\mcL W^q \E_{\bnu}^{\rm U}\left [ \sum_{i\in[K]}\left\lvert a_T^{\rm U} - \frac{\tau_T^{\rm U}(i)}{T}\right\rvert^q\;\bigg\lvert \mcE_{3,T}\right]\P_{\bnu}^{\rm U}\Big( \mcE_{3,T}\Big) + 2\mcL W^q \E_{\bnu}^{\rm U}\left [ \sum_{i\in[K]}\left\lvert a_T^{\rm U} - \frac{\tau_T^{\rm U}(i)}{T}\right\rvert^q\;\bigg\lvert\;\bar\mcE_{3,T}\right]\P_{\bnu}^{\rm U}\Big( \bar\mcE_{3,T}\Big)\\
    \label{eq:UCB_UB11}
    &\leq 2\mcL K W^q \left(\P_{\bnu}^{\rm U}(\mcE_{3,T}) + K\left(\frac{K}{T}\right)^q\right)\\
    \label{eq:UCB_UB12}
    &\leq 2\mcL K W^q  \left(T\left(\left(\frac{2}{T^2}+1\right)^K-1\right) + \left(\frac{K}{T}\right)^q\right)\ ,
\end{align}
where~\eqref{eq:UCB_UB11} follows from Lemma~\ref{lemma:oversampling} and~\eqref{eq:UCB_UB12} follows from Lemma~\ref{lemma:probability of E3}.

\subsection{Proof of Theorem~\ref{theorem:UCB upper bound} for PM-UCB-M}
\label{Appendix:UCB_theorem_proof}
Finally, we add up all the terms to find an upper bound on the arm selection regret. We have
\begin{align}
    \label{eq:Discretized_regret}
    \Bar{\mathfrak{R}}_{\bnu,1}^{\rm U}(T)\;&=\; B_1(T) + B_{21}(T) + B_{22}(T) \\
    &\leq\; \frac{1}{T} \Bigg [ B_hT\left( \left( \frac{2}{T^2}+1\right)^K - 1\right)  +{\mcL K} \left( \frac{32}{\sqrt{\rho}}\right)^q T^{1-\frac{q}{2}} \Big(\sqrt{2\e\log T} + 32\Big)^q\nonumber\\
    &\qquad +2\mcL W^q  \left(KT^2\left(\left(\frac{2}{T^2}+1\right)^K-1\right) + K^{1+q}T^{1-q}\right) \Bigg]\ .
\end{align}
Leveraging the upper bounds on estimation regret and the arm selection regret, we state an upper bound on the regret of the PM-UCB-M algorithm.
\begin{align}
&\mathfrak{R}_{\bnu}^{\rm U}(T) = \delta_{01}(\varepsilon)  + \Bar{\mathfrak{R}}_{\bnu,1}^{\rm U}(T) \\ & \leq \delta_{01}(\varepsilon) + \frac{1}{T} \Bigg [ B_hT\left( \left( \frac{2}{T^2}+1\right)^K - 1\right)\nonumber + 2\mcL W^q \left(KT^2\left(\left(\frac{2}{T^2}+1\right)^K-1\right) + K^{1+q}T^{1-q}\right)\nonumber\\
    & +\underbrace{ {\mcL K} \left( \frac{64}{\sqrt{\rho {\varepsilon}}}\right)^q T^{1-\frac{q}{2}} \Big(\sqrt{2\e\log T} + 32\Big)^q\Bigg] }_{B_3(T)} \ . 
    \end{align}
\(B_3(T)\) is the dominating term for any $T>\e^K$. Hence, we can simplify the upper bound as follows.
    \begin{align}
    \label{eq:theorem_lastline}
 & \mathfrak{R}_{\bnu}^{\rm U}(T) \leq \delta_{01}(\varepsilon) + (B_h + 2\mcL K(W^q+1))\Big({64\varepsilon^{-\frac{1}{2}}} \rho^{-\frac{1}{2}} T^{-\frac{1}{2}}\Big( \sqrt{2\e\log T} + 32\Big)\Big)^q \ .
\end{align}

\subsection{Proof of Theorem~\ref{corollary:PM-UCB-M} for PM-UCB-M}
\label{Appendix:UCB_theorem_epsilon_proof}
Let us set 
\begin{align}
\varepsilon = \Theta \left(\left(K^{\frac{2}{q}}\log T / T\right)^{\kappa}\right)\ ,
\end{align}
 where $\kappa = \frac{1}{\frac{2\beta}{q}+2}$.  
Leveraging \(\delta_{12}(\varepsilon) = \Omega(\varepsilon^{\beta})\ \), it can be readily verified that this choice of the discretization level $\varepsilon$ satisfies the condition in~\eqref{eq:T_epsilon_actual}. Hence, for arm selection regret, from~\eqref{eq:Discretized_regret} we have
\begin{align}
\label{eq:Discrete_order_UCB_RS}
    \Bar{\mathfrak{R}}_{\bnu,1}^{\rm U}(T) &= O \left ((K^{\frac{2}{q}}\log T/T)^{\left(1-\kappa\right)\frac{q}{2}} \right) 
    \ .
\end{align}
Furthermore, from Lemma~\eqref{lemma:Delta_error}, we have
\begin{align}
    \delta_{01}(\varepsilon) &\leq \mcL (KW\varepsilon)^q  \ .
\end{align}
Hence, \(\delta_{02}(\varepsilon) = O(\varepsilon^q)\). Consequently, we have
\begin{align}
\label{eq:delta_order_UCB_RS}
    \delta_{02}(\varepsilon) = O\left(K^{q \left( 1 + \frac{2\kappa}{q} \right)}\left(\log T / T\right)^{q \kappa} \right).
\end{align}
From~\eqref{eq:Discrete_order_UCB_RS} and \eqref{eq:delta_order_UCB_RS} the regret of the PM-UCB-M is bounded from above by
    \begin{align}
    \label{eq:UCB upper bound appendix}
        \mathfrak{R}_{\bnu}^{\rm U}(T) &= O \left (\max\Big\{ K^{q \left( 1 + \frac{2\kappa}{q} \right)}\left(\log T / T\right)^{q \kappa}  \;,\;(K^{\frac{2}{q}}\log T/T)^{\left(1-\kappa\right)\frac{q}{2}} \Big\} \right) \\
        & = O \left (\max\Big\{K^{q \left( 1 + \frac{2\kappa}{q} \right)}, K^{1-\kappa} \Big\} \max\Big\{ \left(\log T / T\right)^{q \kappa}  \;,\;  (\log T/T)^{(1-\kappa)\frac{q}{2}}\Big\} \right)\ \ .
    \end{align}
When $\beta$ exists and $\beta \geq \frac{q}{2}$, this bound becomes
\begin{align}
    \mathfrak{R}_{\bnu}^{\rm U}(T) \leq O \left (\max\Big\{K^{q \left( 1 + \frac{2\kappa}{q} \right)}, K^{1-\kappa} \Big\}   (\log T/T)^{q\kappa} \right)\ .
\end{align}

\section{CE-UCB-M Algorithm}
In this section, we provide the proofs of Theorems \ref{theorem:UCB upper bound} and \ref{corollary:PM-UCB-M} for CE-UCB-M.
\label{Appendix:Upper_last_UCB}

\subsection{Proof of Theorem~\ref{theorem:UCB upper bound} for CE-UCB-M}
\label{Appendix:CE-UCB-M_algorithm}
The analysis of the CE-UCB-M algorithm closely follows that of the PM-UCB-M algorithm with some minute differences. We will briefly state the steps in the PM-UCB-M analysis, and in the process, highlight the key distinctions in the analysis of the CE-UCB-M algorithm. Henceforth, we will use $\pi={\rm C}$ to denote the policy CE-UCB-M. 

Similarly to the PM-UCB-M analysis and~\eqref{eq:regret_decomposition}, we decompose the regret into a estimation regret component $\delta_{01}(\varepsilon)$, and the arm selection regret $\bar{\mathfrak{R}}_{\bnu,1}^{\rm C}(T)$. Leveraging Lemma~\ref{lemma:Delta_error}, it may be readily verified that
\begin{align}
    \delta_{01}(\varepsilon) \leq \mcL (KW\varepsilon)^q\ .
\end{align}
Next, we digress to upper bounding the arm selection regret $\bar\R_{\bnu}^{\rm C}(T)$. We have
\begin{align}
    \Bar{\mathfrak{R}}_{\bnu,1}^{\rm C}(T) &=   U_h\left(\sum_{i\in[K]} a^{(1)}(i)\F_i\right)  - \E_{\bnu}^{\rm C} \left[U_h\left(\sum_{i\in[K]} \frac{\tau_t^{\rm C}(i)}{T}\F_i\right)\right] \\
    &\leq \underbrace{\sum_{\mcS \subseteq [K]: \mcS \neq \emptyset}
    \E_{\bnu}^{\rm C} \left[U_h\left(\sum_{i\in[K]} a^{(1)}(i)\F_i\right)  -  U_h\left(\sum_{i\in[K]} \frac{\tau_t^{\rm C}(i)}{T}\F_i\right)\mathds{1}\{\F_i \notin \mcC_T(i): i \in \mcS\}\right]}_{\triangleq C_1(T)} \\
    &\qquad + \underbrace{\E_{\bnu}^{\rm C} \left[ U_h\left(\sum_{i\in[K]} a^{(1)}(i)\F_i\right)  - U_h\left(\sum_{i\in[K]} \frac{\tau_t^{\rm C}(i)}{T}\F_i\right) \mathds{1}\{\F_i \in \mcC_T(i) \;\forall i \in [K]\} \right]}_{\triangleq C_2(T)}\ ,
\end{align}
where $\mcC_t(i)$ has the same definition as in~\eqref{eq:UCB_confidence_sets} for every $i\in[K]$. Furthermore, we have already upper bounded the term $A_1(T)$ in Appendix~\ref{proof:UCB upper bound} (equation ~\eqref{eq:A_1 bound refer}), which is given by
\begin{align}
    C_1(T)\;\leq\; B_h\left(\left(\frac{1}{T^2}+1\right)^K-1\right)\ .
\end{align}
We now resort to upper-bounding the term $C_2(T)$. Note that
\begin{align}
    C_2(T)\;&=\; \E_{\bnu}^{\rm C} \left[ \left(U_h\left(\sum_{i\in[K]} a^{(1)}(i)\F_i\right)  - U_h\left(\sum_{i\in[K]} \frac{\tau_t^{\rm C}(i)}{T}\F_i\right) \right)\mathds{1}\{\F_i \in \mcC_T(i) \;\forall i \in [K]\} \right]\\
    &\leq\; \E_{\bnu}^{\rm C} \left[ U_h\left(\sum_{i\in[K]} a^{(1)}(i)\F_i\right)  - U_h\left(\sum_{i\in[K]} \frac{\tau_t^{\rm C}(i)}{T}\F_i\right) \;\Big\lvert\; \F_i \in \mcC_T(i) \;\forall i \in [K]\right]\\
    \label{eq:CEUCB1}
    &\leq \E_{\bnu}^{\rm C} \Bigg[ U_h\left(\sum_{i\in[K]} a^{(1)}(i)\F_{i,T}^{\rm C}\right)  - U_h\left(\sum_{i\in[K]} \frac{\tau_t^{\rm C}(i)}{T}\F_i\right) \nonumber\\
    &\qquad\qquad\qquad +\mcL\sum\limits_{i\in[K]}a^{(1)}(i)^q \left(16\ \frac{\sqrt{2 {\rm e} \log T } + 32}{\sqrt{\tau^{\rm C}_t(i)}}  \right)^{q}\;\Big\lvert\; \F_i \in \mcC_T(i) \;\forall i \in [K]\Bigg]\\
    \label{eq:CEUCB2}
    &\leq \E_{\bnu}^{\rm C}\Bigg[ U_h\left(\sum_{i\in[K]} a^{\rm C}_T(i)\F_{i,T}^{\rm C}\right)  - U_h\left(\sum_{i\in[K]} \frac{\tau_t^{\rm C}(i)}{T}\F_i\right) \nonumber\\
    &\qquad\qquad\qquad +\mcL\sum\limits_{i\in[K]}a^{\rm C}_T(i)^q\left(16\ \frac{\sqrt{2 {\rm e} \log T } + 32}{\sqrt{\tau^{\rm C}_t(i)}}  \right)^{q}\;\Big\lvert\; \F_i \in \mcC_T(i) \;\forall i \in [K]\Bigg]\\
    &\leq \underbrace{\E_{\bnu}^{\rm C} \left[ U_h\left(\sum_{i\in[K]} a^{\rm C}_T(i)\F_{i,T}^{\rm C}\right)  - U_h\left(\sum_{i\in[K]} \frac{\tau_t^{\rm C}(i)}{T}\F_i\right) \;\Big\lvert\; \F_i \in \mcC_T(i) \;\forall i \in [K]\right]}_{\triangleq \;C_3(T)}\nonumber\\
    &\qquad\qquad\qquad +\mcL K\left(32\ \frac{\sqrt{2 {\rm e} \log T } + 32}{\sqrt{\rho T {\varepsilon}}}  \right)^{q}\ ,
    \label{eq:CEUCB3}
\end{align}
where~\eqref{eq:CEUCB1} follows from \holder~ defined in Definition~\ref{assumption:Holder} and the conditioning on the fact that $\F_i\in\mcC_T(i)$ for every $i\in[K]$, \eqref{eq:CEUCB2} follows from the upper confidence bound in~\eqref{eq:UCB_alpha2}, and~\eqref{eq:CEUCB3} follows from the explicit exploration phase of the PM-UCB-M algorithm. \newline
Furthermore, note that $C_3(T)$ can be expanded as
\begin{align}
    C_3(T)\;&=\; \E_{\bnu}^{\rm C} \Bigg[ U_h\left(\sum_{i\in[K]} a^{\rm C}_T(i)\F_{i,T}^{\rm C}\right)  - U_h\left(\sum\limits_{i\in[K]} \alpha_T(i)\F_i\right)\nonumber\\
    &\qquad\qquad +U_h\left(\sum\limits_{i\in[K]}a^{\rm C}_T(i)\F_i\right) - U_h\left(\sum_{i\in[K]} \frac{\tau_t^{\rm C}(i)}{T}\F_i\right) \;\Big\lvert\; \F_i \in \mcC_T(i) \;\forall i \in [K]\Bigg]\\
    &\leq \underbrace{\E_{\bnu}^{\rm C} \Bigg[U_h\left(\sum\limits_{i\in[K]} a^{\rm C}_T(i)\F_i\right) - U_h\left(\sum_{i\in[K]} \frac{\tau_t^{\rm C}(i)}{T}\F_i\right) \;\Big\lvert\; \F_i \in \mcC_T(i) \;\forall i \in [K]\Bigg]}_{C_4(T)}+\mcL K\left(32\ \frac{\sqrt{2 {\rm e} \log T } + 32}{\sqrt{\rho T {\varepsilon}}}  \right)^{q}\ .
\end{align}
Finally, note that the term $C_4(T)$ is similar to the term $B_{22}(T)$ in the PM-UCB-M analysis in Appendix~\ref{proof:UCB upper bound}, and can be handled in the exact same way. Recall the finite time instant $T(\varepsilon) = T_0(\varepsilon) + m$, where $m$ is defined in~\eqref{eq:m} and $T_0(\varepsilon)$ is defined in~\eqref{eq:T_epsilon}. For all $T>T(\varepsilon)$, we have the following bound on $C_4(T)$.
\begin{align}
    C_4(T)\;\leq\; 2\mcL K W^q  \left(T\left(\left(\frac{1}{T^2}+1\right)^K-1\right) + \left(\frac{K}{T}\right)^q\right)\ .
\end{align}
Aggregating $C_1(T)$ -- $C_4(T)$, we have that for all $T>T(\varepsilon)$,
\begin{align}
\label{eq:discrete_error_CE}
    \mathfrak{R}_{\bnu}^{\rm C}(T)\;&\leq\; \delta_{01}(\varepsilon) + \frac{1}{T}\Bigg[ \underbrace{2 T^{1-q/2} \mcL K\left({32}\ \frac{\sqrt{2 {\rm e} \log T } + 32}{\sqrt{\rho {\varepsilon}} }  \right)^{q}}_{\triangleq\; C_5(T)} \nonumber + 2\mcL K W^q  \left(T^2\left(\left(\frac{1}{T^2}+1\right)^K-1\right) + K^q T^{1-q} \right) \nonumber\\
    & + B_hT\bigg(\bigg( \frac{1}{T^2}+1\bigg)^K -1\bigg)\Bigg]\ .
\end{align}
Furthermore, similar to the PM-UCB-M analysis, it can be readily verified that $C_5(T)$ is the dominating term for any $T>\e^K$. Hence, we can simplify the upper bound as follows.
\begin{align}
\label{eq:theorem_lastline_CE}
    \mathfrak{R}_{\bnu}^{\rm C}(T)\; &\leq\; \delta_{01}(\varepsilon) + (B_h +  2\mcL K(W^q + 1))\Big(32 \varepsilon^{-\frac{1}{2}} \rho^{-\frac{1}{2}} T^{-\frac{1}{2}}\Big( \sqrt{2\e\log T} + 32\Big)\Big)^q  \\
    &\leq\; \delta_{01}(\varepsilon) + (B_h +  2\mcL K(W^q + 1))\Big(64 \varepsilon^{-\frac{1}{2}} \rho^{-\frac{1}{2}} T^{-\frac{1}{2}}\Big( \sqrt{2\e\log T} + 32\Big)\Big)^q  \ .
\end{align}

\subsection{Proof of Theorem~\ref{corollary:PM-UCB-M} for CE-UCB-M}

Let us set 
\begin{align}
\varepsilon = \Theta \left(\left(K^{\frac{2}{q}}\log T / T\right)^{\kappa}\right)\ ,    
\end{align}
where $\kappa = \frac{1}{\frac{2\beta}{q}+2}$. Leveraging \(\delta_{12}(\varepsilon) = \Omega(\varepsilon^{\beta})\ \), it can be readily verified that this choice of the discretization level $\varepsilon$ satisfies the condition in~\eqref{eq:T_epsilon}. Hence, for arm selection regret, from~\eqref{eq:discrete_error_CE} we have
\begin{align}
\label{eq:Discrete_order_CE}
    \Bar{\mathfrak{R}}_{\bnu,1}^{\rm C}(T) &= O \left ((K^{\frac{2}{q}}\log T/T)^{\left(1-\kappa\right)\frac{q}{2}} \right) 
    \ .
\end{align}
Furthermore, from Lemma~\ref{lemma:Delta_error}, we have
\begin{align}
    \delta_{01}(\varepsilon) &\leq \mcL (KW\varepsilon)^q \ .
\end{align}
Hence, \(\delta_{01}(\varepsilon) = O(\varepsilon^q)\). Consequently, we have
\begin{align}
\label{eq:delta_order_ce}
    \delta_{01}(\varepsilon) = O\left(K^{q \left( 1 + \frac{2\kappa}{q} \right)}\left(\log T / T\right)^{q \kappa} \right).
\end{align}
From~\eqref{eq:Discrete_order_CE} and \eqref{eq:delta_order_ce} the regret of the CE-UCB-M is bounded from above by
    \begin{align}
    \label{eq:UCB upper bound appendix CE}
        \mathfrak{R}_{\bnu}^{\rm C}(T) &= O \left (\max\Big\{ K^{q \left( 1 + \frac{2\kappa}{q} \right)}\left(\log T / T\right)^{q \kappa}  \;,\;(K^{\frac{2}{q}}\log T/T)^{\left(1-\kappa\right)\frac{q}{2}} \Big\} \right) \\
        & = O \left (\max\Big\{K^{q \left( 1 + \frac{2\kappa}{q} \right)}, K^{1-\kappa} \Big\} \max\Big\{ \left(\log T / T\right)^{q \kappa}  \;,\;  (\log T/T)^{(1-\kappa)\frac{q}{2}}\Big\} \right)\ \ .
    \end{align}
As $\beta \geq \frac{q}{2}$ this bound becomes
\begin{align}
    \mathfrak{R}_{\bnu}^{\rm C}(T) \leq O \left (\max\Big\{K^{q\left( 1 + \frac{2\kappa}{q} \right)}, K^{1-\kappa} \Big\}   (\log T/T)^{q\kappa} \right)\ .
\end{align}

\section{CIRT Algorithm}
\label{Appendix:Anytime-alg}
In this section, we provide the proofs for the performance guarantees corresponding to CIRT. This includes the proof of Lemma \ref{lemma:alpha_star_in} (which provides an upper bound on the probability of error), followed by the proof of Lemma~\ref{lemma:finite_stopping_time} (which shows the finiteness of the phase lengths), and finally, the proof of Theorem~\ref{theorem:any_regret}, which upper bounds the regret due to the CIRT algorithm.

\subsection{Proof of Lemma \ref{lemma:alpha_star_in}}
\label{Appendix:proof_of_lemma_alpha_star_in}
The proof of Lemma~\ref{lemma:alpha_star_in} contains two steps. The first step shows that an oracle strategy, which knows the arm distributions and performs adaptive discretization based on the true CDFs, will contain the optimal mixing coefficient $\balpha^{\star}$. The next step shows that the CIRT algorithm mimics the oracle strategy, which proves the claim of Lemma~\ref{lemma:alpha_star_in}. The first step holds for a $K$-arm Bernoulli bandit instance. To show the first step, we begin by specifying the oracle refinement strategy.
\begin{definition}[Refinement Oracle]
    A refinement oracle (${\rm RO}$) is defined as an algorithm that knows the arms' CDFs $\{\F_i : i\in[K]\}$. Starting from $\Delta_{{\rm RO}}^{(0)}=\Delta^{K-1}$, in each phase $\ell\in[L]$, ${\rm RO}$ chooses the best discrete mixing coefficient, which we denote by
    \begin{align}
    \ba_{\rm RO}^{(\ell)}\;\in\;\argmax\limits_{\ba\in\Delta^{(\ell)}_{\rm RO}} U\left(\sum\limits_{i\in[K]} \alpha(i)\F_i\right)\ ,
    \end{align}
    and performs the interval refinement for the next phase $\ell+1$ based on $\ba_{\rm RO}^{(\ell)}$, i.e.,
    \begin{align}
\label{eq:oracle_discrete_set}
\nonumber
    \Delta^{(\ell + 1)}_{\rm RO} \triangleq \bigg\{& \ba : 
     \;\;n \in \{\N \cup {0}\}, \;  a(i) \triangleq n \frac{2^{\ell}}{A^{\ell+1}}, \; a(i) \in \left[a_{\rm RO}^{(\ell)}(i) - \frac{2^{\ell-1}}{A^{\ell}}, a_{\rm RO}^{(\ell)}(i) + \frac{2^{\ell-1}}{A^{\ell}}\right]\;,\; \mathbf{1}^T\ba = 1\bigg\} \ .
\end{align}
\end{definition}
The following result shows that for a $K$-armed Bernoulli bandit, at the end of phase $L$, the optimal mixing coefficient is contained in the refined interval due to \rm{RO}.

\begin{lemma}
\label{lemma: best_alpha_Bernoulli_solitary}
    Consider a $K$-armed Bernoulli bandit instance $\bnu\triangleq\{{\rm Bern}(p(1)),\cdots,{\rm Bern}(p(K))\}$, and let $\bp \triangleq [p(1),\cdots,p(K)]^\top$ denote the vector of mean values. For a PM with a solitary arm as the optimal solution, when \rm{RO} performs interval refinement for $L$ phases on $\bnu$, we have $\balpha^{\star}_{\bnu}\in\Sigma^{(L)}_{\rm{RO}}$.
\end{lemma}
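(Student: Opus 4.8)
The plan is to collapse the problem to one dimension by exploiting the Bernoulli structure. For a $K$-arm Bernoulli instance, every mixture satisfies $V(\ba,\F)=U_h\big(\sum_{i\in[K]}a(i)\F_i\big)=h\big(\langle\ba,\bp\rangle\big)$, so the oracle's objective depends on $\ba$ only through the scalar $s(\ba)\triangleq\langle\ba,\bp\rangle$, whose achievable range is the interval $[\min_i p(i),\max_i p(i)]$. I would first argue that the hypothesis ``a solitary arm is \emph{the} optimal solution'' forces $\balpha^\star_{\bnu}=\mathbf{e}_{i^\star}$ (unit mass on a single arm $i^\star$), with $s^\star\triangleq s(\balpha^\star_{\bnu})=p(i^\star)$ lying at an endpoint of the range and attained by a \emph{unique} arm: an interior maximizer of $h$ over the range would be reproduced by a continuum of mixtures of a below-arm and an above-arm, contradicting uniqueness/solitariness. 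Without loss of generality take $s^\star=\max_i p(i)$; the complementary case with $s^\star=\min_i p(i)$ is symmetric.

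The retention claim is then proved by induction on the phase index $\ell$, with hypothesis $\mathbf{e}_{i^\star}\in\Sigma^{(\ell)}_{\rm RO}$. For the base case, the phase-$1$ grid $\Delta^{(1)}_{\rm RO}=\Delta_{1/A}^{K-1}$ contains the vertex $\mathbf{e}_{i^\star}$ itself, and since $s^\star=\max_i p(i)$ is attained on the simplex only at $\mathbf{e}_{i^\star}$, this vertex is the \emph{unique} grid maximizer of $h(s(\cdot))$; hence $\ba^{(1)}_{\rm RO}=\mathbf{e}_{i^\star}$ and the hypothesis holds at $\ell=1$. For the inductive step I would use that the grid spacing at phase $\ell$ is $\sigma_\ell\triangleq 2^{\ell-1}/A^\ell$, which coincides exactly with the refinement half-width used to build $\Sigma^{(\ell+1)}_{\rm RO}$ around $\ba^{(\ell)}_{\rm RO}$; therefore it suffices to establish the coordinate-wise bound $\|\ba^{(\ell)}_{\rm RO}-\mathbf{e}_{i^\star}\|_\infty\le\sigma_\ell$, which immediately places $\mathbf{e}_{i^\star}$ inside the box centered at $\ba^{(\ell)}_{\rm RO}$.

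For $\ell\ge 2$ the box $\Sigma^{(\ell)}_{\rm RO}$ is localized around the vertex, so $s(\ba)$ sweeps only a short interval abutting $s^\star$ on which $h$ is monotone toward $s^\star$; consequently maximizing $h(s(\ba))$ over the local grid reduces to maximizing the linear functional $s(\ba)=p(i^\star)-\sum_{j\ne i^\star}a(j)(p(i^\star)-p(j))$, i.e., to concentrating as much mass as the grid permits on $i^\star$. Since $\mathbf{e}_{i^\star}\in\Sigma^{(\ell)}_{\rm RO}$, the coordinate-$i^\star$ grid reaches within $\sigma_\ell$ of $1$, forcing $a^{(\ell)}_{\rm RO}(i^\star)\ge 1-\sigma_\ell$ and hence $\sum_{j\ne i^\star}a^{(\ell)}_{\rm RO}(j)\le\sigma_\ell$, which yields the desired $\ell_\infty$ bound and places $\mathbf{e}_{i^\star}$ in $\Sigma^{(\ell+1)}_{\rm RO}$; iterating through all $L$ phases gives $\balpha^\star_{\bnu}\in\Sigma^{(L)}_{\rm RO}$. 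The main obstacle I anticipate is making the $\ell_\infty$ estimate rigorous under the simultaneous simplex normalization and box clipping — in particular verifying that the residual mass $1-a^{(\ell)}_{\rm RO}(i^\star)\le\sigma_\ell$ can be allocated to a single off-vertex coordinate so as to form a feasible aligned grid point — together with justifying the local-monotonicity reduction (that the oracle's $h$-maximizer coincides with the $s$-extremizer) uniformly across phases, which couples the grid resolution with the behavior of $h$ near the boundary $s^\star$.
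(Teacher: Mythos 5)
Your proof shares the paper's skeleton — induction on the phase index, with the base case identifying the vertex $\mathbf{e}_{i^\star}$ as the phase-$1$ selection and the inductive step showing the refinement box retains it — but you execute the inductive step in a much heavier way than the paper does, and the extra machinery is where the trouble lies. The paper's inductive step is essentially a one-liner: once $\mathbf{e}_{i^\star}$ is a feasible point of the phase-$\ell$ grid, the solitary-optimality hypothesis makes it the \emph{global} maximizer of $U_h$ over the whole simplex, hence a fortiori the maximizer over that grid; so the oracle selects it \emph{exactly}, $\ba^{(\ell)}_{\rm RO}=\mathbf{e}_{i^\star}$, and the next box is centered on it and trivially contains it. With that observation, your scalar reduction to $s(\ba)=\langle\ba,\bp\rangle$, the endpoint/WLOG step, the local-monotonicity reduction, and the $\ell_\infty$ proximity bound are all unnecessary.

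More importantly, the detour introduces two concrete soft spots. First, reducing ``maximize $h(s(\cdot))$ over the local grid'' to ``maximize $s(\cdot)$'' requires $h$ to be nondecreasing toward $s^\star$ on the swept interval; the lemma assumes only that a solitary arm is optimal, not that $h$ is monotone or concave, so this step needs an extra hypothesis (it does hold for the monotone distortion functions to which the paper applies this lemma, and for concave $h$ whose maximum over $[\min_i p(i),\max_i p(i)]$ sits at an endpoint, but you should say so). Second, even granting local monotonicity, $s$-optimality of $\ba^{(\ell)}_{\rm RO}$ only controls the weighted sum $\sum_{j\ne i^\star}a^{(\ell)}_{\rm RO}(j)\,\bigl(p(i^\star)-p(j)\bigr)$; when some $p(j)=p(i^\star)$ (ties) or the weights are very uneven, this does not by itself deliver the coordinate-wise bound $\|\ba^{(\ell)}_{\rm RO}-\mathbf{e}_{i^\star}\|_\infty\le\sigma_\ell$ that your argument rests on. Both issues vanish under the paper's route, which never needs $\ba^{(\ell)}_{\rm RO}$ to be merely close to the vertex because it is equal to it. The one genuine advantage of your version is that it does not require $\mathbf{e}_{i^\star}$ to land exactly on the refined grid at every phase (a point the paper's ``readily verified'' induction glosses over when $2^\ell$ does not divide $A^{\ell+1}$); if you want to preserve that robustness, run the induction on the continuous boxes $\Sigma^{(\ell)}_{\rm RO}$ and compare the grid point nearest $\mathbf{e}_{i^\star}$ directly against all grid points outside its $\sigma_\ell$-neighborhood, rather than routing through $s$-extremization.
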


\begin{proof}
We will prove the claim by induction on the phase $\ell$. Let $i^\star$ denote the optimal arm, i.e.,
\begin{align}
    i^\star\;\in\; \argmax_{i\in[K]}\;U_h(\F_i)\ .
\end{align}
\textbf{Base case ($\ell = 0$):} As the base case, it can be readily verified that $\ba^{(1)}_{\rm{RO}}\triangleq \be_{i^\star}$, where we let $\be_i$ denote the unit vector with an $i^{\rm th}$ non-zero coordinate. Hence, we conclude that $\ba^{(1)}_{\rm{RO}}\in\Delta_{\rm{RO}}^{(1)}$.
\newline\textbf{Inductive hypothesis:} For any $\ell < L$, let us assume that $\be_{i^\star}\in\Delta_{\rm RO}^{(\ell)}$. 
\newline\textbf{Induction step:} Owing to the interval refinement step defined in~\eqref{eq:oracle_discrete_set}, it can be readily verified that $\be_{i^\star}\in \Delta_{\rm{RO}}^{(\ell+1)}$. Hence, we can conclude that $\be_{i^\star}\in\Delta_{\rm RO}^{(L)}$.

\end{proof}

\begin{lemma}
\label{lemma: best_alpha_Bernoulli_mixture}
    Consider a $K$-armed Bernoulli bandit instance with mean vector  $\bp \triangleq [p(1),\cdots,p(K)]^\top$. For a PM with concave distortion function which has a mixture for its optimal solution, when \rm{RO} performs interval refinement for $L$ phases on $\bnu$, we have $\balpha^{\star}_{\bnu}\in\Delta^{(L)}_{\rm{RO}}$.
\end{lemma}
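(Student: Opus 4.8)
The plan is to exploit the scalar reduction available for Bernoulli bandits and run an induction on the phase index, mirroring the structure of the proof of Lemma~\ref{lemma: best_alpha_Bernoulli_solitary}. For a Bernoulli instance $\bnu$ with mean vector $\bp$, every mixture collapses to a scalar, $U_h\big(\sum_{i\in[K]}a(i)\F_i\big)=h\big(\langle\ba,\bp\rangle\big)$, so the objective over the simplex is the concave univariate map $p\mapsto h(p)$ precomposed with the linear functional $\ba\mapsto\langle\ba,\bp\rangle$. Let $p^\star\triangleq\argmax_{p}h(p)$; since the instance is mixture-optimal, $p^\star$ lies strictly inside the achievable range $[\min_i p(i),\max_i p(i)]$, and the set of optimal coefficients is the polytope $\mathcal{A}^\star\triangleq\{\ba\in\Delta^{K-1}:\langle\ba,\bp\rangle=p^\star\}$. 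I would prove, by induction on $\ell$, that the continuous refinement region $\Sigma^{(\ell)}_{\rm RO}$ contains a point of $\mathcal{A}^\star$; taking $\ell=L$ then yields $\balpha^\star_{\bnu}\in\Sigma^{(L)}_{\rm RO}$. (As in Lemma~\ref{lemma:alpha_star_in}, the containment is asserted for the continuous region, whereas the discrete set $\Delta^{(L)}_{\rm RO}$ merely supplies the grid on which the oracle selects its centers.)

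The base case is immediate, since $\Sigma^{(0)}_{\rm RO}=\Delta^{K-1}\supseteq\mathcal{A}^\star$ because $p^\star$ is achievable. For the induction step, assume $\Sigma^{(\ell)}_{\rm RO}$ contains an optimal point and let $\ba^\star_\ell\triangleq\ba_{\rm RO}^{(\ell)}$ be the oracle's best grid point, with scalar value $p_\ell\triangleq\langle\ba^\star_\ell,\bp\rangle$. The decisive observation, obtained by a direct computation from~\eqref{eq:oracle_discrete_set}, is that the half-width of the next box $\Sigma^{(\ell+1)}_{\rm RO}$ equals the current grid spacing, so $\Sigma^{(\ell+1)}_{\rm RO}$ reaches from its center $\ba^\star_\ell$ exactly to the neighbouring grid points in every coordinate. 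Because $h$ is unimodal with peak $p^\star$ and $\ba^\star_\ell$ maximizes $h(\langle\cdot,\bp\rangle)$ over the grid, $p_\ell$ is the grid value closest to $p^\star$; assume without loss of generality $p_\ell\le p^\star$. I would then consider the adjacent grid point $\ba'$ obtained from $\ba^\star_\ell$ by transferring one grid step of mass from a low-mean to a high-mean coordinate, so that $p'\triangleq\langle\ba',\bp\rangle>p_\ell$. Grid-optimality of $\ba^\star_\ell$ forces $h(p')\le h(p_\ell)$; since $h$ is strictly increasing on $[p_\ell,p^\star]$, a value $p'\in(p_\ell,p^\star]$ would violate this, so the step must \emph{overshoot}, i.e. $p'>p^\star$. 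The segment $[\ba^\star_\ell,\ba']$ lies inside $\Sigma^{(\ell+1)}_{\rm RO}$ (both endpoints are within one grid spacing of the center) and its scalar value sweeps $[p_\ell,p']\ni p^\star$ linearly, so by the intermediate value theorem it contains a coefficient whose value is exactly $p^\star$, i.e. a point of $\mathcal{A}^\star$. This closes the induction.

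The hard part will be the feasibility of the overshooting grid step at the boundary of the simplex: the mass transfer that builds $\ba'$ requires a coordinate of $\ba^\star_\ell$ that can be decreased by a full grid step and another that can be increased, and this can fail when $\ba^\star_\ell$ sits on a face of the achievable region. I would resolve this by selecting the transfer direction from the feasible cone at $\ba^\star_\ell$ pointing toward the optimal polytope (nonempty by the inductive hypothesis, since $p^\star>p_\ell$), and by using the $\Theta(\varepsilon)$ control on per-step changes of $\langle\ba,\bp\rangle$ in the spirit of Lemma~\ref{lemma:f_G_epsilon} to guarantee the overshoot lands within one grid spacing; the degenerate sub-case where $\ba^\star_\ell$ already attains the maximal grid value (which, under the inductive hypothesis, can only occur if $p^\star$ sits at the top of the region's range) is handled separately by moving toward the continuous maximizer. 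A secondary point requiring care is the passage from strictly concave to merely concave $h$: the overshoot argument uses strict monotonicity of $h$ on $[p_\ell,p^\star]$, so for a flat-peaked concave $h$ one instead argues that every grid value on the optimal plateau is itself optimal, which only strengthens the conclusion.
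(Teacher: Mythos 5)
Your proposal is correct and follows essentially the same route as the paper's proof: the Bernoulli scalar reduction $U_h(\sum_i a(i)\F_i)=h(\langle\ba,\bp\rangle)$, induction on the phase index, and a concavity argument showing that no grid point's scalar value can lie strictly between the optimal value $p^\star$ and the grid-best value, so that an optimal coefficient is a convex combination of the phase's best grid point and an adjacent one and hence falls inside the next refinement box. Your additional care about feasibility of the mass-transferring step at the simplex boundary, and about tracking a point of the optimal polytope $\mathcal{A}^\star$ rather than a fixed $\balpha^\star$, addresses details the paper's own argument leaves implicit.
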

We will prove that, given the optimal mixing coefficient is contained in the interval refined by the \rm{RO} in the current phase $\ell$ (denoted by $\Delta_{\rm{RO}}^{(\ell)}$), it would also be contained in the refined interval of the next phase, i.e., $\Delta_{\rm{RO}}^{(\ell+1)}$. Subsequently, following the same line of arguments as Lemma~\ref{lemma: best_alpha_Bernoulli_solitary}, using induction, the statement of the lemma readily follows. Let us begin by assuming that $\balpha^{\star}_{\bnu}\in\Delta^{(\ell)}_{\rm{RO}}$.  
For a $K$-arm Bernoulli bandit instance $\bnu$ with mean values $\bp = [p(1), \dots , p(K)]$, and for any mixing coefficient $\ba\in\Delta^{K-1}$ we have the following relationship:
\begin{align}
\label{eq:bernoulli_U_H}
    U_h\left(\sum_{i \in [K]} a(i)\F_i \right) = h\left(\sum_{i \in [K]} a(i)p(i) \right)\ .
\end{align}
Without loss of generality, let us assume that
\begin{align}
    \sum_{i \in [K]} \alpha^{\star}(i)p(i)  \leq \sum_{i \in [K]} a^{(\ell)}_{\rm{RO}}(i)p(i) \ .
\end{align}
Now, we will show that for any $\ba \in \Delta_{\rm{RO}}^{(\ell)}$ such that \begin{align}
    \sum_{i \in [K]} a(i)p(i) < \sum_{i \in [K]} a^{(\ell)}_{\rm{RO}}(i)p(i)\ ,
\end{align} due to the concavity of the distortion function, we have 
\begin{align}
\sum_{i \in [K]} a(i)p(i) < \sum_{i \in [K]} \alpha^{\star}(i)p(i) \ .\end{align}
We will prove this claim by contradiction. Let us assume that $\sum_{i \in [K]} a(i)p(i) \geq \sum_{i \in [K]} \alpha^{\star}(i)p(i)$. For any $\lambda \in (0,1)$ we have 
\begin{align}
\label{eq:concavity_alpha_a_ell}
h\left(\sum_{i \in [K]} a(i)p(i)\right) &\geq \lambda h\left(\sum_{i \in [K]} \alpha^\star(i)p(i)\right) + (1-\lambda)h\left(\sum_{i \in [K]} a^{(\ell)}_{\rm{RO}}(i)p(i)\right) \\
&> \min\left\{ h\left(\sum_{i \in [K]} \alpha^\star(i)p(i)\right) \;,\; h\left(\sum_{i \in [K]} a^{(\ell)}_{\rm{RO}}(i)p(i)\right) \right\} \\
\label{eq:alpha_great_a_ell}
&=h\left(\sum_{i \in [K]} a^{(\ell)}_{\rm{RO}}(i)p(i)\right) \ ,
\end{align}
where \eqref{eq:concavity_alpha_a_ell} follows from the concavity of $h$, and,
\eqref{eq:alpha_great_a_ell} follows from the definition of $\balpha^{\star}$. \newline
It can be readily verified that~\eqref{eq:alpha_great_a_ell} contradicts the definition of $\ba^{(\ell)}_{\rm{RO}}$. Therefore, we can conclude that $\sum_{i \in [K]}  a(i)p(i) \leq \sum_{i \in [K]}  \alpha^{\star}(i)p(i)$. Consequently, we we can express $\balpha^\star$ as a convex combination of any such discrete mixing coefficient $\ba$, and the best discrete mixing coefficient $\ba_{\rm{RO}}^{(\ell)}$ for $\lambda \in [0,1)$, i.e.,
\begin{align}
\label{eq:any_mix_alpha}
    \balpha^{\star} = \lambda  \ba^{(\ell)}_{\rm{RO}} + (1-\lambda) \ba\ .
\end{align}
We choose
\begin{align}
\label{eq:any_alternative_alpha_def}
\ba =
    \begin{cases}
     a^{(\ell)}_{\rm{RO}}(i) - 1/A^{\ell}, &i = m\ ,\\
     a^{(\ell)}_{\rm{RO}}(i) + 1/A^{\ell} , &i = n, p(m) > p(n)\ ,\\
     a^{(\ell)}_{\rm{RO}}(i), &i \neq {m, n} \ .
    \end{cases}
\end{align}
This choice ensures that 
\begin{align}
    \sum_{i \in [K]} a(i)p(i) < \sum_{i \in [K]} a^{(\ell)}_{\rm{RO}}(i)p(i)\ .
\end{align}
Hence, we have
\begin{align}
\balpha^{\star} =
    \begin{cases}
     a^{(\ell)}_{\rm{RO}}(i) - (1-\lambda)1/A^{\ell}, &i = m\ ,\\
     a^{(\ell)}_{\rm{RO}}(i) + (1-\lambda)1/A^{\ell} , &i = n, p(m) > p(n)\ ,\\
     a^{(\ell)}_{\rm{RO}}(i), &i \neq {m, n} \ .
    \end{cases}
\end{align}
This proves that $\balpha^\star\in\Delta^{(\ell+1)}_{\rm{RO}}$, based on the interval refinement stated in~\eqref{eq:oracle_discrete_set}. This completes our proof.

\begin{lemma}
\label{lemma:CIRT_error_probability_each_phase}
    For a $K$-arm Bernoulli bandit instance with mean values $\bp = [p(1), \cdots , p(K)]$, when $T > \tau^{(L)}$ the CIRT algorithm results in an optimal mixing coefficient being contained in $\Sigma^{K-1}_{\epsilon}$ with a high probability, i.e.,
    \begin{align}
        \P_{\bnu}^{\rm IR}\Big(\balpha^{\star} \notin \Sigma_{\epsilon}^{K-1}\Big) \;\leq\;  L \delta\ .
    \end{align}
\end{lemma}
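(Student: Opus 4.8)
The plan is to establish the stated containment of failure events: the bad event $\{\balpha^{\star}\notin\Sigma^{K-1}_{\epsilon}\}$ is contained in the union, over the $L$ phases, of the events on which the CIRT interval-retention decision deviates from that of the refinement oracle ${\rm RO}$. Since the hypothesis $T>\tau^{(L)}$ guarantees that all $L$ phases terminate within the horizon and the final set $\Sigma^{K-1}_{\epsilon}=\Sigma^{K-1}_{L+1}$ is actually produced, it suffices to show that in each phase the empirical best discrete coefficient $\bb^{(\ell)}=\ba^{\rm IR}_{\tau^{(\ell)}}$ coincides with the oracle's choice $\ba^{(\ell)}_{\rm RO}$ with probability at least $1-\delta$, and then to union bound over $\ell\in[L]$.

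First I would fix a phase $\ell$ and define the good event $\mathcal{G}_\ell\triangleq\{\F_i\in\mcC^{\rm IR}_{\tau^{(\ell)}}(i)\ \text{for all}\ i\in[K]\}$ that every true arm CDF lies in its distribution confidence set at the stopping instant. The radius in~\eqref{eq:any_UCB_confidence_sets} is calibrated through Lemma~\ref{lemma: concentration} so that $\P^{\rm IR}_{\bnu}(\F_i\notin\mcC^{\rm IR}_t(i))\leq\delta_K$ per arm; combining the $K$ arms (the samples across arms being independent) and using the binomial identity $(1-\delta_K)^K+(1+\delta_K)^K\geq 2$ together with the definition $(1+\delta_K)^K=1+\delta$ yields $\P^{\rm IR}_{\bnu}(\mathcal{G}_\ell)\geq(1-\delta_K)^K\geq 1-\delta$. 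This is precisely the role of the choice $\delta_K=(1+\delta)^{1/K}-1$.

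Next, on $\mathcal{G}_\ell$ I would sandwich the true utility between the confidence bounds: since $\F_i\in\mcC^{\rm IR}_{\tau^{(\ell)}}(i)$, the definitions in~\eqref{eq: CIRT_UCB_LCB} give ${\rm LCB}^{\rm IR}_{\tau^{(\ell)}}(\ba)\leq V(\ba,\F)\leq{\rm UCB}^{\rm IR}_{\tau^{(\ell)}}(\ba)$ for every $\ba\in\Delta^{K-1}_\ell$. Invoking the stopping condition~\eqref{eq:stopping_rule}, for every $\ba\neq\bb^{(\ell)}$ we obtain $V(\bb^{(\ell)},\F)\geq{\rm LCB}^{\rm IR}_{\tau^{(\ell)}}(\bb^{(\ell)})>{\rm UCB}^{\rm IR}_{\tau^{(\ell)}}(\ba)\geq V(\ba,\F)$, so $\bb^{(\ell)}$ is the true PM-maximizer over $\Delta^{K-1}_\ell$ and hence equals $\ba^{(\ell)}_{\rm RO}$. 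Because CIRT then refines the interval around $\bb^{(\ell)}$ via~\eqref{eq:any_discrete_set} exactly as ${\rm RO}$ refines around $\ba^{(\ell)}_{\rm RO}$, an inductive application of Lemma~\ref{lemma: best_alpha_Bernoulli_solitary} and Lemma~\ref{lemma: best_alpha_Bernoulli_mixture} shows that $\balpha^{\star}\in\Sigma^{K-1}_{\ell+1}$ whenever $\balpha^{\star}\in\Sigma^{K-1}_\ell$. Consequently $\{\balpha^{\star}\notin\Sigma^{K-1}_{\epsilon}\}\subseteq\bigcup_{\ell=1}^{L}\overline{\mathcal{G}_\ell}$, and a union bound yields $\P^{\rm IR}_{\bnu}(\balpha^{\star}\notin\Sigma^{K-1}_{\epsilon})\leq\sum_{\ell=1}^{L}\P^{\rm IR}_{\bnu}(\overline{\mathcal{G}_\ell})\leq L\delta$.

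The main obstacle is the stopping-time subtlety: the confidence radius is held at the fixed level $\log(2/\delta_K)$ rather than inflated with time, so the per-arm guarantee $\P(\F_i\notin\mcC^{\rm IR}_t(i))\leq\delta_K$ is a fixed-$t$ statement, whereas $\mathcal{G}_\ell$ is evaluated at the data-dependent instant $\tau^{(\ell)}$. Making $\P(\mathcal{G}_\ell)\geq 1-\delta$ rigorous at the random $\tau^{(\ell)}$ is the delicate step; I would address it by exploiting the almost-sure deterministic growth of $\tau^{\rm IR}_t(i)$ from Lemma~\ref{lemma:any_explicit_exploration}, which forces the confidence sets to shrink monotonically and lets the validity event be controlled uniformly over the phase, or alternatively by conditioning on $\{\tau^{(\ell)}=t\}$ and reusing the fixed-time concentration. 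The remaining pieces --- the sandwich inequality and the oracle induction --- are routine once the good event is secured.
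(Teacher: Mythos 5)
Your proposal follows essentially the same route as the paper's own proof: decompose the failure event into per-phase deviations from the refinement oracle, condition on the confidence-set validity event $\mcE_{0,\tau^{(\ell)}}$ (whose complement has probability at most $\delta$ by the choice $\delta_K=(1+\delta)^{1/K}-1$), use the stopping rule together with the sandwich ${\rm LCB}\leq V\leq{\rm UCB}$ to force agreement with the oracle's choice on the good event, invoke Lemmas~\ref{lemma: best_alpha_Bernoulli_solitary} and~\ref{lemma: best_alpha_Bernoulli_mixture} for the oracle induction, and union bound over the $L$ phases. Your explicit flagging of the stopping-time subtlety (validity of the fixed-radius confidence sets at the random instant $\tau^{(\ell)}$) is a point the paper's proof passes over silently, so your treatment is, if anything, slightly more careful.
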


\begin{proof}
From the Lemmas~\ref{lemma: best_alpha_Bernoulli_solitary} and~\ref{lemma: best_alpha_Bernoulli_mixture}, we have obtained that
\begin{align}
\label{eq: probability of detection}
    \P_{\bnu}^{\rm IR}\left(\balpha^{\star} \in \Sigma_{\epsilon}^{K-1}\right) \geq \P_{\bnu}^{\rm IR}\left(\bigcap_{\ell=1}^{L}\ba^{(\ell)} = \ba^{(\ell)}_{\rm{RO}} \right) \ .
\end{align}
Here, the inequality results from the fact that choosing $a^{(\ell)}_{\rm{RO}}$ to zoom into the simplex at each stage $\ell\in[L]$ results in {\em one} of the optimal solutions $\balpha^\star$ to be contained in the last simplex $\Delta_{\rm{RO}}^{(L)}$, and there could be possibly more optimal mixing coefficients. Let us define the event $S \triangleq \bigcap_{\ell=1}^{L}\{\ba^{(\ell)} = \ba^{(\ell)}_{\rm{RO}}\}$. It can be readily verified that
\begin{align}
\label{eq:S bar}
    \bar S \;=\; \bigcup_{\ell \in [L]} \left\{\Big\{\ba^{(\ell)} \neq \ba^{(\ell)}_{\rm{RO}}\Big\} \bigcap_{\ell^\prime \in [\ell]}\Big\{\ba^{(\ell^\prime)} = \ba^{(\ell^\prime)}_{\rm{RO}}\Big\} \right\}  \ .
\end{align}
Next, we will provide an upper bound on the probability of error, i.e., $\P_{\bnu}^{\rm IR}(\balpha^\star \notin \Sigma_{\epsilon}^{K-1})$. Note that leveraging~\eqref{eq: probability of detection} and~\eqref{eq:S bar}, it can be readily verified that
\begin{align}
    \P_{\bnu}^{\rm IR}\left(\balpha^{\star} \notin \Sigma_{L}^{K-1}\right) &\leq \P_{\bnu}^{\rm IR}\left(\bigcup_{\ell \in [L]} \left\{\ba^{(\ell)} \neq \ba^{(\ell^\prime)}_{\rm{RO}}\} \bigcap_{\ell^\prime \in [\ell]}\{\ba^{(\ell^\prime)} = \ba^{(\ell^\prime)}_{\rm{RO}}\} \right\}  \right) \\
    &\leq \sum_{\ell \in [L]} \P_{\bnu}^{\rm IR}\left(\ba^{(\ell)} \neq \ba^{(\ell)}_{\rm{RO}} \bigcap_{\ell^\prime \in [\ell]}\{\ba^{(\ell^\prime)} = \ba^{(\ell^\prime)}_{\rm{RO}}\}  \right) \ .
    \label{eq: error_prob_1}
\end{align}
For any time $t\in\N$, let us define the event
\begin{align}
    \mathcal{E}_{0,t} \triangleq \Big\{\forall i \in [K], \F_i \in C_t(i) \Big\}\ .
\end{align}
By the design of the CIRT confidence sets in~\eqref{eq:any_UCB_confidence_sets}, it can be readily verified that
\begin{align}
    \P_{\bnu}^{\rm IR}\left(\bar{\mcE}_{0,t} \right)\;\leq\; \delta\ .
    \label{eq: error_prob_2}
\end{align}
We will now bound each probability term under the summand in~\eqref{eq: error_prob_1}. We begin with the first phase, i.e., $\ell=1$. Note that
\begin{align}
    \P_{\bnu}^{\rm IR}\left(\ba^{(1)} \neq  \ba^{(1)}_{\rm{RO}}\right)
    &\leq \P_{\bnu}^{\rm IR}\left({\rm LCB}_{\tau^{(1)}}(\ba^{(1)}) > {\rm UCB}_{\tau^{(1)}}( \ba^{(1)}_{\rm{RO}}) \right)  \\
    &= \P_{\bnu}^{\rm IR}\left({\rm LCB}_{\tau^{(1)}}(\ba^{(1)}) > {\rm UCB}_{\tau^{(1)}}( \ba^{(1)}_{\rm{RO}}) \med \mathcal{E}_{0, \tau^{(1)}}  \right) \P_{\bnu}^{\rm IR}\Big(\mathcal{E}_{0, \tau^{(1)}}\Big)  \\
    & \quad + \P_{\bnu}^{\rm IR}\left({\rm LCB}_{\tau^{(1)}}(\ba^{(1)}) > {\rm UCB}_{\tau^{(1)}}( \ba^{(1)}_{\rm{RO}}) \med \bar{\mathcal{E}}_{0, \tau^{(1)}} \right) \P_{\bnu}^{\rm IR}\Big(\bar{\mathcal{E}}_{0, \tau^{(1)}}\Big) \\
    &\leq \P_{\bnu}^{\rm IR}\left({\rm LCB}_{\tau^{(1)}}(\ba^{(1)}) > {\rm UCB}_{\tau^{(1)}}( \ba^{(1)}_{\rm{RO}}) \med \mathcal{E}_{0, \tau^{(1)}}  \right)  + \P_{\bnu}^{\rm IR}(\bar{\mathcal{E}}_{0, \tau^{(1)}}) \\
    &\leq \underbrace{\P_{\bnu}^{\rm IR}\left( U_h \left(\sum_{i \in [K]} a^{(1)}(i) \F_i \right)> U_h \left(\sum_{i \in [K]} a^{(1)}_{\rm{RO}}(i) \F_i \med \mcE_{0,\tau^{(1)}}\right)\right)}_{=\;0}  + \P(\bar{\mathcal{E}}_{0, \tau^{(1)}}) \\
    &\stackrel{\eqref{eq: error_prob_2}}{\leq} \delta \ .
\end{align}
Subsequently, for any phase $\ell\geq 2$, we have
\begin{align}
    \P_{\bnu}^{\rm IR}&\left(\Big\{\ba^{(\ell)} \neq \ba^{(\ell)}_{\rm{RO}} \Big\}\bigcap_{\ell^\prime \in [\ell]} \Big\{\ba^{(\ell^\prime)} = \ba^{(\ell^\prime)}_{\rm{RO}}\Big\} \right) 
    \nonumber\\
    \nonumber
    &= \P_{\bnu}^{\rm IR}\left(\Big\{\ba^{(\ell)} \neq \ba^{(\ell)}_{\rm{RO}} \Big\}\bigcap_{\ell^\prime \in [\ell]} \Big\{\ba^{(\ell^\prime)} = \ba^{(\ell^\prime)}_{\rm{RO}}\Big\}  \med \mathcal{E}_{0, \tau^{(\ell)}}   \right) \P(\mathcal{E}_{0, \tau^{(\ell)} }  ) \\ & \quad \quad + \P\left(\Big\{\ba^{(\ell)} \neq \ba^{(\ell)}_{\rm{RO}}\Big\} \bigcap_{\ell^\prime \in [\ell]}\Big\{ \ba^{(\ell^\prime)} = \ba^{(\ell^\prime)}_{\rm{RO}}\Big\} \med \bar{\mathcal{E}}_{0, \tau^{(\ell)}}    \right) \P_{\bnu}^{\rm IR}( \bar{\mathcal{E}}_{0, \tau^{(\ell)}}   )\\ 
    &= \P_{\bnu}^{\rm IR}\left(\Big\{\ba^{(\ell)} \neq \ba^{(\ell)}_{\rm{RO}} \Big\}\bigcap_{\ell^\prime \in [\ell]} \Big\{\ba^{(\ell^\prime)} = \ba^{(\ell^\prime)}_{\rm{RO}}\Big\}  \med \mathcal{E}_{0, \tau^{(\ell)}}   \right) + \P_{\bnu}^{\rm IR}( \bar{\mathcal{E}}_{0, \tau^{(\ell)}}   )\\ 
    &\leq \P_{\bnu}^{\rm IR}\left(\Big\{\ba^{(\ell)} \neq \ba^{(\ell)}_{\rm{RO}} \Big\}\bigcap_{\ell^\prime \in [\ell]} \Big\{\ba^{(\ell^\prime)} = \ba^{(\ell^\prime)}_{\rm{RO}} \Big\} \med \mathcal{E}_{0, \tau^{(\ell)}}   \right) + \delta\\ 
    \nonumber
    &= \P_{\bnu}^{\rm IR}\left(\Big\{\ba^{(\ell)} \neq \ba^{(\ell)}_{\rm{RO}}  \Big\}\med  \bigcap_{\ell^\prime \in [\ell]}\Big\{ \ba^{(\ell^\prime)} = \ba^{(\ell^\prime)}_{\rm{RO}}\Big\}, \mathcal{E}_{0, \tau^{(\ell)}}   \right) \P_{\bnu}^{\rm IR}\left( \bigcap_{\ell^\prime \in [\ell]} \Big\{\ba^{(\ell^\prime)} = \ba^{(\ell^\prime)}_{\rm{RO}}\Big\} \med \mathcal{E}_{0, \tau^{(\ell)}}  \right) \\ & \quad + \delta\\ 
    \label{eq:any_ro_UP_1}
    &\leq \P_{\bnu}^{\rm IR}\left(\ba^{(\ell)} \neq \ba^{(\ell)}_{\rm{RO}}  \med\bigcap_{\ell^\prime \in [\ell]} \Big\{\ba^{(\ell^\prime)} = \ba^{(\ell^\prime)}_{\rm{RO}}\Big\}, \mathcal{E}_{0, \tau^{(\ell)}}   \right)  + \delta\\ 
    &\leq \P_{\bnu}^{\rm IR}\left(\text{LCB}_{\tau^{(\ell)}}(\ba^{(\ell)}) > \text{UCB}_{\tau^{(\ell)}}( \ba^{(\ell)}_{\rm{RO}}) \mid \bigcap_{\ell^\prime \in [\ell]} \Big\{\ba^{(\ell^\prime)} = \ba^{(\ell^\prime)}_{\rm{RO}}\Big\}, \mathcal{E}_{0, \tau^{(\ell)}}   \right) +\delta \\
    &\leq \P_{\bnu}^{\rm IR}\left( U_h \left(\sum_{i \in [K]} a^{(\ell)}(i) \F_i \right)> U_h \left(\sum_{i \in [K]} a^{(\ell)}_{\rm{RO}}(i) \F_i \right) \med \bigcap_{\ell^\prime \in [\ell]} \Big\{\ba^{(\ell^\prime)} = \ba^{(\ell^\prime)}_{\rm{RO}}\Big\}, \mathcal{E}_{0, \tau^{(\ell)}}\right) + \delta \\
    &= 0 + \delta = \delta
\end{align}
where \eqref{eq:any_ro_UP_1} follows from $\P_{\bnu}^{\rm IR}\left( \bigcap_{\ell^\prime \in [\ell]} \Big\{\ba^{(\ell^\prime)} = \ba^{(\ell^\prime)}_{\rm{RO}}\Big\} \med \mathcal{E}_{0, \tau^{(\ell)}}  \right) \leq 1$.
Revisiting~\eqref{eq: error_prob_1}, we may now conclude that
\begin{align}
    \P_{\bnu}^{\rm IR}\Big(\balpha^\star\notin\Sigma_{L+1}^{K-1} \Big)\;=\;\P_{\bnu}^{\rm IR}\Big(\balpha^\star\notin\Sigma_{\epsilon}^{K-1} \Big)\;\leq\;L\delta\ .
\end{align}

\end{proof}

\subsection{Proof of Lemma \ref{lemma:finite_stopping_time}}
\label{Appendix:proof_of_finite_stopping_time}
In this section, we show that on average, the stopping times $\tau^{(\ell)}$ for $\ell \in [L]$ are finite. Recalling our assumption that there exists a {\em unique} optimal discrete mixing coefficient, for any set of distributions $\{\mu_i\in\mcP(\Omega):i\in[K]\}$ and for any phase $\ell\in[L]$ let us define
\begin{align}
    \delta_{13} (\mu, A^{-\ell})\triangleq \min_{\bb \in \Delta^{K-1}_{\ell}: \bb \neq \ba} \left(\max_{\ba \in \Delta^{K-1}_\ell} U_h\left( \sum_{i \in [K]} a(i)\mu_i \right) - U_h\left( \sum_{i \in [K]} b(i)\mu_i \right) \right) \ ,
\end{align}
where $\delta_{13} (\mu, A^{-\ell})$ denotes the {\em instance-specific} minimum sub-optimality gap for bandit instance $\mu$ sampled from the space of $1$-sub-Gaussian distributions with finite first moment at phase $\ell\in[L]$. We assume that this gap is positive for every bandit instance in the class. Accordingly, let us define  
\begin{align}
    M(A^{-\ell})\; \triangleq\; \inf_{\mu} \delta_{13} (\mu, A^{-\ell})\ ,
\end{align}
where $M (A^{-\ell})$ quantifies the {\em maximum hardness} of separating the optimal mixing coefficient from the sub-optimal ones for {\em any} bandit instance in phase $\ell\in[L]$.
Our stopping rule is defined with in terms of the upper and lower confidence bounds ${\rm UCB}_t(\ba)$ and ${\rm LCB}_t(\ba)$, which have been defined in~\eqref{eq: CIRT_UCB_LCB} for any mixture $\ba\in\Delta^{K-1}$. Let us denote measures achieving the upper and lower confidence bounds by
\begin{align}
\label{eq: CIRT_UCB_LCB_CDF}
    \{\widetilde\F_{1,t},\cdots,\widetilde\F_{K,t} \}\;&\in\; \argmax_{\eta_i \in C_t(i), \forall i \in [K]} U_h\left( \sum\limits_{i\in[K]} a(i) \eta_i \right) ,\\
\text{and}\quad
    \{\F^{\prime}_{1,t},\cdots,\F^{\prime}_{K,t} \}\;&\in\; \argmin_{\eta_i \in C_t(i), \forall i \in [K]} U_h\left( \sum\limits_{i\in[K]} a(i) \eta_i \right)  \ ,
\end{align}
where we have dropped the dependence on the mixing coefficient $\ba$ for notational convenience. Furthermore, note that the utility gap between the estimates $\{\F^{\rm IR}_{i,t}:i\in[K]\}$ and the UCB and LCB estimates can be upper bounded as
\begin{align}
\label{eq:anytime_proof_1}
    U_h\left(\sum_{i \in [K]} a(i)\F^{\rm IR}_{i,t} \right) - \text{LCB}_t(a) &\leq \mcL  \left \|\sum_{i \in [K]} a(i) \left(\F^{\rm IR}_{i,t}-\F^\prime_{i,t}\right) \right \|^q \\
    \label{eq:anytime_proof_2}
    &\leq \mcL \sum_{i \in [K]} (a(i))^q \left \| \F^{\rm IR}_{i,t}-\F^\prime_{i,t} \right \|^q \\ 
    \label{eq:anytime_proof_3}
    &\leq \underbrace{\mcL \sum_{i \in [K]} (a(i))^q (C_t(i))^q}_{\triangleq\; r_t(\ba)}\ ,
\end{align}
where \eqref{eq:anytime_proof_1} follows from Assumption~\ref{assumption:Holder},~\eqref{eq:anytime_proof_2} follows from triangle inequality, and, 
\eqref{eq:anytime_proof_3} follows from the constraints in~\eqref{eq: CIRT_UCB_LCB}.
Similarly, one can obtain
\begin{align}
    \text{UCB}_t(a) - U_h\left(\sum_{i \in [K]} a(i)\F^{\rm IR}_{i,t} \right)  &\leq \mcL  \left \|\sum_{i \in [K]} a(i) \left(\F^{\rm IR}_{i,t}-\tilde\F_{i,t}\right) \right \|^q \\
    &\leq \mcL \sum_{i \in [K]} (a(i))^q \left \| \F^{\rm IR}_{i,t}-\tilde\F_{i,t} \right \|^q \\ 
    &\leq \mcL \sum_{i \in [K]} (a(i))^q (C_t(i))^q \\
    &= r_t(\ba) \ .
    \label{eq:anytime_proof_3b}
\end{align} 
At any time instant $t$, let us denote the second-best discrete coefficient with respect to the discretization in the $\ell^{\rm th}$ phase by
\begin{align}
    \bb_{t} \triangleq \argmax_{\ba \in \Delta^{K-1}_{\ell}: \bb_{t} \neq \ba_{t}} \text{UCB}_t(\ba)\ .
\end{align}
Based on the definitions of $\bb_t$ and the radii $r_t(\ba)$ defined in~\eqref{eq:anytime_proof_3} for any mixture $\ba\in\Delta^{K-1}$, we next specify a stochastic time instant which we show is finite almost surely. For any phase $\ell\in[L]$ let us define 
\begin{align}
\label{eq:t_1_A}
    T(\ell, A) \;\triangleq\; \sup\;\Big\{ s \in \N : M(A^{-\ell}) \leq  r_{s}(\ba^{\rm IR}_{s}) + r_{s}(\bb_{s})  \Big\}\ . 
\end{align}
The purpose of defining $T(\ell,A)$ is to have a convenient decomposition of the phase length, as will become clear in a few steps. First, note that we almost surely have
\begin{align}
    &r_{t}(\ba^{\rm IR}_{t}) + r_{t}(\bb_{t})\\
    &= \sum_{i \in [K]} (\ba^{\rm IR}_{t})^q (C_t(i))^q + \sum_{i \in [K]} (\bb_{t})^q (C_t(i))^q \\
    &\leq 2 \sum_{i \in [K]} (C_t(i))^q \\
    &\stackrel{\eqref{eq:any_UCB_confidence_sets}}{\leq} 2 \sum_{i \in [K]} \left( 16 \frac{\sqrt{{\rm e} \log{2/\delta_K} } + 32}{\sqrt{\tau^{\rm IR}_t(i)}} \right) \ \\
    \label{eq:MA_rhs}
    &\leq 2 \sum_{i \in [K]} \left( 16 \frac{\sqrt{{\rm e} \log{2/\delta_K} } + 32}{\sqrt{(\frac{t}{K})^{\frac{1}{1+\xi}}-1}} \right) \ ,
\end{align}
where~\eqref{eq:MA_rhs} follows from the explicit exploration phase of the CIRT algorithm combined with Lemma~\ref{lemma:any_explicit_exploration}. From \eqref{eq:MA_rhs}, we observe that the sum of radii $r_{t}(\ba^{\rm IR}_{t}) + r_{t}(\bb_{t})$ is a monotonically decreasing function in $t$ almost surely. This implies that 
\begin{align}
\label{eq:T_l_A_finiteness}
    T(\ell, A)\; < \;\infty\;\; \text{a.s.}\;\;\forall\ell\in[L]\ .
\end{align}
We will show that $\E_{\bnu}^{\rm IR}[\tau^{(1)}]$ is finite. The other phases follow the exact line of arguments. To this end, note the following decomposition of the average phase length.

\begin{align}
    \E_{\bnu}^{\rm IR} \left[\tau^{(1)} \right] &= \sum_{t=1}^{\infty} \P_{\bnu}^{\rm IR}(\tau^{(1)} > t)\\
    &= \sum_{t=1}^{T(1,A)} \P_{\bnu}^{\rm IR}(\tau^{(1)} > t) + \sum_{t=T(1, A)+1}^{\infty} \P_{\bnu}^{\rm IR}(\tau^{(1)} > t)\\
    &\leq T(1, A) + \sum_{t=T(1, A)+1}^{\infty} \P_{\bnu}^{\rm IR}(\tau^{(1)} > t)\ .
    \label{eq:anytime_proof_4}
\end{align}
We have established that $T(1,A)$ is finite almost surely in~\eqref{eq:T_l_A_finiteness}. Hence, we focus on the second term in~\eqref{eq:anytime_proof_4}. We have 
\begin{align}
    &\sum_{t=T(1, A)+1}^{\infty} \P_{\bnu}^{\rm IR}(\tau^{(1)} > t)\nonumber\\
    &= \sum_{t=T(1, A)+1}^{\infty} \sum_{s=t+1}^{\infty} \P_{\bnu}^{\rm IR}(\tau^{(1)}= s) \\ 
    &= \sum_{t=T(1,A)+1}^{\infty} \sum_{s=t+1}^{\infty} \P_{\bnu}^{\rm IR}\Big( {\rm LCB}_s(\ba^{\rm IR}_s) > \max_{\ba \in \Delta^{K-1}_1: \ba \neq \ba^{\rm IR}_s} {\rm UCB}_s(\ba), \\& \qquad \qquad \qquad
    \bigcap_ {u=1}^{s}{\rm LCB}_u(\ba_u) > \max_{\ba \in \Delta^{K-1}_1: \ba \neq \ba^{\rm IR}_u} {\rm UCB}_u(\ba) \Big) \\ 
    &\leq \sum_{t=T(1,A)+1}^{\infty} \sum_{s=t}^{\infty} \P_{\bnu}^{\rm IR}\Big( {\rm LCB}_s(\ba^{\rm IR}_s) \leq \max_{\ba \in \Delta^{K-1}_1: \ba \neq \ba^{\rm IR}_s} {\rm UCB}_s(\ba) \Big)
    \\ 
    \label{eq:def_b_anytime}&\stackrel{\eqref{eq:anytime_proof_3},\eqref{eq:anytime_proof_3b}}{\leq} \sum_{t=T(1,A)+1}^{\infty} \sum_{s=t}^{\infty} \P\left( U_h\left(\sum_{i \in [K]} a^{\rm IR}_s(i)\F_{i,s}^{\rm IR}\right) - r_{s}(\ba^{\rm IR}_s) \leq U_h \left(\sum_{i \in [K]} b_s(i)\F_{i,s}^{\rm IR}\right) + r_{s}(\bb_s) \right)\\ 
    &= \sum_{t=T(1,A)+1}^{\infty} \sum_{s=t}^{\infty} \P_{\bnu}^{\rm IR}\left( U_h\left(\sum_{i \in [K]} a^{\rm IR}_s(i)\F_{i,s}^{\rm IR}\right) - U_h\left(\sum_{i \in [K]} b_s(i)\F_{i,s}^{\rm IR}\right) \leq  r_{s}(\ba_s) + r_{s}(\bb_s) \right) \\
    &\leq \sum_{t=T(1,A)+1}^{\infty} \sum_{s=t}^{\infty} \P_{\bnu}^{\rm IR}\Big( M(A) \leq  r_{s}(a^{\rm IR}_s) + r_{s}(b_s) \Big) \\
    &\leq \sum_{t=T(1,A)+1}^{\infty} \sum_{s=t}^{\infty} 0 \\
    &= 0
\end{align}
where in \eqref{eq:def_b_anytime}, we denote $
\bb_s \in \Delta^{K-1}_1 \setminus \{\ba^{\mathrm{IR}}_s\}$.
Therefore, we conclude that
\begin{align}
    \E \left[\tau^{(1)} \right] \leq T(1, A) + 0  < \infty\ .
\end{align}
The proof concludes by noting that the proof for every other phase $\ell\in\{2,\cdots,L\}$ follows the exact line of arguments as $\ell=1$.

\subsection{Proof of Theorem \ref{theorem:any_regret}}
\label{sec:proof_any_regret}

We begin with the following regret decomposition.
\begin{align}
    R_{\bnu}^{\rm IR}(T) &= U_h\left(\sum_{i \in [K]} \alpha^{\star}(i) \F_i \right) - U_h\left(\sum_{i \in [K]} \frac{\tau_T^{\rm IR}(i)}{T}\F_i \right) \\
    &= \underbrace{U_h\left(\sum_{i \in [K]} \alpha^{\star}(i) \F_i \right) - U_h\left(\sum_{i \in [K]} a^{\rm IR}_T(i) \F_i \right)}_{\triangleq\;\delta_\tau(T)\text{ (estimation regret)}}  + \underbrace{U_h\left(\sum_{i \in [K]} a^{\rm IR}_T(i) \F_i \right)  - U_h\left(\sum_{i \in [K]} \frac{\tau_T^{\rm IR}(i)}{T}\F_i \right)}_{\triangleq\;H_2(T)\text{ (sampling error)}} \ .
\end{align}

\subsubsection{Upper bound on the CIRT estimation regret}
Note that the number of phases $L$ is determined based on the allowable precision $\epsilon$, i.e., $(2/A)^L < \epsilon$. Based on this, the estimation regret of the CIRT algorithm is upper-bounded as follows. Almost surely, we have
\begin{align}
\delta_\tau(T) 
    &=  U_h\left(\sum_{i\in[K]} \alpha^{\star} (i)\F_i\right) - U_h\left(\sum_{i\in[K]} a^{\rm IR}_T(i)\F_i\right) \\
    & \leq \mcL\left\|\sum_{i\in[K]}  \Big(\alpha^{\star}(i)-  a^{\rm IR}_T(i)\Big)  \F_i\right\|_{\rm W}^q \\
    & \leq \mcL  \norm{\balpha^{\star}- \ba^{\rm IR}_T}_1^q W^q \\
    & \leq \mcL K^q \left(\left(\frac{2}{A}\right)^{L}\right)^q  W^q  \\
    \label{eq:anytime_error_1}
    & \leq \mcL \Big(KW\epsilon)^q   \ .
\end{align}

\subsubsection{Upper bound on the Sampling Error}
Almost surely, we have,
\begin{align}
H_2(T) &= U_h\left(\sum_{i \in [K]} a^{\rm IR}_T(i) \F_i \right)  - U_h\left(\sum_{i \in [K]} \frac{\tau_T(i)}{T}\F_i \right) \\
&\stackrel{\eqref{eq:Holder}}{\leq}\;\mcL \norm{\sum\limits_{i\in[K]}\left(a^{\rm IR}_T(i) - \frac{\tau_T^{\rm U}(i)}{T}\right)\F_i}^q_{\rm W} \\
    &\stackrel{\eqref{eq:W}}{\leq}\;\mcL W^q\cdot \norm{\ba_T^{\rm IR} - \frac{1}{T}\btau_T^{\rm U}}_1^q\ .
\end{align}
Next, we will show that for the sampling rule defined in \eqref{eq:any_sampling_rule}, after a finite time instant, an arm is sampled {\em only if its under-sampled}. Note that for any arm $i\in[K]$, and at any time instant $t\in\N$, we may have the following scenarios.
\begin{itemize}
    \item \textbf{Case 1:} arm $i$ may be under-explored and under-sampled, 
    \item \textbf{Case 2:} arm $i$ may be sufficiently explored but under-sampled,
    \item \textbf{Case 3:} arm $i$ may be under-explored but not under-sampled, and,
    \item \textbf{Case 4:} arm $i$ may be sufficiently explored and not under-sampled.
\end{itemize}
We will show that after a finite time instant, the third case cannot happen. Let us assume that there is an arm that is not under-sampled but under-explored, which implies that $i \in E_t$, or equivalently, 
\begin{align}
\label{eq:case_3_explore}
    \tau_t^{\rm IR} (i) \leq \left\lceil\left(\frac{t}{K} \right)^{\frac{1}{1+\xi}} \right\rceil\  = \left(\frac{t}{K} \right)^{\frac{1}{1+\xi}} + 1 \ .
\end{align}
Furthermore, we have 
\begin{align}
\label{eq:case_3_sample}
    a^{\rm IR}_t (i) < \frac{\tau_t^{\rm IR} (i)}{t} \ .
\end{align}
Combining \eqref{eq:case_3_explore} and \eqref{eq:case_3_sample}, we have 
\begin{align}
     \left(\frac{t}{K} \right)^{\frac{1}{1+\xi}} + 1 > t a^{\rm IR}_t (i)  \ .
\end{align}
Furthermore, each coordinate $i\in[K]$ of the chosen mixing coefficient $\ba_t$ is bounded away from $0$ by construction. After the final phase, we have $a_t(i) > 2^{L-1}/A^{L+1}$ for every $i\in[K]$, and we obtain
\begin{align}
     \left(\frac{t}{K} \right)^{\frac{1}{1+\xi}} + 1 > \frac{2^{L-1}}{A^{L+1}}\cdot t\ ,
\end{align}
which can be equivalently written as
\begin{align}
\label{eq:underexploresample}
     t^{\frac{-\xi}{1+\xi}} K^{\frac{-1}{1+\xi}} + t^{-1} > \frac{2^{L-1}}{A^{L+1}}\ .
\end{align}
Note that in \eqref{eq:underexploresample}, the left-hand-side (LHS) is a decreasing function in $t$ since $\xi > 0$, while the right-hand-side (RHS) is a constant. Therefore, there exists a finite instant $t$ after which~\eqref{eq:underexploresample} does not hold. More specifically, let us define
\begin{align}
    T_{\xi} = \sup\left\{ s\in\N :  s^{\frac{-\xi}{1+\xi}} K^{\frac{-1}{1+\xi}} + s^{-1} > \frac{2^{L-1}}{A^{L+1}}\right \} \ .
\end{align}
It can be readily verified that
$T_{\xi} < \infty $. For $t > T_{\xi}$, an arm is under-explored if and only if it is also under-sampled. 
Next, we will show that the CIRT arm selection routine, using the {\em under-sampling} procedure, eventually converges to an optimal mixture distribution. This is captured in the following lemma where we denote $\ba_t^{\rm IR} = \ba^{\rm IR}$ for $t > \tau^{(\ell)}$.

\begin{lemma}
\label{lemma:any_sampling_bound}
There exists a stochastic time instant $\tau^\epsilon$, $\E[\tau^\epsilon]<+\infty$, such that we have
\begin{align}
    \left| \frac{\tau_t^{\rm IR}(i)}{t} - a^{\rm IR}(i)\right| < \frac{K}{t}  \quad \forall t \geq \tau^\epsilon \ .
\end{align}
\end{lemma}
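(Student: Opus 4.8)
The plan is to follow the blueprint of Lemma~\ref{lemma:undersampling} and its companion Lemma~\ref{lemma:oversampling}, adapted to the anytime setting, leveraging the crucial simplification already established in the preamble: for all $t>T_\xi$, an arm lies in the explicit-exploration set $E_t$ if and only if it is under-sampled. Consequently, once $t$ exceeds $\max\{\tau^{(L)},T_\xi\}$, the committed coefficient $\ba^{\rm IR}=\bb^{(L+1)}$ is fixed and the CIRT arm-selection rule in~\eqref{eq:any_sampling_rule} reduces to pure under-sampling against $\ba^{\rm IR}$, which is precisely the regime analyzed for PM-UCB-M. I would therefore define the stochastic time as $\tau^\epsilon\triangleq\max\{\tau^{(L)},T_\xi\}+m$, where $m$ is a deterministic clearing horizon (specified below) after which the over-sampled set is empty, and then verify the two-sided bound beyond this instant.

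First I would transcribe the over-sampled set $\mcO_t\triangleq\{i\in[K]:\tau_t^{\rm IR}(i)/t>a^{\rm IR}(i)+1/t\}$ and reprove the monotonicity step of Lemma~\ref{lemma:oversampling}: for $t>\max\{\tau^{(L)},T_\xi\}$, an arm $j\notin\mcO_t$ cannot enter $\mcO_{t+1}$. The two cases $\tau_t^{\rm IR}(j)/t\le a^{\rm IR}(j)$ and $a^{\rm IR}(j)\le \tau_t^{\rm IR}(j)/t\le a^{\rm IR}(j)+1/t$ are handled verbatim as in~\eqref{eq:oversampling1}--\eqref{eq:UK_2}, since the under-sampling rule never samples an over-sampled arm and samples a not-yet-over-sampled arm at most once per round. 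This yields the one-sided bound $\tau_t^{\rm IR}(i)/t-a^{\rm IR}(i)\le 1/t$ for every $i$ once $\mcO_t=\emptyset$. Next I would bound the clearing horizon $m$: because an over-sampled arm is never sampled, its count $\tau^{\rm IR}(i)$ freezes while $t$ grows, so $\tau_t^{\rm IR}(i)/t$ decays and the arm exits $\mcO_t$ in finite time. Using the crude bound $\tau^{\rm IR}_{t_0}(i)\le t_0$ at $t_0=\max\{\tau^{(L)},T_\xi\}$, together with the uniform lower bound $a^{\rm IR}(i)\ge 2^{L-1}/A^{L+1}$ on every committed coordinate (established just above the lemma statement), the exact analogue of~\eqref{eq:m} gives a deterministic $m$ depending only on $t_0$ and the grid constants, hence finite.

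The two-sided bound then follows by the conservation argument of Lemma~\ref{lemma:undersampling}: if some coordinate satisfied $\tau_t^{\rm IR}(j)/t<a^{\rm IR}(j)-K/t$, summing the one-sided upper bounds over the remaining $K-1$ arms would force $\sum_i \tau_t^{\rm IR}(i)/t<1$, contradicting $\sum_i \tau_t^{\rm IR}(i)/t=1=\sum_i a^{\rm IR}(i)$. This establishes $|\tau_t^{\rm IR}(i)/t-a^{\rm IR}(i)|<K/t$ for all $t\ge\tau^\epsilon$. Finally, the integrability $\E_{\bnu}^{\rm IR}[\tau^\epsilon]<+\infty$ reduces to showing $\E_{\bnu}^{\rm IR}[\tau^{(L)}]<+\infty$, since $T_\xi$ is deterministic and $m$ is a deterministic function of $\tau^{(L)}$ and $T_\xi$ that grows at most linearly in $t_0$; writing $\tau^{(L)}=\sum_{\ell=1}^{L}\big(\tau^{(\ell)}-\tau^{(\ell-1)}\big)$ and invoking Lemma~\ref{lemma:finite_stopping_time} for each of the finitely many phase lengths yields the claim by linearity of expectation.

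The main obstacle I anticipate is not the clearing argument itself, which is a faithful transcription of the PM-UCB-M proof, but carefully propagating the stochasticity of the starting configuration at $\tau^{(L)}$ through the deterministic bound on $m$, so that $\tau^\epsilon$ remains a quantity with finite mean rather than merely finite almost surely. This is exactly where the uniform coordinate lower bound $2^{L-1}/A^{L+1}$ and the at-most-linear growth of $m$ in $t_0$ must be combined to prevent an inflation of the expectation; a secondary technical point is confirming that the sublinear explicit exploration on $[\,0,T_\xi\,]$ leaves only a bounded, deterministically controllable over-allocation to be absorbed during the clearing phase.
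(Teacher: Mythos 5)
Your proposal is correct and follows essentially the same route as the paper: both define $\tau^\epsilon=\max\{\tau^{(L)},T_\xi\}+m$ with a clearing horizon $m$ that is a deterministic, at-most-linear function of $\max\{\tau^{(L)},T_\xi\}$ obtained from the coordinate lower bound $a^{\rm IR}(i)\ge 2^{L-1}/A^{L+1}$, then transfer the over-sampling/conservation argument of Lemmas~\ref{lemma:oversampling} and~\ref{lemma:undersampling} verbatim, and conclude finiteness of $\E[\tau^\epsilon]$ from Lemma~\ref{lemma:finite_stopping_time} together with $T_\xi<\infty$. The only cosmetic difference is that you telescope $\tau^{(L)}$ over phases before invoking Lemma~\ref{lemma:finite_stopping_time}, whereas the paper applies it to $\tau^{(L)}$ directly.
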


\begin{proof}
The proof follows the same line of arguments as Lemma \ref{lemma:undersampling} with minor adjustment accounting for the stochastic time $\tau^{(L)}$, the time instant $T_{\xi}$ dependent on the exploration parameter $\xi$, and the simplex $\Delta^{K-1}_{\epsilon}$ defined in \eqref{eq:any_discrete_set_tracking}. Instead of $T_0(\varepsilon)$, we define $\tau_{\max} \triangleq \max \{\tau^{(L)}, T_\xi \}$. Noting that for $t > \tau^{(L)}$, we have $a^{\rm IR}(i) \geq \frac{2^{L-1}}{A^{L+1}}$ $\forall i \in [K]$, we may set
\begin{align}
\label{eq:any_m}
    m^\prime \; \triangleq \; \left(\frac{A^{L+1}}{2^{L-1}}-1 \right)\tau_{\max} - \frac{A^{L+1}}{2^{L-1}}\ , 
\end{align}
where the quantity $m^\prime$ is the counterpart of $m$ in Lemma~\ref{lemma:undersampling}. Finally, we define
\begin{align}
\label{eq:any_T_epsilon_actual}
    \tau^{\epsilon} \;\triangleq\; \tau_{\max} + m^\prime \ .
\end{align}
Furthermore, the finiteness of $\tau^{\epsilon}$, on average, may be readily verified. Note that
\begin{align}
     \E_{\bnu}^{\rm IR}[\tau^{\epsilon}] &= \E_{\bnu}^{\rm IR}[ \tau_{\max} + m^\prime] \\
     &\stackrel{\eqref{eq:any_m}}{=}\E_{\bnu}^{\rm IR} \left[ \tau_{\max} + \left(\frac{A^{L+1}}{2^{L-1}}-1 \right)t_{\max} - \frac{A^{L+1}}{2^{L-1}} \right] \\
     &= \left(\frac{A^{L+1}}{2^{L-1}} \right) \E_{\bnu}^{\rm IR}[ \tau_{\max} ] - \frac{A^{L+1}}{2^{L-1}}\\
     &=\left(\frac{A^{L+1}}{2^{L-1}} \right) \E_{\bnu}^{\rm IR}\left[ \max\left\{\tau^{(L)}, T_\xi\right\}\right] - \frac{A^{L+1}}{2^{L-1}}\\
     \label{eq:t_max_finite}
     & < \infty\ ,
\end{align}
where \eqref{eq:t_max_finite} follows from Lemma \ref{lemma:finite_stopping_time} and $T_{\xi} < \infty$. All the remaining steps exactly follow from Lemma~\ref{lemma:undersampling}.

\end{proof}
From Lemma \ref{lemma:any_sampling_bound}, the following upper bound on the sampling error may be readily obtained.
\begin{align}
\label{eq:anytime_error_2}
H_2(T) \stackrel{\eqref{eq:W}}{\leq}\;\mcL W^q\cdot \norm{\ba_T^{\rm IR} - \frac{1}{T}\btau_T^{\rm IR}}_1^q \leq \;\mcL W^q K^{q+1} \frac{1}{T^q} \ .
\end{align}

\paragraph{Regret bound.}Finally, we have
\begin{align}
    \label{eq:anyregret}\mathfrak{R}_{\bnu}^{\rm IR}(T) &= \delta_\tau(T) + H_2(T) \leq \mcL W^q K^q \epsilon^q + \mcL W^q K^{1+q} \frac{1}{T^q} \leq \mcL W^q K^{q+1} \left(\epsilon^r + T^{-q}\right)
\end{align}
where \eqref{eq:anyregret} follows from \eqref{eq:anytime_error_1} and \eqref{eq:anytime_error_2}.

\end{document}